\DeclareMathOperator*{\argmin}{argmin}
\DeclareMathOperator*{\argmax}{argmax}
\DeclareMathOperator{\proj}{proj}
\newcommand{\Sp}[1]{\left(#1\right)}
\newcommand{\Mp}[1]{\left[#1\right]}
\newcommand{\Bp}[1]{\left\{#1\right\}}
\newcommand{\abs}[1]{\left|#1\right|}
\newcommand{\Norm}[1]{\left\|#1\right\|}
\newcommand{\inner}[1]{\left\langle#1\right\rangle}
\newcommand{\ov}{\overline{V}}
\newcommand{\uv}{\underline{V}}
\newcommand{\oq}{\overline{Q}}
\newcommand{\uq}{\underline{Q}}
\newcommand{\K}{\mathcal{K}}
\newcommand{\A}{\mathcal{A}}
\newcommand{\B}{\mathcal{B}}
\newcommand{\D}{\mathcal{D}}
\newcommand{\E}{\mathbb{E}}
\newcommand{\G}{\mathcal{G}}
\renewcommand{\P}{\mathbb{P}}
\renewcommand{\S}{\mathcal{S}}
\newcommand{\um}{\underline{\mu}}
\newcommand{\un}{\underline{\nu}}
\newcommand{\on}{\overline{\nu}}
\newcommand{\om}{\overline{\mu}}
\newcommand{\1}{\mathbf{1}}
\renewcommand{\a}{\mathbf{a}}
\newcommand{\R}{\mathbb{R}}
\newcommand{\N}{\mathcal{N}}
\newcommand{\C}{\mathcal{C}}
\renewcommand{\r}{\mathbf{r}}
\newcommand{\algonanametwo}{\textsf{SBMM}}
\newcommand{\algonamemulti}{\textsf{SBSM}}
\newtheorem{theorem}{Theorem}
\newtheorem{lemma}{Lemma}
\newtheorem{definition}{Definition}
\newtheorem{assumption}{Assumption}
\newtheorem{proposition}{Proposition}
\newtheorem{corollary}{Corollary}
\definecolor{ForestGreen}{rgb}{0.1333,0.5451,0.1333}
\author{Qiwen Cui\footnote{\textbf{University of Washington. Email: \url{qwcui@cs.washington.edu}}}  \and Simon S. Du\footnote{\textbf{University of Washington. Email: \url{ssdu@cs.washington.edu}}} }
\date{}
\title{Provably Efficient Offline Multi-agent Reinforcement Learning via Strategy-wise Bonus}
\author{%
  Qiwen Cui \\
  Paul G. Allen School of Computer Science & Engineering\\
  University of Washington\\
  \texttt{qwcui@cs.washington.edu} \\
  % examples of more authors
  \And
  Simon S. Du \\
  Paul G. Allen School of Computer Science & Engineering\\
  University of Washington\\
  \texttt{ssdu@cs.washington.edu} \\
}
\begin{document}

\maketitle

\begin{abstract}

% \simon{
% \begin{itemize}
%     \item Let's call it strategy-wise bonus instead of strategy-wise pessimism
%     \item mention algorithm names in abstract, intro, etc
%     \item either delete algorithms or $\hat{C}$
%     \item we can make fonts smaller in equations, and merge some equations.
% \end{itemize}
% }

This paper considers offline multi-agent reinforcement learning.
We propose the strategy-wise concentration principle which directly builds a confidence interval for the joint strategy, in contrast to the point-wise concentration principle that builds a confidence interval for each point in the joint action space.
For two-player zero-sum Markov games, by exploiting the convexity of the strategy-wise bonus, we propose a computationally efficient algorithm whose sample complexity enjoys a better dependency on the number of actions than the prior methods based on the point-wise bonus. Furthermore, for offline multi-agent general-sum Markov games,  based on the strategy-wise bonus and a novel surrogate function, we give the first algorithm whose sample complexity only scales $\sum_{i=1}^mA_i$ where $A_i$ is the action size of the $i$-th player and $m$ is the number of players.
In sharp contrast, the sample complexity of methods based on the point-wise bonus would scale with the size of the joint action space $\Pi_{i=1}^m A_i$ due to the curse of multiagents.
Lastly, all of our algorithms can naturally take a pre-specified strategy class $\Pi$ as input and output a strategy that is close to the best strategy in $\Pi$. In this setting, the sample complexity only scales with $\log |\Pi|$ instead of $\sum_{i=1}^mA_i$.

\begin{comment}
Substantial empirical success has been achieved in multi-agent reinforcement learning while the theoretical understanding is still far from complete. In this work we propose two provably efficient algorithms for finding the Nash equilibrium strategy in offline Markov games with sample complexity scales only additively with the number of actions for each player via a novel strategy-wise concentration, therefore breaking the curse of multiagent. In addition, our algorithm can directly incorporate prior knowledge of the Nash equilibrium with an adaptive sample complexity. Our algorithms also enjoy favorable computational complexity by exploiting the convexity of the strategy-wise bonus. We provide assumption-free performance guarantees with unilateral coefficients, which measure the dataset coverage. Our results either improve or match previous sample complexity under different settings and this is the first theoretical result for offline general-sum Markov game. 
\end{comment}

\end{abstract}

\section{Introduction}

Multi-agent reinforcement learning (MARL)  is about decision making in a multi-agent system under uncertainty, which has achieved significant success in solving a wide range of tasks such as GO \citep{silver2017mastering}, Poker \citep{brown2019superhuman} and autonomous deriving \citep{shalev2016safe}. 
One standard setting in MARL is multi-player general-sum Markov games where each player deploys a policy to maximize its own total reward while the evolution of the environment depends on the policies of all the players \citep{zhang2021multi}. During the learning process, each player needs to identify the environment dynamics as well as compete/cooperate with other agents.  

One emerging  subarea is offline MARL, where plenty of empirical works have been done while the theoretical understanding is still largely missing \citep{pan2021plan,jiang2021offline,meng2021offline}. 
Offline RL has received tremendous attention because in various practical scenarios, it is expensive to acquire online data while offline log data is accessible. 
% Therefore we hope to use the dataset collected beforehand to learn the optimal policies. 

The offline single-agent RL is well studied in the literature. 
Researchers have identified the minimal dataset coverage assumption, \emph{single policy coverage} (the dataset only needs to cover an optimal policy), under which one can learn a near-optimal policy efficiently.
% \simon{need to introduce what is single policy coverage here. Think of readers familiar with ML but not RL.} 
Furthermore, they have developed  algorithms with minimax sample complexity \citep{xie2021policy,li2022settling}.
For offline MARL, recent works showed that single policy coverage is not sufficient and \emph{unilateral coverage} is necessary for learning a Nash equilibrium (NE) strategy, i.e., the dataset covers all the joint strategies that only differ from an NE at one player \citep{cui2022offline,zhong2022pessimistic}. This condition is also sufficient for two-player zero-sum Markov games with sample complexity $\widetilde{O}(AB)$ (ignoring other quantities), where $A$, $B$ are the number of actions for each player \citep{cui2022offline}. However, it is still unclear if it is sufficient for multi-player general-sum Markov game.

One major challenge in MARL is the \emph{curse of multiagents}~\citep{jin2021v}. Suppose the number of actions for player $j$ is $A_j$ and there are $m$ players. Then the joint action space is of size $\prod_{j\in[m]}A_j$, which grows exponentially with the number of players $m$. As a result, any algorithm that depends linearly on the cardinality of the joint action space can hardly be applied to real-world scenarios. In online MARL, \citet{jin2021v} and \citet{song2021can} show that finding the coarse correlated equilibrium, which is a weaker equilibrium notion than NE,  only requires $\widetilde{O}(\max_{j\in[m]}A_j)$ samples, thus breaking the curse of multiagents. In this paper, we study the following question:
% \simon{We should also mention whether unilater assumption is sufficient for multi-agent is not clear.}

\begin{center}
    \emph{Can we find NE in offline $m$-player general-sum Markov game with unilateral coverage and without the exponential dependence on the number of players?}
\end{center}

In this paper, we answer this question in the affirmative.
% We propose two novel algorithms based on several new techniques to achieves a sample complexity rate that only scales linearly with the log averaged covering number of the policy class, which is always upper bounded by $\sum_{j\in[m]}A_j$. 
We highlight our contributions below.

\subsection{Main Novelties and Contributions}
% contribution 1: design principle

% \simon{mention assumption-free bound? no need to mention in the intro. maybe introduce in the pre.}

\textbf{1. Strategy-wise concentration principle.}
We propose the strategy-wise concentration principle. Point-wise concentration is a standard technique in computing the confidence interval for each state-action pair \citep{azar2017minimax,liu2021sharp,xie2021policy,cui2022offline}. However, the straightforward extension to MARL suffers from the curse of multiagents as the NE can be a mixed strategy. 
Different from the point-wise concentration technique, strategy-wise concentration directly \emph{estimates each strategy, which allows a tighter confidence interval that can avoid the dependence on the joint action space.} 
We give a technical overview in Section~\ref{sec:strategywise}.
In addition, we show that the strategy-wise confidence bound is always a convex function so that the empirical \emph{best response strategy can always be a deterministic strategy}, which is critical to the computational efficiency. 

% contribution 2: new algorithm for two-player zero-sum
\noindent 
\textbf{2. Improved algorithm for offline two-player zero-sum Markov games.}
For offline two-player zero-sum Markov games, we utilize its special structure to develop a maximin-optimization-type algorithm. Though the nonlinear strategy-wise bonus breaks the bilinear structure of the zero-sum game, we show that by solving a maximin optimization problem we can still output a good strategy. In addition, we can solve it efficiently using any black-box algorithms for Lipschitz-continuous convex optimization. Our sample complexity improves the $AB$ factor in \citet{cui2022offline} to $(A+B)$.

% contribution 3: first provably efficient algorithm for general-sum Markov game
\noindent
\textbf{3. The first algorithm for offline multi-player general-sum Markov games.}
For multi-player general-sum Markov games, we develop a \emph{surrogate function} to approximate performance gap and then show that the minimizer of the surrogate function approximates NE well. The surrogate function is constructed by optimistic best response values and pessimistic values. Interestingly, to our knowledge, this is the first time that optimism has been used in offline RL algorithms. Our result validates that unilateral coverage is sufficient for general-sum Markov games and our sample complexity rate scales with $\widetilde{O}(\sum_{j=1}^mA_j)$ (ignoring other parameters), thus breaking the curse of multiagents.

% contribution 4: incorporating pre-specified strategy class
\noindent
\textbf{4. Incorporating pre-specified strategy class.}
Lastly, our algorithm allows exploiting the prior knowledge about the NE strategy with an adaptive sample complexity bound.
Pre-specified policy class has been widely used in empirical works where the policy class is parameterized by neural networks (e.g., \cite{mnih2016asynchronous,haarnoja2018soft,lowe2017multi}), and single-agent RL theory as well (e.g., \cite{auer2002nonstochastic,agarwal2021theory}), but has not been investigated in MARL theory. In this paper, we take a step to incorporate prior knowledge in the MARL setting.
Our performance guarantee only depends on the logarithmic covering number of the pre-specified strategy class, which is always upper bounded by $\sum_{j\in[m]}A_j$, but can be smaller.
% for the worst case due to the product structure of the NE. 
% If we do have prior knowledge about the NE, we can significantly reduce the sample complexity. 
% \simon{I am not sure we want to emphasize assumption-free in the intro. Is this significant? Otherwise we can just define it in the preliminary.}
To the best of our knowledge, this is the first paper that considers a pre-specified strategy class in MARL theory.
% and we believe this can shed lights on more adaptive algorithm design. 
% \simon{is there any empirical papers have this kind of policy class? Better to cite some. }

\subsection{Technical Overview of Strategy-wise Concentration}
\label{sec:strategywise}
% \simon{rewrote a bit. double check.}
To give some intuition about this technique, let us consider a toy problem. Suppose there are $m$ random variables $\{x^i\}_{i=1}^m$ and we want to obtain a pessimistic estimate of their average $x=\sum_{i\in[m]}x^i/m$. We have $n/m$ observations for each $x^i$. 
The point-wise concentration estimate corresponds to estimating each $x^i$ and then aggregating the results.
The pessimistic estimate of $x^i$ would be $\widehat{x}^i-\widetilde{O}(\sqrt{m/n})$ where $\widehat{x}^i$ is the empirical mean, and the aggregated mean of these pessimistic estimates would be $\widehat{x}-\widetilde{O}(\sqrt{m/n})$ where $\widehat{x}$ is the empirical mean of all data. 
The strategy-wise concentration estimate corresponds to directly using all the samples to estimate the average of  $\{x\}_{i=1}^m$ and obtain the pessimistic estimate as $\widehat{x}-\widetilde{O}(1/\sqrt{n})$. 
This example shows that the point-wise estimate will lead to an extra $m$ factor. 
% A similar phenomena has been observed in instance-dependent bounds for online MDP (see Sec. 1.2.2 in \citet{xu2021fine}). 
In MARL, $m$ is the cardinality of the joint action space, which implies that point-wise concentration can be exponentially worse than strategy-wise concentration. Note that this is not an issue in single-agent MDP as the optimal policy is always deterministic but leads to severe suboptimality in the multi-agent case where NE can be a mixed strategy.

% \simon{Why prior work has $AB$ factor, why we can improve.}

\subsection{Related Work}

\textbf{Online Multi-agent RL.} Markov games can be solved via dynamic programming when the rewards and transition dynamics are given \citep{hansen2013strategy,perolat2015approximate}. If the environment is unknown, reinforcement learning algorithms are applied with different sampling oracles. One particular line of research is online Markov games, including two-player zero-sum Markov games \citep{liu2021sharp,dou2021gap,xie2020learning,bai2020near,huang2021towards} and multi-player general-sum Markov games \citep{zhong2021can,mao2021decentralized,jin2021v,song2021can}. \citet{rubinstein2016settling} proves an exponential (in the number of players) lower bound for learning the NE strategy in $m$-player general-sum game while others show that the correlated equilibrium and coarse correlated equilibrium admit $\mathrm{poly}(m,\max_{j\in[m]}A_j,H,S)$-sample complexity algorithms \citep{mao2021decentralized,jin2021v,song2021can}. Our upper bounds for $m$-player general-sum games depend polynomially on all parameters, which do not contradict the hardness result in \cite{rubinstein2016settling} because the assumptions on the offline dataset provide additional information about the NE.
% \simon{Our upper bounds for $m$-player general-sum games depend polynomially on all parameters. These results do not contradict with the hardness result in \cite{rubinstein2016settling} because the assumptions on the offline dataset provides additional information about the NE.}

\noindent
\textbf{Offline Single-agent RL.} The simplest dataset assumption for offline RL is uniform coverage, i.e., the dataset covers all the state-action pairs. This assumption dates back to \citet{szepesvari2005finite}. The minimax sample complexity has been well studied for both tabular case and function approximation \citep{xie2021batch,yin2020near,yin2021near,ren2021nearly}. Recently it has been shown that only covering the optimal policy is sufficient for offline RL under different settings \citep{rashidinejad2021bridging,yin2021towards,xie2021policy,jin2021pessimism,uehara2021pessimistic,zanette2021provable,xie2021bellman}. These works design provably efficient algorithms based on the principle of pessimism.
% which estimates the value function conservatively.

\noindent
\textbf{Offline Multi-agent RL.} Offline MARL theory is still at a primary stage. Previous works mostly focused on uniform coverage assumption, i.e. all state-action pairs or all policies are covered \citep{sidford2020solving,cui2021minimax,zhang2020model,zhang2021finite,abe2020off,subramanian2021robustness}. Recently, \citet{cui2022offline} and \citet{zhong2022pessimistic} show that the unilateral coverage assumption is the minimal dataset coverage assumption for learning NE in Markov games. In addition, \citep{cui2022offline} proposes a pessimism-type algorithm with $\widetilde{O}(SABH^3C(\pi^*)/\epsilon^2)$ sample complexity for tabular two-player zero-sum Markov game and \citep{zhong2022pessimistic} provides a similar algorithm for linear two-player zero-sum Markov games. 
% \simon{add some dicussions on different coverages.}

\section{Preliminaries}

\textbf{Notations.}
We use $D(\mathcal{X})$ to denote the single point distributions over the finite set $\mathcal{X}$. For example, $D(\mathcal{A})$ to represent the policies that deterministically choose one of the actions in $\mathcal{A}$.
We use $\pi_{j,h}^s\in\Delta(\A_j)$ as a concise notation of $\pi_{j,h}(\cdot|s)$ and $P_h(s,\a)$ to denote $P_h(\cdot|s,\a)$, which will be defined in the following section. We use $-j$ in subscript to denote all the players except player $j$. We use bold letter to denote vectors, e.g. $\a$ is a vector and $a_j$ is the $j$-th element of $\a$. We let $O(\cdot)$ hide absolute constants and $\widetilde{O}(\cdot)$ hide $\mathrm{polylog}$ terms as well. The L1 norm of a vector in $\R^d$ is $\|\a\|_1=\sum_{i=1}^d|a_i|$. We denote the projection as $\proj_{[a,b]}(x):=\max\{a,\min\{b,x\}\}$.

\noindent
\textbf{Multi-player General-sum Markov Game.}
A multi-player general-sum Markov game is described by a tuple $\G=(\S,\A=\prod_{j\in[m]}\A_j,P,R,H)$, where $\S$ is the state space with cardinality $S$, $m$ is the number of players, $\A_j$ is the action space of player $j$ with cardinality $A_j$, $P=(P_1,P_2,\cdots,P_H)$ with $P_h\in\R^{S\times \prod_{i\in[m]}A_i\times S}$  being the (unknown) transition matrix at timestep $h \in [H]$, $R=\{R_{h}(\cdot|s_h,\a_h)\}_{h=1}^{H}$ with $R_{h}(\cdot|s_h,\a_h)$ being a distribution on $[0,1]^m$ with mean $\r_h(s_h,\a_h)\in[0,1]^m$ as the (unknown) reward distribution at timestep $h$. At timestep $h$, all players choose their actions \emph{simultaneously} and a reward vector is sampled from the reward distribution $\r_h\sim R_{h}(\cdot|s_h,\a_h)$, where $s_h$ is the current state and $\a_h=(a_{h,1},a_{h,2},\cdots,a_{h,m})$ is the joint action. Each player $j$ receives its own reward $r_{h,j}$  with support on $[0,1]$ and mean $r_{h,j}(s_h,\a_h)$. The state then transits to $s_{h+1}$ following the distribution of $P_h(\cdot\mid s_h,\a_h)$. The game terminates at timestep $H+1$. 
% The target for each player is to maximize its own total reward. 
We assume that the initial state $s_1$ is fixed because  for a stochastic initial state, one can add $s_0$ as the initial state instead and it transits to $s_1$ following the initial distribution.

We denote a joint strategy as $\pi=(\pi_1,\pi_2,\cdots,\pi_m)$, where $\pi_j=(\pi_{1,j},\pi_{2,j},\cdots,\pi_{H,j})$ and $\pi_{h,j}:\S\rightarrow\Delta(\A_j)$ is the strategy of player $j$ at timestep $h$ where $\Delta(\A_j)$ is the probability simplex over $\A_j$. 
We use $\Pi^\mathrm{full}$ to denote the set of all the possible joint strategies. 
% \simon{In contextual bandits $\Pi$ is often used as the pre-specified. Maybe $\Pi^{full}$ and $\Pi$.}
We define the state value function and state-action value function under strategy $\pi$ for each player $j\in[m]$:
$$V_{h,j}^\pi(s_h):=\E_\pi\Mp{\sum_{t=h}^H r_{t,j}(s_t,\a_t)\;\middle|\;s_h},Q_{h,j}^\pi(s_h,\a_h):=\E_\pi\Mp{\sum_{t=h}^H r_{t,j}(s_t,\a_t)\;\middle|\;s_h,\a_h},$$
where the expectation is over the randomness of the environment and the joint strategy $\pi$.
For a fixed player $j$, if all the other player's strategies are fixed, then player $j$ can play the best response strategy to maximize its own total reward. We define $\pi_{-j}$ to be the strategy for all players except player $j$ and  define the best response value to be
$$V_{h,j}^{*,\pi_{-j}}(s_h):=\max_{\pi_j}V_{h,j}^{\pi_j,\pi_{-j}}(s_h).$$

It is well-known that Nash equilibrium strategy exists for general-sum Markov games.
% , i.e. no player can benefit from changing its strategy alone. 
Note that there could be multiple NE strategies with different value functions. We use the following performance gap to evaluate a strategy $\pi$:
$$\mathrm{Gap}(\pi):=\sum_{j\in[m]}\Mp{V_{1,j}^{*,\pi_{-j}}(s_1)-V_{1,j}^\pi(s_1)}.$$
This metric is always non-negative and we say $\pi$ is an $\epsilon$-approximate NE if and only if $\mathrm{Gap}(\pi)\leq\epsilon$.
% \footnote{Some paper uses $\max_{j\in[m]}\Bp{V_{1,j}^{*,\pi_{-j}}(s)-V_{1,j}^\pi(s)}$ as the performance gap \citep{jin2021v,song2021can}. While our algorithms can be easily adapted to their setting, our choice of performance gap is because it agrees with the widely used gap notion for two-player zero-sum game so we have a unified notation \citep{sidford2020solving,zhang2020model}.}

\noindent
\textbf{Two-player Zero-sum Markov Game.}
A general-sum Markov game becomes a two-player zero-sum Markov game if there are only two players and the reward $r_h\sim R_{h}(\cdot|s,a_1,a_2)$ always satisfies $r_{h,1}+r_{h,2}=0$ for all $h\in[H]$, $s\in\S$, $a_1\in\A_1$ and $a_2\in\A_2$. Following the literatures on two-player zero-sum Markov games, we use slightly different notations for this setting. There is only one reward function $r$ shared by both players, which is the reward function $\{r_{h,1}\}_{h=1}^H$ for player $1$ and the target of player $2$ is to minimize the total reward. We denote $\mu=\pi_1$ and $\nu=\pi_2$ to be the strategy for each player, $a=a_1$ and $b=a_2$ to be the action for each player, $\Pi^{\max}=\Pi_1$ and $\Pi^{\min}=\Pi_2$ to be the strategy class for each player to remove extra subscripts.
% \simon{$M$ and $N$ look like scalars. Better to use $\Pi_{\max}$ and $\Pi_{\min}$?}
% With some straightforward calculations, 
One can derive the performance gap under the new notations for two-player zero-sum Markov games: 
$$\mathrm{Gap}(\pi):=V_1^{*,\nu}(s_1)-V_1^{\mu,*}(s_1).$$

\noindent
\textbf{Offline Markov Game.}
In offline RL, the dataset is collected beforehand and no further sampling is allowed. Here we consider offline multi-player general-sum Markov game. The framework for offline two-player zero-sum Markov game is similar with the slightly different notations as we mentioned. 

We assume that the algorithm has access to an offline dataset $\D=\{(s_h^k,\a_{h}^k,\r_{h}^k,s_{h+1}^k)\}_{h,k=1,1}^{H,n}$ that satisfies Assumption \ref{asp:dataset}. The assumption states that the dataset is independently generated from the underlying Markov game, which is used in \citep{jin2021pessimism,zhong2022pessimistic}. The target of offline Markov game is to find a strategy $\pi$ with as small performance gap as possible by utilizing the dataset $\D$. One closely related assumption is that the dataset is generated from some behavior strategy \citep{xie2021policy,cui2022offline}. Though this kind of dataset does not satisfy Assumption \ref{asp:dataset} directly due to the dependence within the trajectory, we can construct a compliant dataset by using the subsampling technique in \citet{li2022settling} while the number of samples is still of the same order.
% \simon{cite some previous papers on single / multi-agent offlne RL that also use this assumption.}
\begin{assumption}\label{asp:dataset}
The dataset $\D$ is compliant with the multi-player general-sum markov game, i.e.,
$$\P_\D(s_{h+1}^k=s\mid s_h^k,\a_h^k)=P_h(s_{h+1}=s\mid s_h=s_h^k,\a_h=\a_h^k),$$
$$\P_\D(\r_{h}^k=\r|s_h^k,\a_h^k)=R_{h}(\r_{h}=\r|s_h=s_h^k,a_h=\a_h^k),\forall j\in[m],$$
for all $h\in[H]$ and $k\in[n]$. In addition, all tuples $(s_h^k,\a_{h}^k,\r_{h}^k,s_{h+1}^k)$ are independent. 
\end{assumption}

\noindent
\textbf{Pre-specified Policy Class.}
We also consider the case when we know that the NE is possibly in a given subset of $\Pi^{\mathrm{full}}$. 
We denote this subset as $\Pi$ and our target is to find the best strategy in $\Pi$. Note that we do not assume NE is indeed in $\Pi$. In addition, by choosing $\Pi=\Pi^\mathrm{full}$ we can recover the standard setting. To measure the complexity of $\Pi$, we use the covering number.

\begin{definition}\label{def:N}
(Covering Number)
For any error level $\epsilon_{\mathrm{cover}}$ and strategy class $\Pi$, we define
\[\N(\Pi,\epsilon_{\mathrm{cover}}):=\sum_{s\in\S,h\in[H]}\prod_{j\in[m]}\abs{\C(\Pi_{h,j}(s),\epsilon_\mathrm{cover})},\]
where $\Pi_{h,j}(s)=\{\pi_h^j(\cdot|s):\pi\in\Pi\}$ is a subset of $\Delta(\A_i)$ and $\C(\Pi_{h,j}(s),\epsilon_\mathrm{cover})$ is an $\epsilon_\mathrm{cover}$-covering of $\Pi_{h,j}(s)$ with respect to the L1 norm $\|\cdot\|_1$.
\end{definition}
Our performance guarantee will only have logarithm dependence on $\N(\Pi,\epsilon_{\mathrm{cover}})$. As $\Pi_{h,j}(s)$ is a subset of $\Delta(\A_j)$, we always have $\log(\N(\Pi,\epsilon_{\mathrm{cover}}))\leq \widetilde{O}(\sum_{j\in[m]}A_j\log(1/\epsilon_\mathrm{cover}))$ and if $\Pi$ is a finite set, we have $\log(\N(\Pi,\epsilon_{\mathrm{cover}}))\leq\log(SH|\Pi|)$ (see Appendix \ref{apx:covering} for the proof). In this paper we will choose $\epsilon_\mathrm{cover}=\frac{1}{\sum_{j\in[m]}A_j mH^2n^2}$, which only leads to logarithm dependence on these quantities. In later sections, we will omit $\epsilon_\mathrm{cover}$ to simplify the notation.

For any joint strategy $\pi$, we call $(\pi'_j,\pi_{-j})$ for any strategy $\pi'$ and $j\in[m]$ as a unilateral strategy of $\pi$. Previous works show that only covering an NE is not sufficient, and covering all the unilateral strategies of an NE is necessary for learning the NE in Markov games \citep{cui2022offline,zhong2022pessimistic}. We use unilateral coefficient to quantify how the dataset covers all the unilateral strategies of a strategy $\pi$. If we assume that the dataset is sampled from some (unknown) distribution, i.e. $(s_h,\a_h)\sim d_h(\cdot,\cdot)$ for all $h\in[H]$, we can define the population unilateral coefficient.
\begin{definition}\label{def:policy}
For any strategy $\pi$, the population unilateral coefficient is defined as
$$C(\pi):=\max_{h,j,\pi',s_h,\a_h}\frac{d_h^{\pi'_j,\pi_{-j}}(s_h,\a_h)}{d_h(s_h,\a_h)}.$$
\end{definition}
\vspace{-0.1cm}

\citet{cui2022offline} provide a sample complexity result for zero-sum Markov games with dependence on $C(\pi^*)$. We can also define the empirical unilateral coefficient using the empirical distribution. 
% \simon{we've never mention what is dataset-dependent bound}
\begin{definition}\label{def:marl dataset}
Define the empirical dataset distribution as
$\widehat{d}_h(s,\a)=n_h(s,\a)/n$, for all $h\in[H],s\in\S,\a\in\A$, where $n_h(s,\a)$ is the number of times that $(s,\a)$ appears in the dataset for timestep $h$. For any strategy $\pi$, the empirical unilateral coefficient is defined as
$$\widehat{C}(\pi):=\max_{h,j,\pi',s_h,\a_h}\frac{d_h^{\pi'_j,\pi_{-j}}(s_h,\a_h)}{\widehat{d}_h(s_h,\a_h)}.$$
\end{definition}
\vspace{-0.1cm}

The empirical unilateral coefficient can lead to dataset-dependent bound that has no dependence on the underlying distribution of the dataset. In addition, $\widehat{C}(\pi)$ can be bounded by $2C(\pi)$ (Proposition \ref{prop:marl warmup}) so results based on $\widehat{C}(\pi)$ directly transfer to $C(\pi)$. Note that $\widehat{C}(\pi)$ and $C(\pi)$ are both unknown to the algorithm and only appear in the analysis and theorems.
% \simon{We need to add more discussions on why we need $\widehat{C}$ Or for NIPS, we delete all $\hat{C}$ to save some space?}
% \simon{do we have a theorem showing when we have $p_{\min}$. We can bound $\hat{C}$ by $C$?}
\vspace{-0.1cm}
\begin{proposition}\label{prop:marl warmup}
Suppose $p_\mathrm{min}=\min_{s,\a,h}\{d_h(s,\a):d_h(s,\a)>0\}$. If $n\geq \frac{8\log(S\Pi_{j\in[m]}A_jH/\delta)}{p_\mathrm{min}}$, with probability $1-\delta$, for all strategy $\pi$, we have $2C(\pi)\geq \widehat{C}(\pi). $
\end{proposition}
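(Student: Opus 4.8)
The plan is to reduce the proposition to a one-sided concentration bound on the empirical visitation counts $n_h(s,\a)$, and then apply a multiplicative Chernoff bound together with the lower bound $d_h(s,\a)\ge p_\mathrm{min}$ and a union bound.

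First I would note that both $C(\pi)$ and $\widehat C(\pi)$ are maxima over tuples $(h,j,\pi',s_h,\a_h)$ of a ratio whose numerator is $d_h^{\pi'_j,\pi_{-j}}(s_h,\a_h)$ and whose denominators are $d_h(s_h,\a_h)$ and $\widehat d_h(s_h,\a_h)$, respectively. For a triple $(s,\a,h)$ with $d_h(s,\a)=0$: if also $d_h^{\pi'_j,\pi_{-j}}(s,\a)=0$, this tuple contributes $0$ to $\widehat C(\pi)$ and may be discarded; otherwise $C(\pi)=\infty$ and $2C(\pi)\ge\widehat C(\pi)$ is trivial. Hence it suffices to show that, with probability at least $1-\delta$, the event $\mathcal E:=\{\widehat d_h(s,\a)\ge\frac12 d_h(s,\a)\text{ for every }(s,\a,h)\text{ with }d_h(s,\a)>0\}$ holds. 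On $\mathcal E$, for every $\pi$ and every tuple achieving $\widehat C(\pi)$ we get $d_h^{\pi'_j,\pi_{-j}}(s,\a)/\widehat d_h(s,\a)\le 2\,d_h^{\pi'_j,\pi_{-j}}(s,\a)/d_h(s,\a)\le 2C(\pi)$, and taking the maximum over tuples yields $\widehat C(\pi)\le 2C(\pi)$ for all $\pi$ simultaneously.

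To establish $\mathcal E$, fix $(s,\a,h)$ with $d_h(s,\a)>0$. By Assumption~\ref{asp:dataset} the $n$ tuples are independent and each $(s_h^k,\a_h^k)$ is drawn from $d_h$, so $n_h(s,\a)=\sum_{k=1}^n\1\big[(s_h^k,\a_h^k)=(s,\a)\big]$ is a sum of $n$ independent Bernoulli variables with mean $n\,d_h(s,\a)$. The multiplicative Chernoff bound gives $\P\big(n_h(s,\a)\le\frac12 n\,d_h(s,\a)\big)\le\exp(-n\,d_h(s,\a)/8)\le\exp(-n p_\mathrm{min}/8)$, which is at most $\delta/(SH\prod_{j\in[m]}A_j)$ by the assumed lower bound $n\ge 8\log(SH\prod_{j\in[m]}A_j/\delta)/p_\mathrm{min}$. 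A union bound over the at most $SH\prod_{j\in[m]}A_j$ triples $(s,\a,h)$ with $d_h(s,\a)>0$ then shows $\P(\mathcal E)\ge 1-\delta$, and on $\mathcal E$ we have $\widehat d_h(s,\a)=n_h(s,\a)/n>\frac12 d_h(s,\a)$ for all such triples, completing the proof via the reduction above.

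There is no substantial obstacle here; the only points requiring care are the bookkeeping for the $d_h(s,\a)=0$ triples (so one never divides by zero, and the claim is vacuous or trivially true there) and matching the Chernoff exponent to the stated threshold so that the per-triple failure probability is exactly $\delta/(SH\prod_{j\in[m]}A_j)$.
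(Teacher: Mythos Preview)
Your proposal is correct and follows essentially the same route as the paper: the paper's Lemma~\ref{lemma:marl warmup} applies the multiplicative Chernoff bound $\P(n_h(s,\a)<(1-\epsilon)n d_h(s,\a))\le\exp(-\epsilon^2 n p_\mathrm{min}/2)$, takes a union bound over $(s,\a,h)$, and observes that the condition on $n$ forces $\epsilon\le 1/2$, whereas you plug $\epsilon=1/2$ directly and match the exponent. Your explicit treatment of the triples with $d_h(s,\a)=0$ is a bit more careful than the paper's write-up, but the argument is otherwise identical.
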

\vspace{-0.2cm}

\section{An Improved Algorithm for Offline Two-player Zero-sum Markov Game}\label{sec:tpzs}
% \simon{we should think about algorithm names.}

\setlength{\textfloatsep}{0.1cm}
\begin{algorithm}[!t]
    \caption{\textbf{S}trategy-wise \textbf{B}onus + \textbf{M}axi\textbf{M}in Optimization (\algonanametwo)}
	\label{algo:zerosum markov game}
    \begin{algorithmic}
        \STATE Input: offline dataset $\mathcal{D}$.
        \STATE Initialization: $\uv_{H+1}(s)=\ov_{H+1}(s)=0$ for all $s\in\S$. 
        \FOR{time $h=H, H-1, \dots, 1$}
        \STATE \#Player 1
            \STATE Approximately solve $\um_h^s= \argmax_{\mu_h^s\in \Pi_{h}^{\mathrm{max}}(s)}\min_{\nu_h^s\in D(\B)}\uv^{\mu_h^s,\nu_h^s}_h(s)$, where $\uv^{\mu_h^s,\nu_h^s}_h(s)$ is defined by \eqref{eq:uq} and \eqref{eq:uv} and $\um_h^s$ satisfies \eqref{eq:maxmin epsilon}.
            \STATE Solve $\un_h^s=\argmin_{\nu_h^s\in D(\B)}\uv^{\um_h^s,\nu_h^s}_h(s)$ and set $\uv_h(s)=\proj_{[0,H-h+1]}\Bp{\uv_h^{\um_h^s,\un_h^s}(s)}$.
            \STATE \#Player 2
            \STATE Approximately solve $\on_h^s = \argmin_{\nu_h^s\in \Pi_{h}^{\mathrm{min}}(s)}\max_{\mu_h^s\in D(\A)}\ov^{\mu_h^s,\nu_h^s}_h(s)$, where $\ov^{\mu_h^s,\nu_h^s}_h(s)$ is defined by \eqref{eq:oq} and \eqref{eq:ov} and $\on_h^s$ satisfies \eqref{eq:minmax epsilon}.
            \STATE Solve $\om_h^s=\argmax_{\mu_h^s\in D(\A)}\ov^{\mu_h^s,\on_h^s}_h(s)$ and set $\ov_h^s=\proj_{[0,H-h+1]}\Bp{\ov_h^{\om_h^s,\on_h^s}(s)}$.
        \ENDFOR
	    \STATE Output $\pi^\mathrm{output}=(\um,\on)$.
    \end{algorithmic}
\end{algorithm}

In this section, we propose a new algorithm for offline zero-sum Markov game based on two novel techniques, i.e., strategy-wise concentration and maximin-optimization-based algorithm. We then show that this algorithm is computationally efficient and can (almost) find the best strategy in strategy class $\Pi$ with favorable sample complexity.

Let us first define some notations. Given a dataset $\mathcal{D}=\{(s_h^k,a_h^k,b_h^k,r_h^k,s_{h+1}^k)\}_{k,h=1}^{n,H}$, we denote $n_h(s,a,b)=\sum_{k=1}^n\1\Sp{(s_h^k,a_h^k,b_h^k)=(s,a,b)}$ and $\K_h(s)=\{(a,b)\in\A\times\B:n_h(s,a,b)\neq0\}$. If $n_h(s,a,b)\neq0$, we set

\begin{equation*}
    \widehat{r}_{h}(s,a,b)=\frac{\sum_{k=1}^n r_h^k\1\Sp{(s_h^k,a_h^k,b_h^k)=(s,a,b)}}{n_h(s,a,b)},
\end{equation*}
\begin{equation*}
    \widehat{P}_h(s'|s,a,b)=\frac{\sum_{k=1}^n\1\Sp{(s_h^k,a_h^k,b_h^k,s_{h+1}^k)=(s,a,b,s')}}{n_h(s,a,b)},
\end{equation*}
otherwise we have
\begin{equation}\label{eq:r n=0}
 \widehat{r}_h(s,a,b)=0,\widehat{P}_h(s'|s,a,b)=0.
\end{equation}
Based on this empirical Markov game, we can perform value-iteration-type algorithm. Here we describe our algorithm for player 1.  For each timestep $h$, we first compute the the state-action values based on the estimates at timestep $h+1$:
\begin{equation}\label{eq:uq}
    \uq_h(s,a,b)=\widehat{r}_h(s,a,b)+\inner{\widehat{P}_h(s,a,b),\uv_{h+1}},
\end{equation}
Then instead of adding the bonus on state-action estimates directly to ensure pessimism as used in \citet{cui2022offline} and \citet{zhong2022pessimistic}, we first estimate the state value functions for strategy $\mu_h^s,\nu_h^s$ and then add the bonus on them instead. 
\begin{equation}\label{eq:uv}
    \uv_h^{\mu_h^s,\nu_h^s}(s)=\E_{a\sim\mu_h^s,b\sim\nu_h^s}\uq_h(s,a,b)-b_h(s,\mu_h^s,\nu_h^s),
\end{equation}
where
\begin{equation}\label{eq:zsmg bonus}
 b_h(s,\mu_h^s,\nu_h^s)=H\sqrt{\sum_{(a,b)\in\K_h(s)}\frac{\mu^s_h(a)^2\nu^s_h(b)^2}{n_h(s,a,b)}\log(\N(\Pi))\iota}+\sqrt{\iota}/n,
\end{equation}
with $\iota=32\log(2ABSHn/\delta)$. We also present the bonus from point-wise concentration used in \cite{cui2022offline} to better compare them,
$$
    b^{\mathrm{point}}_h(s,\mu_h^s,\nu_h^s)=H\sum_{(a,b)\in\K_h(s)}\mu^s_h(a)\nu^s_h(b)\sqrt{\frac{\iota}{n_h(s,a,b)}}.
$$
% \simon{compare with the previous bonus.}

As a concrete example, if $\mu_h^s$ and $\nu_h^s$ are uniform distribution on $\A$ and $\B$, then $b_h(s,\mu_h^s,\nu_h^s)$ is smaller than $b^{\mathrm{point}}_h(s,\mu_h^s,\nu_h^s)$ for an order of $\sqrt{AB}$. 
% This difference will be more significant for the multi-player setting.
Finally to obtain the pessimistic value estimate, we solve the following optimization problem
\begin{equation}\label{eq:maxmin}
    \uv_h(s)=\max_{\mu_h^s\in \Pi^{\mathrm{max}}_{h}(s)}\min_{\nu_h^s\in D(\B)}\uv^{\mu_h^s,\nu_h^s}_h(s).
\end{equation}
Here recall that $D(\B)$ represents all the deterministic strategies in $\B$.
% \simon{Explain why we use deterministic response.} 
Our algorthm is similar for player 2 with the following $\oq$ and $\ov$ estimation:
\begin{equation}\label{eq:oq}
    \oq_h(s,a,b)=\widehat{r}_h(s,a,b)+\inner{\widehat{P}_h(s,a,b),\ov_{h+1}}+H\1\{(a,b)\notin\K_h(s)\},
\end{equation}
\begin{equation}\label{eq:ov}
    \ov_h^{\mu_h^s,\nu_h^s}(s)=\E_{\mu_h^s,\nu_h^s}\oq_h(s,a,b)+b_h(s,\mu_h^s,\nu_h^s). 
\end{equation}
The additional $H\1\{(a,b)\notin\K_h(s)\}$ term in \eqref{eq:oq} compared with  \eqref{eq:uq} is to compensate the underestimate by \eqref{eq:r n=0}. %Due to space limitation, \algonanametwo~ (Algorithm \ref{algo:zerosum markov game}) is provided in the Appendix. 
\subsection{Computational Efficiency}
For computational efficiency, we start with the following characterization about our bonus.
 \begin{proposition}\label{prop:concave}
$\uv_h^{\mu_h^s,\nu_h^s}(s)$ is concave and $\ov_h^{\mu_h^s,\nu_h^s}(s)$ is convex  w.r.t. $\mu_h^s$ and $\nu_h^s$ respectively. 
\end{proposition}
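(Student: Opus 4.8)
The plan is to analyze the expression for $\uv_h^{\mu_h^s,\nu_h^s}(s)$ term by term, treating $\mu_h^s$ and $\nu_h^s$ as the free variables. Recall from \eqref{eq:uv} that
\[
\uv_h^{\mu_h^s,\nu_h^s}(s)=\E_{a\sim\mu_h^s,b\sim\nu_h^s}\uq_h(s,a,b)-b_h(s,\mu_h^s,\nu_h^s).
\]
The first term $\E_{a\sim\mu_h^s,b\sim\nu_h^s}\uq_h(s,a,b)=\sum_{a,b}\mu_h^s(a)\nu_h^s(b)\uq_h(s,a,b)$ is bilinear in $(\mu_h^s,\nu_h^s)$ — hence linear (and so both concave and convex) in $\mu_h^s$ for fixed $\nu_h^s$, and linear in $\nu_h^s$ for fixed $\mu_h^s$. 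The only nontrivial part is the bonus term $b_h(s,\mu_h^s,\nu_h^s)$. The constant additive term $\sqrt{\iota}/n$ is irrelevant, so everything reduces to showing that
\[
g(\mu_h^s,\nu_h^s):=\sqrt{\sum_{(a,b)\in\K_h(s)}\frac{\mu^s_h(a)^2\nu^s_h(b)^2}{n_h(s,a,b)}}
\]
is convex in $\mu_h^s$ for each fixed $\nu_h^s$ (and symmetrically convex in $\nu_h^s$ for each fixed $\mu_h^s$). Since $\uv_h^{\mu_h^s,\nu_h^s}(s)=(\text{linear})-(\text{something involving }g)$, convexity of $g$ in $\mu_h^s$ gives concavity of $\uv$ in $\mu_h^s$, and likewise for $\nu_h^s$. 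For $\ov$, the bonus is added rather than subtracted and the first term $\E_{\mu_h^s,\nu_h^s}\oq_h(s,a,b)$ is again bilinear, so convexity of $g$ directly yields convexity of $\ov$ in each argument.

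To prove $g$ is convex in $\mu_h^s$ with $\nu_h^s$ held fixed, I would use the standard fact that a Euclidean norm of an affine (here, linear) function is convex. Concretely, fix $\nu_h^s$ and define for each pair $(a,b)\in\K_h(s)$ the coordinate
\[
z_{(a,b)}(\mu_h^s):=\frac{\nu_h^s(b)}{\sqrt{n_h(s,a,b)}}\,\mu_h^s(a),
\]
which is a linear function of $\mu_h^s$. Then $g(\mu_h^s,\nu_h^s)=\Norm{z(\mu_h^s)}_2$ where $z(\mu_h^s)$ ranges over $\R^{|\K_h(s)|}$. Because the Euclidean norm $\Norm{\cdot}_2$ is convex and $\mu_h^s\mapsto z(\mu_h^s)$ is linear, the composition $g(\cdot,\nu_h^s)=\Norm{z(\cdot)}_2$ is convex. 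The argument for fixed $\mu_h^s$ and varying $\nu_h^s$ is identical by symmetry of the summand in $\mu_h^s(a)$ and $\nu_h^s(b)$. Hence $b_h(s,\mu_h^s,\nu_h^s)$ is convex in $\mu_h^s$ (resp. $\nu_h^s$) separately, which completes the argument.

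There is no serious obstacle here; the only point requiring a little care is making explicit that we claim separate (coordinate-wise) convexity/concavity in $\mu_h^s$ and $\nu_h^s$ — not joint convexity — since $g$ is \emph{not} jointly convex (it behaves like $\sqrt{x^2y^2}=|xy|$ on the relevant region, which is not jointly convex). One should also note in passing that $\Pi_h^{\max}(s)$, $\Pi_h^{\min}(s)$, $D(\A)$, $D(\B)$ are all subsets of the respective simplices on which these functions are defined, so the statement is meaningful on the relevant domains, but that is not needed for the proof itself. Writing the norm-composition argument cleanly for one variable and invoking symmetry for the other is the entire proof.
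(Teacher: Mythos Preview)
Your proposal is correct. The decomposition into the bilinear part, the constant, and the bonus $g$ is exactly right, and your argument that $g(\cdot,\nu_h^s)=\Norm{z(\cdot)}_2$ with $z$ linear is a clean, valid proof of separate convexity.

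The paper takes a different and heavier route: in Lemma~\ref{lemma:convex} it computes the gradient and Hessian of $f(\mu,\nu)=\sqrt{\sum c(a,b)\mu(a)^2\nu(b)^2}$ explicitly and verifies positive semidefiniteness of the Hessian via Cauchy--Schwarz. Your norm-of-a-linear-map argument is more elementary and more transparent, and in fact generalizes immediately to any norm. What the paper's approach buys in exchange is that the explicit gradient calculation also yields the Lipschitz constant $\sqrt{\sum_{a,b}c(a,b)}$, which is later used to bound the iteration complexity of projected gradient descent (Proposition~\ref{prop:convex}); your argument gives only convexity, so if you also needed the Lipschitz bound you would have to add a separate (easy) gradient-norm estimate. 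For the statement of Proposition~\ref{prop:concave} itself, your proof is complete and arguably preferable.
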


Proposition \ref{prop:concave} explains why the inner minimization in \eqref{eq:maxmin} is over the deterministic strategy class as the minimum of a concave function over the probability simplex is achieved at the vertexes, i.e. deterministic strategies. The proof of Proposition \ref{prop:concave} is provided in Appendix \ref{apx:convexity}. 

Previous works solve the NE (saddle point) of $\uv^{\mu_h^s,\nu_h^s}_h(s)$ as the point-wise bonus maintains the bilinear structure \citep{cui2022offline,zhong2022pessimistic}. Though here $\uv^{\mu_h^s,\nu_h^s}_h(s)$ no longer enjoys the strong duality, we will show that solving the maximin problem is enough to obtain a good strategy for player 1. As the inner minimization is only on a feasible set of size $B$, this problem can be solved efficiently by using projected gradient descent \citep{bubeck2015convex}.
% \simon{add some details. what is the assumption required in \cite{bubeck2015convex}.}
We assume that we solve the maximin and the minimax optimization problem to $\epsilon_\mathrm{opt}$-optimality, i.e.,
\begin{equation}\label{eq:maxmin epsilon}
    \min_{\nu_h^s\in D(\B)}\uv^{\um_h^s,\nu_h^s}_h(s)\geq\max_{\mu_h^s\in \Pi_{h}^{\mathrm{max}}(s)}\min_{\nu_h^s\in D(\B)}\uv^{\mu_h^s,\nu_h^s}_h(s)-\epsilon_\mathrm{opt},
\end{equation}
\begin{equation}\label{eq:minmax epsilon}
    \max_{\mu_h^s\in D(\A)}\ov^{\mu_h^s,\on_h^s}_h(s)\leq\min_{\nu_h^s\in \Pi_{h}^{\mathrm{min}}(s)}\max_{\mu_h^s\in D(\A)}\ov^{\mu_h^s,\nu_h^s}_h(s)+\epsilon_\mathrm{opt}.
\end{equation}
 In Appendix \ref{apx:convexity} we show that projected gradient descent can output an $\epsilon_\mathrm{opt}$-minimizer with $(H+H\sqrt{\log(\N(\Pi))\iota})/\epsilon_{\mathrm{opt}}^2$ iterations, where each iteration consists of a gradient computation and a projection onto the probability simplex. 
 We note that if we set $\epsilon_{\mathrm{opt}}$ to $\frac{1}{\sqrt{n}}$, then the optimization error is always of a smaller order term compared to the statistical error.

\subsection{Sample Complexity Guarantees for \algonanametwo}
 For the statistical guarantee, we will first provide \emph{assumption-free bounds} in the sense that it holds for arbitrary compliant dataset \citep{jin2021pessimism,yin2021towards}. We define the uncertainty at timestep $h$ and state $s$ under strategy $\mu_h^s$ and $\nu_h^s$: $$\widehat{b}_h(s,\mu_h^s,\nu_h^s):=2b_h(s,\mu_h^s,\nu_h^s)+H\sum_{(a,b)\notin\K_h(s)}\mu_h^s(a)\nu_h^s(b).$$ 

\begin{proposition}\label{prop:tpzs asp free}
Suppose $\pi^{\mathrm{output}}$ is the output of Algorithm \ref{algo:zerosum markov game}. With probability $1-\delta$, we have

\begin{align*}
    &\mathrm{Gap}(\pi^{\mathrm{output}}) \leq\\
    &  \min_{\pi=(\mu,\nu)\in\Pi}\max_{\pi'=(\mu',\nu')\in \Pi^{\mathrm{det}}}\left[\mathrm{Gap}(\pi)+\E_{\mu,\nu'}\sum_{h=1}^H\widehat{b}_h(s_h,\mu^{s_h}_h,\nu'^{s_h}_h)\right.\left.+\E_{\mu',\nu}\sum_{h=1}^H\widehat{b}_h(s_h,\mu'^{s_h}_h,\nu_h)\right]+2H\epsilon_{\mathrm{opt}}.
\end{align*}
\end{proposition}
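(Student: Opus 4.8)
The plan is to analyze the two players symmetrically via a pessimism/optimism sandwich and then combine the two halves. I will focus on player 1; the argument for player 2 is identical up to the sign conventions in \eqref{eq:oq}--\eqref{eq:ov}. First I would establish a good event: by a union bound over all $s,h$, all $(a,b)\in\K_h(s)$, and over an $\epsilon_\mathrm{cover}$-net of the strategy class (this is exactly why $\log\N(\Pi)$, not $\log(AB)$, appears), the strategy-wise bonus $b_h(s,\mu_h^s,\nu_h^s)$ in \eqref{eq:zsmg bonus} dominates the one-step estimation error $\bigl|\E_{\mu_h^s,\nu_h^s}[(\widehat r_h+\langle\widehat P_h,V_{h+1}\rangle)-(r_h+\langle P_h,V_{h+1}\rangle)]\bigr|$ for every fixed value function $V_{h+1}$ with $\|V_{h+1}\|_\infty\le H$, up to the additive $\sqrt\iota/n$ slack (plus the $\epsilon_\mathrm{cover}$-discretization error, which by our choice of $\epsilon_\mathrm{cover}$ is lower order). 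The key point is that $\E_{a\sim\mu,b\sim\nu}[\widehat P_h(s,a,b)-P_h(s,a,b)]$ has variance controlled by $\sum_{(a,b)\in\K_h(s)}\mu(a)^2\nu(b)^2/n_h(s,a,b)$, which is what a Bernstein/Freedman-type bound gives and which is precisely the quantity under the square root in $b_h$.

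Next I would run a backward induction over $h=H,H-1,\dots,1$ showing the sandwich
$\uv_h(s)\le V_h^{\um,\on}(s)\le \ov_h(s)$ is not quite what we want; rather, the right invariants are $\uv_h(s)\le V_h^{\um,*}(s)$ (pessimism: $\uv_h$ lower-bounds what player 1 can guarantee against a best-responding opponent, because $\un_h^s$ is player 2's best response within $D(\B)$ and the bonus is subtracted) and, symmetrically, $\ov_h(s)\ge V_h^{*,\on}(s)$. Simultaneously I would track the reverse inequalities with an accumulated bonus: $\uv_h(s)\ge V_h^{\um,\nu'}(s)-\E_{\um,\nu'}\sum_{t\ge h}\widehat b_t$ for every $\nu'$, and similarly for $\ov$. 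The induction step uses \eqref{eq:uq}--\eqref{eq:uv}, the good event to pass from $\widehat r_h+\langle\widehat P_h,\uv_{h+1}\rangle$ to $r_h+\langle P_h,\uv_{h+1}\rangle$ at the cost of $b_h$, then the inductive hypothesis on $\uv_{h+1}$, and finally Proposition~\ref{prop:concave} to justify that the inner minimization over $D(\B)$ equals the minimization over $\Delta(\B)$ (so that best-response arguments go through) together with \eqref{eq:maxmin epsilon}--\eqref{eq:minmax epsilon} to absorb the optimization error $\epsilon_\mathrm{opt}$ per layer, i.e. $2H\epsilon_\mathrm{opt}$ in total. The extra $H\1\{(a,b)\notin\K_h(s)\}$ term in \eqref{eq:oq} and the $H\sum_{(a,b)\notin\K_h(s)}\mu(a)\nu(b)$ term in $\widehat b_h$ handle the truncation in \eqref{eq:r n=0}.

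With the sandwich in hand, the performance gap unwinds at $h=1,s=s_1$: $\mathrm{Gap}(\pi^\mathrm{output})=V_1^{*,\on}(s_1)-V_1^{\um,*}(s_1)\le \ov_1(s_1)-\uv_1(s_1)$. I would then bound $\ov_1(s_1)$ and $\uv_1(s_1)$ against an arbitrary comparator $\pi=(\mu,\nu)\in\Pi$: since $\um$ is the (approximate) maximin strategy for player 1, $\uv_1(s_1)\ge \min_{\nu'\in D(\B)}\uv_1^{\mu,\nu'}(s_1)-\epsilon_\mathrm{opt}\ge \min_{\nu'}\bigl(V_1^{\mu,\nu'}(s_1)-\E_{\mu,\nu'}\sum_h\widehat b_h\bigr)-\epsilon_\mathrm{opt}$, and dually $\ov_1(s_1)\le \max_{\mu'\in D(\A)}\bigl(V_1^{\mu',\nu}(s_1)+\E_{\mu',\nu}\sum_h\widehat b_h\bigr)+\epsilon_\mathrm{opt}$. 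Subtracting and using $\max_{\mu'}V_1^{\mu',\nu}-\min_{\nu'}V_1^{\mu,\nu'}=V_1^{*,\nu}(s_1)-V_1^{\mu,*}(s_1)=\mathrm{Gap}(\pi)$ gives exactly the claimed bound after taking $\min$ over $\pi\in\Pi$ and $\max$ over $\pi'=(\mu',\nu')\in\Pi^\mathrm{det}$ (the two best responses $\mu',\nu'$ live in deterministic classes by Proposition~\ref{prop:concave}). The main obstacle I anticipate is the bookkeeping in the backward induction — keeping the four one-sided inequalities consistent, correctly routing the opponent's best response through the $D(\B)$-vs-$\Delta(\B)$ equivalence, and verifying that the concavity in Proposition~\ref{prop:concave} is exactly what lets the maximin (rather than a saddle point, which need not exist here) still certify a near-optimal strategy.
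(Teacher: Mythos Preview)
Your overall plan matches the paper's: establish the concentration event (Lemma~\ref{lemma:concentration all}), prove by backward induction that $\uv_h(s)\le V_h^{\um,*}(s)$ and $\ov_h(s)\ge V_h^{*,\on}(s)$ (Lemma~\ref{lemma:pessimism state}), and then compare against an arbitrary comparator $(\mu,\nu)\in\Pi$ via a layer-by-layer telescoping that exploits the maximin property \eqref{eq:maxmin epsilon} at each step (Lemma~\ref{lemma:gap decomp}). Your third paragraph carries this out correctly in outline.

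However, the ``reverse inequality'' you state in the second paragraph is false as written: you cannot have $\uv_h(s)\ge V_h^{\um,\nu'}(s)-\E_{\um,\nu'}\sum_{t\ge h}\widehat b_t$ for \emph{every} $\nu'$. Take $H=1$, one state, one action for player~1, two actions $b_1,b_2$ for player~2 with $r(s,a,b_1)=0$, $r(s,a,b_2)=1$, both well covered by the data. Then $\uv_1(s)\approx 0$ (the inner $\min$ picks $b_1$), but for $\nu'=b_2$ one has $V_1^{\um,\nu'}(s)=1$ while the bonus is $O(1/\sqrt n)$. Since $\uv_h$ is a minimum over the opponent's action, it cannot lower-bound the value against an arbitrary opponent.

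The correct invariant is not uniform in $\nu'$: one telescopes along the \emph{specific} deterministic opponent $\un(\mu)$ defined by $\un_h^s(\mu_h^s)=\argmin_{\nu_h^s\in D(\B)}\uv_h^{\mu_h^s,\nu_h^s}(s)$. The paper's Lemma~\ref{lemma:gap decomp} unrolls $V_1^{\mu,*}(s_1)-\uv_1(s_1)$ layer by layer, at each $h$ using \eqref{eq:maxmin epsilon} to switch from $\um_h^s$ to the comparator $\mu_h^s$ (cost $\epsilon_{\mathrm{opt}}$ per layer, hence $H\epsilon_{\mathrm{opt}}$ total) and then evaluating at $\un_h^s(\mu_h^s)$, yielding
\[
V_1^{\mu,*}(s_1)-V_1^{\um,*}(s_1)\le \E_{\mu,\un(\mu)}\sum_{h=1}^H\widehat b_h\bigl(s_h,\mu_h^{s_h},\un_h^{s_h}(\mu_h^{s_h})\bigr)+H\epsilon_{\mathrm{opt}}.
\]
Because $\un(\mu)\in\Pi^{\min,\mathrm{det}}$ one then passes to $\max_{\nu'\in\Pi^{\min,\mathrm{det}}}$, adds the symmetric bound for player~2, and obtains the proposition. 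Your chain $\uv_1(s_1)\ge\min_{\nu'}\uv_1^{\mu,\nu'}(s_1)-\epsilon_{\mathrm{opt}}\ge\min_{\nu'}\bigl(V_1^{\mu,\nu'}(s_1)-\E_{\mu,\nu'}\sum_h\widehat b_h\bigr)-H\epsilon_{\mathrm{opt}}$ is exactly this argument (the minimizing $\nu'$ that emerges is $\un(\mu)$, and the second inequality already requires the full $H$-step recursion, not a single $\epsilon_{\mathrm{opt}}$); just replace the misstated invariant in your second paragraph with this one.
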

Proposition \ref{prop:tpzs asp free} shows that our algorithm can find the best strategy in $\Pi$ with an additional error of the expected total uncertainty under some unilateral strategies and an extra optimization error term $2H\epsilon_\mathrm{opt}$.  Then we derive bounds with unilateral coefficients. 
\begin{theorem}\label{thm:dataset main}
Suppose $\pi^{\mathrm{output}}$ is the output of Algorithm \ref{algo:zerosum markov game}. With probability $1-\delta$, we have
$$\mathrm{Gap}(\pi^{\mathrm{output}})\leq\min_{\pi\in\Pi}\Mp{\mathrm{Gap}(\pi)+4H^2\sqrt{S\log(\N(\Pi))\widehat{C}(\pi)\iota/n}}+2H\epsilon_{\mathrm{opt}}.$$
\end{theorem}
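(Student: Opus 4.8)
The plan is to deduce Theorem~\ref{thm:dataset main} from the assumption-free bound of Proposition~\ref{prop:tpzs asp free} by controlling, for any fixed $\pi=(\mu,\nu)\in\Pi$ and any deterministic deviation $\pi'=(\mu',\nu')\in\Pi^{\mathrm{det}}$, the two expected uncertainties $\E_{\mu,\nu'}\sum_{h=1}^H\widehat{b}_h(s_h,\mu^{s_h}_h,\nu'^{s_h}_h)$ and $\E_{\mu',\nu}\sum_{h=1}^H\widehat{b}_h(s_h,\mu'^{s_h}_h,\nu^{s_h}_h)$. I will show each is at most $2H^2\sqrt{S\log(\N(\Pi))\widehat{C}(\pi)\iota/n}$ up to a lower-order $O(H\sqrt{\iota}/n)$ term, \emph{uniformly over $\pi'\in\Pi^{\mathrm{det}}$}; plugging this into Proposition~\ref{prop:tpzs asp free} makes the $\max_{\pi'}$ collapse, the lower-order term is absorbed into the statistical term, and taking $\min_{\pi\in\Pi}$ yields the claim. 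If some unilateral strategy of $\pi$ is not covered by $\D$ then $\widehat{C}(\pi)=\infty$ and the inequality is vacuous, so I may assume $\widehat{C}(\pi)<\infty$, which in particular forces $n_h(s,\a)>0$ whenever $d_h^{\pi'_j,\pi_{-j}}(s,\a)>0$.

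The core is a per-state estimate of $b_h$. Fix $h$ and consider the first uncertainty, where the trajectory is generated by $(\mu,\nu')$ with $\nu'$ deterministic, say $\nu'^s_h=\1\{\,\cdot=b^{*}_h(s)\,\}$. Since $\nu'^s_h(b)^2=\1\{b=b^{*}_h(s)\}$, the sum inside $b_h(s,\mu^s_h,\nu'^s_h)$ collapses to $\sum_{a:(a,b^{*}_h(s))\in\K_h(s)}\mu^s_h(a)^2/n_h(s,a,b^{*}_h(s))$. Abbreviating $d:=d_h^{\mu,\nu'}$ and using $d(s,a,b^{*}_h(s))=d(s)\mu^s_h(a)$ together with $n_h(s,a,b^{*}_h(s))=n\,\widehat{d}_h(s,a,b^{*}_h(s))$, each summand equals $\frac{1}{n\,d(s)^2}\cdot\frac{d(s,a,b^{*}_h(s))}{\widehat{d}_h(s,a,b^{*}_h(s))}\cdot d(s,a,b^{*}_h(s))\le\frac{\widehat{C}(\pi)}{n\,d(s)^2}\,d(s,a,b^{*}_h(s))$, so summing over $a$ and using $\sum_a d(s,a,b^{*}_h(s))=d(s)$ gives $\sum_{(a,b)\in\K_h(s)}\frac{\mu^s_h(a)^2\nu'^s_h(b)^2}{n_h(s,a,b)}\le\frac{\widehat{C}(\pi)}{n\,d(s)}$; hence $b_h(s,\mu^s_h,\nu'^s_h)\le H\sqrt{\widehat{C}(\pi)\log(\N(\Pi))\iota/(n\,d(s))}+\sqrt{\iota}/n$. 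Finiteness of $\widehat{C}(\pi)$ also shows that for $d(s)>0$ every $a$ with $\mu^s_h(a)>0$ satisfies $(a,b^{*}_h(s))\in\K_h(s)$, so the correction $H\sum_{(a,b)\notin\K_h(s)}\mu^s_h(a)\nu'^s_h(b)$ inside $\widehat{b}_h$ vanishes on the support of $d$, whence $\widehat{b}_h(s,\mu^s_h,\nu'^s_h)\le 2H\sqrt{\widehat{C}(\pi)\log(\N(\Pi))\iota/(n\,d(s))}+2\sqrt{\iota}/n$.

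Taking the expectation over $s\sim d_h^{\mu,\nu'}$ and applying Cauchy--Schwarz, $\sum_{s:d(s)>0}d(s)\sqrt{1/d(s)}=\sum_s\sqrt{d(s)}\le\sqrt{S}$, so $\E_{\mu,\nu'}\widehat{b}_h(s_h,\mu^{s_h}_h,\nu'^{s_h}_h)\le 2H\sqrt{S\widehat{C}(\pi)\log(\N(\Pi))\iota/n}+2\sqrt{\iota}/n$, and summing over $h\in[H]$ bounds the first uncertainty by $2H^2\sqrt{S\widehat{C}(\pi)\log(\N(\Pi))\iota/n}+2H\sqrt{\iota}/n$. The second uncertainty, in which $\mu'$ is the deterministic factor, is bounded identically with the roles of the two players exchanged (now using the $j=1$ instance of the definition of $\widehat{C}(\pi)$ in place of $j=2$). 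Adding the two and substituting into Proposition~\ref{prop:tpzs asp free}, and noting that the bounds do not depend on $\pi'$ so the outer $\max_{\pi'\in\Pi^{\mathrm{det}}}$ disappears, yields $\mathrm{Gap}(\pi^{\mathrm{output}})\le\min_{\pi\in\Pi}\big[\mathrm{Gap}(\pi)+4H^2\sqrt{S\log(\N(\Pi))\widehat{C}(\pi)\iota/n}+4H\sqrt{\iota}/n\big]+2H\epsilon_{\mathrm{opt}}$; the $O(H\sqrt{\iota}/n)$ remainder is dominated by the statistical term, giving the theorem. I expect the main obstacle to be precisely this per-state bookkeeping: exploiting the point-mass structure of $\nu'$ (resp.\ $\mu'$) to eliminate the quadratic $n_h$-dependence of the strategy-wise bonus, checking that the uncovered-action correction drops out on the support of $d_h^{\mu,\nu'}$, and extracting $\sqrt{S}$ rather than $S$ via Cauchy--Schwarz rather than a naive union bound over states.
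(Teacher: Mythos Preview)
Your proposal is correct and follows essentially the same route as the paper: start from the assumption-free bound (Proposition~\ref{prop:tpzs asp free}), observe that $\widehat{C}(\pi)<\infty$ forces the uncovered-action correction in $\widehat{b}_h$ to vanish on the support of $d_h^{\mu,\nu'}$, rewrite the bonus in terms of $d_h^{\mu,\nu'}(s,a,b)^2/\bigl(n\,\widehat d_h(s,a,b)\bigr)$, bound one factor by $\widehat{C}(\pi)$, and extract $\sqrt{S}$ via Cauchy--Schwarz over states. The only cosmetic difference is that you exploit the determinism of $\nu'$ (resp.\ $\mu'$) to collapse the inner sum to a single action, whereas the paper's Lemma~\ref{lemma:bound on sumb dataset} carries out the identical computation for arbitrary $\nu'$; both yield the same per-step bound $2H\sqrt{S\log(\N(\Pi))\widehat{C}(\pi)\iota/n}$, and the residual $O(H\sqrt{\iota}/n)$ is absorbed exactly as you describe.
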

Theorem \ref{thm:dataset main} directly implies the following corollary. 
\begin{corollary}\label{crl:full}
If $\Pi=\Pi^\mathrm{full}$, then with probability $1-\delta$, we have
$$\mathrm{Gap}(\pi^{\mathrm{output}})=\widetilde{O}(\sqrt{H^4S(A+B)\widehat{C}(\pi^*)/n})+2H\epsilon_{\mathrm{opt}}.$$
If $\pi^*\in\Pi$, then with probability $1-\delta$, we have
$$\mathrm{Gap}(\pi^{\mathrm{output}})=\widetilde{O}(\sqrt{H^4S\log(\N(\Pi))\widehat{C}(\pi^*)/n})+2H\epsilon_{\mathrm{opt}}.$$
\end{corollary}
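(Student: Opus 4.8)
The plan is to obtain Corollary~\ref{crl:full} directly from Theorem~\ref{thm:dataset main} by making two specific choices inside the $\min_{\pi\in\Pi}$ on its right-hand side: the strategy class $\Pi$ and the comparator strategy $\pi$. In both parts the comparator will be a Nash equilibrium $\pi^*$, for which $\mathrm{Gap}(\pi^*)=0$ (this holds for \emph{any} NE, so the possible non-uniqueness of NE causes no issue), so the bound reduces to the remaining statistical term; the only quantity left to control is $\log\N(\Pi)$.

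For the first bound I would take $\Pi=\Pi^{\mathrm{full}}$ and the comparator $\pi=\pi^*$, which lies in $\Pi^{\mathrm{full}}$ by definition. It then remains to bound $\log\N(\Pi^{\mathrm{full}},\epsilon_{\mathrm{cover}})$. By Definition~\ref{def:N}, since $\Pi^{\mathrm{full}}_{h,1}(s)=\Delta(\A)$ and $\Pi^{\mathrm{full}}_{h,2}(s)=\Delta(\B)$ for every $(s,h)$, we have $\N(\Pi^{\mathrm{full}},\epsilon_{\mathrm{cover}})=\sum_{s\in\S,h\in[H]}\abs{\C(\Delta(\A),\epsilon_{\mathrm{cover}})}\cdot\abs{\C(\Delta(\B),\epsilon_{\mathrm{cover}})}$. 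Using the standard estimate that a probability simplex in $\R^d$ has an $\epsilon$-covering of size at most $(c/\epsilon)^{d}$ in $\ell_1$ norm, this gives $\log\N(\Pi^{\mathrm{full}},\epsilon_{\mathrm{cover}})\le\log(SH)+(A+B)\log(c/\epsilon_{\mathrm{cover}})$; with the paper's choice $\epsilon_{\mathrm{cover}}=\frac{1}{(A+B)mH^2n^2}$ (here $m=2$) this is $\widetilde{O}(A+B)$ — precisely the bound recorded just after Definition~\ref{def:N} and proved in Appendix~\ref{apx:covering}. Substituting $\pi=\pi^*$ and $\log\N(\Pi^{\mathrm{full}})=\widetilde{O}(A+B)$ into Theorem~\ref{thm:dataset main}, and absorbing $\iota=\widetilde{O}(1)$ into $\widetilde{O}(\cdot)$, yields $\mathrm{Gap}(\pi^{\mathrm{output}})\le 4H^2\sqrt{S\,\widetilde{O}(A+B)\,\widehat{C}(\pi^*)/n}+2H\epsilon_{\mathrm{opt}}=\widetilde{O}(\sqrt{H^4S(A+B)\widehat{C}(\pi^*)/n})+2H\epsilon_{\mathrm{opt}}$.

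For the second bound I would keep $\Pi$ arbitrary and use the hypothesis $\pi^*\in\Pi$ to again pick the comparator $\pi=\pi^*$, so that $\mathrm{Gap}(\pi^*)=0$ and Theorem~\ref{thm:dataset main} immediately gives $\mathrm{Gap}(\pi^{\mathrm{output}})\le 4H^2\sqrt{S\log(\N(\Pi))\widehat{C}(\pi^*)\iota/n}+2H\epsilon_{\mathrm{opt}}=\widetilde{O}(\sqrt{H^4S\log(\N(\Pi))\widehat{C}(\pi^*)/n})+2H\epsilon_{\mathrm{opt}}$, as claimed.

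Since the whole argument is a single instantiation of Theorem~\ref{thm:dataset main}, there is no genuine analytic obstacle; the only step requiring care is the covering-number estimate for $\Pi^{\mathrm{full}}$ — verifying that the $\ell_1$ $\epsilon$-covering of each $\Delta(\A_j)$ costs only $A_j\log(1/\epsilon_{\mathrm{cover}})$ in the exponent and that the prescribed $\epsilon_{\mathrm{cover}}$ renders this term $\mathrm{polylog}$ in $S,H,n,A,B$ so that it is safely hidden inside $\widetilde{O}(\cdot)$. This is exactly the computation carried out in Appendix~\ref{apx:covering}, and with it the corollary follows.
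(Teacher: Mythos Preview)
Your proposal is correct and matches the paper's intended argument: the paper merely states that Theorem~\ref{thm:dataset main} ``directly implies'' Corollary~\ref{crl:full}, and you have spelled out exactly the instantiation it has in mind --- choosing the comparator $\pi=\pi^*$ so that $\mathrm{Gap}(\pi^*)=0$, and for the first part invoking the covering-number bound $\log\N(\Pi^{\mathrm{full}})=\widetilde{O}(A+B)$ from Lemma~\ref{lemma:covering number of Pi}.
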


Since $\widehat{C}(\pi)$ can be bounded using $C(\pi)$ (Proposition \ref{prop:marl warmup}), we have the following theorem. 

\begin{theorem}\label{thm:policy main}
Suppose $\pi^{\mathrm{output}}$ is the output of Algorithm \ref{algo:zerosum markov game}. With probability $1-\delta$, we have
$$\mathrm{Gap}(\pi^{\mathrm{output}})\leq\min_{\pi\in\Pi}\Mp{\mathrm{Gap}(\pi)+4H^2\sqrt{S\log(\N(\Pi))C(\pi)\iota^2/n}+HS(A+B)C(\pi)/n}+2H\epsilon_{\mathrm{opt}}.$$
In addition, suppose $p_\mathrm{min} =\min_{s,a,b,h}\{d_h^\rho(s,a,b):d_h^\rho(s,a,b)>0\}$ and if $n\geq \frac{8\log(SABH/\delta)}{p_\mathrm{min}}$, we have
$$\mathrm{Gap}(\pi)\leq\min_{\pi\in\Pi}\Mp{\mathrm{Gap}(\pi)+8H^2\sqrt{S\log(\N(\Pi))C(\pi)\iota^2/n}}+2H\epsilon_{\mathrm{opt}}.$$
\end{theorem}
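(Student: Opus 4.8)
The plan is to establish the two displayed inequalities separately. The second one (which carries the extra hypothesis $n\ge 8\log(SABH/\delta)/p_{\mathrm{min}}$ but has no additive $1/n$ term) follows immediately by combining Theorem~\ref{thm:dataset main} with Proposition~\ref{prop:marl warmup}: run each at confidence $1-\delta/2$, take a union bound, and substitute $\widehat C(\pi)\le 2C(\pi)$ inside the square root of Theorem~\ref{thm:dataset main}. This turns $4H^2\sqrt{S\log(\N(\Pi))\widehat C(\pi)\iota/n}$ into $4\sqrt2\,H^2\sqrt{S\log(\N(\Pi))C(\pi)\iota/n}\le 8H^2\sqrt{S\log(\N(\Pi))C(\pi)\iota^2/n}$ (using $\iota\ge 1$), while the $2H\epsilon_{\mathrm{opt}}$ term passes through unchanged.

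For the first inequality I would not route through $\widehat C$ (which cannot be related to $C$ without the $p_{\mathrm{min}}$ assumption), but instead bound the right-hand side of the assumption-free Proposition~\ref{prop:tpzs asp free} directly in terms of the population coefficient $C(\pi)$. Fix $\pi=(\mu,\nu)\in\Pi$ and let $\pi'=(\mu',\nu')\in\Pi^{\mathrm{det}}$ attain the inner maximum there; by symmetry it suffices to control $\E_{\mu,\nu'}\sum_{h}\widehat b_h(s_h,\mu^{s_h}_h,\nu'^{s_h}_h)$. The missing ingredient is a uniform multiplicative Chernoff bound (the same device as in the proof of Proposition~\ref{prop:marl warmup}): on an event $\mathcal E$ of probability $\ge 1-\delta$, for every $(h,s,a,b)$ with $n\,d_h(s,a,b)\ge c\iota$ one has $n_h(s,a,b)\ge \tfrac12 n\,d_h(s,a,b)$ (in particular $(a,b)\in\K_h(s)$); call such triples \emph{covered}. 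Writing $\E_{\mu,\nu'}\sum_h\widehat b_h=\sum_{h,s,a}d_h^{\mu,\nu'}(s,a,\nu'^s_h)\,[\cdots]$ and using that $(\mu,\nu')$ is a unilateral deviation of $\pi$, so $d_h^{\mu,\nu'}(s,a,b)\le C(\pi)\,d_h(s,a,b)$, I split this sum into covered versus uncovered triples.

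On covered triples I bound $n_h(s,a,b)$ below by $\tfrac12 n\,d_h(s,a,b)$, apply $d_h^{\mu,\nu'}\le C(\pi)d_h$ once more inside the bonus together with $\sum_a\mu^s_h(a)=1$ to collapse the inner sum to $O\!\big(C(\pi)/(n\,d_h^{\mu,\nu'}(s))\big)$, then use Cauchy--Schwarz over $s$ against $\sum_s d_h^{\mu,\nu'}(s)=1$ and sum over $h$; reinserting the leading $H\sqrt{\log(\N(\Pi))\iota}$ of the bonus, doubling (for the $2b_h$ inside $\widehat b_h$), and adding the symmetric player-$2$ term yields the leading term $4H^2\sqrt{S\log(\N(\Pi))C(\pi)\iota^2/n}$ (one extra power of $\iota$ absorbs both the slack of $\mathcal E$ and the triples in $\K_h(s)$ that are not covered, for which one only uses $n_h\ge1$). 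On uncovered triples, $\mathcal E$ forces $d_h(s,a,b)<c\iota/n$, hence $d_h^{\mu,\nu'}(s,a,b)<cC(\pi)\iota/n$; crucially $\nu'$ is deterministic, so for each $(s,h)$ only the single action $b=\nu'^s_h$ contributes and at most $SAH$ triples are relevant to this term (and $SBH$ to the player-$2$ term since $\mu'$ is deterministic). Summing these, together with the missing-mass part $H\sum_{(a,b)\notin\K_h(s)}\mu^s_h(a)\nu'^s_h(b)$ of $\widehat b_h$ (which on $\mathcal E$ is supported on uncovered triples), produces the residual additive term $HS(A+B)C(\pi)/n$; collecting everything and recalling the $2H\epsilon_{\mathrm{opt}}$ from Proposition~\ref{prop:tpzs asp free} gives the first inequality.

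The main obstacle is this last step: ensuring that the residual term scales with $S(A+B)$ rather than with the joint action count $SAB$. This is precisely where one exploits that the worst unilateral deviations in Proposition~\ref{prop:tpzs asp free} may be taken in $\Pi^{\mathrm{det}}$ --- a deterministic $\nu'$ (resp.\ $\mu'$) pins down one action per state--timestep, collapsing a would-be $AB$ sum to $A$ (resp.\ $B$); a secondary, more routine difficulty is making the event $\mathcal E$ uniform over the (data-dependent) choice of $\pi'\in\Pi^{\mathrm{det}}$, which is what keeps the dependence at $\log\N(\Pi)$ rather than $\log(B^{SH})$.
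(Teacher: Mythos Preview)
Your proposal is correct and follows essentially the same route as the paper. For the second inequality you combine Theorem~\ref{thm:dataset main} with the warm-up bound $\widehat C(\pi)\le 2C(\pi)$, exactly as the paper does (Theorem~\ref{thm:dataset} plus Lemma~\ref{lemma:warmup}). For the first inequality the paper likewise starts from the assumption-free decomposition and invokes a uniform multiplicative Chernoff bound, stated there in the slightly slicker form $n_h(s,a,b)\vee 1\ge n\,d_h(s,a,b)/\iota$ (Lemma~\ref{lemma:warmup2}); this single inequality simultaneously yields $n_h\ge n d_h/\iota$ on $\K_h(s)$ and $d_h\le \iota/n$ off $\K_h(s)$, so your covered/uncovered split \emph{within} $\K_h(s)$ is unnecessary. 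The bonus term is then bounded via Cauchy--Schwarz over states (Lemma~\ref{lemma:bound on sumb policy}), and the missing-mass term by collapsing the $(a,b)$-sum to $A$ (resp.\ $B$) terms per state using that $\nu'$ (resp.\ $\mu'$) is deterministic (Lemma~\ref{lemma:bound on sump policy} with $t=1$) --- precisely the mechanism you identify for getting $S(A+B)$ rather than $SAB$.

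One remark: your ``secondary difficulty'' about making $\mathcal E$ uniform over the data-dependent $\pi'\in\Pi^{\mathrm{det}}$ is a phantom. The Chernoff event concerns only the empirical counts $n_h(s,a,b)$ versus the population $d_h(s,a,b)$ and is uniform over the $SABH$ quadruples; no policy enters at all, so no union bound over $\pi'$ is needed. The $\log\N(\Pi)$ factor already sits inside the bonus $b_h$ from the strategy-wise concentration step and is simply carried through.
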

% \simon{we might to motivate $p_{\min}$ a bit.}
 Theorem \ref{thm:policy main} shows that there will be an additional lower order term $S(A+B)C(\pi)/n$, which can be interpreted as the rate of the empirical dataset distribution converges to the population distribution. In addition, for large enough $n\geq \frac{8\log(SABH/\delta)}{p_\mathrm{min}}$, there is no lower order term. Here $n\geq \frac{8\log(SABH/\delta)}{p_\mathrm{min}}$ serves as a warm-up cost so that the empirical support is the same as the true support of $d_h$. A similar analysis is used in \citet{yin2021towards}.  With a refined analysis, we can show that there is no lower order term for the standard settings $\Pi=\Pi^\mathrm{full}$ in two-player zero-sum Markov games and $\Pi=\Pi^\mathrm{det}$ for turn-based Markov games. Note that turn-based Markov games always have a deterministic NE.

\begin{corollary}\label{crl:full main}
If $\Pi=\Pi^{\mathrm{full}}$, then with probability $1-\delta$, we have
$$\mathrm{Gap}(\pi^{\mathrm{output}})=\widetilde{O}(\sqrt{H^4S(A+B)C(\pi^*)/n})+2H\epsilon_{\mathrm{opt}}. $$
In addition, for turn-based two-player zero-sum Markov games, we can set $\Pi=\Pi^{\mathrm{det}}$ and we have
$$\mathrm{Gap}(\pi^{\mathrm{output}})=\widetilde{O}(\sqrt{H^4SC(\pi^*)/n})+2H\epsilon_{\mathrm{opt}}.$$
\end{corollary}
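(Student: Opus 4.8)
The plan is to apply the assumption-free bound of Proposition~\ref{prop:tpzs asp free} (equivalently, Theorem~\ref{thm:policy main}) at the single comparator $\pi=\pi^{*}$, where $\pi^{*}=(\mu^{*},\nu^{*})$ is a Nash equilibrium. By the minimax theorem for zero-sum games $V_1^{*,\nu^{*}}(s_1)=V_1^{\mu^{*},*}(s_1)$, hence $\mathrm{Gap}(\pi^{*})=0$. For the first statement $\pi^{*}\in\Pi^{\mathrm{full}}$ trivially; for the second I would invoke the fact that a turn-based zero-sum Markov game has a deterministic NE, which is a saddle point, so $\pi^{*}\in\Pi^{\mathrm{det}}$ and again $\mathrm{Gap}(\pi^{*})=0$. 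Thus the $\min_{\pi\in\Pi}\mathrm{Gap}(\pi)$ term vanishes and it remains only to control the statistical term (in terms of $\log\N(\Pi)$ and $C(\pi^{*})$) and the lower-order term, everything else collapsing into $2H\epsilon_{\mathrm{opt}}$.

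Next I would record the two covering numbers. For $\Pi^{\mathrm{full}}$, $\Pi_{h,j}(s)=\Delta(\A_j)$ has an $\epsilon_{\mathrm{cover}}$-net of size $(3/\epsilon_{\mathrm{cover}})^{A_j}$ in $\|\cdot\|_1$, so $\log\N(\Pi^{\mathrm{full}})\le\log(SH)+(A+B)\log(3/\epsilon_{\mathrm{cover}})=\widetilde{O}(A+B)$ for the prescribed $\epsilon_{\mathrm{cover}}$ (this is the generic estimate recorded after Definition~\ref{def:N}). For $\Pi^{\mathrm{det}}$, each $\Pi_{h,j}(s)\subseteq D(\A_j)$ is a set of at most $A_j$ vertices of the simplex, pairwise at $\|\cdot\|_1$-distance $2>\epsilon_{\mathrm{cover}}$, so its net is itself, and Definition~\ref{def:N} gives $\N(\Pi^{\mathrm{det}})\le SH\prod_{j}A_j$, i.e.\ $\log\N(\Pi^{\mathrm{det}})=\widetilde{O}(1)$. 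Plugging these into the statistical term of Theorem~\ref{thm:policy main}, which is already stated directly in terms of $C(\pi^{*})$, yields the main terms $\widetilde{O}(\sqrt{H^4S(A+B)C(\pi^{*})/n})$ and $\widetilde{O}(\sqrt{H^4SC(\pi^{*})/n})$ respectively.

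It then remains to remove the lower-order term. Recall $\widehat b_h=2b_h+H\sum_{(a,b)\notin\K_h(s)}\mu(a)\nu(b)$; the $b_h$ part is absorbed into the main term exactly as in the proof of Theorem~\ref{thm:dataset main}, so the only source of a lower-order term is the ``unseen-pair'' sum. I would bound it by exploiting determinism: the best-response component of each unilateral strategy $(\mu^{*},\nu')$, $(\mu',\nu^{*})$ is deterministic (and in the turn-based case $\mu^{*},\nu^{*}$ are deterministic too), so these strategies reach at most $S(A+B)$ triples $(s,a,b)$ per level, resp.\ at most $S$ per level. A Chernoff bound shows that with probability $1-\delta$ every reachable triple with $d_h(s,a,b)>\iota/n$ is observed, so unseen reachable triples satisfy $d_h(s,a,b)\le\iota/n$; using the unilateral-coverage inequality $d_h^{\mu',\nu^{*}}\le C(\pi^{*})d_h$ and summing over the at most $HS(A+B)$ (resp.\ $HS$) reachable triples, the unseen-pair contribution is $\widetilde{O}(H^2S(A+B)C(\pi^{*})/n)$, resp.\ $\widetilde{O}(H^2SC(\pi^{*})/n)$ --- now matching the main term up to a square root. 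Finally I would absorb it by a dichotomy: with $Y:=S(A+B)C(\pi^{*})/n$ (resp.\ $Z:=SC(\pi^{*})/n$), if $Y\le1$ then $Y\le\sqrt{Y}$ and the lower-order term is dominated by $\widetilde{O}(H^2\sqrt{Y})$, while if $Y>1$ the claimed right-hand side $\widetilde{O}(\sqrt{H^4Y})$ already exceeds $H\ge\mathrm{Gap}(\pi^{\mathrm{output}})$ (the duality gap of a zero-sum game is at most $H$), so the bound holds trivially; the same dichotomy applies to $Z$. The main obstacle is exactly this unseen-pair estimate in the turn-based case: one cannot simply invoke Theorem~\ref{thm:policy main} there, since its lower-order term $HS(A+B)C(\pi^{*})/n$ still carries an $A+B$ factor absent from the main term $\widetilde{O}(\sqrt{H^4SC(\pi^{*})/n})$, so the analysis must be re-opened to cut the reachable-triple count per level from $S(A+B)$ down to $S$ via determinism of $\pi^{*}$ and of the best responses (for $\Pi^{\mathrm{full}}$, by contrast, the lower-order term of Theorem~\ref{thm:policy main} can be read off directly and the dichotomy finishes the job).
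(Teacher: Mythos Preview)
Your proposal is correct and mirrors the paper's own proof (Corollary~\ref{crl:full apx}): specialize to the comparator $\pi^*$ so that $\mathrm{Gap}(\pi^*)=0$, bound $\log\N(\Pi)$ via Lemmas~\ref{lemma:covering number of Pi} and~\ref{lemma:finite pi}, and control the unseen-pair contribution by exploiting determinism of the best responses (and of $\pi^*$ in the turn-based case). The only difference is cosmetic: to absorb the lower-order term the paper invokes Lemma~\ref{lemma:bound on sump policy} with interpolation exponent $t=1/2$, which gives $(SAC(\pi^*)\iota/n)^{1/2}$ and $(SC(\pi^*)\iota/n)^{1/2}$ directly via a power-mean/H\"older step, whereas you first take $t=1$ and then run a dichotomy on whether $Y\le 1$; both routes land on the same bound.
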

Corollary \ref{crl:full main} improves the $AB$ dependence in the previous zero-sum Markov games result \citep{cui2022offline} and matches the result for turn-based Markov games \citep{cui2022offline} up to an extra $\sqrt{H}$ factor. The additional $H$ factor is due to the Hoeffding-type bonus and we believe it can be removed with a more sophisticated Bernstein-type bonus. 

\section{Algorithms and Analyses for Multi-player General-sum Markov Game}\label{sec:mpgs}

In this section, we propose the first provably efficient algorithm for offline multi-player general-sum Markov game. We will use the strategy-wise bonus to achieve a sample complexity that does not scale with $\prod_{j\in[m]}A_j$. However, in general-sum games there is no saddle point structure, so we can no longer use the maximin-optimization-type algorithm. Instead, our algorithm utilizes a novel \emph{surrogate function} to approximately minimize the performance gap.

Given a dataset $\D=\{(s_h^k,\a_{h}^k,\r_{h}^k,s_{h+1}^k)\}_{k,h=1}^{n,H}$, we denote $n_h(s,\a)=\sum_{k=1}^n\1\Sp{(s_h^k,\a_h^k)=(s,\a)}$ and $\K_h(s)=\{\a:n_h(s,\a)\neq0\}$. If $n_h(s,\a)>0$, we set
\begin{equation}
    \widehat{r}_{h,j}(s,\a)=\frac{\sum_{k=1}^n r_{h,j}^k\1\Sp{(s_h^k,\a_h^k)=(s,\a)}}{n_h(s,\a)},\widehat{P}_h(s'|s,\a)=\frac{\sum_{k=1}^n\1\Sp{(s_h^k,\a_h^k,s_{h+1}^k)=(s,\a,s')}}{n_h(s,\a)},
\end{equation}
otherwise we have $\widehat{r}_{h,j}(s,\a)=0,\widehat{P}_h(s'|s,\a)=0.$

Based on this empirical multi-player Markov game, we can estimate the value of arbitrary strategy $\pi$ via policy evaluation (Algorithm \ref{alg:value estimation} in Appendix). We describe Algorithm \ref{alg:value estimation} for the pessimistic estimate. For a player $j$, strategy $\pi$ and timestep $h$, we first compute the state-action value estimates:
\begin{equation}\label{eq:marl uq}
    \uq^\pi_{h,j}(s,\a)=\widehat{r}_{h,j}(s,\a)+\inner{\widehat{P}_h(s,\a),\uv^\pi_{h+1,j}},
\end{equation}
Then we estimate the state value functions and add the strategy-wise bonus to ensure pessimism.
\begin{align}
    \uv_{h,j}^{\pi}(s)=&\proj_{[0,H-h+1]}\Bp{\E_{\a\sim\pi_h(\cdot|s)}\uq^\pi_{h,j}(s,\a)-b_h(s,\pi_h^s)}, \label{eq:marl uv}
\end{align}
where
\begin{align}
 b_h(s,\pi^s_h)=&H\sqrt{\sum_{\a\in\K_h(s)}\frac{\prod_{j \in [m]}\pi_{h,j}^s(a_j)^2}{n_h(s,\a)}S\log(\N(\Pi))\iota}+\sqrt{\iota}/n, \label{eqn:marl bonus}
\end{align}
with $\iota=32\log(16\prod_{j\in[m]}A_jmSHn/\delta)$. Here the strategy-wise pessimism can remove the $\prod_{j\in[m]}A_j$ dependence as explained in the previous section. By dynamic programming from timestep $H$ to timestep $1$ we can obtain the pessimistic estimate $\uv_{1,j}^\pi(s_1)$. Compared with the bonus function \eqref{eq:zsmg bonus} in zero-sum Markov game, there is an extra $S$ factor in \eqref{eqn:marl bonus} because here we need to perform concentration on $\inner{\widehat{P}_h(s,\a),\uv^\pi_{h+1,j}}$ for all $\pi$ while in \eqref{eq:uq} we only need to analyze $\inner{\widehat{P}_h(s,a,b),\uv_{h+1}}$ for a single $\uv_{h+1}$. We use an additional $\epsilon$-covering on $\R^S$ which leads to the extra $S$.
% \simon{explain where the additional $S$ comes from.}

We use Algorithm \ref{alg:best response estimation} (in Appendix) to compute the optimistic value of the best response strategy. For a given player $j$, strategy $\pi_{-j}$ used by all the other player and timestep $h$, we first compute the optimistic state-action value estimate:
\begin{equation}\label{eq:marl oq}
    \oq^{*,\pi_{-j}}_{h,j}(s,\a)=\widehat{r}_{h,j}(s,\a)+\inner{\widehat{P}_h(s,\a),\ov^{*,\pi_{-j}}_{h+1,j}}+H\1\{\a\notin\K_h(s)\}.
\end{equation}
Then we compute the optimistic value for deterministic strategies for player $j$:
\begin{align}
    \ov_{h,j}(s,a_j)=&\E_{\a_{-j}\sim\pi_{h,-j}(\cdot|s)}\oq^{*,\pi_{-j}}_{h,j}(s,a_j,\a_{-j})+b_h(s,a_j,\pi_{h,-j}^s).
\end{align}
Here with a slight abuse of the notation, we use $a_j$ to denote the deterministic strategy of player $j$ that chooses action $a_j$ at state $s$ and timestep $h$. Finally we use the maximum over all the deterministic strategies to be the best response value function:$$
    \ov_{h,j}^{*,\pi_{-j}}(s)=\proj_{[0,H-h+1]}\Bp{\max_{a_j\in\A_j}\ov_{h,j}(s,a_j)}.
$$

By dynamic programming we can obtain the optimistic estimate $\ov_{1,j}^{*,\pi_{-j}}(s_1)$ at the initial state. Note that we only consider the deterministic strategies for player $j$. Thanks to the convexity of the bonus $b_h(s,\pi_h^s)$, the best response with respect to $\ov_{h,j}^\pi(s)$ is also in the deterministic strategy class as in zero-sum Markov games. The following proposition connects Algorithm \ref{alg:value estimation} and Algorithm \ref{alg:best response estimation}:
\begin{proposition}
For any strategy $\pi_{-j}\in\Pi^{\mathrm{full}}_{-j}$,$h\in[H]$ and $s\in\S$, we have
$\ov_{h,j}^{*,\pi_{-j}}(s)=\max_{\pi_j}\ov_{h,j}^{\pi_j,\pi_{-j}}(s)$.
\end{proposition}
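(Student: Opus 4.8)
The plan is to establish the two inequalities $\ov_{h,j}^{*,\pi_{-j}}(s)\ge\max_{\pi_j}\ov_{h,j}^{\pi_j,\pi_{-j}}(s)$ and $\ov_{h,j}^{*,\pi_{-j}}(s)\le\max_{\pi_j}\ov_{h,j}^{\pi_j,\pi_{-j}}(s)$ separately, each by backward induction on $h$ from $H+1$ down to $1$, where $\ov_{h,j}^{\pi_j,\pi_{-j}}$ is the optimistic policy evaluation of the joint strategy $(\pi_j,\pi_{-j})$ (the $\ov$-version of \eqref{eq:marl uv}, which adds $b_h$ and carries the $H\1\{\a\notin\K_h(s)\}$ correction of \eqref{eq:marl oq}) and the maximum is over all Markov strategies $\pi_j$ of player $j$. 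The base case $h=H+1$ is trivial since every quantity is initialized to $0$. Both inductions rest on one elementary one-step fact: fix $\pi_{h,-j}^s$ and a vector $W\in\R^S$ with $W\ge0$ entrywise, and set $G(a_j):=\E_{\a_{-j}\sim\pi_{h,-j}^s}[\widehat r_{h,j}(s,a_j,\a_{-j})+\inner{\widehat P_h(s,a_j,\a_{-j}),W}+H\1\{(a_j,\a_{-j})\notin\K_h(s)\}]$; then the map $p\mapsto\E_{a_j\sim p}G(a_j)+b_h(s,p,\pi_{h,-j}^s)$ on $\Delta(\A_j)$ is convex, so its maximum over the simplex is attained at a vertex (a deterministic strategy) $a_j$, and the vertex value is exactly the quantity $\ov_{h,j}(s,a_j)=G(a_j)+b_h(s,a_j,\pi_{h,-j}^s)$ computed in Algorithm~\ref{alg:best response estimation} once $W=\ov_{h+1,j}^{*,\pi_{-j}}$. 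Convexity holds because $p\mapsto\E_{a_j\sim p}G(a_j)$ is linear while, after fixing $\pi_{h,-j}^s$, each monomial $\prod_{i\in[m]}\pi_{h,i}^s(a_i)^2$ factors as $p(a_j)^2$ times a nonnegative constant, so $b_h(s,p,\pi_{h,-j}^s)$ is a weighted Euclidean norm of $p\in\R^{A_j}$ plus a constant --- the multi-player analogue of the convexity in Proposition~\ref{prop:concave}.

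For the $\ge$ direction, fix any Markov $\pi_j$. By the inductive hypothesis $\ov_{h+1,j}^{\pi_j,\pi_{-j}}\le\ov_{h+1,j}^{*,\pi_{-j}}$ entrywise; since $\widehat P_h(s,\a)\ge0$ and $\proj_{[0,H-h+1]}(\cdot)$ is nondecreasing, the argument of the projection defining $\ov_{h,j}^{\pi_j,\pi_{-j}}(s)$ is bounded by $\E_{a_j\sim\pi_{h,j}^s}G(a_j)+b_h(s,\pi_{h,j}^s,\pi_{h,-j}^s)$ with $G$ built from $W=\ov_{h+1,j}^{*,\pi_{-j}}$, which by the one-step fact is at most $\max_{a_j\in\A_j}\ov_{h,j}(s,a_j)$; projecting both sides gives $\ov_{h,j}^{\pi_j,\pi_{-j}}(s)\le\ov_{h,j}^{*,\pi_{-j}}(s)$. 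For the $\le$ direction, let $\pi_j^\dagger$ be the Markov strategy that at each $(h,s)$ plays a greedy deterministic action $a_j^\dagger(h,s)\in\argmax_{a_j}\ov_{h,j}(s,a_j)$ (ties broken arbitrarily); substituting the inductive hypothesis $\ov_{h+1,j}^{\pi_j^\dagger,\pi_{-j}}=\ov_{h+1,j}^{*,\pi_{-j}}$ into the recursion for $\ov_{h,j}^{\pi_j^\dagger,\pi_{-j}}(s)$ and using that $\pi_{h,j}^{\dagger,s}$ is the point mass on $a_j^\dagger(h,s)$ collapses the expectation over $a_j$, so the argument of the projection becomes exactly $\ov_{h,j}(s,a_j^\dagger(h,s))=\max_{a_j}\ov_{h,j}(s,a_j)$ and hence $\ov_{h,j}^{\pi_j^\dagger,\pi_{-j}}(s)=\ov_{h,j}^{*,\pi_{-j}}(s)$; in particular $\ov_{h,j}^{*,\pi_{-j}}(s)\le\max_{\pi_j}\ov_{h,j}^{\pi_j,\pi_{-j}}(s)$. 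Combining the two directions yields the claim.

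This is the standard finite-horizon dynamic-programming optimality argument transported to the optimistic evaluation operator, and the only place that uses the specific form of the algorithm is the convexity of the strategy-wise bonus $b_h(s,\cdot,\pi_{h,-j}^s)$, which is what legitimizes reducing the per-step maximization to deterministic actions and thereby makes Algorithm~\ref{alg:best response estimation} compute the exact best-response value rather than a loose upper bound. I expect the only points needing care to be the clean bookkeeping of the convexity computation (factoring the monomials and recognizing a weighted $\ell_2$ norm) and verifying that the extra ingredients --- the clipping $\proj_{[0,H-h+1]}$ and the $H\1\{\a\notin\K_h(s)\}$ correction --- are monotone so that they commute with taking maxima in the induction; there is no substantive obstacle beyond these.
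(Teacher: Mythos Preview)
Your proof is correct and follows essentially the same approach as the paper (Lemma~\ref{lemma:best response}): backward induction on $h$, with the key step being that convexity of $b_h(s,\cdot,\pi_{h,-j}^s)$ forces the per-step maximizer to be deterministic. The only organizational difference is that you split the equality into two inequalities and construct the greedy attaining policy $\pi_j^\dagger$ explicitly, whereas the paper merges the one-step and future maximizations in a single induction; both are equivalent.
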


Based on Algorithm \ref{alg:value estimation} and Algorithm \ref{alg:best response estimation}, we propose a surrogate minimization algorithm for multi-player general-sum Markov game. Suppose $\uv_{1,j}^\pi(s_1)$ and $\ov_{1,j}^{*,\pi_{-j}}(s_1)$ are pessimistic and optimistic estimates, then we have
$$\mathrm{Gap}(\pi)=\sum_{j\in[m]}V_{1,j}^{*,\pi_{-j}}(s_1)-V_{1,j}^\pi(s_1)\leq \sum_{j\in[m]}\ov_{1,j}^{*,\pi_{-j}}(s_1)-\uv_{1,j}^\pi(s_1).$$
The RHS can serve as the surrogate function and \algonamemulti~ (Algorithm \ref{algo:generalsum markov game} in Appendix) outputs the minimizer of it in $\Pi$. From the computational perspective, Algorithm \ref{alg:value estimation} and Algorithm \ref{alg:best response estimation} are both efficient while Algorithm \ref{algo:generalsum markov game} needs to enumerate $\Pi$ for the worst case. This computational hardness agrees with the PPAD-hardness for computing approximate NE even in full information general-sum game \citep{daskalakis2013complexity}. However, if $\Pi$ is well structured, Algorithm \ref{algo:generalsum markov game} may be computationally efficient and we leave it to future work. Here we assume $\pi^{\mathrm{output}}$ is an exact solution while it is straightforward to incorporate optimization error as in the previous section. 

\subsection{Sample Complexity Guarantees for \algonamemulti}
We still begin with assumption-free bound as in the previous section. We define the uncertainty at timestep $h$ and state $s$ under strategy $\pi$: $$\widehat{b}_h(s,\pi^s_h)=2b_h(s,\pi^s_h)+H\sum_{\a\notin\K_h(s)}\pi^s_h(\a).$$
\begin{proposition}\label{prop:marl asp free}
Suppose $\pi^\mathrm{output}$ is the output of Algorithm \ref{algo:generalsum markov game}. With probability $1-\delta$, we have
{\small
\begin{align*}
    \mathrm{Gap}(\pi^\mathrm{output})\leq \min_{\pi\in\Pi}\Mp{\mathrm{Gap}(\pi)+\max_{\pi'\in \Pi^{\mathrm{det}}}\sum_{j\in[m]}\E_{\pi'_j,\pi^*_{-j}}\sum_{h=1}^H \widehat{b}_h(s_h,\pi'^{s_h}_{h,j},\pi_{h,-j}^{s_h})+m\E_{\pi}\sum_{h=1}^H \widehat{b}_h(s_h,\pi_h^{s_h})} . 
\end{align*}
}%
\end{proposition}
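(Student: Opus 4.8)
The plan is to mirror the zero-sum argument behind Proposition~\ref{prop:tpzs asp free} but adapted to the surrogate function $\sum_{j\in[m]}\ov_{1,j}^{*,\pi_{-j}}(s_1)-\uv_{1,j}^\pi(s_1)$. First I would establish a high-probability event on which, simultaneously for all $\pi\in\Pi$, all $j\in[m]$, all $h$ and $s$, the pessimistic estimates satisfy $\uv_{h,j}^\pi(s)\le V_{h,j}^\pi(s)$ and the optimistic best-response estimates satisfy $\ov_{h,j}^{*,\pi_{-j}}(s)\ge V_{h,j}^{*,\pi_{-j}}(s)$. This is the core concentration step: using the definition of the strategy-wise bonus $b_h(s,\pi_h^s)$ in \eqref{eqn:marl bonus} together with the covering-number argument over $\Pi$ (and the extra $\epsilon$-covering over $\R^S$ that produces the additional $S$ factor relative to the zero-sum bonus), one shows that the empirical Bellman backup $\inner{\widehat P_h(s,\a),\uv_{h+1,j}^\pi}$ deviates from $\inner{P_h(s,\a),\uv_{h+1,j}^\pi}$ by at most (roughly) the bonus, uniformly. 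The out-of-data term $H\1\{\a\notin\K_h(s)\}$ in \eqref{eq:marl oq} handles the $n_h(s,\a)=0$ case where the empirical model is set to zero; combined with the analogous underestimate in the pessimistic branch this contributes the $H\sum_{\a\notin\K_h(s)}\pi_h^s(\a)$ piece of $\widehat b_h$.

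Next, on that event, I would run a per-timestep recursion to bound the surrogate. Writing $\ov_{h,j}^{*,\pi_{-j}}(s)-\uv_{h,j}^\pi(s)$, peeling off one layer introduces the bonus $b_h$ on both the optimistic and pessimistic sides (hence the factor $2$ in $\widehat b_h=2b_h+\dots$) plus the out-of-data correction, and leaves a term of the form $\inner{P_h(s,\a),\ov_{h+1,j}^{*,\pi_{-j}}-\uv_{h+1,j}^\pi}$. For the pessimistic side the expectation is taken under $\pi_h$ itself; for the optimistic best-response side the max over $a_j$ means the expectation is under some deterministic $\pi'_j$ together with $\pi_{h,-j}$. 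Unrolling the recursion from $h=1$ to $H$ and taking the worst deterministic response at each step gives, for each fixed $\pi$,
\[
\sum_{j\in[m]}\Bp{\ov_{1,j}^{*,\pi_{-j}}(s_1)-\uv_{1,j}^\pi(s_1)}
\le \mathrm{Gap}(\pi)+\max_{\pi'\in\Pi^{\mathrm{det}}}\sum_{j\in[m]}\E_{\pi'_j,\pi_{-j}}\sum_{h=1}^H\widehat b_h(s_h,\pi'^{s_h}_{h,j},\pi_{h,-j}^{s_h})+m\,\E_\pi\sum_{h=1}^H\widehat b_h(s_h,\pi_h^{s_h}),
\]
where $\mathrm{Gap}(\pi)$ appears because $\ov_{1,j}^{*,\pi_{-j}}(s_1)-\uv_{1,j}^\pi(s_1)=(\ov-V^{*,\pi_{-j}})+(V^{*,\pi_{-j}}-V^\pi)+(V^\pi-\uv)$ and the first and last brackets are absorbed into the bonus sums while the middle bracket sums to $\mathrm{Gap}(\pi)$. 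Note the expectations are over $\pi_{-j}$ rather than $\pi^*_{-j}$ as written — I would either match the statement's notation by restricting attention to the case where $\pi$ plays the role of the reference NE, or simply carry the bound for general $\pi$ and note that it specializes.

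Finally, since \algonamemulti~outputs $\pi^\mathrm{output}=\argmin_{\pi\in\Pi}\sum_{j}\Bp{\ov_{1,j}^{*,\pi_{-j}}(s_1)-\uv_{1,j}^\pi(s_1)}$, and since on the event $\mathrm{Gap}(\pi^\mathrm{output})\le\sum_j\Bp{\ov_{1,j}^{*,\pi^\mathrm{output}_{-j}}(s_1)-\uv_{1,j}^{\pi^\mathrm{output}}(s_1)}$ by the sandwiching inequalities, I can chain: $\mathrm{Gap}(\pi^\mathrm{output})\le$ surrogate at $\pi^\mathrm{output}$ $\le$ surrogate at any $\pi\in\Pi$ $\le$ the RHS above, then take the minimum over $\pi\in\Pi$. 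The main obstacle I anticipate is the uniform concentration in the first step: because the bonus must dominate the deviation of $\inner{\widehat P_h(s,\a),\uv_{h+1,j}^\pi}$ simultaneously over all strategies $\pi\in\Pi$ — and $\uv_{h+1,j}^\pi$ itself depends on $\pi$ through the downstream recursion — one needs a careful union bound over an $\epsilon$-cover of $\Pi$ and of the value vectors in $[0,H]^S$, controlling the discretization error so it stays a lower-order term; this is exactly where the $S\log(\N(\Pi))$ inside \eqref{eqn:marl bonus} comes from, and getting the covering granularity $\epsilon_\mathrm{cover}$ right (as fixed after Definition~\ref{def:N}) is the delicate bookkeeping.
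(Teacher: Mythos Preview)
Your proposal is correct and follows essentially the same route as the paper: establish the concentration event $\G_{\mathrm{marl}}$ (Lemma~\ref{lemma:marl concentration all}), deduce the sandwiching $\uv\le V\le\ov$ (Lemma~\ref{lemma:marl pessimism}), bound the surrogate via the three-term decomposition using Lemma~\ref{lemma:strategy error} together with the fact that the empirical best response is deterministic (Lemma~\ref{lemma:best response}), and then invoke the argmin structure (Lemma~\ref{lemma:marl gap bound} and Lemma~\ref{lemma:marl gap decomp}). Your observation that the expectation should be under $\pi_{-j}$ rather than $\pi^*_{-j}$ is correct---this is a typo in the stated proposition, and the paper's own Lemma~\ref{lemma:marl gap decomp} has $\pi_{-j}$.
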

\vspace{-0.2cm}

Proposition \ref{prop:marl asp free} has a similar structure as Proposition \ref{prop:tpzs asp free} with a slight difference in the expected uncertainty terms. Then we will bound using the unilateral coefficients.   

\begin{theorem}\label{thm:marl dataset main}
Suppose $\pi^\mathrm{output}$ is the output of Algorithm \ref{algo:generalsum markov game}. With probability $1-\delta$, we have
\[\mathrm{Gap}(\pi^{\mathrm{output}})\leq\min_{\pi\in\Pi}\Mp{\mathrm{Gap}(\pi)+4mH^2S\sqrt{\widehat{C}(\pi)\log(\N(\Pi))\iota/n}}.\]
\end{theorem}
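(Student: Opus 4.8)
The plan is to start from the assumption-free bound of Proposition~\ref{prop:marl asp free} and convert the two expected-uncertainty terms on its right-hand side into the stated $\widehat{C}(\pi)$-dependent quantity by a single change of measure. Fix $\pi\in\Pi$. The only property of the dataset I will invoke is the one-line consequence of Definition~\ref{def:marl dataset}: for every player $j$, every deterministic deviation $\pi'$, every $h$ and every $(s,\a)$,
\[ d_h^{\pi'_j,\pi_{-j}}(s,\a)\ \le\ \widehat{C}(\pi)\,\widehat{d}_h(s,\a)\ =\ \widehat{C}(\pi)\,\frac{n_h(s,\a)}{n}, \]
and the same holds with $(\pi'_j,\pi_{-j})$ replaced by $\pi$ itself (take $\pi'=\pi_j$), which will handle the term $m\,\E_\pi\sum_h\widehat b_h$. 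Write $\widetilde\pi=(\pi'_j,\pi_{-j})$ and $d_h:=d_h^{\widetilde\pi}$ for its occupancy measure; note that in $\widehat b_h(s,\pi'^{s}_{h,j},\pi^{s}_{h,-j})$ the strategy inside the bonus is exactly $\widetilde\pi^s_h$, the strategy being played, so $\E_{\widetilde\pi}$ and the bonus are taken under matching strategies. The out-of-support part of $\widehat b_h$ is then disposed of immediately: $\E_{\widetilde\pi}\sum_h H\sum_{\a\notin\K_h(s_h)}\widetilde\pi^{s_h}_h(\a)=H\sum_h\sum_{s}\sum_{\a\notin\K_h(s)}d_h(s,\a)$, and every such $(s,\a)$ has $n_h(s,\a)=0$, hence $\widehat d_h(s,\a)=0$ and therefore $d_h(s,\a)=0$ by the displayed bound, so this contribution vanishes.

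The heart of the argument is the bound on $\E_{\widetilde\pi}\sum_h 2b_h(s_h,\widetilde\pi^{s_h}_h)$. For a fixed $(s,h)$ with $d_h(s)>0$, use that joint strategies factorize, $\prod_{i\in[m]}\widetilde\pi^{s}_{h,i}(a_i)^2=\big(\widetilde\pi^s_h(\a)\big)^2=\big(d_h(s,\a)/d_h(s)\big)^2$, and apply the change-of-measure bound to a single copy of $d_h(s,\a)$ in the numerator:
\[ \sum_{\a\in\K_h(s)}\frac{\prod_{i\in[m]}\widetilde\pi^s_{h,i}(a_i)^2}{n_h(s,\a)} =\frac{1}{d_h(s)^2}\sum_{\a\in\K_h(s)}\frac{d_h(s,\a)^2}{n_h(s,\a)} \le \frac{\widehat{C}(\pi)}{n\,d_h(s)^2}\sum_{\a}d_h(s,\a) =\frac{\widehat{C}(\pi)}{n\,d_h(s)}. \]
Plugging this into \eqref{eqn:marl bonus} gives $b_h(s,\widetilde\pi^s_h)\le H\sqrt{\widehat{C}(\pi)S\log(\N(\Pi))\iota/(n\,d_h(s))}+\sqrt{\iota}/n$; multiplying by $d_h(s)$, summing over $s\in\S$, and applying Cauchy--Schwarz $\sum_s\sqrt{d_h(s)}\le\sqrt{S}$ yields $\E_{\widetilde\pi}\,b_h(s_h,\widetilde\pi^{s_h}_h)\le HS\sqrt{\widehat{C}(\pi)\log(\N(\Pi))\iota/n}+\sqrt{\iota}/n$. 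The crucial feature is that $1/n_h(s,\a)$ is never summed over the joint action space — the self-normalization collapses it to $\widehat C(\pi)d_h(s)/n$ — so the bound is additive, not multiplicative, in the $A_j$'s, and the extra $S$ inside \eqref{eqn:marl bonus} is exactly what the strategy-wise concentration needs.

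Summing over $h\in[H]$ and $j\in[m]$ then bounds the first term of Proposition~\ref{prop:marl asp free} by $2mH^2S\sqrt{\widehat{C}(\pi)\log(\N(\Pi))\iota/n}$ up to an $O(mH\sqrt{\iota}/n)$ piece coming from the additive $\sqrt{\iota}/n$ in $b_h$; repeating the identical computation with $\widetilde\pi$ replaced by $\pi$ (which is its own unilateral deviation) bounds $m\,\E_\pi\sum_h\widehat b_h$ by another $2mH^2S\sqrt{\widehat{C}(\pi)\log(\N(\Pi))\iota/n}$. Adding $\mathrm{Gap}(\pi)$ and absorbing the lower-order $\sqrt{\iota}/n$ contributions into the leading term gives $\mathrm{Gap}(\pi^{\mathrm{output}})\le\mathrm{Gap}(\pi)+4mH^2S\sqrt{\widehat{C}(\pi)\log(\N(\Pi))\iota/n}$, and minimizing over $\pi\in\Pi$ completes the proof.

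I expect the main obstacle to be the middle step: squeezing the self-normalized sum $\sum_{\a}d_h(s,\a)^2/n_h(s,\a)$ down to $\widehat C(\pi)d_h(s)/n$ with a single use of the coverage bound (so that a $\prod_j A_j$ factor is never reintroduced), together with correctly keeping the occupancy measure aligned with the deviated strategy that sits inside $\widehat b_h$ and checking that the degenerate states $d_h(s)=0$ contribute nothing. The remaining work — the vanishing of the out-of-support term, the Cauchy--Schwarz over states, and the summation over $h$, $j$, and the two uncertainty terms — is routine bookkeeping.
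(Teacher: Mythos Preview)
Your proposal is correct and follows essentially the same route as the paper: starting from the assumption-free bound (Proposition~\ref{prop:marl asp free}), you bound each expected-uncertainty term exactly as the paper does in Lemmas~\ref{lemma:marl bound on sumb dataset}, \ref{lemma:marl bound on sump dataset}, and~\ref{lemma:marl bhat bound}, namely by (i) using $d_h^{\pi'_j,\pi_{-j}}(s,\a)\le \widehat C(\pi)\,n_h(s,\a)/n$ once inside the square to collapse the self-normalized sum to $\widehat C(\pi)d_h(s)/n$, (ii) applying Cauchy--Schwarz over states, and (iii) observing that the out-of-support piece vanishes when $\widehat C(\pi)<\infty$. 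The only point you leave implicit is the trivial case $\widehat C(\pi)=\infty$, where the claimed bound holds vacuously; the paper notes this explicitly in Lemma~\ref{lemma:marl bhat bound}.
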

\vspace{-0.2cm}
Theorem \ref{thm:marl dataset main} directly implies the following corollary, which shows that the sample complexity of offline multi-agent RL only scales linearly with respect to the number of the players.
\begin{corollary}\label{crl:full marl}
If $\Pi=\Pi^{\mathrm{full}}$, with probability $1-\delta$, we have
$$\mathrm{Gap}(\pi^{\mathrm{output}})=\widetilde{O}(\sqrt{H^4S^2\sum_{j\in[m]}A_j\widehat{C}(\pi^*)/n}).$$
If $\pi^*\in\Pi$, then with probability $1-\delta$, we have
$$\mathrm{Gap}(\pi^{\mathrm{output}})=\widetilde{O}(\sqrt{H^4S^2\log(\N(\Pi))\widehat{C}(\pi^*)/n}).$$
\end{corollary}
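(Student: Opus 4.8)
The plan is to read off Corollary~\ref{crl:full marl} directly from Theorem~\ref{thm:marl dataset main} via a single well-chosen instantiation of the outer minimum. Since a Nash equilibrium $\pi^*$ exists for general-sum Markov games, I would take $\pi=\pi^*$ in the bound of Theorem~\ref{thm:marl dataset main}; because $\mathrm{Gap}(\pi^*)=0$ by definition of an NE, this immediately gives
\[
\mathrm{Gap}(\pi^{\mathrm{output}})\le 4mH^2 S\sqrt{\widehat{C}(\pi^*)\,\log(\N(\Pi))\,\iota/n}.
\]
In the case $\pi^*\in\Pi$ there is nothing left to do: $\iota=32\log(16\prod_{j\in[m]}A_j mSHn/\delta)$ is polylogarithmic in the problem parameters, so collecting it (together with the explicit constant) into $\widetilde{O}(\cdot)$ already yields the second displayed rate $\widetilde{O}(\sqrt{H^4 S^2 \log(\N(\Pi))\widehat{C}(\pi^*)/n})$.

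For the case $\Pi=\Pi^{\mathrm{full}}$, the only additional ingredient is the covering-number estimate discussed after Definition~\ref{def:N}. I would argue that each $\Pi^{\mathrm{full}}_{h,j}(s)$ is the whole simplex $\Delta(\A_j)$, which lies in an $(A_j-1)$-dimensional affine subspace of $\ell_1$-diameter $2$ and hence admits an $\epsilon_{\mathrm{cover}}$-net of cardinality at most $(C/\epsilon_{\mathrm{cover}})^{A_j}$ for an absolute constant $C$ by the standard volumetric bound (the details are deferred to Appendix~\ref{apx:covering}); taking the product over $j\in[m]$ and the sum over $(s,h)\in\S\times[H]$ gives $\log(\N(\Pi^{\mathrm{full}},\epsilon_{\mathrm{cover}}))\le O\big(\log(SH)+\sum_{j\in[m]}A_j\log(1/\epsilon_{\mathrm{cover}})\big)$. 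Substituting the prescribed choice $\epsilon_{\mathrm{cover}}=\big(\sum_{j\in[m]}A_j\, mH^2 n^2\big)^{-1}$ makes $\log(1/\epsilon_{\mathrm{cover}})$ polylogarithmic, so $\log(\N(\Pi^{\mathrm{full}}))=\widetilde{O}(\sum_{j\in[m]}A_j)$. Plugging this into the displayed bound and absorbing all polylogarithmic factors (and, as in the statement, the $\mathrm{poly}(m)$ prefactor) into $\widetilde{O}(\cdot)$ produces the first claimed rate $\widetilde{O}(\sqrt{H^4 S^2 \sum_{j\in[m]}A_j\,\widehat{C}(\pi^*)/n})$.

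Since both parts are pure consequences of Theorem~\ref{thm:marl dataset main} together with the covering-number bound, I do not expect any genuine obstacle here. The only point that requires a little care — and it is already arranged in the preliminaries rather than in this corollary — is that the single choice of $\epsilon_{\mathrm{cover}}$ must simultaneously be small enough that the covering-approximation errors incurred inside the proof of Theorem~\ref{thm:marl dataset main} are of strictly lower order than $1/\sqrt{n}$, and large enough that $\log(1/\epsilon_{\mathrm{cover}})$ contributes only a $\mathrm{polylog}$ factor; the prescribed $\epsilon_{\mathrm{cover}}$ meets both requirements, so the corollary itself introduces no new difficulty.
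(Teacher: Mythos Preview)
Your proposal is correct and follows exactly the approach the paper intends: the corollary is stated immediately after Theorem~\ref{thm:marl dataset main} as a direct implication, obtained by instantiating the outer minimum at $\pi=\pi^*$ (using $\mathrm{Gap}(\pi^*)=0$) and, in the $\Pi=\Pi^{\mathrm{full}}$ case, invoking the covering-number bound of Lemma~\ref{lemma:covering number of Pi}. Your handling of $\epsilon_{\mathrm{cover}}$ and the absorption of the $m$ prefactor into $\widetilde{O}(\cdot)$ matches the paper's conventions.
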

\vspace{-0.2cm}
Similarly we have the following theorem and corollary for the population unilateral coefficient. 
\begin{theorem}\label{thm:marl policy main}
Suppose $\pi^\mathrm{output}$ is the output of Algorithm \ref{algo:generalsum markov game}. If $n\geq \frac{8\log(S\Pi_{j\in[m]}A_jH/\delta)}{p_\mathrm{min}}$, with probability $1-\delta$, we have
$$\mathrm{Gap}(\pi^{\mathrm{output}})\leq\min_{\pi\in\Pi}\Mp{\mathrm{Gap}(\pi)+4mH^2S\sqrt{2C(\pi)\log(\N(\Pi))\iota/n}}.$$
\end{theorem}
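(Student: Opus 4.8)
The plan is to deduce Theorem~\ref{thm:marl policy main} directly from the dataset-dependent bound of Theorem~\ref{thm:marl dataset main} together with the warm-up guarantee of Proposition~\ref{prop:marl warmup}; that is, to convert the empirical unilateral coefficient $\widehat{C}(\pi)$ appearing in the dataset-dependent bound into the population coefficient $C(\pi)$.

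First I would invoke Proposition~\ref{prop:marl warmup} in the general-sum setting, with $p_{\mathrm{min}}=\min_{s,\a,h}\{d_h(s,\a):d_h(s,\a)>0\}$: under the stated sample-size condition $n\geq 8\log(S\prod_{j\in[m]}A_jH/\delta)/p_{\mathrm{min}}$, with probability at least $1-\delta$ the event
$$\mathcal{E}_1:=\Bp{\widehat{C}(\pi)\leq 2C(\pi)\ \text{for all }\pi\in\Pi^{\mathrm{full}}}$$
holds. The essential point is that this inequality is uniform over all joint strategies, hence in particular holds for every $\pi\in\Pi$ and, crucially, for the data-dependent minimizing $\pi$ on the right-hand side of Theorem~\ref{thm:marl dataset main}. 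Second, let $\mathcal{E}_2$ be the probability-$(1-\delta)$ event on which the conclusion of Theorem~\ref{thm:marl dataset main} holds. By a union bound $\P(\mathcal{E}_1\cap\mathcal{E}_2)\geq 1-2\delta$; replacing $\delta$ by $\delta/2$ in both statements (which only inflates $\iota$ and the threshold on $n$ by absolute constant factors, already subsumed in the stated form) gives both simultaneously with probability at least $1-\delta$.

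On $\mathcal{E}_1\cap\mathcal{E}_2$ I would then chain the two bounds: for every $\pi\in\Pi$,
$$4mH^2S\sqrt{\widehat{C}(\pi)\log(\N(\Pi))\iota/n}\ \leq\ 4mH^2S\sqrt{2C(\pi)\log(\N(\Pi))\iota/n},$$
so adding $\mathrm{Gap}(\pi)$ to both sides, taking the minimum over $\pi\in\Pi$, and plugging into Theorem~\ref{thm:marl dataset main} yields
$$\mathrm{Gap}(\pi^{\mathrm{output}})\leq\min_{\pi\in\Pi}\Mp{\mathrm{Gap}(\pi)+4mH^2S\sqrt{2C(\pi)\log(\N(\Pi))\iota/n}},$$
which is exactly the claim. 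I do not expect any genuine obstacle: the substantive work lives entirely in Theorem~\ref{thm:marl dataset main} and in Proposition~\ref{prop:marl warmup}, and the only points requiring mild care are (i) applying Proposition~\ref{prop:marl warmup} in its ``for all strategies'' form so that it covers the random minimizer on the right-hand side, and (ii) the bookkeeping of the failure probability and of the $\log$ factors inside $\iota$ when splitting $\delta$.
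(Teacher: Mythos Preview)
Your proposal is correct and matches the paper's own proof, which simply states that the result follows from the warm-up lemma (their Lemma~\ref{lemma:marl warmup}, i.e., Proposition~\ref{prop:marl warmup}) together with the dataset-dependent Theorem~\ref{thm:marl dataset main}. Your additional remarks about the uniform-in-$\pi$ form of the warm-up bound and the $\delta$-splitting bookkeeping are exactly the right points of care, and the paper leaves them implicit.
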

\begin{corollary}\label{crl:full marl 2}
Suppose $n\geq \frac{8\log(S\Pi_{j\in[m]}A_jH/\delta)}{p_\mathrm{min}}$. If $\Pi=\Pi^{\mathrm{full}}$, with probability $1-\delta$, we have
$$\mathrm{Gap}(\pi^{\mathrm{output}})=\widetilde{O}\left(\sqrt{H^4S^2\sum_{j\in[m]}A_jC(\pi^*)/n}\right).$$
If $\pi^*\in\Pi$, then with probability $1-\delta$, we have
$$\mathrm{Gap}(\pi^{\mathrm{output}})=\widetilde{O}(\sqrt{H^4S^2\log(\N(\Pi))C(\pi^*)/n}).$$
\end{corollary}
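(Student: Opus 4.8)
The plan is to obtain Corollary~\ref{crl:full marl 2} directly from Theorem~\ref{thm:marl policy main}, whose bound
$$\mathrm{Gap}(\pi^{\mathrm{output}})\leq\min_{\pi\in\Pi}\Mp{\mathrm{Gap}(\pi)+4mH^2S\sqrt{2C(\pi)\log(\N(\Pi))\iota/n}}$$
already holds under exactly the sample-size condition $n\geq 8\log(S\prod_{j\in[m]}A_jH/\delta)/p_{\mathrm{min}}$ assumed in the corollary. The only work left is to pick a good comparator $\pi$ inside the minimum and to control the covering number $\log(\N(\Pi))$ in each of the two regimes.

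First I would dispose of the case $\pi^*\in\Pi$. Since a Nash equilibrium $\pi^*$ satisfies $V_{1,j}^{\pi^*}(s_1)=\max_{\pi_j}V_{1,j}^{\pi_j,\pi^*_{-j}}(s_1)=V_{1,j}^{*,\pi^*_{-j}}(s_1)$ for every player $j$ (no player can gain by a unilateral deviation), we get $\mathrm{Gap}(\pi^*)=\sum_{j\in[m]}\big[V_{1,j}^{*,\pi^*_{-j}}(s_1)-V_{1,j}^{\pi^*}(s_1)\big]=0$. Choosing $\pi=\pi^*$ in the minimum of Theorem~\ref{thm:marl policy main} therefore leaves only $4mH^2S\sqrt{2C(\pi^*)\log(\N(\Pi))\iota/n}$, and collecting the absolute constants, the factor $m$, and $\iota=32\log(16\prod_{j\in[m]}A_j\,mSHn/\delta)$ into $\widetilde O(\cdot)$ — following the same convention already used in Corollary~\ref{crl:full marl} — yields $\mathrm{Gap}(\pi^{\mathrm{output}})=\widetilde O(\sqrt{H^4S^2\log(\N(\Pi))C(\pi^*)/n})$, which is the second displayed claim.

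For the case $\Pi=\Pi^{\mathrm{full}}$, the identical substitution $\pi=\pi^*$ is legitimate because $\pi^*\in\Pi^{\mathrm{full}}$ for any Markov game (and one may take $\pi^*$ to be the NE minimizing $C(\cdot)$), so the bound reduces to $\widetilde O(\sqrt{H^4S^2\log(\N(\Pi^{\mathrm{full}}))C(\pi^*)/n})$ and it remains to show $\log(\N(\Pi^{\mathrm{full}}))=\widetilde O(\sum_{j\in[m]}A_j)$. Here $\Pi^{\mathrm{full}}_{h,j}(s)=\Delta(\A_j)$, and a standard volumetric argument produces an $\epsilon_{\mathrm{cover}}$-cover of $\Delta(\A_j)$ in $\|\cdot\|_1$ of cardinality at most $(1+2/\epsilon_{\mathrm{cover}})^{A_j}$, whence by Definition~\ref{def:N},
$$\log\N(\Pi^{\mathrm{full}})=\log\!\Big(\sum_{s\in\S,\,h\in[H]}\prod_{j\in[m]}\big|\C(\Delta(\A_j),\epsilon_{\mathrm{cover}})\big|\Big)\le \log(SH)+\log(1+2/\epsilon_{\mathrm{cover}})\sum_{j\in[m]}A_j.$$
With the paper's choice $\epsilon_{\mathrm{cover}}=\tfrac{1}{\sum_{j\in[m]}A_j\,mH^2n^2}$ the prefactor $\log(1+2/\epsilon_{\mathrm{cover}})$ is polylogarithmic in all problem parameters, so $\log\N(\Pi^{\mathrm{full}})=\widetilde O(\sum_{j\in[m]}A_j)$; substituting this in gives the first claim. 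This simplex-covering estimate is precisely the one recorded in Appendix~\ref{apx:covering}, so I would cite it rather than reprove it.

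There is essentially no genuine obstacle here: all of the difficulty is absorbed into Theorem~\ref{thm:marl policy main} (and, behind it, into the surrogate-function construction and the strategy-wise concentration analysis). The only points demanding a little care are (i) confirming that an exact NE is a valid comparator with vanishing $\mathrm{Gap}$, and (ii) bookkeeping of which quantities — $\iota$, $m$, $\log(1/\epsilon_{\mathrm{cover}})$ — are swept into $\widetilde O(\cdot)$ versus the genuinely polynomial dependence on $H$, $S$, $\sum_{j\in[m]}A_j$, $C(\pi^*)$, and $1/n$.
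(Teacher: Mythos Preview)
Your proposal is correct and matches the paper's approach: the paper obtains Corollary~\ref{crl:full marl 2} as a direct consequence of Theorem~\ref{thm:marl policy main} by taking $\pi=\pi^*$ (so that $\mathrm{Gap}(\pi^*)=0$) and, in the $\Pi=\Pi^{\mathrm{full}}$ case, invoking Lemma~\ref{lemma:covering number of Pi} to bound $\log\N(\Pi^{\mathrm{full}})=\widetilde O(\sum_{j\in[m]}A_j)$. Your observation that the factor $m$ is absorbed into $\widetilde O(\cdot)$ by convention (as already done in Corollary~\ref{crl:full marl}) is accurate.
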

\vspace{-0.2cm}

% \simon{I think we should still present such as theorem.}

\section{Conclusion}
In this work, we studied offline MARL.
% , specifically the offline multi-player general-sum Markov game. 
With a novel strategy-wise bonus, we remove the exponential dependence on the number of players. We use different algorithm frameworks for zero-sum Markov games and general-sum Markov games due to their different properties. 
% We provide assumption-free guarantees as well as bounds with unilateral coefficients, which improve or match previous results for different Markov games. 

Here we list several open problems for future work. One direction is to find the minimax sample complexity for offline Markov games, i.e., if the $\log(\N(\Pi))$ term is necessary. Another direction is to design computationally efficient algorithms for finding (coarse) correlated equilibrium in general-sum Markov games. Lastly, we only focus on the tabular setting serving as a start point. It is important to study MARL with reasonable function approximation.

\section*{Acknowledgements}
This work was supported in part by NSF CCF 2212261, NSF IIS 2143493, NSF DMS-2134106, NSF CCF 2019844 and NSF IIS 2110170.

\bibliography{reference}
\bibliographystyle{plainnat}

%%%%%%%%%%%%%%%%%%%%%%%%%%%%%%%%%%%%%%%%%%%%%%%%%%%%%%%%%%%%

\newpage
\appendix

\section{Algorithms}

\begin{algorithm}[!htbp]
    \caption{Value Estimation}
	\label{alg:value estimation}
    \begin{algorithmic}
        \STATE Input: offline dataset $\mathcal{D}$, player index $j$ and strategy $\pi$.
        \STATE Initialization: $\uv^\pi_{H+1,j}(s)=\ov^\pi_{H+1,j}(s)=0$ for all $s\in\S$. 
        \FOR{time $h=H, H-1, \dots, 1$}
            \STATE Set $\uq^\pi_{h,j}(s,\a)=\widehat{r}_{h,j}(s,\a)+\inner{\widehat{P}_h(s,\a),\uv^\pi_{h+1,j}}$
            \STATE Set
            $\uv_{h,j}^{\pi}(s)=\proj_{[0,H-h+1]}\Bp{\E_{\a\sim\pi_h(\cdot|s)}\uq^\pi_{h,j}(s,\a)-b_h(s,\pi_h^s)}$
            \STATE Set $\oq^\pi_{h,j}(s,\a)=\widehat{r}_{h,j}(s,\a)+\inner{\widehat{P}_h(s,\a),\ov^\pi_{h+1,j}}+H\1\{\a\notin\K_h(s)\}$
            \STATE Set $\ov_{h,j}^{\pi}(s)=\proj_{[0,H-h+1]}\Bp{\E_{\a\sim\pi_h(\cdot|s)}\oq^\pi_{h,j}(s,\a)+b_h(s,\pi_h^s)}$
        \ENDFOR
	    \STATE Output $\uv_{1,j}^\pi(s_1)$ and $\ov_{1,j}^\pi(s_1)$.
    \end{algorithmic}
\end{algorithm}

\begin{algorithm}[!htbp]
    \caption{Best Response Estimation}
	\label{alg:best response estimation}
    \begin{algorithmic}
        \STATE Input: offline dataset $\mathcal{D}$, player index $j$ and strategy $\pi_{-j}$.
        \STATE Initialization: $\ov^{*,\pi_{-j}}_{H+1,j}(s)=0$ for all $s\in\S$. 
        \FOR{time $h=H, H-1, \dots, 1$}
        \STATE Set $\oq^{*,\pi_{-j}}_{h,j}(s,\a)=\widehat{r}_{h,j}(s,\a)+\inner{\widehat{P}_h(s,\a),\ov^{*,\pi_{-j}}_{h+1,j}}+H\1\{\a\notin\K_h(s)\}$
            \STATE Set $\ov_{h,j}(s,a_j)=\E_{\a_{-j}\sim\pi_{h,-j}(\cdot|s)}\oq^{*,\pi_{-j}}_{h,j}(s,\a)+b_h(s,a_j,\pi_{h,-j}^s)$
            \STATE Set $\ov_{h,j}^{*,\pi_{-j}}(s)=\proj_{[0,H-h+1]}\Bp{\max_{a_j\in\A_j}\ov_{h,j}(s,a_j)}$
        \ENDFOR
	    \STATE Output $\ov_{1,j}^{*,\pi_{-j}}(s_1)$.
    \end{algorithmic}
\end{algorithm}

\begin{algorithm}[!htbp]
    \caption{\textbf{S}trategy-wise \textbf{B}onus + \textbf{S}urrogate \textbf{M}inimization (\algonamemulti)}
	\label{algo:generalsum markov game}
    \begin{algorithmic}
        \STATE Input: offline dataset $\mathcal{D}$.
        \STATE $\pi^{\mathrm{output}}=\argmin_{\pi\in\Pi}\sum_{j\in[m]}\ov_{1,j}^{*,\pi_{-j}}(s_1)-\uv_{1,j}^\pi(s_1)$, where $\ov_{1,j}^{*,\pi_{-j}}(s_1)$ and $\uv_{1,j}^\pi(s_1)$ are computed via Algorithm \ref{alg:best response estimation} and Algorithm \ref{alg:value estimation}.
	    \STATE Output $\pi^{\mathrm{output}}$.
    \end{algorithmic}
\end{algorithm}

\section{Technical Lemmas}

\subsection{Covering Number of Strategy Classes}\label{apx:covering}
\begin{lemma}\label{lemma:covering number of Pi}
For the no prior knowledge setting ($\Pi=\Pi^{\mathrm{full}}$), we have
$$\log\N(\Pi)=\widetilde{O}\Sp{\sum_{j\in[m]}A_j\log(1/\epsilon_\mathrm{cover})}.$$
\end{lemma}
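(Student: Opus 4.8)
\textbf{Proof proposal for Lemma \ref{lemma:covering number of Pi}.}

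The plan is to unfold the definition of the covering number $\N(\Pi,\epsilon_{\mathrm{cover}})$ from Definition \ref{def:N} and reduce everything to a standard bound on the size of an $\epsilon$-net of the probability simplex. Recall that for $\Pi=\Pi^{\mathrm{full}}$ each set $\Pi_{h,j}(s)$ is the \emph{entire} simplex $\Delta(\A_j)$, which lives in $\R^{A_j}$ and has intrinsic dimension $A_j-1$. So the first step is to invoke the classical fact that an $\epsilon$-covering of $\Delta(\A_j)$ in the $\|\cdot\|_1$ norm can be taken of size at most $(C/\epsilon_{\mathrm{cover}})^{A_j}$ for an absolute constant $C$ (e.g.\ by covering the ambient cube $[0,1]^{A_j}$ with a grid of spacing $\epsilon_{\mathrm{cover}}/A_j$ and keeping the grid points near the simplex, or by the volumetric argument $|\C(\Delta(\A_j),\epsilon)|\le (3/\epsilon)^{A_j}$). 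This gives $\log\abs{\C(\Pi_{h,j}(s),\epsilon_{\mathrm{cover}})}\le A_j\log(C/\epsilon_{\mathrm{cover}})$.

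Next I would substitute into Definition \ref{def:N}: since the product over $j$ turns into a sum of logarithms, for each fixed $(s,h)$ we get
\[
\log\prod_{j\in[m]}\abs{\C(\Pi_{h,j}(s),\epsilon_{\mathrm{cover}})}\le \sum_{j\in[m]}A_j\log(C/\epsilon_{\mathrm{cover}}).
\]
Then summing the outer $\sum_{s\in\S,h\in[H]}$ contributes a factor $SH$ \emph{inside} the sum over covering sets, so before taking the logarithm we have $\N(\Pi,\epsilon_{\mathrm{cover}})\le SH\cdot\prod_{j}(C/\epsilon_{\mathrm{cover}})^{A_j}$, and hence
\[
\log\N(\Pi,\epsilon_{\mathrm{cover}})\le \log(SH)+\sum_{j\in[m]}A_j\log(C/\epsilon_{\mathrm{cover}}).
\]
Finally, with the paper's choice $\epsilon_{\mathrm{cover}}=\frac{1}{\sum_{j\in[m]}A_j\,mH^2n^2}$, the term $\log(C/\epsilon_{\mathrm{cover}})=O(\log(\sum_j A_j\, mHn))$ is polylogarithmic in the problem parameters, and $\log(SH)$ is dominated, so the whole bound collapses to $\widetilde{O}\bigl(\sum_{j\in[m]}A_j\log(1/\epsilon_{\mathrm{cover}})\bigr)$, as claimed. (The analogous statement $\log\N(\Pi)\le\log(SH|\Pi|)$ for finite $\Pi$ follows even more directly by taking $\C(\Pi_{h,j}(s),\epsilon_{\mathrm{cover}})=\Pi_{h,j}(s)$ itself, of size at most $|\Pi|$, inside the product — though one must be slightly careful that $\prod_j|\Pi_{h,j}(s)|$ can exceed $|\Pi|$, so instead one bounds the product of projections by $|\Pi|$ directly since a joint strategy is determined by its per-player components.)

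The only mildly delicate point — the "main obstacle," such as it is — is getting the $\|\cdot\|_1$-covering bound for the simplex with a clean absolute constant and confirming that the grid construction actually produces points \emph{in} $\Delta(\A_j)$ (or close enough that a bounded number of fixups suffices); this is entirely standard but worth one careful sentence. Everything else is bookkeeping: converting products to sums under $\log$, and checking that $\log(1/\epsilon_{\mathrm{cover}})$ absorbs the stray $\log(SH)$ and constant factors into the $\widetilde O$.
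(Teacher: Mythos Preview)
Your argument for the stated lemma is correct and is essentially the paper's own proof: unfold Definition~\ref{def:N}, use that for $\Pi=\Pi^{\mathrm{full}}$ each $\Pi_{h,j}(s)=\Delta(\A_j)$ is independent of $(s,h)$, invoke the standard $\|\cdot\|_1$-covering bound for the simplex (the paper supplies this as Lemma~\ref{L-1 covering number}, with the explicit constant $(3A_j/\epsilon_{\mathrm{cover}})^{A_j}$ via a grid construction that lands in the simplex), and convert the product over $j$ to a sum under the logarithm, absorbing $\log(SH)$ into the $\widetilde O$.

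One small correction to your parenthetical about the finite-$\Pi$ case (which is a separate lemma, not the one being proved): the implication ``a joint strategy is determined by its per-player components'' gives $|\Pi|\le\prod_j|\Pi_{h,j}(s)|$, not the reverse, so you cannot bound the product by $|\Pi|$ that way; the paper's Lemma~\ref{lemma:finite pi} instead bounds each factor by $|\Pi|$ separately to obtain $\log\N(\Pi)\le m\log|\Pi|+\log(SH)$.
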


\begin{proof}
If $\Pi=\Pi^\mathrm{full}$, by Lemma \ref{L-1 covering number} we have
\begin{align*}
    \log\N(\Pi)
    =&\log\Sp{\sum_{s\in\S,h\in[H]}\prod_{j\in[m]}|\C(\Pi_{h,j}(s),\epsilon_\mathrm{cover})|}\\
    =&\log\Sp{SH\prod_{j\in[m]}|\C(\Delta(\A_j),\epsilon_\mathrm{cover})|}\\
    =&\sum_{j\in[m]}\log\Sp{\C(\Delta(\A_j),\epsilon_\mathrm{cover})}+\log(SH)\\
    \leq& \sum_{j\in[m]}A_j\log\Sp{3A_j/\epsilon_\mathrm{cover}}+\log(SH)\tag{Lemma \ref{L-1 covering number}}\\
    =& \widetilde{O}(\sum_{j\in[m]}A_j\log(1/\epsilon_\mathrm{cover})).
\end{align*}
\end{proof}

\begin{lemma}\label{lemma:finite pi}
If $\Pi$ is a finite set, we have
$$\log(\N(\Pi))\leq m\log(|\Pi|)+\log(SH).$$
\end{lemma}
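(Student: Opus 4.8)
The plan is to bound the covering number $\N(\Pi,\epsilon_\mathrm{cover})$ directly by counting. Recall from Definition~\ref{def:N} that
\[
\N(\Pi,\epsilon_{\mathrm{cover}})=\sum_{s\in\S,\,h\in[H]}\prod_{j\in[m]}\abs{\C(\Pi_{h,j}(s),\epsilon_\mathrm{cover})},
\]
where $\Pi_{h,j}(s)=\{\pi_{h,j}(\cdot|s):\pi\in\Pi\}$. The key observation is that when $\Pi$ is finite, each of the projected sets $\Pi_{h,j}(s)$ is finite with $\abs{\Pi_{h,j}(s)}\le\abs{\Pi}$, and any finite set is its own cover (at any error level), so we may take $\C(\Pi_{h,j}(s),\epsilon_\mathrm{cover})=\Pi_{h,j}(s)$ and hence $\abs{\C(\Pi_{h,j}(s),\epsilon_\mathrm{cover})}\le\abs{\Pi}$.

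First I would substitute this bound into the definition:
\[
\N(\Pi,\epsilon_{\mathrm{cover}})\le\sum_{s\in\S,\,h\in[H]}\prod_{j\in[m]}\abs{\Pi}=\sum_{s\in\S,\,h\in[H]}\abs{\Pi}^m=SH\abs{\Pi}^m.
\]
Then taking logarithms gives $\log\N(\Pi,\epsilon_\mathrm{cover})\le\log(SH)+m\log\abs{\Pi}$, which is exactly the claimed bound. This mirrors the structure of the proof of Lemma~\ref{lemma:covering number of Pi}, just replacing the simplex-cover estimate $\abs{\C(\Delta(\A_j),\epsilon_\mathrm{cover})}\le(3A_j/\epsilon_\mathrm{cover})^{A_j}$ by the trivial bound $\abs{\Pi}$ for a finite class.

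There is essentially no obstacle here — the only thing to be slightly careful about is the convention that a finite set covers itself, i.e. that $\C(X,\epsilon)$ can always be chosen to have cardinality at most $\abs{X}$ for finite $X$; this is immediate since every point of $X$ is within distance $0\le\epsilon$ of itself. One should also note the bound is vacuous but still correct if some $\Pi_{h,j}(s)$ is empty (which cannot happen for nonempty $\Pi$) or if $\abs{\Pi}=1$. I would write the proof in three lines following the display chain above.

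\begin{proof}
Since $\Pi$ is finite, for each $s\in\S$, $h\in[H]$, $j\in[m]$ the projected set $\Pi_{h,j}(s)=\{\pi_{h,j}(\cdot|s):\pi\in\Pi\}$ is finite with $\abs{\Pi_{h,j}(s)}\le\abs{\Pi}$. As every finite set is an $\epsilon_\mathrm{cover}$-cover of itself, we may take $\abs{\C(\Pi_{h,j}(s),\epsilon_\mathrm{cover})}\le\abs{\Pi_{h,j}(s)}\le\abs{\Pi}$. Plugging into Definition~\ref{def:N},
\begin{align*}
\N(\Pi,\epsilon_{\mathrm{cover}})
=\sum_{s\in\S,\,h\in[H]}\prod_{j\in[m]}\abs{\C(\Pi_{h,j}(s),\epsilon_\mathrm{cover})}
\le\sum_{s\in\S,\,h\in[H]}\abs{\Pi}^m
=SH\abs{\Pi}^m.
\end{align*}
Taking logarithms, $\log\N(\Pi,\epsilon_\mathrm{cover})\le m\log\abs{\Pi}+\log(SH)$.
\end{proof}
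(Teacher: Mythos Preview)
Your proof is correct and takes essentially the same approach as the paper: bound $\abs{\C(\Pi_{h,j}(s),\epsilon_\mathrm{cover})}\le\abs{\Pi_{h,j}(s)}\le\abs{\Pi}$ and substitute into Definition~\ref{def:N}. The paper's proof is simply a two-sentence version of what you wrote.
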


\begin{proof}
We have $|\C(\Pi_{h,j}(s),\epsilon_\mathrm{cover})|\leq |\Pi_{h,j}(s)|\leq|\Pi|$ for all $h\in[H]$ and $j\in[m]$. Plug it into the definition of $\N(\Pi)$ and we can prove the argument. 
\end{proof}

\subsection{Convexity in Two-player Zero-sum Games}\label{apx:convexity}

In this section, we prove that $\uv_h^{\mu_h^s,\nu_h^s}(s)$ is concave and $\ov_h^{\mu_h^s,\nu_h^s}(s)$ is convex for both $\mu_h^s$ and $\nu_h^s$. In addition, we show that \eqref{eq:maxmin epsilon} and \eqref{eq:minmax epsilon} can be achieved efficiently. 

\begin{lemma}\label{lemma:convex}
For any coefficient $c(a_i,b_j)$ s.t. $c(a_i,b_j)\geq0$ for all $a_i\in\A$ and $b_j\in\B$, function $f(\mu,\nu)=\sqrt{\sum_{a_i\in\A,b_j\in\B}c(a_i,b_j)\mu(a_i)^2\nu(b_j)^2}$ defined on $\mu\in\Delta(\A)$ and $\nu\in\Delta(\B)$ is a convex function and $\sqrt{\sum_{a_i\in\A}\sum_{b_j\in\B}c(a_i,b_j)}$-Lipschitz continuous function with respect to $\nu$. In addition, it is convex and $\sqrt{\sum_{a_i\in\A}\sum_{b_j\in\B}c(a_i,b_j)}$-Lipschitz continuous with respect to $\mu$ by symmetry. 
\end{lemma}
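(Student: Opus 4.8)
\textbf{Proof proposal for Lemma~\ref{lemma:convex}.}
The plan is to establish convexity via the standard characterization that a non-negative function is convex if its square is convex and, more directly, to write $f$ as a supremum of linear functionals in the relevant variable. First I would fix $\mu\in\Delta(\A)$ and view $f$ as a function of $\nu$ alone. Introduce the auxiliary vector $w\in\R^{|\B|}$ with entries $w(b_j)^2 = \sum_{a_i\in\A}c(a_i,b_j)\mu(a_i)^2$, so that $f(\mu,\nu)=\sqrt{\sum_{b_j\in\B}w(b_j)^2\nu(b_j)^2}=\Norm{u}_2$ where $u$ is the vector with entries $u(b_j)=w(b_j)\nu(b_j)$. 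Since $u$ depends linearly (affinely, in fact linearly) on $\nu$, and the Euclidean norm $\Norm{\cdot}_2$ is convex, the composition $f(\mu,\cdot)=\Norm{L\nu}_2$ for the linear map $L=\mathrm{diag}(w)$ is convex in $\nu$. Convexity in $\mu$ follows by the symmetric argument, swapping the roles of $\A$ and $\B$. (Joint convexity is not claimed and not needed.)

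For the Lipschitz bound, I would use the dual representation $f(\mu,\nu)=\Norm{L\nu}_2 = \sup_{\Norm{z}_2\le 1}\inner{z, L\nu}$, so for $\nu,\nu'\in\Delta(\A)$,
\[
\abs{f(\mu,\nu)-f(\mu,\nu')}\le \Norm{L\nu - L\nu'}_2 = \Norm{L(\nu-\nu')}_2 \le \Norm{L}_{\mathrm{op}}\,\Norm{\nu-\nu'}_2 \le \Norm{L}_{\mathrm{op}}\,\Norm{\nu-\nu'}_1.
\]
Here $\Norm{L}_{\mathrm{op}} = \max_{b_j} \abs{w(b_j)} = \max_{b_j}\sqrt{\sum_{a_i}c(a_i,b_j)\mu(a_i)^2}$. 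Using $\mu(a_i)\le 1$ gives $\Norm{L}_{\mathrm{op}}\le \max_{b_j}\sqrt{\sum_{a_i}c(a_i,b_j)} \le \sqrt{\sum_{a_i\in\A}\sum_{b_j\in\B}c(a_i,b_j)}$, which is the claimed Lipschitz constant with respect to $\nu$ (and, by symmetry, with respect to $\mu$).

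The main thing to be careful about — rather than a genuine obstacle — is the choice of norm in the Lipschitz claim: the lemma is stated without specifying it, but the covering-number argument downstream uses the $L^1$ norm, so I would phrase the bound in terms of $\Norm{\nu-\nu'}_1$ and note that $\Norm{\cdot}_2\le\Norm{\cdot}_1$ makes this consistent with the $L^2$-based derivation above. A secondary subtlety is that the vertices/boundary of the simplex are included, but since $f$ extends to a convex function on all of $\R^{|\B|}_{\ge 0}$ (indeed on all of $\R^{|\B|}$ after the diagonal scaling) this causes no issue. I would also remark explicitly that convexity here — as opposed to the concavity one might expect for a ``bonus'' subtracted from a value — is exactly what forces the best response in \eqref{eq:maxmin} to be attainable at a deterministic strategy, tying the lemma back to Proposition~\ref{prop:concave}.
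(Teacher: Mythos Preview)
Your proof is correct and takes a genuinely different route from the paper. The paper proceeds by direct calculus: it computes $\partial f/\partial\nu(b_j)$, bounds the gradient's $\ell_2$-norm via Cauchy--Schwarz to obtain the Lipschitz constant, and then computes the full Hessian in $\nu$ and verifies positive semidefiniteness by a second Cauchy--Schwarz argument. By contrast, you recognize that for fixed $\mu$ the function is exactly $\Norm{L\nu}_2$ with $L=\mathrm{diag}(w)$, so convexity is immediate from the convexity of the Euclidean norm composed with a linear map, and the Lipschitz bound follows from the reverse triangle inequality plus the trivial operator-norm computation for a diagonal matrix. Your argument is shorter, avoids any differentiability caveat at the origin (the paper handles this with the convention $0/0=0$), and in fact yields the tighter intermediate constant $\max_{b_j}\sqrt{\sum_{a_i}c(a_i,b_j)\mu(a_i)^2}$ before relaxing to the stated one. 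The only thing the paper's approach buys is the explicit gradient formula~\eqref{eq:gradient}, which is later cited in Proposition~\ref{prop:convex} for the projected-gradient-descent step; if you want to keep that cross-reference intact you could simply record $\nabla_\nu\Norm{L\nu}_2 = L^\top L\nu/\Norm{L\nu}_2$ as a one-line addendum.
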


\begin{proof}
We use the convention that $\frac{0}{0}=0$. We first compute the first-order derivatives
\begin{equation}\label{eq:gradient}
    \frac{\partial f}{\partial\nu(b_j)}=\frac{\sum_{a_i\in\A}c(a_i,b_j)\mu(a_i)\nu(b_j)^2}{\sqrt{\sum_{a_i\in\A,b\in\B}c(a_i,b)\mu(a_i)^2\nu(b)^2}}.
\end{equation}
By Cauchy-Schwarz inequality, we have
\begin{align*}
    \frac{\partial f}{\partial\nu(b_j)}=&\frac{\sum_{a_i\in\A}c(a_i,b_j)\mu(a_i)\nu(b_j)^2}{\sqrt{\sum_{a_i\in\A,b\in\B}c(a_i,b)\mu(a_i)^2\nu(b)^2}}\\
    \leq&\frac{\sum_{a_i\in\A}c(a_i,b_j)\mu(a_i)\nu(b_j)}{\sqrt{\sum_{a_i\in\A}c(a_i,b_j)\mu(a_i)^2\nu(b_j)^2}}\\
    \leq&\sqrt{\sum_{a_i\in\A}c(a_i,b_j)}.
\end{align*}
Then we have
$$\Norm{\frac{\partial f}{\partial\nu}}_2\leq\sqrt{\sum_{a_i\in\A}\sum_{b_j\in\B}c(a_i,b_j)},$$
which implies $f(\mu,\cdot)$ is $\sqrt{\sum_{a_i\in\A}\sum_{b_j\in\B}c(a_i,b_j)}$-Lipschitz continuous. 

The second-order derivatives are
\begin{align*}
    \frac{\partial^2 f}{\partial \nu(b_j)\partial\nu(b_k)}=-\frac{\Sp{\sum_{a_i\in\A}c(a_i,b_j)\mu(a_i)\nu(b_j)^2}\cdot\Sp{\sum_{a_i\in\A}c(a_i,b_k)\mu(a_i)\nu(b_k)^2}}{\Sp{\sum_{a_i\in\A,b_j\in\B}c(a_i,b_j)\mu(a_i)^2\nu(b_j)^2}^{3/2}},j\neq k,
\end{align*}
\begin{align*}
    \frac{\partial^2 f}{\Sp{\partial \nu(b_j)}^2}=&\frac{\sum_{a_i\in\A}c(a_i,b_j)\nu(b_j)^2}{\sqrt{\sum_{a_i\in\A,b_j\in\B}c(a_i,b_j)\mu(a_i)^2\nu(b_j)^2}}-\frac{\Sp{\sum_{a_i\in\A}c(a_i,b_j)\mu(a_i)\nu(b_j)^2}^2}{\Sp{\sum_{a_i\in\A,b_j\in\B}c(a_i,b_j)\mu(a_i)^2\nu(b_j)^2}^{3/2}}.\\
\end{align*}
Then for arbitrary $x\in\R^B$, we have
\begin{align*}
    &\sum_{j,k\in[B]}x_jx_k\frac{\partial^2 f}{\partial \nu(b_j)\partial\nu(b_k)}\\
    =&\sum_{j\in[B]}\frac{x_j^2\sum_{a_i\in\A}c(a_i,b_j)\nu(b_j)^2}{\sqrt{\sum_{a_i\in\A,b_j\in\B}c(a_i,b_j)\mu(a_i)^2\nu(b_j)^2}}\\
    &-\sum_{j,k\in[B]}\frac{x_jx_k\Sp{\sum_{a_i\in\A}c(a_i,b_j)\mu(a_i)\nu(b_j)^2}\cdot\Sp{\sum_{a_i\in\A}c(a_i,b_k)\mu(a_i)\nu(b_k)^2}}{\Sp{\sum_{a_i\in\A,b_j\in\B}c(a_i,b_j)\mu(a_i)^2\nu(b_j)^2}^{3/2}}\\
    =&\frac{\sum_{j\in[B]}\Sp{x_j^2\sum_{a_i\in\A}c(a_i,b_j)\nu(b_j)^2}\cdot \sum_{a_i\in\A,b_j\in\B}c(a_i,b_j)\mu(a_i)^2\nu(b_j)^2}{\Sp{\sum_{a_i\in\A,b_j\in\B}c(a_i,b_j)\mu(a_i)^2\nu(b_j)^2}^{3/2}}\\
    &-\frac{\Sp{\sum_{j\in[B]}x_j\Sp{\sum_{a_i\in\A}c(a_i,b_j)\mu(a_i)\nu(b_j)^2}}^2}{\Sp{\sum_{a_i\in\A,b_j\in\B}c(a_i,b_j)\mu(a_i)^2\nu(b_j)^2}^{3/2}}\\
    =&\frac{\sum_{j\in[B]}\Sp{x_j^2\sum_{a_i\in\A}c(a_i,b_j)\nu(b_j)^2}\cdot \sum_{a_i\in\A,b_j\in\B}c(a_i,b_j)\mu(a_i)^2\nu(b_j)^2}{\Sp{\sum_{a_i\in\A,b_j\in\B}c(a_i,b_j)\mu(a_i)^2\nu(b_j)^2}^{3/2}}\\
    &-\frac{\Sp{\sum_{j\in[B]}x_j\Sp{\sum_{a_i\in\A}c(a_i,b_j)\mu(a_i)\nu(b_j)^2}}^2}{\Sp{\sum_{a_i\in\A,b_j\in\B}c(a_i,b_j)\mu(a_i)^2\nu(b_j)^2}^{3/2}}.
\end{align*}
By Cauchy-Schwarz's inequality, we have
\begin{align*}
    &\sum_{j\in[B]}\Sp{x_j^2\sum_{a_i\in\A}c(a_i,b_j)\nu(b_j)^2}\cdot \sum_{a_i\in\A,b_j\in\B}c(a_i,b_j)\mu(a_i)^2\nu(b_j)^2\\
    =&\Sp{\sum_{j\in[B]}x_j^2\nu(b_j)^2\sum_{a_i\in\A}c(a_i,b_j)}\cdot \Sp{\sum_{j\in[B]}\nu(b_j)^2\sum_{a_i\in\A}c(a_i,b_j)\mu(a_i)^2}\\
    \geq&\Sp{\sum_{j\in[B]}x_j\nu(b_j)^2\sqrt{\sum_{a_i\in\A}c(a_i,b_j)\sum_{a_i\in\A}c(a_i,b_j)\mu(a_i)^2}}^2\\
    \geq&\Sp{\sum_{j\in[B]}x_j\nu(b_j)^2\sum_{a_i\in\A}c(a_i,b_j)\mu(a_i)}^2\\
    \geq&0. 
\end{align*}
Thus for arbitrary $x\in\R^B$, we have
$$\sum_{j,k\in[B]}x_jx_k\frac{\partial^2 f}{\partial \nu(b_j)\partial\nu(b_k)}\geq0,$$
\end{proof}
which implies $f$ is convex with respect to $\nu$. 

\begin{proposition}\label{prop:convex}
For all $h\in[H]$ and $s\in\S$, $\uv_h^{\mu_h^s,\nu_h^s}$ is concave and $H+H\sqrt{\log(\N(\Pi))\iota}$-Lipschitz with respect to $\mu_h^s$ and $\nu_h^s$. Similarly, $\ov_h^{\mu_h^s,\nu_h^s}$ is convex with respect to $\mu_h^s$ and $\nu_h^s$. As a result, \eqref{eq:maxmin epsilon} and \eqref{eq:minmax epsilon} can be achieved with $(H+H\sqrt{\log(\N(\Pi))\iota})^2/\epsilon_{\mathrm{opt}}^2$ iterations by projected gradient descent.
\end{proposition}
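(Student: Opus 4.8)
\textbf{Proof proposal for Proposition~\ref{prop:convex}.}
The plan is to break the claim into three pieces: concavity/convexity of the value estimates, the Lipschitz bound, and the iteration complexity of projected gradient descent. For the first piece, recall from \eqref{eq:uv} that $\uv_h^{\mu_h^s,\nu_h^s}(s)=\E_{a\sim\mu_h^s,b\sim\nu_h^s}\uq_h(s,a,b)-b_h(s,\mu_h^s,\nu_h^s)$. The expectation term $\sum_{a,b}\mu_h^s(a)\nu_h^s(b)\uq_h(s,a,b)$ is bilinear, hence linear (thus both concave and convex) in $\mu_h^s$ for fixed $\nu_h^s$ and vice versa. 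The bonus $b_h(s,\mu_h^s,\nu_h^s)=H\sqrt{\sum_{(a,b)\in\K_h(s)}\tfrac{\mu_h^s(a)^2\nu_h^s(b)^2}{n_h(s,a,b)}\log(\N(\Pi))\iota}+\sqrt{\iota}/n$ is of exactly the form $f(\mu,\nu)=\sqrt{\sum_{a,b}c(a,b)\mu(a)^2\nu(b)^2}$ (up to a nonnegative scaling and an additive constant) with $c(a,b)=H^2\log(\N(\Pi))\iota/n_h(s,a,b)\geq0$, so Lemma~\ref{lemma:convex} gives that $b_h$ is convex in $\mu_h^s$ and in $\nu_h^s$ separately. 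Therefore $\uv_h^{\mu_h^s,\nu_h^s}(s)=(\text{linear})-(\text{convex})$ is concave, and $\ov_h^{\mu_h^s,\nu_h^s}(s)=(\text{linear})+(\text{convex})$ is convex, in each argument. I should double-check that the projection step $\proj_{[0,H-h+1]}\{\cdot\}$ appearing in the definition of $\uv_h(s)$ is applied only after solving the maximin problem, so it does not affect the concavity of the inner objective $\uv_h^{\mu_h^s,\nu_h^s}(s)$; inspecting Algorithm~\ref{algo:zerosum markov game} confirms this.

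For the Lipschitz bound, I would bound the two contributions separately. The bilinear term $\sum_{a,b}\mu_h^s(a)\nu_h^s(b)\uq_h(s,a,b)$ has, as a function of $\nu_h^s$ with $\mu_h^s$ fixed, gradient coordinates $\sum_a \mu_h^s(a)\uq_h(s,a,b)$, and since $0\le \uq_h(s,a,b)\le H$ (values lie in $[0,H-h+1]\subseteq[0,H]$ by the projections at the previous timestep and the boundedness of $\widehat r$), each coordinate is in $[0,H]$; on the probability simplex the relevant Lipschitz constant in $\ell_2$ is at most $H$ (more carefully, the function is $H$-Lipschitz in $\ell_1$, and $\ell_1$-Lipschitzness suffices for the PGD analysis, but $\ell_2$ works too with the same constant since $\|x\|_2 \le \|x\|_1$ for the directions that stay in the simplex). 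For the bonus term, Lemma~\ref{lemma:convex} gives a Lipschitz constant $\sqrt{\sum_{a,b}c(a,b)} = \sqrt{\sum_{(a,b)\in\K_h(s)} H^2\log(\N(\Pi))\iota/n_h(s,a,b)}$; using $n_h(s,a,b)\ge 1$ on $\K_h(s)$ and $|\K_h(s)|\le AB$, this is at most $H\sqrt{AB\log(\N(\Pi))\iota}$. This is larger than the claimed $H\sqrt{\log(\N(\Pi))\iota}$ by a $\sqrt{AB}$ factor, so the honest statement one should prove is a Lipschitz constant of order $H + H\sqrt{AB\log(\N(\Pi))\iota}$, or one notes that for the PGD guarantee what actually matters is a bound on $\|b_h(s,\mu,\nu)\|$ over the simplex (which is $O(H\sqrt{\log(\N(\Pi))\iota})$ since $\sum \mu(a)^2\nu(b)^2/n_h \le \sum\mu(a)^2\nu(b)^2 \le 1$), combined with the standard fact that PGD on a bounded convex function over a bounded domain converges. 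I would present the cleaner route: bound the range of $\uv_h^{\mu_h^s,\nu_h^s}(s)$ over the simplex by $O(H+H\sqrt{\log(\N(\Pi))\iota})$ and invoke the dimension-free rate for PGD that depends on the product of the domain diameter and a gradient bound, which here is also $O(H+H\sqrt{\log(\N(\Pi))\iota})$ up to the $\sqrt{AB}$ subtlety.

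For the iteration complexity, I would invoke the standard guarantee (e.g.\ \cite{bubeck2015convex}): for an $L$-Lipschitz convex function over a convex set of $\ell_2$-diameter $D$, projected (sub)gradient descent with the optimal step size produces an $\epsilon$-optimal point in $O(L^2D^2/\epsilon^2)$ iterations, each requiring one (sub)gradient evaluation and one Euclidean projection onto the set. Here the inner optimization in \eqref{eq:maxmin epsilon} is over $D(\B)$, a finite set of size $B$; but since the objective is concave in $\nu_h^s$ the minimum over $D(\B)$ equals the minimum over $\Delta(\B)$ (minimum of a concave function on a polytope is at a vertex), so it suffices to run PGD over the simplex $\Delta(\B)$, whose $\ell_2$-diameter is $\sqrt{2}$. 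With $L=O(H+H\sqrt{\log(\N(\Pi))\iota})$ this yields the claimed $(H+H\sqrt{\log(\N(\Pi))\iota})^2/\epsilon_{\mathrm{opt}}^2$ iteration bound (absorbing the $\sqrt{2}$ diameter and absolute constants). The outer maximization over $\mu_h^s\in\Pi_h^{\max}(s)$ is also over a convex set (a product of simplices or a subset thereof), and by Danskin-type reasoning the value function $\mu_h^s\mapsto \min_{\nu}\uv_h^{\mu,\nu}(s)$ is concave with subgradients obtainable from the inner minimizer, so the same PGD rate applies; the nested structure at worst multiplies the iteration counts, which is still polynomial and of the stated order. The main obstacle I anticipate is being careful about the $\sqrt{AB}$ gap between the Lipschitz constant that falls out of Lemma~\ref{lemma:convex} and the constant quoted in the proposition — resolving this cleanly requires arguing via the \emph{range} of the bonus rather than its gradient, or else simply stating the iteration bound with the honest $\sqrt{AB}$ factor, which is still dominated by the statistical error when $\epsilon_{\mathrm{opt}}=1/\sqrt n$.
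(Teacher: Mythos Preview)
Your approach is essentially the paper's: decompose $\uv_h^{\mu_h^s,\nu_h^s}$ as a bilinear term minus the bonus, invoke Lemma~\ref{lemma:convex} for convexity and a Lipschitz bound on the bonus, and cite the standard projected-subgradient rate from \citet{bubeck2015convex}. The paper's proof is in fact a three-line version of what you wrote; it does not go through any Danskin-type argument for the outer maximization, nor does it argue via the range of the bonus---it simply asserts that ``the Lipschitz constant is a direct implication of Lemma~\ref{lemma:convex}'' and cites Section~3.1 of \citet{bubeck2015convex}.

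Your observation about the $\sqrt{AB}$ (really $\sqrt{|\K_h(s)|}$) discrepancy is accurate and is not resolved by the paper's proof either: Lemma~\ref{lemma:convex} gives Lipschitz constant $\sqrt{\sum_{a,b}c(a,b)}$, which here is $H\sqrt{\log(\N(\Pi))\iota\sum_{(a,b)\in\K_h(s)}1/n_h(s,a,b)}\le H\sqrt{|\K_h(s)|\log(\N(\Pi))\iota}$, not the stated $H\sqrt{\log(\N(\Pi))\iota}$. So the gap you identify is in the proposition's statement (and the paper's proof glosses over it), not in your argument. Your suggested fixes---either carrying the honest $\sqrt{AB}$ factor (harmless once $\epsilon_{\mathrm{opt}}=1/\sqrt n$), or bounding the function range instead of the gradient---are both reasonable, though neither appears in the paper.
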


\begin{proof}
Recall that
$$\uv_h^{\mu_h^s,\nu_h^s}(s)=\E_{a\sim\mu_h^s,b\sim\nu_h^s}\uq_h(s,a,b)-H\sqrt{\sum_{(a,b)\in\K_h(s)}\frac{\mu_h^s(a)^2\nu_h^s(b)^2}{n_h(s,a,b)}\log(\N(\Pi))\iota}-\sqrt{\iota}/n.$$
The first term is linear with respect to $\mu_h^s$, The second term is convex by Lemma \ref{lemma:convex} and the last term is a constant. As a result, $\uv_h^{\mu_h^s,\nu_h^s}$ is concave with respect to $\mu_h^s$. By symmetry, it is also concave with respect to $\nu_h^s$. The proof for $\ov_h^{\mu_h^s,\nu_h^s}$ is the same. The Lipschitz constant is a direct implication of Lemma \ref{lemma:convex}. The iteration complexity of projected gradient descent is from Section 3.1 in \citet{bubeck2015convex}. Note that in each iteration we only need to compute the gradient \eqref{eq:gradient} and a projection onto the probability simplex.
\end{proof}

\subsection{Convexity in Multi-player General-sum Games}
In this section, we will show that the bonus $b_h(s,\pi_h^s)$ in multi-player general-sum game is also convex with respect to $\pi_{h,j}^s$ for all $j\in[m]$. 

\begin{lemma}\label{lemma:mpgs convex}
For any $h\in[H]$ and $s\in\S$, $b_h(s,\pi_h^s)$ is convex with respect to $\pi_{h,j}^s$. 
\end{lemma}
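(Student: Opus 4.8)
\textbf{Proof proposal for Lemma~\ref{lemma:mpgs convex}.}
The plan is to reduce the claim for the $m$-player bonus to the two-player convexity statement already established in Lemma~\ref{lemma:convex}. Recall that
\[
b_h(s,\pi_h^s)=H\sqrt{\sum_{\a\in\K_h(s)}\frac{\prod_{i\in[m]}\pi_{h,i}^s(a_i)^2}{n_h(s,\a)}S\log(\N(\Pi))\iota}+\sqrt{\iota}/n,
\]
and the additive constant $\sqrt{\iota}/n$ as well as the positive multiplicative factor $H$ play no role in convexity, so it suffices to show that
\[
g(\pi_{h,j}^s):=\sqrt{\sum_{\a\in\K_h(s)}c(\a)\,\prod_{i\in[m]}\pi_{h,i}^s(a_i)^2}
\]
is convex in $\pi_{h,j}^s$ for each fixed $j$, where $c(\a):=S\log(\N(\Pi))\iota/n_h(s,\a)\ge 0$ and all the other players' strategies $\pi_{h,-j}^s$ are held fixed.

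First I would fix $j$ and group the summation over $\a=(a_j,\a_{-j})$ by the value of $a_j$. Since the factors $\prod_{i\neq j}\pi_{h,i}^s(a_i)^2$ do not depend on $\pi_{h,j}^s$, define for each $a_j\in\A_j$ the nonnegative constant
\[
\widetilde{c}(a_j):=\sum_{\a_{-j}\,:\,(a_j,\a_{-j})\in\K_h(s)} c(a_j,\a_{-j})\prod_{i\neq j}\pi_{h,i}^s(a_i)^2 \ \ge 0 .
\]
Then $g(\pi_{h,j}^s)=\sqrt{\sum_{a_j\in\A_j}\widetilde{c}(a_j)\,\pi_{h,j}^s(a_j)^2}$, which is exactly a function of the form handled by Lemma~\ref{lemma:convex} — indeed it is the special case of that lemma with a single ``player'' (take the other simplex to be a point, or equivalently invoke the elementary fact that $x\mapsto\sqrt{\sum_k \widetilde c_k x_k^2}$ is a seminorm on $\R^{|\A_j|}$ and hence convex). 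Therefore $g$, and so $b_h(s,\pi_h^s)$, is convex in $\pi_{h,j}^s$ on $\Delta(\A_j)$.

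Alternatively, if one prefers not to re-derive the single-variable case, I would note that $g(\pi_{h,j}^s)=\|\,v\,\|_2$ where $v\in\R^{|\A_j|}$ has components $v_{a_j}=\sqrt{\widetilde c(a_j)}\,\pi_{h,j}^s(a_j)$; since $\pi_{h,j}^s\mapsto v$ is linear and $\|\cdot\|_2$ is a convex function, the composition is convex, and convexity is preserved under the (affine) restriction to the simplex. The only point that needs a word of care — and the one I expect to be the sole ``obstacle,'' though a mild one — is the bookkeeping of which $\a$ lie in $\K_h(s)$: the sum is over $\K_h(s)$ rather than all of $\A$, but this only restricts which coefficients $c(\a)$ are nonzero and does not affect the argument, since one may harmlessly set $c(\a)=0$ for $\a\notin\K_h(s)$ and sum over all $\a$. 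This completes the proof.
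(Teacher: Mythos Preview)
Your proof is correct and follows essentially the same route as the paper: both rewrite the sum inside the square root by separating out $\pi_{h,j}^s(a_j)^2$ and absorbing the remaining (fixed) factors into nonnegative coefficients, then invoke Lemma~\ref{lemma:convex}. Your additional observation that the resulting expression is $\|D\pi_{h,j}^s\|_2$ for a diagonal linear map $D$ is a nice, slightly more elementary alternative to the second-derivative computation in Lemma~\ref{lemma:convex}, but the overall reduction is the same.
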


\begin{proof}
Recall that
$$b_h(s,\pi^s_h)=H\sqrt{\sum_{\a\in\K_h(s)}\frac{\pi^s_h(\a)^2}{n_h(s,\a)}\log(\N(\Pi))\iota}+\sqrt{\iota}/n.$$

As we have
\begin{align*}
    \sum_{\a\in\K_h(s)}\frac{\pi^s_h(\a)^2}{n_h(s,\a)}= \sum_{a_j\in\A_j}\sum_{\a_{-j}:(a_j,\a_{-j})\in\K_h(s)}\frac{\pi_{h,j}^s(a_j)^2\pi_{h,-j}^s(\a_{-j})^2}{n_h(s,\a)},
\end{align*}
by Lemma \ref{lemma:convex} we have that $b_h(s,\pi_h^s)$ is convex with respect to $\pi_{h,j}^s$. 
\end{proof}

One direction implication is that $\max_{\pi_{h,j}^s}\ov_{h,j}^\pi(s)$ can be achieved by a deterministic strategy $\pi_{h,j}^s\in D(\A_j)$, which will be utilized in Appendix \ref{apx:mpgs}.

\section{Proofs in Section \ref{sec:tpzs}}\label{apx:tpzs}
\begin{lemma}\label{lemma:concentration single}
Fix $h\in[H]$ and $s\in\S$, $\mu'_h(\cdot|s)\in\Delta(\A)$, $\nu'_h(\cdot|s)\in\Delta(\B)$,  with probability $1-\delta$ we have
\begin{align*}
    \abs{\sum_{(a,b)\in\K_h(s)}\mu_h'(a|s)\nu_h'(b|s)\Sp{r_h(s,a,b)+\inner{P_h(s,a,b),\uv_{h+1}}-\widehat{r}_h(s,a,b)-\inner{\widehat{P}_h(s,a,b),\uv_{h+1}}}}\\
    \leq H\sqrt{2\sum_{(a,b)\in\K_h(s)}\frac{\mu_h'(a|s)^2\nu_h'(b|s)^2}{n_h(s,a,b)}\log(2/\delta)},
\end{align*}

\begin{align*}
    \abs{\sum_{(a,b)\in\K_h(s)}\mu_h'(a|s)\nu_h'(b|s)\Sp{r_h(s,a,b)+\inner{P_h(s,a,b),\ov_{h+1}}-\widehat{r}_h(s,a,b)-\inner{\widehat{P}_h(s,a,b),\ov_{h+1}}}}\\
    \leq H\sqrt{2\sum_{(a,b)\in\K_h(s)}\frac{\mu_h'(a|s)^2\nu_h'(b|s)^2}{n_h(s,a,b)}\log(2/\delta)}.
\end{align*}

\end{lemma}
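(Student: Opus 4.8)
The plan is to fix $h$ and $s$, condition on the data at all timesteps $>h$ (so that $\uv_{h+1}$ and $\ov_{h+1}$ become fixed, deterministic vectors in $[0,H-h]^S$), and then apply a weighted Hoeffding/Azuma bound to the centered empirical average. Concretely, for each sample index $k$ with $(s_h^k,a_h^k,b_h^k)=(s,a,b)$, define the random variable
\[
X_k \;=\; r_h^k + \inner{\mathbf{1}_{s_{h+1}^k},\uv_{h+1}} \;-\; r_h(s,a,b) \;-\; \inner{P_h(s,a,b),\uv_{h+1}},
\]
which by Assumption~\ref{asp:dataset} (compliance and independence of tuples) has conditional mean zero and is bounded in an interval of width at most $H$ (since $r_h^k\in[0,1]$ and $\inner{\mathbf{1}_{s'},\uv_{h+1}}\in[0,H-h]$, so the sum lies in an interval of length $H-h+1\le H$; more carefully one uses that $\uv_{h+1}$ is $\D_{>h}$-measurable). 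First I would note that $\widehat r_h(s,a,b)+\inner{\widehat P_h(s,a,b),\uv_{h+1}} - r_h(s,a,b)-\inner{P_h(s,a,b),\uv_{h+1}}$ equals $\frac{1}{n_h(s,a,b)}\sum_{k:(s_h^k,a_h^k,b_h^k)=(s,a,b)} X_k$, an average of $n_h(s,a,b)$ independent, mean-zero, $H$-bounded terms.

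Next I would take the $\mu'_h\nu'_h$-weighted combination over $(a,b)\in\K_h(s)$. The quantity inside the absolute value on the left-hand side is exactly
\[
\sum_{(a,b)\in\K_h(s)} \frac{\mu'_h(a|s)\nu'_h(b|s)}{n_h(s,a,b)} \sum_{k:(s_h^k,a_h^k,b_h^k)=(s,a,b)} X_k \;=\; \sum_{k:(s_h^k,a_h^k,b_h^k)\in\K_h(s)} w_k X_k,
\]
where $w_k = \mu'_h(a_h^k|s)\nu'_h(b_h^k|s)/n_h(s,a_h^k,b_h^k)$. Conditioned on $\D_{>h}$ and on the empirical counts $\{n_h(s,a,b)\}$ (equivalently, on which $(a,b)$ each sample landed on), the $X_k$ are still independent mean-zero and $H$-bounded, and the weights $w_k$ are fixed. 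The sum of squared weights is
\[
\sum_k w_k^2 \;=\; \sum_{(a,b)\in\K_h(s)} n_h(s,a,b)\cdot\frac{\mu'_h(a|s)^2\nu'_h(b|s)^2}{n_h(s,a,b)^2} \;=\; \sum_{(a,b)\in\K_h(s)} \frac{\mu'_h(a|s)^2\nu'_h(b|s)^2}{n_h(s,a,b)}.
\]
Hoeffding's inequality for weighted sums of bounded independent variables then gives, with probability at least $1-\delta$,
\[
\Bigl| \sum_k w_k X_k \Bigr| \;\le\; \sqrt{ \tfrac{1}{2} H^2 \Bigl(\sum_k w_k^2\Bigr) \cdot 2\log(2/\delta) } \;=\; H\sqrt{ \Bigl(\sum_{(a,b)\in\K_h(s)}\tfrac{\mu'_h(a|s)^2\nu'_h(b|s)^2}{n_h(s,a,b)}\Bigr)\log(2/\delta)},
\]
which is even slightly stronger than the claimed bound with the factor $2$ inside the square root; keeping the $2$ just gives slack. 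The identical argument with $\ov_{h+1}$ in place of $\uv_{h+1}$ proves the second inequality.

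The main subtlety — not hard, but the point that needs care — is the conditioning/measurability: $\uv_{h+1}$ and $\ov_{h+1}$ are themselves functions of the dataset (the data at timesteps $>h$), so one cannot directly treat them as fixed when the $X_k$ are the full-dataset random variables. The clean fix is to condition on $\D_{h+1:H}$ (the tuples at timesteps $h+1,\dots,H$): under Assumption~\ref{asp:dataset} the timestep-$h$ tuples are independent of these, so conditionally $\uv_{h+1},\ov_{h+1}$ are constants and the timestep-$h$ samples retain their conditional distribution; the Hoeffding bound then holds conditionally for every realization of $\D_{h+1:H}$, hence unconditionally. An alternative, which is what the downstream union-bound (the $\iota=32\log(2ABSHn/\delta)$ in \eqref{eq:zsmg bonus}) is set up for, is to take an $\epsilon$-net over possible values of $\uv_{h+1}$ (or just union-bound over the finitely many datasets), but the conditioning argument is cleaner and loses nothing. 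One should also remark that terms with $(a,b)\notin\K_h(s)$ contribute nothing to either side (the sum is explicitly restricted to $\K_h(s)$, and $n_h=0$ there), so no degeneracy arises from the convention in \eqref{eq:r n=0}.
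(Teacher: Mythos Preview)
Your proof is correct and follows essentially the same approach as the paper: condition so that $\uv_{h+1}$ (resp.\ $\ov_{h+1}$) is fixed, rewrite the left-hand side as a weighted sum of independent mean-zero bounded terms indexed by the timestep-$h$ samples, and apply Hoeffding's inequality. The only minor difference is that the paper applies Hoeffding separately to the reward-error sum (range $1$) and the transition-error sum (range $H$) and then adds the two bounds to get the factor $\sqrt{2}$, whereas you combine them into a single variable $X_k$ of range $\le H$ and apply Hoeffding once---slightly cleaner and, as you note, even a touch tighter.
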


\begin{proof}
We use $k_h^i(s,a,b)$ to denote the index of $(s,a,b)$ appears in the dataset at timestep $h$ for $i$th time. We prove the first argument and the second argument holds similarly. With probability $1-\delta$, we have
\begin{align*}
    &\abs{\sum_{(a,b)\in\K_h(s)}\mu'_h(a|s)\nu'_h(b|s)\Sp{r_h(s,a,b)+\inner{P_h(s,a,b),\uv_{h+1}}-\widehat{r}_h(s,a,b)-\inner{\widehat{P}_h(s,a,b),\uv_{h+1}}}}\\
    =&\left|\sum_{(a,b)\in\K_h(s)}\sum_{i=1}^{n_h(s,a,b)}\frac{\mu'_h(a|s)\nu'_h(b|s)}{n_h(s,a,b)}\Sp{r_h^{k_h^i(s,a,b)}-r_h(s,a,b)}\right.\\&+\left.\sum_{(a,b)\in\K_h(s)}\sum_{i=1}^{n_h(s,a,b)}\frac{\mu'_h(a|s)\nu'_h(b|s)}{n_h(s,a,b)}\Sp{\uv_{h+1}(s_{h+1}^{k_h^i(s,a,b)})-\inner{P_h(s,a,b),\uv_{h+1}}}\right|\\
    \leq& \sqrt{\frac{1}{2}\sum_{(a,b)\in\K_h(s)}\frac{\mu'_h(a|s)^2\nu'_h(b|s)^2}{n_h(s,a,b)}\log(2/\delta)}+H\sqrt{\frac{1}{2}\sum_{(a,b)\in\K_h(s)}\frac{\mu'_h(a|s)^2\nu'_h(b|s)^2}{n_h(s,a,b)}\log(2/\delta)}\\
    \leq& H\sqrt{2\sum_{(a,b)\in\K_h(s)}\frac{\mu'_h(a|s)^2\nu'_h(b|s)^2}{n_h(s,a,b)}\log(2/\delta)},
\end{align*}
where the first inequality is from Hoeffding's inequality and the fact that $\uv_{h+1}$ has no dependence on the dataset at timestep $h$.
\end{proof}

\begin{lemma}\label{lemma:concentration all}
With probability $1-\delta$, for all $h\in[H],s\in\S,\mu_h^s\in \Pi^{\mathrm{max}}_h(s)$, $\nu_h^s\in D(\B)$, we have
\begin{align*}
    \abs{\sum_{(a,b)\in\K_h(s)}\mu_h^s(a)\nu_h^s(b)\Sp{r_h(s,a,b)+\inner{P_h(s,a,b),\uv_{h+1}}-\widehat{r}_h(s,a,b)-\inner{\widehat{P}_h(s,a,b),\uv_{h+1}}}}\\\leq b_h(s,\mu_h^s,\nu_h^s),
\end{align*}
and for $\mu_h^s\in D(\A)$, $\nu_h^s\in \Pi^{\mathrm{min}}_h(s)$, we have
\begin{align*}
    \abs{\sum_{(a,b)\in\K_h(s)}\mu_h^s(a)\nu_h^s(b)\Sp{r_h(s,a,b)+\inner{P_h(s,a,b),\ov_{h+1}}-\widehat{r}_h(s,a,b)-\inner{\widehat{P}_h(s,a,b),\ov_{h+1}}}}\\\leq b_h(s,\mu_h^s,\nu_h^s).
\end{align*}
Denote this event as $\G$.
\end{lemma}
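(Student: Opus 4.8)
\textbf{Proof plan for Lemma~\ref{lemma:concentration all}.}
The plan is to upgrade the single-strategy concentration of Lemma~\ref{lemma:concentration single} to a uniform bound over all strategies in $\Pi_h^{\mathrm{max}}(s)$ (resp. $\Pi_h^{\mathrm{min}}(s)$) via a covering argument, together with a union bound over the finitely many deterministic strategies in $D(\B)$ (resp. $D(\A)$) and over $h\in[H]$, $s\in\S$. First I would fix $h$, $s$, and a deterministic $\nu_h^s = b \in D(\B)$. Let $\C(\Pi_h^{\mathrm{max}}(s),\epsilon_\mathrm{cover})$ be an $\epsilon_\mathrm{cover}$-net of $\Pi_h^{\mathrm{max}}(s)$ in $\|\cdot\|_1$; by Lemma~\ref{lemma:concentration single} applied to each fixed $\mu' $ in the net and each fixed $b$, together with a union bound, with probability $1-\delta$ we get, simultaneously for all net points $\mu'$ and all $b\in D(\B)$,
\[
\Big|\sum_{(a,b')\in\K_h(s)}\mu'(a)\nu_h^s(b')\big(r_h+\langle P_h,\uv_{h+1}\rangle - \widehat r_h - \langle \widehat P_h,\uv_{h+1}\rangle\big)\Big|
\le H\sqrt{2\sum_{(a,b')\in\K_h(s)}\tfrac{\mu'(a)^2\nu_h^s(b')^2}{n_h(s,a,b')}\log(2/\delta')},
\]
where $\delta'$ is $\delta$ divided by the total count $SH\cdot B\cdot |\C(\Pi_h^{\mathrm{max}}(s),\epsilon_\mathrm{cover})|$ (so that $\log(2/\delta')$ is absorbed into $\iota = 32\log(2ABSHn/\delta)$, using $\log\N(\Pi)$ to account for the covering cardinality). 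Note the net only needs to cover the $\mu$-marginal, since $\nu_h^s$ ranges over the finite deterministic set.

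Next I would discretize a general $\mu_h^s\in\Pi_h^{\mathrm{max}}(s)$: pick $\mu'$ in the net with $\|\mu_h^s-\mu'\|_1\le\epsilon_\mathrm{cover}$. The difference between the bracketed sums for $\mu_h^s$ and $\mu'$ is bounded by $\sum_{(a,b')}|\mu_h^s(a)-\mu'(a)|\nu_h^s(b')\cdot 2H \le 2H\|\mu_h^s-\mu'\|_1 \le 2H\epsilon_\mathrm{cover}$, since the per-coordinate error term $r_h + \langle P_h,\uv_{h+1}\rangle - \widehat r_h - \langle\widehat P_h,\uv_{h+1}\rangle$ is bounded by $2H$ in absolute value (both $\uv_{h+1}$ and the rewards lie in $[0,H]$-type ranges). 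Similarly, the empirical-variance proxy $\sum_{(a,b')\in\K_h(s)}\mu(a)^2\nu(b')^2/n_h(s,a,b')$ changes by at most $O(\epsilon_\mathrm{cover})$ when $\mu$ is perturbed by $\epsilon_\mathrm{cover}$ in $\|\cdot\|_1$ (each $n_h\ge 1$ on $\K_h(s)$), so the square-root bound for $\mu'$ transfers to one for $\mu_h^s$ up to an additive $O(H\sqrt{\epsilon_\mathrm{cover}\log\N(\Pi)\iota})$ slack. With the choice $\epsilon_\mathrm{cover}=\frac{1}{\sum_j A_j\, mH^2n^2}$ made in the paper, all discretization slack is dominated by the $\sqrt{\iota}/n$ additive term in $b_h(s,\mu_h^s,\nu_h^s)$, and the leading constant $\sqrt{2\log(2/\delta')}$ is absorbed into $\sqrt{\log(\N(\Pi))\iota}$ (since $\iota=32\log(2ABSHn/\delta)$ has a factor $32$ precisely to swallow these constants and the $\log$ of the net size / the $SHB$ union-bound factor). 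The second inequality (optimistic side, $\ov_{h+1}$, with $\mu_h^s\in D(\A)$, $\nu_h^s\in\Pi_h^{\mathrm{min}}(s)$) follows identically by symmetry, using the second bound of Lemma~\ref{lemma:concentration single} and covering the $\nu$-marginal instead; a final union bound over the two sides, over all $(h,s)$, and over the nets gives the event $\G$.

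The main obstacle is the bookkeeping of the union bound so that the net cardinalities and the $SHB$ (resp.\ $SHA$) factors are correctly folded into $\log(\N(\Pi))$ and into $\iota$ with the stated constant $H\sqrt{\log(\N(\Pi))\iota}$ rather than a larger one — in particular checking that $\log|\C(\Pi_h^{\mathrm{max}}(s),\epsilon_\mathrm{cover})|$ summed appropriately is at most $\log\N(\Pi)$ by Definition~\ref{def:N}, and that with $\epsilon_\mathrm{cover}$ as chosen the discretization error is genuinely lower-order. Everything else is a routine net argument on top of Lemma~\ref{lemma:concentration single}.
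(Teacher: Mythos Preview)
Your proposal is correct and follows essentially the same route as the paper: apply Lemma~\ref{lemma:concentration single} with a union bound over $(h,s)$, the $\epsilon_{\mathrm{cover}}$-net of $\Pi_h^{\mathrm{max}}(s)$ (resp.\ $\Pi_h^{\mathrm{min}}(s)$), and the finite deterministic strategies $D(\B)$ (resp.\ $D(\A)$), then pass from net points to arbitrary strategies via the Lipschitz bounds (the paper formalizes these as Lemma~\ref{lemma:net error} and Lemma~\ref{lemma:net error 2}), and finally absorb the discretization slack into the $\sqrt{\iota}/n$ term by choosing $\epsilon_{\mathrm{cover}}$ small enough. The only cosmetic difference is that the paper sets $\epsilon_{\mathrm{cover}}=\tfrac{1}{(A+B)H^2n^2}$ in this two-player lemma rather than the general $\tfrac{1}{\sum_j A_j\,mH^2n^2}$ you quote, but your (smaller) choice works just as well.
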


\begin{proof}
We prove the first argument and the second argument holds similarly. 
First, using a union bound for all $h\in[H],s\in\S,\mu'^s_h\in \C (\Pi^{\mathrm{max}}_h(s))$, $\nu'^s_h\in D(\B)$ on Lemma \ref{lemma:concentration single}, with probability $1-\delta$, we have
\begin{align*}
    &\abs{\sum_{(a,b)\in\K_h(s)}\mu'^s_h(a)\nu'^s_h(b)\Sp{r_h(s,a,b)+\inner{P_h(s,a,b),\uv_{h+1}}-\widehat{r}_h(s,a,b)-\inner{\widehat{P}_h(s,a,b),\uv_{h+1}}}}\\
    \leq& H\sqrt{2\sum_{(a,b)\in\K_h(s)}\frac{\mu'^s_h(a)^2\nu'^s_h(b)^2}{n_h(s,a,b)}\log(2\sum_{s\in\S,h\in[H]}|(\C (\Pi^{\mathrm{max}}_h(s))|B+|\C (\Pi^{\mathrm{min}}_h(s))|A)/\delta)}\\
    \leq& H\sqrt{2\sum_{(a,b)\in\K_h(s)}\frac{\mu'^s_h(a)^2\nu'^s_h(b)^2}{n_h(s,a,b)}\log(2\N(\Pi)ABSH\delta)}.\tag{See Definition \ref{def:N}}
\end{align*}

Note that $r_h(s,a,b)+\inner{P_h(s,a,b),\uv_{h+1}}-\widehat{r}_h(s,a,b)-\inner{\widehat{P}_h(s,a,b),\uv_{h+1}}$ is bounded in $[-H,H]$ as $r_h(s,a,b)\in[0,1]$ and $\uv_{h+1}\in[0,H-h]$. For any $\mu_h(\cdot|s)\in \Pi^{\mathrm{max}}_h(s)$ and $\nu_h(\cdot|s)\in D(\B)$, there exists $\mu'_h(\cdot|s)\in\C (\Pi^{\mathrm{max}}_h(s))$ and $\nu'_h(\cdot|s)\in D(\B)$ such that $\|\mu_h(\cdot|s)-\mu'_h(\cdot|s)\|\leq\epsilon_{\mathrm{cover}}$ and $\|\nu_h(\cdot|s)-\nu'_h(\cdot|s)\|=0\leq\epsilon_{\mathrm{cover}}$. So with Lemma \ref{lemma:net error}, we have
\begin{align*}
    &\left|\sum_{(a,b)\in\K_h(s)}\mu'_h(a|s)\nu'_h(b|s)\Sp{r_h(s,a,b)+\inner{P_h(s,a,b),\uv_{h+1}}-\widehat{r}_h(s,a,b)-\inner{\widehat{P}_h(s,a,b),\uv_{h+1}}}\right.\\
    &-\left.\sum_{(a,b)\in\K_h(s)}\mu_h(a|s)\nu_h(b|s)\Sp{r_h(s,a,b)+\inner{P_h(s,a,b),\uv_{h+1}}-\widehat{r}_h(s,a,b)-\inner{\widehat{P}_h(s,a,b),\uv_{h+1}}}\right|\\
    \leq& 2\epsilon_{\mathrm{cover}} H.
\end{align*}

By Lemma \ref{lemma:net error 2}, we have
\begin{align*}
    \abs{\sqrt{\sum_{(a,b)\in\K_h(s)}\frac{\mu'_h(a|s)^2\nu'_h(b|s)^2}{n_h(s,a,b)}}-\sqrt{\sum_{(a,b)\in\K_h(s)}\frac{\mu_h(a|s)^2\nu_h(b|s)^2}{n_h(s,a,b)}}}\leq 2\sqrt{\epsilon_{\mathrm{cover}}}.
\end{align*}

Combining all these parts together and then with probability $1-\delta$, we have
\begin{align*}
    &\abs{\sum_{(a,b)\in\K_h(s)}\mu_h(a|s)\nu_h(b|s)\Sp{r_h(s,a,b)+\inner{P_h(s,a,b),\uv_{h+1}}-\widehat{r}_h(s,a,b)-\inner{\widehat{P}_h(s,a,b),\uv_{h+1}}}}\\
    \leq& H\sqrt{2\sum_{(a,b)\in\K_h(s)}\frac{\mu_h(a|s)^2\nu_h(b|s)^2}{n_h(s,a,b)}\log(2\N(\Pi,\epsilon_{\mathrm{cover}})ABSH/\delta)}+2\epsilon_{\mathrm{cover}} H\\
    &+2H\sqrt{2\epsilon_{\mathrm{cover}} \log(2\N(\Pi,\epsilon_{\mathrm{cover}})ABSH/\delta)}.
\end{align*}

Set $\epsilon_{\mathrm{cover}}=\frac{1}{(A+B)H^2n^2}$ and with some algebra we can get
\begin{align*}
    &\abs{\sum_{(a,b)\in\K_h(s)}\mu_h(a|s)\nu_h(b|s)\Sp{r_h(s,a,b)+\inner{P_h(s,a,b),\uv_{h+1}}-\widehat{r}_h(s,a,b)-\inner{\widehat{P}_h(s,a,b),\uv_{h+1}}}}\\
    \leq& H\sqrt{2\sum_{(a,b)\in\K_h(s)}\frac{\mu_h(a|s)^2\nu_h(b|s)^2}{n_h(s,a,b)}\log(2\N(\Pi)ABSHn/\delta)}+\sqrt{32\log(2ABSHn/\delta)}/n\\
    \leq& H\sqrt{\sum_{(a,b)\in\K_h(s)}\frac{\mu_h(a|s)^2\nu_h(b|s)^2}{n_h(s,a,b)}\log(\N(\Pi))\iota}+\sqrt{\iota}/n,
\end{align*}
where $\iota=32\log(2ABSHn/\delta)$. 
\end{proof}

\begin{lemma}\label{lemma:pessimism policy}
Under event $\G$, for all $s\in\S$, $h\in[H]$, $\mu_h(\cdot|s)\in \Pi^{\mathrm{max}}_h(s)$ and $\nu_h(\cdot|s)\in D(\B)$, we have
$$\uv_h^{\mu_h^s,\nu_h^s}(s)\leq \E_{a\sim\mu_h(\cdot|s),b\sim\nu_h(\cdot|s)}\Mp{r_h(s,a,b)+\inner{P_h(s,a,b),\uv_{h+1}}},$$
and for $\mu_h^s\in D(\A)$, $\nu_h^s\in \Pi^{\mathrm{min}}_h(s)$, we have
$$\ov_h^{\mu_h^s,\nu_h^s}(s)\geq \E_{a\sim\mu_h(\cdot|s),b\sim\nu_h(\cdot|s)}\Mp{r_h(s,a,b)+\inner{P_h(s,a,b),\ov_{h+1}}}.$$
\end{lemma}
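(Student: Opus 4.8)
The plan is to prove both inequalities by unfolding the definitions \eqref{eq:uq}--\eqref{eq:uv} (and \eqref{eq:oq}--\eqref{eq:ov} for the optimistic case), splitting the expectation over the joint action $(a,b)\sim\mu_h^s\times\nu_h^s$ into the part supported on $\K_h(s)$ and the part supported on $(\A\times\B)\setminus\K_h(s)$, and then invoking the concentration event $\G$ of Lemma \ref{lemma:concentration all} on the $\K_h(s)$ part. The lemma is essentially a bookkeeping step recording how $\G$, the definitional bonus $b_h$, and the off-support conventions/corrections make the estimates one-sided.

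\textbf{Pessimistic estimate.} Using \eqref{eq:uv} and \eqref{eq:uq} and subtracting the right-hand side of the first claimed inequality,
\[
\uv_h^{\mu_h^s,\nu_h^s}(s)-\E_{a\sim\mu_h^s,b\sim\nu_h^s}\Mp{r_h(s,a,b)+\inner{P_h(s,a,b),\uv_{h+1}}}=T_1+T_2-b_h(s,\mu_h^s,\nu_h^s),
\]
where $T_1=\sum_{(a,b)\in\K_h(s)}\mu_h^s(a)\nu_h^s(b)\big(\widehat r_h(s,a,b)+\inner{\widehat P_h(s,a,b),\uv_{h+1}}-r_h(s,a,b)-\inner{P_h(s,a,b),\uv_{h+1}}\big)$ and, using \eqref{eq:r n=0} to drop the off-support empirical terms, $T_2=-\sum_{(a,b)\notin\K_h(s)}\mu_h^s(a)\nu_h^s(b)\big(r_h(s,a,b)+\inner{P_h(s,a,b),\uv_{h+1}}\big)\le 0$ since $r_h\ge 0$ and $\uv_{h+1}\ge 0$. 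On $\G$, the first bound of Lemma \ref{lemma:concentration all}—which applies because $\mu_h^s\in\Pi_h^{\max}(s)$ and $\nu_h^s\in D(\B)$—gives $|T_1|\le b_h(s,\mu_h^s,\nu_h^s)$. Hence the displayed quantity is $\le 0$, which is exactly the first inequality.

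\textbf{Optimistic estimate.} The argument is the mirror image, with the extra $H\1\{(a,b)\notin\K_h(s)\}$ term in \eqref{eq:oq} playing the role that non-negativity of the reward-plus-value played above. From \eqref{eq:ov} and \eqref{eq:oq},
\[
\E_{a\sim\mu_h^s,b\sim\nu_h^s}\Mp{r_h(s,a,b)+\inner{P_h(s,a,b),\ov_{h+1}}}-\ov_h^{\mu_h^s,\nu_h^s}(s)=T_1'+T_2'-b_h(s,\mu_h^s,\nu_h^s),
\]
with $T_1'=\sum_{(a,b)\in\K_h(s)}\mu_h^s(a)\nu_h^s(b)\big(r_h(s,a,b)+\inner{P_h(s,a,b),\ov_{h+1}}-\widehat r_h(s,a,b)-\inner{\widehat P_h(s,a,b),\ov_{h+1}}\big)$ and $T_2'=\sum_{(a,b)\notin\K_h(s)}\mu_h^s(a)\nu_h^s(b)\big(r_h(s,a,b)+\inner{P_h(s,a,b),\ov_{h+1}}-H\big)$. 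Since $r_h\le 1$ and $\ov_{h+1}$ is clipped to $[0,H-h]$, we have $r_h(s,a,b)+\inner{P_h(s,a,b),\ov_{h+1}}\le 1+(H-h)\le H$, so $T_2'\le 0$; and on $\G$ the second bound of Lemma \ref{lemma:concentration all}—valid since $\mu_h^s\in D(\A)$ and $\nu_h^s\in\Pi_h^{\min}(s)$—gives $|T_1'|\le b_h(s,\mu_h^s,\nu_h^s)$. Hence the displayed quantity is $\le 0$, i.e.\ the second inequality.

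\textbf{Where care is needed.} I do not expect a genuine obstacle; the only points requiring attention are (i) using the convention \eqref{eq:r n=0} to discard the off-support empirical reward/transition contributions, (ii) matching the strategy-class membership hypotheses to the correct one of the two bounds in Lemma \ref{lemma:concentration all}, and (iii) using the range $[0,H-h]$ for $\uv_{h+1}$ and $\ov_{h+1}$ so that the $H\1\{\cdot\}$ correction in \eqref{eq:oq} indeed dominates the off-support reward-plus-value term $r_h+\inner{P_h,\ov_{h+1}}$.
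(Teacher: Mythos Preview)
Your proof is correct and follows essentially the same approach as the paper: both split the expectation into the $\K_h(s)$ and off-support parts, invoke the event $\G$ of Lemma~\ref{lemma:concentration all} on the former, and use the sign/range of $r_h$ and $\uv_{h+1}$ (respectively the clipping $\ov_{h+1}\le H-h$ together with the $H\1\{\cdot\}$ correction) on the latter. The only cosmetic difference is that the paper presents this as a chain of inequalities while you package it as a $T_1+T_2-b_h$ decomposition.
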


\begin{proof}
Under the good event $\G$, we have
\begin{align*}
&\uv_h^{\mu_h^s,\nu_h^s}(s)\\
    =&\E_{a\sim\mu_h(\cdot|s),b\sim\nu_h(\cdot|s)}\uq_h(s,a,b)-b_h(s,\mu_h^s,\nu_h^s)\\
    =&\sum_{(a,b)\in\K_h(s)}\mu_h(a|s)\nu_h(b|s)\Sp{\widehat{r}_h(s,a,b)+\inner{\widehat{P}_h(s,a,b),\uv_{h+1}}}-b_h(s,\mu_h^s,\nu_h^s)\\
    \leq& \sum_{(a,b)\in\K_h(s)}\mu_h(a|s)\nu_h(b|s)\Sp{r_h(s,a,b)+\inner{P_h(s,a,b),\uv_{h+1}}}\tag{Lemma \ref{lemma:concentration all}}\\
    \leq& \sum_{a\in\A,b\in\B}\mu_h(a|s)\nu_h(b|s)\Sp{r_h(s,a,b)+\inner{P_h(s,a,b),\uv_{h+1}}}\tag{$\uv_{h+1}\geq0$}\\
    =&\E_{a\sim\mu_h(\cdot|s),b\sim\nu_h(\cdot|s)}\Mp{r_h(s,a,b)+\inner{P_h(s,a,b),\uv_{h+1}}}. 
\end{align*}
Similarly we have
\begin{align*}
&\ov_h^{\mu_h^s,\nu_h^s}(s)\\
    =&\E_{a\sim\mu_h(\cdot|s),b\sim\nu_h(\cdot|s)}\oq_h(s,a,b)+b_h(s,\mu_h^s,\nu_h^s)\\
    =&\sum_{(a,b)\in\K_h(s)}\mu_h(a|s)\nu_h(b|s)\Sp{\widehat{r}_h(s,a,b)+\inner{\widehat{P}_h(s,a,b),\ov_{h+1}}}+H\sum_{(a,b)\notin\K_h(s)}\mu_h(a|s)\nu_h(b|s)\\&+b_h(s,\mu_h^s,\nu_h^s)\\
    \geq& \sum_{(a,b)\in\K_h(s)}\mu_h(a|s)\nu_h(b|s)\Sp{r_h(s,a,b)+\inner{P_h(s,a,b),\ov_{h+1}}}+H\sum_{(a,b)\notin\K_h(s)}\mu_h(a|s)\nu_h(b|s)\tag{Lemma \ref{lemma:concentration all}}\\
    \geq& \sum_{a\in\A,b\in\B}\mu_h(a|s)\nu_h(b|s)\Sp{r_h(s,a,b)+\inner{P_h(s,a,b),\ov_{h+1}}}\tag{$\ov_{h+1}\leq H-h$}\\
    =&\E_{a\sim\mu_h(\cdot|s),b\sim\nu_h(\cdot|s)}\Mp{r_h(s,a,b)+\inner{P_h(s,a,b),\ov_{h+1}}}. 
\end{align*}
\end{proof}

\begin{lemma}\label{lemma:pessimism state}
Under event $\G$, for all $s\in\S$ and $h\in[H]$, with probability $1-\delta$, we have
$$\uv_h(s)\leq V_h^{\um,*}(s),\ov_h(s)\geq V_h^{*,\on}(s).$$
\end{lemma}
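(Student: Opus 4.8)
The plan is to prove both inequalities simultaneously by backward induction on $h$, from $h=H+1$ down to $h=1$. Once we condition on the event $\G$ of Lemma \ref{lemma:concentration all} everything is deterministic, so the probability $1-\delta$ is simply inherited from $\G$. The base case $h=H+1$ is immediate since $\uv_{H+1}\equiv\ov_{H+1}\equiv 0$ and $V_{H+1}^{\um,*}\equiv V_{H+1}^{*,\on}\equiv 0$. For the inductive step I assume $\uv_{h+1}(s)\le V_{h+1}^{\um,*}(s)$ and $\ov_{h+1}(s)\ge V_{h+1}^{*,\on}(s)$ for all $s\in\S$.

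First I would handle $\uv_h(s)$. Since $V_h^{\um,*}(s)\in[0,H-h+1]$, the projection in the definition of $\uv_h(s)$ can only help, so it suffices to bound $\uv_h^{\um_h^s,\un_h^s}(s)$. The point is that $\un_h^s$ is the \emph{exact} minimizer of $\nu_h^s\mapsto\uv_h^{\um_h^s,\nu_h^s}(s)$ over $D(\B)$, whereas the quantity we want involves the \emph{true} one-step backup against $V_{h+1}^{\um,*}$. To reconcile these I introduce $\nu^\dagger\in D(\B)$, a deterministic minimizer of the map $\nu_h^s\mapsto\E_{a\sim\um_h^s,b\sim\nu_h^s}[r_h(s,a,b)+\inner{P_h(s,a,b),V_{h+1}^{\um,*}}]$; such a deterministic minimizer exists because this map is linear in $\nu_h^s$ and hence attains its minimum at a vertex of $\Delta(\B)$. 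Then I chain: $\uv_h^{\um_h^s,\un_h^s}(s)\le\uv_h^{\um_h^s,\nu^\dagger}(s)$ by optimality of $\un_h^s$; $\uv_h^{\um_h^s,\nu^\dagger}(s)\le\E_{a\sim\um_h^s,b\sim\nu^\dagger}[r_h(s,a,b)+\inner{P_h(s,a,b),\uv_{h+1}}]$ by Lemma \ref{lemma:pessimism policy} (applicable since $\um_h^s\in\Pi_h^{\mathrm{max}}(s)$ and $\nu^\dagger\in D(\B)$); the right side is $\le\E_{a\sim\um_h^s,b\sim\nu^\dagger}[r_h(s,a,b)+\inner{P_h(s,a,b),V_{h+1}^{\um,*}}]$ by the induction hypothesis; and this equals $V_h^{\um,*}(s)$ by the choice of $\nu^\dagger$ together with the Bellman equation for the best-response value, $V_h^{\mu,*}(s)=\min_{\nu_h^s}\E_{a\sim\mu_h^s,b\sim\nu_h^s}[r_h(s,a,b)+\inner{P_h(s,a,b),V_{h+1}^{\mu,*}}]$.

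The bound $\ov_h(s)\ge V_h^{*,\on}(s)$ is entirely symmetric: the projection again only helps since $V_h^{*,\on}(s)\in[0,H-h+1]$, so it suffices to lower-bound $\ov_h^{\om_h^s,\on_h^s}(s)$. I take $\mu^\dagger\in D(\A)$ to be a deterministic maximizer of $\mu_h^s\mapsto\E_{a\sim\mu_h^s,b\sim\on_h^s}[r_h(s,a,b)+\inner{P_h(s,a,b),V_{h+1}^{*,\on}}]$ (it exists by linearity in $\mu_h^s$), and chain $\ov_h^{\om_h^s,\on_h^s}(s)\ge\ov_h^{\mu^\dagger,\on_h^s}(s)\ge\E_{a\sim\mu^\dagger,b\sim\on_h^s}[r_h(s,a,b)+\inner{P_h(s,a,b),\ov_{h+1}}]\ge\E_{a\sim\mu^\dagger,b\sim\on_h^s}[r_h(s,a,b)+\inner{P_h(s,a,b),V_{h+1}^{*,\on}}]=V_h^{*,\on}(s)$, using optimality of $\om_h^s$ over $D(\A)$, the second part of Lemma \ref{lemma:pessimism policy} (valid since $\mu^\dagger\in D(\A)$ and $\on_h^s\in\Pi_h^{\mathrm{min}}(s)$), the induction hypothesis, and the Bellman equation $V_h^{*,\nu}(s)=\max_{\mu_h^s}\E_{a\sim\mu_h^s,b\sim\nu_h^s}[r_h(s,a,b)+\inner{P_h(s,a,b),V_{h+1}^{*,\nu}}]$.

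The main obstacle — and it is a mild one — is exactly the mismatch just described: $\un_h^s$ minimizes the \emph{estimated} pessimistic value, not the true backup against $V_{h+1}^{\um,*}$. The resolving device is the deterministic true-backup minimizer $\nu^\dagger$ (resp. maximizer $\mu^\dagger$), together with the observation that the one-step objective is linear in the relevant player's action distribution, so restricting the inner problem to $D(\B)$ (resp. $D(\A)$) loses nothing; this linearity is precisely what lets Lemma \ref{lemma:pessimism policy} apply at the intermediate step. Note no optimization error $\epsilon_{\mathrm{opt}}$ enters here, since $\un_h^s$ and $\om_h^s$ are computed exactly over the finite deterministic sets $D(\B)$ and $D(\A)$.
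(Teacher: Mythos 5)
Your proposal is correct and follows essentially the same backward-induction argument as the paper: exact optimality of $\un_h^s$ (resp.\ $\om_h^s$) over the deterministic set, Lemma \ref{lemma:pessimism policy}, the induction hypothesis, and the fact that a deterministic best response achieves the true one-step min/max. The only cosmetic difference is that you instantiate an explicit deterministic true-backup minimizer $\nu^\dagger$ (resp.\ maximizer $\mu^\dagger$) where the paper carries $\min_{\nu_h^s\in D(\B)}$ through the chain of inequalities; the two are logically identical.
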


\begin{proof}
We prove the first argument and the second argument holds similarly. We prove this argument by induction. It holds trivially for $h=H+1$ as both sides are equal to zero. Suppose the argument holds for timestep $h+1$. Then for any $s\in\S$, we have
\begin{align*}
    \uv_h(s)=&\proj_{[0,H-h+1]}\Bp{\uv_h^{\um_h^s,\un_h^s}(s)}\\
    =&\proj_{[0,H-h+1]}\Bp{\min_{\nu^s_h\in D(\B)}\uv_h^{\um^s_h,\nu^s_h}(s)}\\
    \leq& \proj_{[0,H-h+1]}\Bp{\min_{\nu^s_h\in D(\B)}\E_{a\sim\mu_h(\cdot|s),b\sim\nu_h(\cdot|s)}\Mp{r_h(s,a,b)+\inner{P_h(s,a,b),\uv_{h+1}}}}\tag{Lemma \ref{lemma:pessimism policy}}\\
    \leq& \proj_{[0,H-h+1]}\Bp{\min_{\nu^s_h\in D(\B)}\E_{a\sim\mu_h(\cdot|s),b\sim\nu_h(\cdot|s)}\Mp{r_h(s,a,b)+\inner{P_h(s,a,b),V^{\um,*}_{h+1}}}}\tag{Induction hypothesis}\\
    =& \proj_{[0,H-h+1]}\Bp{V_h^{\um,*}(s)}\tag{There always exists a best response in $D(\B)$}\\
    =& V_h^{\um,*}(s).
\end{align*}
By induction, the argument holds for all $h\in[H]$. The proof for $\ov_h(s)$ is the same. 
\end{proof}

For any $\mu_h^s\in \Delta(\A)$, with a slight abuse of notation, we define
$$\un_h^s(\mu_h^s):=\argmin_{\nu_h^s\in D(\B)}\uv_h^{\mu_h^s,\nu_h^s}. $$
Note that $\un_h^s=\un_h^s(\um_h^s)$. We use $\un(\mu)\in \Pi^{\mathrm{min,det}}$ to denote a strategy for player 2 such that she use $\un_h^s(\mu_h^s)$ at state $s$ and timestep $h$. 

\begin{lemma}\label{lemma:gap decomp}
Under the good event $\G$, for any $\widetilde{\mu}\in \Pi^{\mathrm{max}}$ and $\widetilde{\nu}\in \Pi^{\mathrm{min}}$, we have
$$V_1^{\widetilde{\mu},*}(s_1)-V_1^{\um,*}(s_1)\leq\E_{\widetilde{\mu},\un(\widetilde{\mu})}\sum_{h=1}^H\widehat{b}_h(s_h,\widetilde{\mu}_h^{s_h},\un^{s_h}_h(\widetilde{\mu}^{s_h}_h))+H\epsilon_{\mathrm{opt}},$$
$$V_1^{*,\on}(s_1)-V_1^{*,\widetilde{\nu}}(s_1)\leq\E_{\om(\widetilde{\nu}),\widetilde{\nu}}\sum_{h=1}^H\widehat{b}_h(s_h,\om^{s_h}_h(\widetilde{\nu}^{s_h}_h),\widetilde{\nu}^{s_h}_h)+H\epsilon_{\mathrm{opt}}.$$
\end{lemma}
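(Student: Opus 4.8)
The plan is to prove the first inequality; the second follows by the symmetric argument for player~2 (using $\ov$ in place of $\uv$, the second bounds of Lemmas~\ref{lemma:pessimism state} and \ref{lemma:concentration all} in place of the first, and the minimax optimality \eqref{eq:minmax epsilon} in place of \eqref{eq:maxmin epsilon}). By Lemma~\ref{lemma:pessimism state} we have $\uv_1(s_1)\le V_1^{\um,*}(s_1)$, so it suffices to upper bound $V_1^{\widetilde\mu,*}(s_1)-\uv_1(s_1)$. I would get this from the following per-state claim, proved by backward induction on $h$ under the event $\G$: for every $s\in\S$,
$$V_h^{\widetilde\mu,*}(s)\le \uv_h(s)+\E_{\widetilde\mu,\un(\widetilde\mu)}\Mp{\sum_{t=h}^H\widehat b_t\Sp{s_t,\widetilde\mu_t^{s_t},\un_t^{s_t}(\widetilde\mu_t^{s_t})}\;\middle|\;s_h=s}+(H-h+1)\epsilon_{\mathrm{opt}}.$$
The base case $h=H+1$ is trivial (both sides vanish), and evaluating the claim at $h=1$, $s=s_1$ and combining with $\uv_1(s_1)\le V_1^{\um,*}(s_1)$ yields exactly the statement.

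For the inductive step, fix $h,s$ and assume the claim at $h+1$. Fixing player~1 to $\widetilde\mu$ turns the game into a standard (minimizing) MDP for player~2, so the Bellman equation gives $V_h^{\widetilde\mu,*}(s)=\min_{\nu}\E_{a\sim\widetilde\mu_h^s,\,b\sim\nu}[r_h(s,a,b)+\inner{P_h(s,a,b),V_{h+1}^{\widetilde\mu,*}}]$, and substituting the particular (generally suboptimal) choice $\nu=\un_h^s(\widetilde\mu_h^s)\in D(\B)$ only enlarges the right-hand side. Applying the induction hypothesis — which bounds $V_{h+1}^{\widetilde\mu,*}$ by $\uv_{h+1}$ plus a nonnegative tail-bonus term — together with monotonicity of $\inner{P_h(s,a,b),\cdot}$ and the tower property (to collapse the expectation over $s_{h+1}$ into a conditional expectation along the $(\widetilde\mu,\un(\widetilde\mu))$-trajectory) reduces the step to the one-step bound
$$\E_{a\sim\widetilde\mu_h^s,\,b\sim\un_h^s(\widetilde\mu_h^s)}\Mp{r_h(s,a,b)+\inner{P_h(s,a,b),\uv_{h+1}}}\le \uv_h(s)+\widehat b_h(s,\widetilde\mu_h^s,\un_h^s(\widetilde\mu_h^s))+\epsilon_{\mathrm{opt}}.$$

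To prove this one-step bound I would split the expectation over $\K_h(s)$ and its complement. Off the support, $r_h+\inner{P_h,\uv_{h+1}}\le H$, so that part contributes at most $H\sum_{(a,b)\notin\K_h(s)}\widetilde\mu_h^s(a)\un_h^s(\widetilde\mu_h^s)(b)$, which is one of the two pieces of $\widehat b_h$. On $\K_h(s)$, Lemma~\ref{lemma:concentration all} (applicable since $\widetilde\mu_h^s\in\Pi_h^{\mathrm{max}}(s)$ and $\un_h^s(\widetilde\mu_h^s)\in D(\B)$) replaces $r_h+\inner{P_h,\uv_{h+1}}$ by $\widehat r_h+\inner{\widehat P_h,\uv_{h+1}}=\uq_h$ at an additive cost $b_h(s,\widetilde\mu_h^s,\un_h^s(\widetilde\mu_h^s))$; since $\uq_h$ vanishes off $\K_h(s)$, summing back up gives $\E_{\widetilde\mu_h^s,\un_h^s(\widetilde\mu_h^s)}\uq_h(s,a,b)+b_h(s,\widetilde\mu_h^s,\un_h^s(\widetilde\mu_h^s))=\uv_h^{\widetilde\mu_h^s,\un_h^s(\widetilde\mu_h^s)}(s)+2b_h(s,\widetilde\mu_h^s,\un_h^s(\widetilde\mu_h^s))$, recovering the second piece of $\widehat b_h$. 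It then remains to show $\uv_h^{\widetilde\mu_h^s,\un_h^s(\widetilde\mu_h^s)}(s)\le\uv_h(s)+\epsilon_{\mathrm{opt}}$: by the definition of $\un_h^s(\cdot)$ as the $D(\B)$-minimizer, $\uv_h^{\widetilde\mu_h^s,\un_h^s(\widetilde\mu_h^s)}(s)=\min_{\nu\in D(\B)}\uv_h^{\widetilde\mu_h^s,\nu}(s)\le\max_{\mu\in\Pi_h^{\mathrm{max}}(s)}\min_{\nu\in D(\B)}\uv_h^{\mu,\nu}(s)\le\uv_h^{\um_h^s,\un_h^s}(s)+\epsilon_{\mathrm{opt}}$ by the maximin optimality \eqref{eq:maxmin epsilon}, and a short case analysis on the projection $\proj_{[0,H-h+1]}$ — using $\uq_h\le H-h+1$, which holds because $\uv_{h+1}\in[0,H-h]$ and $\widehat r_h\in[0,1]$ — yields $\uv_h^{\widetilde\mu_h^s,\un_h^s(\widetilde\mu_h^s)}(s)\le\uv_h(s)+\epsilon_{\mathrm{opt}}$. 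Collecting the three contributions closes the induction.

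The step I expect to be most delicate is this last bit of bookkeeping around the projection and the out-of-support mass: one has to verify that each truncation to $[0,H-h+1]$ only helps (or lands in a case where the inequality is already vacuous because the gap is nonpositive), and that the $H\sum_{(a,b)\notin\K_h(s)}\widetilde\mu_h^s(a)\un_h^s(\widetilde\mu_h^s)(b)$ piece of $\widehat b_h$ exactly offsets the fact that $\uq_h$ underestimates $r_h+\inner{P_h,\uv_{h+1}}$ by up to $H$ off the support (and, symmetrically for player~2, that the $H\,\1\{\cdot\notin\K_h(s)\}$ correction in $\oq_h$ plays the same role). The remaining ingredients — the backward induction, the tower-property collapse, and the invocations of Lemmas~\ref{lemma:pessimism state} and \ref{lemma:concentration all} — are routine.
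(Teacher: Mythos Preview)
Your proposal is correct and follows essentially the same route as the paper's proof: both start from Lemma~\ref{lemma:pessimism state} to reduce to bounding $V_1^{\widetilde\mu,*}(s_1)-\uv_1(s_1)$, then peel off one timestep at a time using the Bellman equation, the choice $\nu=\un_h^s(\widetilde\mu_h^s)$, the maximin property \eqref{eq:maxmin epsilon}, Lemma~\ref{lemma:concentration all} on $\K_h(s)$, and the $H$-bound off $\K_h(s)$. The only cosmetic difference is that the paper writes this as a forward telescoping of $V_h^{\widetilde\mu,*}(s_h)-\uv_h(s_h)$ (and handles the projection at each step by the same observation you make, that $\uv_h^{\um_h^s,\un_h^s}(s)\le H-h+1$), whereas you phrase it as a backward induction on a per-state inequality; the content is identical.
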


\begin{proof}
We prove the first argument and the second argument holds similarly. By Lemma \ref{lemma:pessimism state}, we have
$$V_1^{\widetilde{\mu},*}(s_1)-V_1^{\um,*}(s_1)\leq V_1^{\widetilde{\mu},*}(s_1)-\uv_1(s_1).$$
Now we work on the difference between the NE value and the pessimistic estimate.
\begin{align*}
    &V_1^{\widetilde{\mu},*}(s_1)-\uv_1(s_1)\\
    =& \min_{\nu^{s_1}_1}\E_{\widetilde{\mu}^{s_1}_1,\nu^{s_1}_1}Q_1^{\widetilde{\mu},*}(s_1,a_1,b_1)-\proj_{[0,H]}\Bp{\uv_1^{\um^{s_1}_1,\un^{s_1}_1}(s_1)}\\
    \leq& \min_{\nu^{s_1}_1}\E_{\widetilde{\mu}^{s_1}_1,\nu^{s_1}_1}Q_1^{\widetilde{\mu},*}(s_1,a_1,b_1)-\uv_1^{\um^{s_1}_1,\un^{s_1}_1}(s_1)\tag{$\uv_1^{\um^{s_1}_1,\un^{s_1}_1}(s_1)\leq H$ by \eqref{eq:uq} and \eqref{eq:uv}}\\
    \leq& \E_{\widetilde{\mu}^{s_1}_1,\un^{s_1}_1(\widetilde{\mu}^{s_1}_1)}Q_1^{\widetilde{\mu},*}(s_1,a_1,b_1)-\uv_1^{\widetilde{\mu}^{s_1}_1,\un^{s_1}_1(\widetilde{\mu}_1)}(s_1)+\epsilon_{\mathrm{opt}}\\
    =& \E_{\widetilde{\mu}^{s_1}_1,\un^{s_1}_1(\widetilde{\mu}_1)}\Mp{Q_1^{\widetilde{\mu},*}(s_1,a_1,b_1)-\uq_1(s_1,a_1,b_1)}+b_1(s_1,\widetilde{\mu}^{s_1}_1,\un^{s_1}_1(\widetilde{\mu}^{s_1}_1))+\epsilon_{\mathrm{opt}}\\
    =& \E_{\widetilde{\mu}^{s_1}_1,\un^{s_1}_1(\widetilde{\mu}^{s_1}_1)}\Mp{r_1(s_1,a_1,b_1)+\inner{P_1(s_1,a_1,b_1),V_2^{\widetilde{\mu},*}}-\widehat{r}_1(s_1,a_1,b_1)-\inner{\widehat{P}_1(s_1,a_1,b_1),\uv_2}}\\
    &+b_1(s_1,\widetilde{\mu}^{s_1}_1,\un^{s_1}_1(\widetilde{\mu}^{s_1}_1))+\epsilon_{\mathrm{opt}}\\
    \leq& \E_{\widetilde{\mu}^{s_1}_1,\un^{s_1}_1(\widetilde{\mu}^{s_1}_1)}\Mp{V_2^{\widetilde{\mu},*}(s_2)-\uv_2(s_2)}+2b_1(s_1,\widetilde{\mu}^{s_1}_1,\un^{s_1}_1(\widetilde{\mu}^{s_1}_1))\\
    &+H\sum_{(a_1,b_1)\notin\K_1(s_1)}\widetilde{\mu}^{s_1}_1(a_1)\un^{s_1}_1(\widetilde{\mu}^{s_1}_1)(b_1)+\epsilon_{\mathrm{opt}}\tag{Lemma \ref{lemma:concentration all}}\\
    \leq& \E_{\widetilde{\mu},\un(\widetilde{\mu})}\sum_{h=1}^H\Sp{2b_h(s_h,\widetilde{\mu}^{s_h}_h,\un_h^{s_h}(\widetilde{\mu}^{s_h}_h))+H\sum_{(a_h,b_h)\notin\K_h(s_h)}\widetilde{\mu}^{s_h}_h(a_h)\un^{s_h}_h(\widetilde{\mu}^{s_h}_h)(b_h)}+H\epsilon_{\mathrm{opt}},
\end{align*}
where the last inequality is from telescoping from $h=1$ to $h=H$.
\end{proof}

\begin{proposition}
Under the good event $\G$, we have
\begin{align*}
    &\mathrm{Gap}(\pi^{\mathrm{output}}) \leq\\
    &  \min_{\pi=(\mu,\nu)\in\Pi}\max_{\pi'=(\mu',\nu')\in \Pi^{\mathrm{det}}}\left[\mathrm{Gap}(\pi)+\E_{\mu,\nu'}\sum_{h=1}^H\widehat{b}_h(s_h,\mu^{s_h}_h,\nu'^{s_h}_h)\right.\left.+\E_{\mu',\nu}\sum_{h=1}^H\widehat{b}_h(s_h,\mu'^{s_h}_h,\nu^{s_h}_h)\right]+2H\epsilon_{\mathrm{opt}}.
\end{align*}

\end{proposition}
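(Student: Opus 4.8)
The plan is to argue given the good event $\G$ of Lemma~\ref{lemma:concentration all} and to reduce the statement entirely to the one-sided per-episode estimates of Lemma~\ref{lemma:gap decomp}, which already package the strategy-wise concentration, the pessimism/optimism of Lemma~\ref{lemma:pessimism state}, and the telescoping over $h$. Since the algorithm outputs $\pi^{\mathrm{output}}=(\um,\on)$, the definition of the duality gap gives $\mathrm{Gap}(\pi^{\mathrm{output}})=V_1^{*,\on}(s_1)-V_1^{\um,*}(s_1)$. I would fix an arbitrary comparator $\pi=(\mu,\nu)\in\Pi$; its components satisfy $\mu\in\Pi^{\mathrm{max}}$ and $\nu\in\Pi^{\mathrm{min}}$, so Lemma~\ref{lemma:gap decomp} is applicable both with $\widetilde{\mu}=\mu$ and with $\widetilde{\nu}=\nu$. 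Taking a minimum over $\pi\in\Pi$ at the very end will yield the stated bound.

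The core algebraic step is the decomposition
\begin{align*}
    \mathrm{Gap}(\pi^{\mathrm{output}})
    &=\bigl[V_1^{*,\on}(s_1)-V_1^{*,\nu}(s_1)\bigr]+\bigl[V_1^{*,\nu}(s_1)-V_1^{\mu,*}(s_1)\bigr]+\bigl[V_1^{\mu,*}(s_1)-V_1^{\um,*}(s_1)\bigr],
\end{align*}
whose middle bracket equals $\mathrm{Gap}(\pi)$ by definition. For the third bracket I would invoke the first inequality of Lemma~\ref{lemma:gap decomp} with $\widetilde{\mu}=\mu$, giving $V_1^{\mu,*}(s_1)-V_1^{\um,*}(s_1)\le\E_{\mu,\un(\mu)}\sum_{h=1}^H\widehat{b}_h(s_h,\mu^{s_h}_h,\un^{s_h}_h(\mu^{s_h}_h))+H\epsilon_{\mathrm{opt}}$; for the first bracket the second inequality with $\widetilde{\nu}=\nu$, giving $V_1^{*,\on}(s_1)-V_1^{*,\nu}(s_1)\le\E_{\om(\nu),\nu}\sum_{h=1}^H\widehat{b}_h(s_h,\om^{s_h}_h(\nu^{s_h}_h),\nu^{s_h}_h)+H\epsilon_{\mathrm{opt}}$. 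Summing the three pieces already produces the bound with $\mathrm{Gap}(\pi)$, two explicit expected-uncertainty terms, and an additive $2H\epsilon_{\mathrm{opt}}$.

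It then remains to rewrite the two explicit expectations as at most the maximum over $\pi'=(\mu',\nu')\in\Pi^{\mathrm{det}}$ appearing in the statement. The key observation is that $\un(\mu)$ and $\om(\nu)$ are \emph{deterministic}: by construction the arg-optima in their definitions range over $D(\B)$ and $D(\A)$ (which loses nothing by Proposition~\ref{prop:concave}), so $\un(\mu)\in\Pi^{\mathrm{min,det}}$ and $\om(\nu)\in\Pi^{\mathrm{max,det}}$. Hence the single joint deterministic strategy $\pi'=(\om(\nu),\un(\mu))\in\Pi^{\mathrm{det}}$ realizes both expectations at once: $\E_{\mu,\un(\mu)}\sum_h\widehat{b}_h(\cdots)=\E_{\mu,\nu'}\sum_h\widehat{b}_h(s_h,\mu^{s_h}_h,{\nu'}^{s_h}_h)$ and $\E_{\om(\nu),\nu}\sum_h\widehat{b}_h(\cdots)=\E_{\mu',\nu}\sum_h\widehat{b}_h(s_h,{\mu'}^{s_h}_h,\nu^{s_h}_h)$, so their sum is at most $\max_{\pi'\in\Pi^{\mathrm{det}}}\bigl[\cdots\bigr]$. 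Combining with $\mathrm{Gap}(\pi)+2H\epsilon_{\mathrm{opt}}$ and taking $\min_{\pi\in\Pi}$ (legitimate since $\pi$ was arbitrary) finishes the proof.

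I do not expect a genuine obstacle, since the heavy lifting is done inside Lemma~\ref{lemma:gap decomp}. The only point that requires care is this last bookkeeping: verifying that the best responses $\un(\cdot),\om(\cdot)$ are deterministic (hence admissible in the maximum over $\Pi^{\mathrm{det}}$) and that the free $\widetilde{\mu},\widetilde{\nu}$ may be specialized to the two marginals of a single comparator $\pi\in\Pi$ --- which is precisely why the right-hand side carries a $\min_{\pi\in\Pi}$ on the outside and a $\max_{\pi'\in\Pi^{\mathrm{det}}}$ on the inside.
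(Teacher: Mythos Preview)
Your proposal is correct and follows exactly the approach the paper intends: the paper's own proof is the one-line remark that the proposition ``is a direct deduction of Lemma~\ref{lemma:gap decomp}'' together with the observation that $(\un(\widetilde{\mu}),\om(\widetilde{\nu}))\in\Pi^{\mathrm{det}}$, and you have simply spelled out that deduction in detail (the three-term decomposition of $\mathrm{Gap}(\pi^{\mathrm{output}})$, the two applications of Lemma~\ref{lemma:gap decomp}, and the bookkeeping that packages $\un(\mu),\om(\nu)$ into a single deterministic $\pi'$).
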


\begin{proof}
This is a direct deduction of Lemma \ref{lemma:gap decomp}. Note that $(\un(\widetilde{\mu}),\om(\widetilde{\nu}))\in \Pi^{\mathrm{det}}$. 
\end{proof}

\subsection{Dataset-dependent Bound}

\begin{lemma}\label{lemma:bound on sumb dataset}
Suppose $\widehat{C}(\mu,\nu)$ is finite. For any $h\in[H]$ and strategy $\mu'$ and $\nu'$, we have
$$\E_{\mu,\nu'}b_h(s_h,\mu^{s_h}_h,\nu'^{s_h}_h)\leq 2H\sqrt{S\log(\N(\Pi))\widehat{C}(\mu,\nu)\iota/n},$$
$$\E_{\mu',\nu}b_h(s_h,\mu'^{s_h}_h,\nu^{s_h}_h)\leq 2H\sqrt{S\log(\N(\Pi))\widehat{C}(\mu,\nu)\iota/n}.$$
\end{lemma}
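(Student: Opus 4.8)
The plan is to control the dominant term of the bonus and absorb the $\sqrt{\iota}/n$ tail into the constant $2$. Write $b_h(s,\mu_h^s,\nu_h^{\prime s})=H\sqrt{\log(\N(\Pi))\,\iota\,g_h(s)}+\sqrt{\iota}/n$, where $g_h(s):=\sum_{(a,b)\in\K_h(s)}\mu_h^s(a)^2\,\nu_h^{\prime s}(b)^2/n_h(s,a,b)$. Since $g_h(s_h)$ depends on the $(\mu,\nu')$-trajectory only through the state $s_h$, the first step is to apply Jensen's inequality (concavity of $\sqrt{\cdot}$) to move the expectation inside the root:
\[
\E_{\mu,\nu'}\!\left[H\sqrt{\log(\N(\Pi))\,\iota\,g_h(s_h)}\right]\;\le\;H\sqrt{\log(\N(\Pi))\,\iota\;\E_{\mu,\nu'}\,g_h(s_h)}.
\]

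Next I would expand $\E_{\mu,\nu'}g_h(s_h)=\sum_{s}d_h^{\mu,\nu'}(s)\sum_{(a,b)\in\K_h(s)}\mu_h^s(a)^2\,\nu_h^{\prime s}(b)^2/n_h(s,a,b)$ and use $d_h^{\mu,\nu'}(s)\,\mu_h^s(a)\,\nu_h^{\prime s}(b)=d_h^{\mu,\nu'}(s,a,b)$ to rewrite each summand (with the convention $0/0=0$) as $\big(d_h^{\mu,\nu'}(s,a,b)\big)^2/\big(d_h^{\mu,\nu'}(s)\,n_h(s,a,b)\big)$. The key step is then to invoke the coverage assumption: $(\mu,\nu')$ is a unilateral deviation of $(\mu,\nu)$ at player~$2$, so by the definition of the empirical unilateral coefficient $d_h^{\mu,\nu'}(s,a,b)\le\widehat C(\mu,\nu)\,\widehat d_h(s,a,b)=\widehat C(\mu,\nu)\,n_h(s,a,b)/n$. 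Substituting this bound for \emph{one} of the two factors of $d_h^{\mu,\nu'}(s,a,b)$ in the numerator collapses each summand to $\frac{\widehat C(\mu,\nu)}{n}\,\mu_h^s(a)\,\nu_h^{\prime s}(b)$; summing over $(a,b)$ gives at most $\widehat C(\mu,\nu)/n$ per state, and summing over the at most $S$ states with $d_h^{\mu,\nu'}(s)>0$ yields $\E_{\mu,\nu'}g_h(s_h)\le S\,\widehat C(\mu,\nu)/n$. Plugging back, and noting that $\sqrt{\iota}/n$ is of lower order than the leading term since $H,S,\log(\N(\Pi)),\widehat C(\mu,\nu)\ge1$, gives $\E_{\mu,\nu'}b_h(s_h,\mu_h^{s_h},\nu_h^{\prime s_h})\le 2H\sqrt{S\log(\N(\Pi))\,\widehat C(\mu,\nu)\,\iota/n}$. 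The second inequality follows verbatim after exchanging the roles of the two players, since $(\mu',\nu)$ is a unilateral deviation of $(\mu,\nu)$ at player~$1$.

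This argument is almost entirely mechanical; the crux — and essentially the only non-routine step — is recognizing $(\mu,\nu')$ and $(\mu',\nu)$ as unilateral deviations of $(\mu,\nu)$, which is exactly what lets $\widehat C(\mu,\nu)$ bound their visitation measures. The remaining care is bookkeeping around zero-visitation states and actions: the $0/0=0$ convention, together with the fact that finiteness of $\widehat C(\mu,\nu)$ forces $d_h^{\mu,\nu'}$ to be supported on $\{(s,a,b):(a,b)\in\K_h(s)\}$. The only genuine loss of tightness is the crude bound $\sum_s 1\le S$ once the per-state sum has telescoped; this is precisely the source of the extra $\sqrt{S}$ carried by the general-sum guarantees relative to the zero-sum bonus, and avoiding it would require a finer state-weighted accounting that the downstream bounds do not need.
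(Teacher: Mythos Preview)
Your argument is correct and follows essentially the same route as the paper: rewrite the bonus in terms of the state--action visitation measure $d_h^{\mu,\nu'}(s,a,b)$, replace one factor by $\widehat C(\mu,\nu)\,\widehat d_h(s,a,b)$ via the unilateral-coverage definition, and then pay a factor of $S$ from the sum over states. The only technical difference is that you push the state expectation inside the square root via Jensen and then bound the per-state contribution by $\widehat C(\mu,\nu)/n$, whereas the paper keeps the state sum outside and applies Cauchy--Schwarz to $\sum_s\sqrt{d_h^{\mu,\nu'}(s)}\le\sqrt{S}$; both yield the identical leading term $H\sqrt{S\log(\N(\Pi))\widehat C(\mu,\nu)\iota/n}$.

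One small remark on your closing commentary: the additional $\sqrt{S}$ that distinguishes the general-sum guarantees from the zero-sum ones does \emph{not} come from this step --- the $\sqrt{S}$ you incur here appears equally in the zero-sum analysis (compare the paper's Lemma~\ref{lemma:marl bound on sumb dataset}). The extra $S$ in the general-sum setting enters through the bonus itself (an $\epsilon$-net over $[0,H]^S$ in the concentration argument), not through the state-summation you perform here.
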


\begin{proof}
We prove the first argument and the second argument holds similarly.
\begin{align*}
&\E_{\mu,\nu'}b_h(s_h,\mu^{s_h}_h,\nu'^{s_h}_h)\\
    =&\E_{\mu,\nu'}\Mp{H\sqrt{\sum_{(a,b)\in\K_h(s)}\frac{\mu^{s_h}_h(a)^2\nu'^{s_h}_h(b)^2}{n_h(s,a,b)}\log(\N(\Pi))\iota}+\frac{\sqrt{\iota}}{n}}\\
    =&\sum_{s_h\in\S}H\sqrt{\log(\N(\Pi))\iota}\sqrt{\sum_{(a_h,b_h)\in\K_h(s_h)}\frac{d_h^{\mu,\nu'}(s_h,a_h,b_h)^2}{n_h(s_h,a_h,b_h)}}+\frac{\sqrt{\iota}}{n}\\
    =& \sum_{s_h\in\S}H\sqrt{\log(\N(\Pi))\iota}\sqrt{\sum_{(a_h,b_h)\in\K_h(s_h)}\frac{d_h^{\mu,\nu'}(s_h,a_h,b_h)^2}{n\cdot \widehat{d}_h(s_h,a_h,b_h)}}+\frac{\sqrt{\iota}}{n}\\
    \leq& \sum_{s_h\in\S}H\sqrt{\log(\N(\Pi))\iota}\sqrt{\sum_{(a_h,b_h)\in\K_h(s_h)}d_h^{\mu,\nu'}(s_h,a_h,b_h)\widehat{C}(\mu,\nu)/n}+\frac{\sqrt{\iota}}{n}\\
    \leq& H\sqrt{S\log(\N(\Pi))\widehat{C}(\mu,\nu)\iota/n}+\frac{\sqrt{\iota}}{n}\\
    \leq& 2H\sqrt{S\log(\N(\Pi))\widehat{C}(\mu,\nu)\iota/n}.
\end{align*}
\end{proof}

\begin{lemma}\label{lemma:bound on sump dataset}
Suppose $\widehat{C}(\mu,\nu)$ is finite. For any $h\in[H]$ and strategy $\mu'$ and $\nu'$, we have
$$\E_{\mu,\nu'}\sum_{(a_h,b_h)\notin\K_h(s_h)}\mu^{s_h}_h(a_h)\nu'^{s_h}_h(b_h)=0,$$
$$\E_{\mu',\nu}\sum_{(a_h,b_h)\notin\K_h(s_h)}\mu'^{s_h}_h(a_h)\nu^{s_h}_h(b_h)=0.$$
\end{lemma}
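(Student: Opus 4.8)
The plan is to rewrite the stated expectation as a sum of unilateral occupancy measures over state--action triples that are unobserved in the dataset, and then to use the finiteness of $\widehat{C}(\mu,\nu)$ to conclude that every such term is zero.

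First I would unfold the expectation using $d_h^{\mu,\nu'}(s_h,a_h,b_h)=d_h^{\mu,\nu'}(s_h)\,\mu^{s_h}_h(a_h)\nu'^{s_h}_h(b_h)$ (the action pair is drawn conditionally independently given the state), which turns
$$\E_{\mu,\nu'}\sum_{(a_h,b_h)\notin\K_h(s_h)}\mu^{s_h}_h(a_h)\nu'^{s_h}_h(b_h)\quad\text{into}\quad\sum_{\substack{s_h\in\S\\(a_h,b_h)\notin\K_h(s_h)}}d_h^{\mu,\nu'}(s_h,a_h,b_h).$$
Next I would observe that $(a_h,b_h)\notin\K_h(s_h)$ means exactly $n_h(s_h,a_h,b_h)=0$, i.e.\ $\widehat{d}_h(s_h,a_h,b_h)=0$.

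The key step is then that $(\mu,\nu')$ is a unilateral strategy of $(\mu,\nu)$ (player $2$ deviating from $\nu$ to $\nu'$), so by Definition \ref{def:marl dataset} we have $d_h^{\mu,\nu'}(s_h,a_h,b_h)\le \widehat{C}(\mu,\nu)\,\widehat{d}_h(s_h,a_h,b_h)$ for every $h,s_h,a_h,b_h$; since $\widehat{C}(\mu,\nu)$ is finite by hypothesis, this forces $d_h^{\mu,\nu'}(s_h,a_h,b_h)=0$ whenever $\widehat{d}_h(s_h,a_h,b_h)=0$. Substituting back shows the displayed sum vanishes, which is the first identity; the second is identical after swapping the roles of the players, using that $(\mu',\nu)$ is also a unilateral strategy of $(\mu,\nu)$.

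I do not expect a genuine obstacle here: this is a support-containment statement, and the only point to handle carefully is that the finiteness assumption on $\widehat{C}(\mu,\nu)$ is precisely what excludes positive occupancy mass on a triple unseen in the dataset (so that we never lean on the $0/0=0$ convention used elsewhere to mask a $\text{positive}/0$ ratio). Downstream, this lemma is what guarantees that the correction term $H\sum_{(a,b)\notin\K_h(s)}\mu_h^s(a)\nu_h^s(b)$ appearing inside $\widehat{b}_h$ drops out when one passes from the assumption-free bound (Proposition \ref{prop:tpzs asp free}) to the unilateral-coefficient bound (Theorem \ref{thm:dataset main}).
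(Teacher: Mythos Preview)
Your proposal is correct and follows essentially the same argument as the paper: rewrite the expectation as $\sum_{(s_h,a_h,b_h):\widehat d_h=0} d_h^{\mu,\nu'}(s_h,a_h,b_h)$, then use $d_h^{\mu,\nu'}\le \widehat C(\mu,\nu)\,\widehat d_h$ (valid since $(\mu,\nu')$ is a unilateral deviation) together with finiteness of $\widehat C(\mu,\nu)$ to force each summand to vanish. Your explicit remark that finiteness is what rules out a positive-over-zero ratio is a nice clarification the paper leaves implicit.
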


\begin{proof}
We prove the first argument and the second argument holds similarly.
\begin{align*}
    &\E_{\mu,\nu'}\sum_{(a_h,b_h)\notin\K_h(s_h)}\mu^{s_h}_h(a_h)\nu'^{s_h}_h(b_h)\\
    =&\E_{\mu,\nu'}\sum_{(a_h,b_h):\widehat{d}_h(s_h,a_h,b_h)=0}\mu^{s_h}_h(a_h)\nu'^{s_h}_h(b_h)\\
    =&\sum_{(a_h,b_h):\widehat{d}_h(s_h,a_h,b_h)=0}d_h^{\mu,\nu'}(s_h,a_h,b_h)\\
    \leq&\sum_{(a_h,b_h):\widehat{d}_h(s_h,a_h,b_h)=0}C(\mu,\nu')\widehat{d}_h(s_h,a_h,b_h)\\
    =&0. 
\end{align*}
\end{proof}

\begin{lemma}\label{lemma:bhat bound dataset}
For any strategy $(\mu,\nu)\in\Pi$, we have
\begin{align*}
    &\max_{\nu'\in  \Pi^{\mathrm{min,det}}}\E_{\mu,\nu'}\sum_{h=1}^H\widehat{b}_h(s_h,\mu^{s_h}_h,\nu'^{s_h}_h)+\max_{\mu'\in  \Pi^{\mathrm{max,det}}}\E_{\mu',\nu}\sum_{h=1}^H\widehat{b}_h(s_h,\mu'^{s_h}_h,\nu^{s_h}_h)\\&\leq 4H^2\sqrt{S\log(|\N(\Pi)|)\widehat{C}(\mu,\nu)\iota/n}. 
\end{align*}
\end{lemma}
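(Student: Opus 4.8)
The plan is to chain Lemma \ref{lemma:bound on sumb dataset} and Lemma \ref{lemma:bound on sump dataset} and then sum over the horizon. If $\widehat{C}(\mu,\nu)=\infty$ the claimed inequality is vacuous, so I would assume it is finite, which also makes the hypotheses of the two cited lemmas available. Recall that $\widehat{b}_h(s,\mu_h^s,\nu_h^s)=2b_h(s,\mu_h^s,\nu_h^s)+H\sum_{(a,b)\notin\K_h(s)}\mu_h^s(a)\nu_h^s(b)$. Fix any $\nu'\in\Pi^{\mathrm{min,det}}$ and any $h\in[H]$. Taking expectation over the occupancy measure induced by $(\mu,\nu')$ and using linearity of expectation splits $\E_{\mu,\nu'}\widehat{b}_h$ into $2\,\E_{\mu,\nu'}b_h(s_h,\mu^{s_h}_h,{\nu'}^{s_h}_h)$ plus $H\,\E_{\mu,\nu'}\sum_{(a,b)\notin\K_h(s_h)}\mu^{s_h}_h(a){\nu'}^{s_h}_h(b)$. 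The second piece is exactly the quantity shown to equal $0$ in Lemma \ref{lemma:bound on sump dataset}, and the first piece is controlled by Lemma \ref{lemma:bound on sumb dataset} applied to the unilateral deviation $(\mu,\nu')$, whose density ratio against the empirical dataset distribution is dominated by $\widehat{C}(\mu,\nu)$ since the latter is a maximum over all unilateral deviations of $(\mu,\nu)$.

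Next I would sum over $h=1,\dots,H$: each summand is at most $2H\sqrt{S\log(\N(\Pi))\widehat{C}(\mu,\nu)\iota/n}$ up to the additive $\sqrt{\iota}/n$ slack built into the definition of $b_h$, so the sum is at most $2H^2\sqrt{S\log(\N(\Pi))\widehat{C}(\mu,\nu)\iota/n}$ after folding the lower-order contribution into the leading term (this is legitimate because $H,S,\log\N(\Pi),\widehat{C}(\mu,\nu)\ge 1$, so $\sqrt{\iota}/n$ is dominated by the main rate). Since this bound is uniform in $\nu'$, it also upper bounds $\max_{\nu'\in\Pi^{\mathrm{min,det}}}\E_{\mu,\nu'}\sum_{h=1}^H\widehat{b}_h$. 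The symmetric argument with the two players interchanged — using the ``$\ov$''/min-player versions of Lemmas \ref{lemma:bound on sumb dataset} and \ref{lemma:bound on sump dataset} — bounds $\max_{\mu'\in\Pi^{\mathrm{max,det}}}\E_{\mu',\nu}\sum_{h=1}^H\widehat{b}_h$ by the same expression. Adding the two yields the stated bound $4H^2\sqrt{S\log(\N(\Pi))\widehat{C}(\mu,\nu)\iota/n}$.

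The argument is essentially just the combination of two already-proved estimates together with a telescoping sum over the horizon, so there is no conceptual obstacle. The only things that require care are bookkeeping: verifying that $\widehat{C}(\mu,\nu)$ (rather than some deviation-specific coefficient) is the correct quantity to feed into Lemma \ref{lemma:bound on sumb dataset} for both the $(\mu,\nu')$ and the $(\mu',\nu)$ expectations, and tracking the constant so that the $\sqrt{\iota}/n$ lower-order terms are absorbed cleanly into the final $4H^2$ prefactor.
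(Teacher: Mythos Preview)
Your proposal is correct and follows exactly the paper's approach, which simply observes that the bound is vacuous when $\widehat{C}(\mu,\nu)=\infty$ and otherwise follows directly from Lemma~\ref{lemma:bound on sumb dataset} and Lemma~\ref{lemma:bound on sump dataset}. Note that the $\sqrt{\iota}/n$ slack you flag is already absorbed into the factor of $2$ in the conclusion of Lemma~\ref{lemma:bound on sumb dataset}, so no extra bookkeeping is needed there.
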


\begin{proof}
If $\widehat{C}(\mu,\nu)$ is infinite, the argument holds immediately. Otherwise we can prove it by Lemma \ref{lemma:bound on sumb dataset} and Lemma \ref{lemma:bound on sump dataset}. 
\end{proof}

\begin{theorem}\label{thm:dataset}
Suppose $\pi^{\mathrm{output}}$ is the output of Algorithm \ref{algo:zerosum markov game}. With probability $1-\delta$,  we have
$$\mathrm{Gap}(\pi^{\mathrm{output}})\leq\min_{\pi=(\mu,\nu)\in\Pi}\Mp{\mathrm{Gap}(\pi)+4H^2\sqrt{S\log(\N(\Pi))\widehat{C}(\pi)\iota/n}}.$$
\end{theorem}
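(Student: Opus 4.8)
The plan is to derive Theorem~\ref{thm:dataset} by feeding the dataset-dependent uncertainty bound of Lemma~\ref{lemma:bhat bound dataset} into the assumption-free guarantee of Proposition~\ref{prop:tpzs asp free}. First I would recall that Proposition~\ref{prop:tpzs asp free} holds on the good event $\G$ of Lemma~\ref{lemma:concentration all}, which has probability at least $1-\delta$; so it suffices to argue deterministically on $\G$ and invoke no further randomness, since Lemma~\ref{lemma:bhat bound dataset} is a purely deterministic statement about the quantities $\widehat{b}_h$.

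Next, fix an arbitrary $\pi=(\mu,\nu)\in\Pi$ and examine the inner maximization in Proposition~\ref{prop:tpzs asp free}, which ranges over $\pi'=(\mu',\nu')\in\Pi^{\mathrm{det}}=\Pi^{\mathrm{max,det}}\times\Pi^{\mathrm{min,det}}$. The bracketed objective is the sum of the term $\E_{\mu,\nu'}\sum_{h=1}^H\widehat{b}_h(s_h,\mu_h^{s_h},\nu_h'^{s_h})$, which depends on $\pi'$ only through $\nu'$, and the term $\E_{\mu',\nu}\sum_{h=1}^H\widehat{b}_h(s_h,\mu_h'^{s_h},\nu_h^{s_h})$, which depends on $\pi'$ only through $\mu'$; since the feasible set is a product, the maximum separates as $\max_{\nu'\in\Pi^{\mathrm{min,det}}}\E_{\mu,\nu'}\sum_h\widehat{b}_h+\max_{\mu'\in\Pi^{\mathrm{max,det}}}\E_{\mu',\nu}\sum_h\widehat{b}_h$. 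Lemma~\ref{lemma:bhat bound dataset} bounds exactly this sum by $4H^2\sqrt{S\log(\N(\Pi))\widehat{C}(\mu,\nu)\iota/n}$ (using that $\widehat{C}(\pi)=\widehat{C}(\mu,\nu)$ in the two-player zero-sum notation, and noting the bound is vacuously true when $\widehat{C}(\pi)=\infty$). Plugging this back gives $\mathrm{Gap}(\pi^{\mathrm{output}})\le\mathrm{Gap}(\pi)+4H^2\sqrt{S\log(\N(\Pi))\widehat{C}(\pi)\iota/n}+2H\epsilon_{\mathrm{opt}}$ for this particular $\pi$; since $\pi\in\Pi$ was arbitrary, taking the infimum over $\pi\in\Pi$ on the right-hand side (and absorbing the lower-order $2H\epsilon_{\mathrm{opt}}$, which by the choice $\epsilon_{\mathrm{opt}}=1/\sqrt{n}$ discussed after Proposition~\ref{prop:convex} is of smaller order than the statistical term) yields the claimed inequality.

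There is no genuine obstacle here: the two substantive ingredients — the strategy-wise pessimism and gap decomposition behind Proposition~\ref{prop:tpzs asp free}, and the covering-number argument behind Lemma~\ref{lemma:bhat bound dataset} that replaces the $AB$ factor by $\log\N(\Pi)$ — are already established in the excerpt. The only points that need care are (i) checking that the two expected-uncertainty terms depend on disjoint halves of $\pi'$, so that the $\max$ over the product strategy class factorizes, and (ii) ensuring that Proposition~\ref{prop:tpzs asp free} and Lemma~\ref{lemma:bhat bound dataset} are both invoked on the same event $\G$, so that no additional union bound (hence no change in $\iota$) is incurred.
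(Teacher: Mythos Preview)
Your approach is correct and essentially identical to the paper's: the paper's one-line proof invokes Lemma~\ref{lemma:gap decomp} and Lemma~\ref{lemma:bhat bound dataset}, and Proposition~\ref{prop:tpzs asp free} is itself just a restatement of Lemma~\ref{lemma:gap decomp} (its proof in the appendix reads ``This is a direct deduction of Lemma~\ref{lemma:gap decomp}''). Your explicit factorization of the $\max$ over $\Pi^{\mathrm{det}}$ and your remark that both ingredients live on the same event $\G$ are exactly the right details to spell out, and your handling of the $2H\epsilon_{\mathrm{opt}}$ term is reasonable given that the appendix statement silently drops it while the main-text Theorem~\ref{thm:dataset main} retains it.
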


\begin{proof}
This can be derived from Lemma \ref{lemma:gap decomp}, Lemma \ref{lemma:bhat bound dataset}  directly. 
\end{proof}

\subsection{Dataset-independent Bound}

\begin{lemma}\label{lemma:warmup}
With probability $1-\delta$, for all $h,s,a,b$, we have
$$n_h(s,a,b)\geq \Sp{1-\sqrt{\frac{2\log(SABH/\delta)}{np_\mathrm{min}}}}nd_h(s,a,b). $$
As a result, if $n\geq \frac{8\log(SABH/\delta)}{p_\mathrm{min}}$, for any strategy $\pi$ we have
$$2C(\pi)\geq\widehat{C}(\pi). $$
\end{lemma}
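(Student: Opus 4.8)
The plan is to view each count $n_h(s,a,b)$ as a binomial random variable and apply a multiplicative Chernoff bound. Under the distributional form of Assumption~\ref{asp:dataset} assumed here, i.e.\ $(s_h^k,a_h^k,b_h^k)\sim d_h$ independently over $k\in[n]$, we have $n_h(s,a,b)=\sum_{k=1}^n\1\{(s_h^k,a_h^k,b_h^k)=(s,a,b)\}\sim\mathrm{Binomial}(n,d_h(s,a,b))$ with mean $n\,d_h(s,a,b)$. Hence the first display is a lower-tail concentration statement for a sum of independent indicators, to be made uniform over the at most $SABH$ triples $(s,a,b)$ and timesteps $h$ by a union bound.

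Concretely, I would fix $(s,a,b)$ and $h$ with $d_h(s,a,b)>0$ and invoke the lower-tail Chernoff inequality
\[
\P\Big(n_h(s,a,b)\le(1-\varepsilon)\,n\,d_h(s,a,b)\Big)\le\exp\!\Big(-\tfrac{\varepsilon^2}{2}\,n\,d_h(s,a,b)\Big)\le\exp\!\Big(-\tfrac{\varepsilon^2}{2}\,n\,p_\mathrm{min}\Big),
\]
using $d_h(s,a,b)\ge p_\mathrm{min}$ on the support. Choosing $\varepsilon=\sqrt{2\log(SABH/\delta)/(n\,p_\mathrm{min})}$ makes the right-hand side at most $\delta/(SABH)$, while triples with $d_h(s,a,b)=0$ satisfy $n_h(s,a,b)=0$ almost surely so the bound holds for them trivially. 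A union bound over all $(s,a,b,h)$ then yields, with probability at least $1-\delta$, the first claimed inequality for all $h,s,a,b$ simultaneously.

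For the consequence, I would observe that when $n\ge 8\log(SABH/\delta)/p_\mathrm{min}$ the quantity $\varepsilon$ above is at most $1/2$, so on the same event $\widehat d_h(s,a,b)=n_h(s,a,b)/n\ge\tfrac12 d_h(s,a,b)$ for every $(s,a,b,h)$ with $d_h(s,a,b)>0$. Fix any strategy $\pi$ (recall $C(\pi)$ and $\widehat C(\pi)$ from Definitions~\ref{def:policy} and~\ref{def:marl dataset}). If $C(\pi)=\infty$ then $2C(\pi)\ge\widehat C(\pi)$ is immediate, so assume $C(\pi)<\infty$. For any unilateral strategy $\pi'$ of $\pi$ and any $(s,a,b,h)$ with $d_h^{\pi'}(s,a,b)>0$, finiteness of $C(\pi)$ forces $d_h(s,a,b)>0$, hence $\widehat d_h(s,a,b)\ge\tfrac12 d_h(s,a,b)>0$ and
\[
\frac{d_h^{\pi'}(s,a,b)}{\widehat d_h(s,a,b)}\le\frac{2\,d_h^{\pi'}(s,a,b)}{d_h(s,a,b)}\le 2C(\pi).
\]
Taking the maximum over all unilateral strategies $\pi'$ and all $(s,a,b,h)$ gives $\widehat C(\pi)\le 2C(\pi)$.

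This argument is essentially routine; the only points that need care are matching the Chernoff and union-bound constants so that the hypothesis $n\ge 8\log(SABH/\delta)/p_\mathrm{min}$ precisely produces $\varepsilon\le1/2$, and correctly handling the degenerate triples (where $d_h=0$ and hence $\widehat d_h=0$ almost surely) together with the case $C(\pi)=\infty$, so that ratios of the form $0/0$ or $c/0$ never enter the maximum defining $\widehat C(\pi)$.
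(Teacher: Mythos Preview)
Your proposal is correct and follows essentially the same route as the paper: a lower-tail Chernoff bound on each binomial count $n_h(s,a,b)$, the same choice $\varepsilon=\sqrt{2\log(SABH/\delta)/(np_\mathrm{min})}$, and a union bound over the $SABH$ tuples, followed by the observation that $n\ge 8\log(SABH/\delta)/p_\mathrm{min}$ forces $\varepsilon\le1/2$ and hence $\widehat d_h\ge d_h/2$. Your handling of the degenerate cases ($d_h=0$ and $C(\pi)=\infty$) is more explicit than the paper's, but otherwise the arguments coincide.
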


\begin{proof}
For a fixed $s,a,b,h$, for any $\epsilon>0$ we have
$$\P(n_h(s,a,b)<(1-\epsilon) nd_h(s,a,b))\leq\exp\Sp{-\frac{\epsilon^2nd_h(s,a,b)}{2}}\leq\exp\Sp{-\frac{\epsilon^2np_\mathrm{min}}{2}}.$$
With a union bound, we have
$$\P(\exists h,s,a,b: \P(n_h(s,a,b)<(1-\epsilon) nd_h(s,a,b)))\leq SABH\exp\Sp{-\frac{\epsilon^2np_\mathrm{min}}{2}}.$$
The RHS is smaller than $\delta$ if we set
$$\epsilon=\sqrt{\frac{2\log(SABH/\delta)}{np_\mathrm{min}}}.$$
If $n\geq \frac{8\log(SABH/\delta)}{p_\mathrm{min}}$, we have
$$\widehat{d}_h(s,a,b)=\frac{n_h(s,a,b)}{n}\geq\frac{d_h(s,a,b)}{2}. $$
By Definition \ref{def:marl dataset} and Definition \ref{def:policy}, we have
$$2C(\pi)\geq\widehat{C}(\pi).$$
\end{proof}

The following Lemma is from Lemma A.1 in \citet{xie2021policy}. For completeness we provide a proof here. 

\begin{lemma}\label{lemma:warmup2}
With probability at least $1-\delta$, for all $h\in[H]$, $s\in\S$, $a\in\A$ and $b\in\B$, we have
$$n_h(s,a,b)\vee1\geq \frac{nd_h(s,a,b)}{\iota}.$$
\end{lemma}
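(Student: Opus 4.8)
The plan is to prove the inequality pointwise in $(h,s,a,b)$ by a lower-tail multiplicative Chernoff bound, then take a union bound over the at most $SABH$ tuples. First I would observe that, under Assumption~\ref{asp:dataset} together with the sampling model $(s_h^k,a_h^k,b_h^k)\sim d_h$, the count $n_h(s,a,b)=\sum_{k=1}^n\1\{(s_h^k,a_h^k,b_h^k)=(s,a,b)\}$ is a sum of $n$ independent Bernoulli random variables with success probability $d_h(s,a,b)$, so $\E[n_h(s,a,b)]=nd_h(s,a,b)$. Unlike Lemma~\ref{lemma:warmup}, which needs $n$ to be large, here the factor $\iota$ together with the $\vee\,1$ lets us cover the low-probability tuples for free.

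The argument then splits into two regimes. If $nd_h(s,a,b)\le\iota$, then $\frac{nd_h(s,a,b)}{\iota}\le 1\le n_h(s,a,b)\vee 1$ and the claimed bound holds deterministically, so no randomness is spent. If instead $nd_h(s,a,b)>\iota$, I would apply the standard lower-tail Chernoff bound $\P(X\le\mu/2)\le\exp(-\mu/8)$ with $X=n_h(s,a,b)$ and $\mu=nd_h(s,a,b)$, obtaining
$$\P\!\left(n_h(s,a,b)<\frac{1}{2}\,nd_h(s,a,b)\right)\le\exp\!\left(-\frac{nd_h(s,a,b)}{8}\right)\le\exp(-\iota/8).$$
Since $\iota=32\log(2ABSHn/\delta)$, the right-hand side equals $(2ABSHn/\delta)^{-4}$.

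Finally I would union-bound over all $h\in[H]$ and all $(s,a,b)$: the total failure probability is at most $SABH\cdot(2ABSHn/\delta)^{-4}\le\delta$ (using $\delta<1$ and $SABHn\ge1$). On the complementary event, every tuple either lies in the trivial regime, where $n_h(s,a,b)\vee 1\ge 1\ge\frac{nd_h(s,a,b)}{\iota}$, or satisfies $n_h(s,a,b)\ge\frac{1}{2}nd_h(s,a,b)$, in which case $n_h(s,a,b)\vee 1\ge n_h(s,a,b)\ge\frac{1}{2}nd_h(s,a,b)\ge\frac{nd_h(s,a,b)}{\iota}$ because $\iota\ge 2$. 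This yields the lemma. I do not expect a genuine obstacle: this is essentially Lemma~A.1 of \citet{xie2021policy}. The only points needing care are setting up the case split so that the $\vee\,1$ absorbs the small-probability tuples without consuming probability budget, and checking that the Chernoff constant together with the specific value of $\iota$ leaves enough slack for the union bound — both are routine bookkeeping.
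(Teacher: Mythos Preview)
The proposal is correct and follows essentially the same approach as the paper's proof: both split into the regime where $nd_h(s,a,b)$ is small (handled trivially by the $\vee\,1$) and the regime where it is large (handled by the multiplicative Chernoff bound $\P(X<\mu/2)\le e^{-\mu/8}$), then take a union bound over the $SABH$ tuples. The only cosmetic difference is that the paper first proves a pointwise bound with $8\log(1/\delta)$ in the denominator and then rescales $\delta$ for the union bound, whereas you work directly with $\iota$ throughout; the two arguments are otherwise identical.
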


\begin{proof}
For fixed $h\in[H]$, $s\in\S$, $a\in\A$ and $b\in\B$, $n_h(s,a,b)$ is a binomial random variable following $\mathrm{Bin}(n,d_h(s,a,b))$. We show that with probability $1-\delta$, we have
$$n_h(s,a,b)\vee1\geq \frac{nd_h(s,a,b)}{8\log(1/\delta)}.$$
If $d_h(s,a,b)\leq 8\log(1/\delta)/n$, the argument holds directly. Otherwise by the multiplicative Chernoff bound, we have
$$P(n_h(s,a,b)<nd_h(s,a,b)/2)\leq\exp(-nd_h(s,a,b)/8)\leq\delta. $$
So with probability $1-\delta$, we have $n_h(s,a,b)\geq nd_h(s,a,b)/2\geq nd_h(s,a,b)/8\log(1/\delta)$. Then with union bound we can prove the lemma. 
\end{proof}

\begin{lemma}\label{lemma:bound on sumb policy}
With probability $1-\delta$ for any $h\in[H]$ we have
$$\E_{\mu,\nu'}b_h(s_h,\mu^{s_h}_h,\nu'^{s_h}_h)\leq 2H\sqrt{S\log(\N(\Pi))C(\mu,\nu)\iota^2/n},$$
$$\E_{\mu',\nu}b_h(s_h,\mu'^{s_h}_h,\nu^{s_h}_h)\leq 2H\sqrt{S\log(\N(\Pi))C(\mu,\nu)\iota^2/n}.$$
\end{lemma}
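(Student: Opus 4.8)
The plan is to replay the proof of Lemma~\ref{lemma:bound on sumb dataset} almost verbatim, substituting the exact identity $n_h(s,a,b)=n\widehat{d}_h(s,a,b)$ --- the engine behind the dataset-dependent bound --- with the high-probability lower bound $n_h(s,a,b)\vee 1\geq nd_h(s,a,b)/\iota$ furnished by Lemma~\ref{lemma:warmup2}. The two displayed inequalities are symmetric, so I would prove the first and remark that the second follows by interchanging the roles of the players.

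First I would push $\E_{\mu,\nu'}$ inside. Since $b_h(s_h,\cdot,\cdot)$ depends on the trajectory only through the state $s_h$, and $d_h^{\mu,\nu'}(s_h,a_h,b_h)=\P_{\mu,\nu'}[s_h]\,\mu_h^{s_h}(a_h)\,\nu_h'^{s_h}(b_h)$, the square-root term collapses exactly as in Lemma~\ref{lemma:bound on sumb dataset}:
$$\E_{\mu,\nu'}b_h(s_h,\mu^{s_h}_h,\nu'^{s_h}_h)=\sum_{s_h\in\S}H\sqrt{\log(\N(\Pi))\iota}\sqrt{\sum_{(a_h,b_h)\in\K_h(s_h)}\frac{d_h^{\mu,\nu'}(s_h,a_h,b_h)^2}{n_h(s_h,a_h,b_h)}}+\frac{\sqrt{\iota}}{n}.$$
On $\K_h(s_h)$ one has $n_h(s_h,a_h,b_h)=n_h(s_h,a_h,b_h)\vee 1$, so Lemma~\ref{lemma:warmup2} (which holds with probability $1-\delta$) gives $n_h(s_h,a_h,b_h)\geq nd_h(s_h,a_h,b_h)/\iota$, and hence
$$\frac{d_h^{\mu,\nu'}(s_h,a_h,b_h)^2}{n_h(s_h,a_h,b_h)}\leq\frac{\iota}{n}\,d_h^{\mu,\nu'}(s_h,a_h,b_h)\cdot\frac{d_h^{\mu,\nu'}(s_h,a_h,b_h)}{d_h(s_h,a_h,b_h)}\leq\frac{\iota\,C(\mu,\nu)}{n}\,d_h^{\mu,\nu'}(s_h,a_h,b_h),$$
using that $(\mu,\nu')$ is a unilateral strategy of $(\mu,\nu)$ and so $d_h^{\mu,\nu'}/d_h\leq C(\mu,\nu)$ by Definition~\ref{def:policy}.

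Finally I would sum over $s_h$: pulling $\sqrt{\iota C(\mu,\nu)/n}$ out of the inner root and applying Cauchy--Schwarz over the at most $S$ reachable states, together with $\sum_{s_h}\sum_{a_h,b_h}d_h^{\mu,\nu'}(s_h,a_h,b_h)=1$, yields $\sum_{s_h}\sqrt{\sum_{(a_h,b_h)\in\K_h(s_h)}d_h^{\mu,\nu'}(s_h,a_h,b_h)}\leq\sqrt{S}$, so the whole expression is at most $H\sqrt{S\log(\N(\Pi))C(\mu,\nu)\iota^2/n}$; absorbing the lower-order $\sqrt{\iota}/n$ term into the constant gives the claimed bound. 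There is no genuine obstacle here --- the proof is a direct adaptation --- and the only two points to watch are that the ``with probability $1-\delta$'' in the statement is inherited entirely from Lemma~\ref{lemma:warmup2}, and that the extra factor of $\iota$ (the bound carries $\iota^2$ rather than the single $\iota$ of Lemma~\ref{lemma:bound on sumb dataset}) is precisely the multiplicative slack one pays to replace the empirical count $n_h$ by its population proxy $nd_h/\iota$.
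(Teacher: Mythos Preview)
Your proposal is correct and follows essentially the same approach as the paper: invoke Lemma~\ref{lemma:warmup2} to replace $n_h(s,a,b)$ by $nd_h(s,a,b)/\iota$ on $\K_h(s)$, then reproduce the chain of inequalities from Lemma~\ref{lemma:bound on sumb dataset} with $d_h$ in place of $\widehat{d}_h$, picking up the extra $\iota$ inside the square root. Your explicit remarks on where the probability-$1-\delta$ event enters and why the bound carries $\iota^2$ rather than $\iota$ match the paper's reasoning exactly.
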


\begin{proof}
From Lemma \ref{lemma:warmup2}, with probability $1-\delta$, for all $h,s,a,b$, we have
$$n_h(s,a,b)\vee1\geq \frac{nd_h(s,a,b)}{\iota}.$$
For $(a,b)\in\K_h(s)$, we have $n_h(s,a,b)\geq 1$ and thus $n_h(s,a,b)\geq \frac{nd_h(s,a,b)}{\iota}$. 
\begin{align*}
&\E_{\mu,\nu'}b_h(s_h,\mu^{s_h}_h,\nu'^{s_h}_h)\\
    =&\E_{\mu,\nu'}\Mp{H\sqrt{\sum_{(a,b)\in\K_h(s)}\frac{\mu^{s_h}_h(a)^2\nu'^{s_h}_h(b)^2}{n_h(s,a,b)}\log(\N(\Pi))\iota}+\frac{\sqrt{\iota}}{n}}\\
    =&\sum_{s_h\in\S}H\sqrt{\log(\N(\Pi))\iota}\sqrt{\sum_{(a_h,b_h)\in\K_h(s_h)}\frac{d_h^{\mu,\nu'}(s_h,a_h,b_h)^2}{n_h(s_h,a_h,b_h)}}+\frac{\sqrt{\iota}}{n}\\
    =& \sum_{s_h\in\S}H\sqrt{\log(\N(\Pi))\iota^2}\sqrt{\sum_{(a_h,b_h)\in\K_h(s_h)}\frac{d_h^{\mu,\nu'}(s_h,a_h,b_h)^2}{n\cdot d_h(s_h,a_h,b_h)}}+\frac{\sqrt{\iota}}{n}\\
    \leq& \sum_{s_h\in\S}H\sqrt{\log(\N(\Pi))\iota^2}\sqrt{\sum_{(a_h,b_h)\in\K_h(s_h)}d_h^{\mu^*,\un(\mu^*)}(s_h,a_h,b_h)C^*/n}+\frac{\sqrt{\iota}}{n}\\
    \leq& H\sqrt{S\log(\N(\Pi))C^*\iota^2/n}+\frac{\sqrt{\iota}}{n}\\
    \leq& 2H\sqrt{S\log(\N(\Pi))C^*\iota^2/n}.
\end{align*}
\end{proof}

\begin{lemma}\label{lemma:bound on sump policy}
With probability $1-\delta$ for any $\mu'\in \Pi^{\mathrm{max,det}}$, $\nu'\in \Pi^\mathrm{min,det}$, $h\in[H]$ and $t\in[0,1]$ we have
$$\E_{\mu,\nu'}\sum_{(a_h,b_h)\notin\K_h(s_h)}\mu_h^{s_h}(a_h)\nu'^{s_h}_h(b_h)\leq \Sp{SAC(\mu,\nu)\iota/n}^{t},$$
$$\E_{\mu',\nu}\sum_{(a_h,b_h)\notin\K_h(s_h)}\mu'^{s_h}_h(a_h)\nu^{s_h}_h(b_h)\leq \Sp{SBC(\mu,\nu)\iota/n}^{t}.$$
In addition, if $\mu\in \Pi^{\mathrm{max,det}}$ and $\nu\in \Pi^\mathrm{min,det}$, we have
$$\E_{\mu,\nu'}\sum_{(a_h,b_h)\notin\K_h(s_h)}\mu_h^{s_h}(a_h)\nu'^{s_h}_h(b_h)\leq \Sp{SC(\mu,\nu)\iota/n}^{t},$$
$$\E_{\mu',\nu}\sum_{(a_h,b_h)\notin\K_h(s_h)}\mu'^{s_h}_h(a_h)\nu^{s_h}_h(b_h)\leq \Sp{SC(\mu,\nu)\iota/n}^{t}.$$
\end{lemma}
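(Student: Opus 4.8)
The plan is to reduce the claim to the high-probability coverage bound of Lemma~\ref{lemma:warmup2} together with an elementary counting argument over the support of the relevant occupancy measure. I will prove the first inequality in detail; the remaining ones are identical except for which player's strategy is a point mass.

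First I would rewrite the left-hand side as an occupancy sum. Using the factorization $d_h^{\mu,\nu'}(s_h,a_h,b_h)=d_h^{\mu,\nu'}(s_h)\,\mu_h^{s_h}(a_h)\,\nu'^{s_h}_h(b_h)$ of the state-action occupancy measure induced by $(\mu,\nu')$, we have
\[
\E_{\mu,\nu'}\sum_{(a_h,b_h)\notin\K_h(s_h)}\mu_h^{s_h}(a_h)\nu'^{s_h}_h(b_h)=\sum_{s_h\in\S}\ \sum_{(a_h,b_h)\notin\K_h(s_h)}d_h^{\mu,\nu'}(s_h,a_h,b_h).
\]
Next I would restrict to the event of Lemma~\ref{lemma:warmup2}, which holds with probability $1-\delta$: for any $(a,b)\notin\K_h(s)$ we have $n_h(s,a,b)=0$, hence $1=n_h(s,a,b)\vee 1\ge n d_h(s,a,b)/\iota$, i.e. $d_h(s,a,b)\le \iota/n$. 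Since $(\mu,\nu')$ is a unilateral deviation of $(\mu,\nu)$, Definition~\ref{def:policy} gives $d_h^{\mu,\nu'}(s_h,a_h,b_h)\le C(\mu,\nu)\,d_h(s_h,a_h,b_h)\le C(\mu,\nu)\iota/n$ for every term appearing in the sum (if $C(\mu,\nu)=\infty$ the bound is vacuous for $t>0$ and follows from the trivial bound below for $t=0$).

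The key step is then to count the nonzero terms. Because $\nu'$ is deterministic, for each fixed $s_h$ the summand $d_h^{\mu,\nu'}(s_h,a_h,b_h)$ vanishes unless $b_h$ is the unique action chosen by $\nu'$ at $s_h$, so at most $A$ terms survive per state, hence at most $SA$ in total; combining with the previous paragraph the whole sum is at most $SAC(\mu,\nu)\iota/n$. The left-hand side is also at most $1$ since it is a sub-probability, so it is bounded by $\min\{1,\,SAC(\mu,\nu)\iota/n\}$, and one checks directly (cases $a\le 1$ and $a>1$) that $\min\{1,a\}\le a^t$ for every $a\ge 0$ and $t\in[0,1]$, which is exactly the claimed bound. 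For $\E_{\mu',\nu}$ the strategy $\mu'$ is now the point mass, so at most $SB$ terms survive, giving the factor $SB$; and when $\mu$ (resp.\ $\nu$) is additionally deterministic both marginals are point masses, leaving only $S$ surviving terms and the improved factor $S$. The one place requiring care is this support count: the restriction $(a_h,b_h)\notin\K_h(s_h)$ should be used only to invoke $d_h\le\iota/n$, while the cardinality bound must come solely from the deterministic marginal — naively bounding the number of action pairs by $AB$ would yield the weaker factor $SAB$.
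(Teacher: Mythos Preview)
Your proof is correct and reaches the same conclusion as the paper, but by a genuinely simpler interpolation step. Both arguments start identically: invoke Lemma~\ref{lemma:warmup2} to get $d_h(s,a,b)\le\iota/n$ whenever $(a,b)\notin\K_h(s)$, use the unilateral coefficient to pass to $d_h^{\mu,\nu'}\le C(\mu,\nu)\iota/n$, and exploit the determinism of $\nu'$ to restrict the support to $SA$ terms. The divergence is in how the exponent $t$ is introduced. The paper interpolates \emph{inside} the sum, writing $d_h^{\mu,\nu'}/d_h^t\le C(\mu,\nu)^t(d_h^{\mu,\nu'})^{1-t}$ term by term and then applying a power-mean (``Cauchy--Schwarz'') inequality $\sum_{i=1}^{N} x_i^{1-t}\le N^t(\sum_i x_i)^{1-t}$ with $N=SA$. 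You instead bound the whole sum twice --- once by the probability bound $1$ and once by the counting bound $SAC(\mu,\nu)\iota/n$ --- and then use the scalar fact $\min\{1,a\}\le a^t$. Your route is more elementary and makes the role of the $t$-parameter transparent; the paper's route would be the natural one if the sum were not a priori bounded by $1$, but here that extra generality is not needed and both arguments give exactly the same final bound.
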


\begin{proof}
We prove the first argument and the second one holds similarly. From Lemma \ref{lemma:warmup2}, with probability $1-\delta$, for all $h,s,a,b$, we have
$$n_h(s,a,b)\vee1\geq \frac{nd_h(s,a,b)}{\iota}.$$
For $(a,b)\notin\K_h(s)$, we have $n_h(s,a,b)=0$ and thus $\iota\geq nd_h(s,a,b)$. 
Then for any $t\in[0,1]$, we have
\begin{align*}
    &\E_{\mu,\nu'}\sum_{(a_h,b_h)\notin\K_h(s_h)}\mu_h^{s_h}(a_h)\nu'^{s_h}_h(b_h)\\
    \leq&\E_{\mu,\nu'}\sum_{(a_h,b_h)\in\A\times\B}\frac{\mu^{s_h}_h(a_h)\nu'^{s_h}_h(b_h)\iota^t}{(nd_h(s_h,a_h,b_h))^t}\\
    =&\sum_{s_h\in\S}\sum_{a_h\in\A,b_h=\nu'_h(s_h)}\frac{d_h^{\mu,\nu'}(s_h,a_h,b_h)\iota^t}{(nd_h(s_h,a_h,b_h))^t}\\
    \leq &\sum_{s_h\in\S}\sum_{a_h\in\A,b_h=\nu'_h(a_h)}\frac{C^t(\mu,\nu)\iota^t}{n^t}\Sp{d_h^{\mu,\nu'}(s_h,a_h,b_h)}^{1-t}\\
    \leq&\Sp{SAC(\mu,\nu)\iota/n}^{t}.\tag{Cauchy-Schwarz Inequality}
\end{align*}
If we have $\mu\in M^{\mathrm{det}}$, then we have
\begin{align*}
    &\E_{\mu,\nu'}\sum_{(a_h,b_h)\notin\K_h(s_h)}\mu_h^{s_h}(a_h)\nu'^{s_h}_h(b_h)\\
    \leq &\sum_{s_h\in\S}\sum_{a_h=\mu_h(s_h),b_h=\nu'_h(s_h)}\frac{C^t(\mu,\nu)\iota^t}{n^t}\Sp{d_h^{\mu,\nu'}(s_h,a_h,b_h)}^{1-t}\\
    \leq&\Sp{SC(\mu,\nu)\iota/n}^{t}.\tag{Cauchy-Schwarz Inequality}
\end{align*}
\end{proof}

\begin{theorem}\label{thm:policy}
With probability $1-\delta$, we have
$$\mathrm{Gap}(\pi^{\mathrm{output}})\leq\min_{\pi=(\mu,\nu)\in\Pi}\Mp{\mathrm{Gap}(\pi)+4H^2\sqrt{S\log(\N(\Pi))C(\pi)\iota^2/n}+2HC(\pi)S(A+B)\iota/n}.$$
In additon, if $n\geq \frac{8\log(SABH/\delta)}{p_\mathrm{min}}$, we have
$$\mathrm{Gap}(\pi^{\mathrm{output}})\leq\min_{\pi=(\mu,\nu)\in\Pi}\Mp{\mathrm{Gap}(\pi)+8H^2\sqrt{S\log(\N(\Pi))C(\pi)\iota^2/n}}.$$
\end{theorem}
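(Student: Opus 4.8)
The plan is to obtain Theorem~\ref{thm:policy} by specializing the good-event gap decomposition of Lemma~\ref{lemma:gap decomp} (equivalently Proposition~\ref{prop:tpzs asp free}) and then converting the two expected-uncertainty terms from empirical to population quantities via the count-concentration Lemmas~\ref{lemma:warmup} and~\ref{lemma:warmup2}. First I would condition on the good event $\G$ of Lemma~\ref{lemma:concentration all} and on the event of Lemma~\ref{lemma:warmup2} that $n_h(s,a,b)\vee 1\ge nd_h(s,a,b)/\iota$ for all $(h,s,a,b)$ (and, for the second inequality, additionally on the warm-up event of Lemma~\ref{lemma:warmup}), absorbing the union bound into $\delta$. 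On $\G$, Lemma~\ref{lemma:gap decomp}, after taking the maximum over deterministic $\pi'=(\mu',\nu')\in\Pi^{\mathrm{det}}$ and the minimum over $\pi=(\mu,\nu)\in\Pi$, gives
$$\mathrm{Gap}(\pi^{\mathrm{output}})\le\min_{\pi\in\Pi}\Mp{\mathrm{Gap}(\pi)+\max_{\pi'\in\Pi^{\mathrm{det}}}\Sp{\E_{\mu,\nu'}\sum_{h=1}^H\widehat{b}_h+\E_{\mu',\nu}\sum_{h=1}^H\widehat{b}_h}},$$
where $\widehat{b}_h=2b_h+H\sum_{(a,b)\notin\K_h(s)}\mu_h^s(a)\nu_h^s(b)$ and the optimization-error term $2H\epsilon_{\mathrm{opt}}$ is suppressed exactly as in the other appendix theorems. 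It thus suffices to control, for each of the two symmetric expectations, the ``bonus part'' $2\,\E[\sum_h b_h]$ and the ``uncovered-mass part'' $H\,\E[\sum_h\sum_{(a,b)\notin\K_h}(\cdot)]$ in terms of the population unilateral coefficient $C(\mu,\nu)$.

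For the bonus part: on the event of Lemma~\ref{lemma:warmup2}, every $(a,b)\in\K_h(s)$ satisfies $n_h(s,a,b)\ge nd_h(s,a,b)/\iota$, so substituting into $b_h$, rewriting the inner sum as an expectation under the occupancy $d_h^{\mu,\nu'}$, using that $(\mu,\nu')$ is a unilateral deviation of $\pi$ (hence $d_h^{\mu,\nu'}\le C(\mu,\nu)\,d_h$), and applying Cauchy--Schwarz over the $S$ states reproduces Lemma~\ref{lemma:bound on sumb policy}: $\E_{\mu,\nu'}b_h\le 2H\sqrt{S\log(\N(\Pi))C(\mu,\nu)\iota^2/n}$, and symmetrically for $\E_{\mu',\nu}b_h$. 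The extra factor $\iota$ inside the root, relative to the dataset-dependent Lemma~\ref{lemma:bound on sumb dataset}, is exactly the price of the $n_h\mapsto nd_h/\iota$ step. Summing over $h\in[H]$, accounting for the factor $2$ in $\widehat{b}_h$, and adding the two symmetric terms produces the leading term $4H^2\sqrt{S\log(\N(\Pi))C(\pi)\iota^2/n}$.

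For the uncovered-mass part: on the same event, $(a,b)\notin\K_h(s)$ forces $nd_h(s,a,b)\le\iota$; since the deviating player's strategy $\nu'$ (resp.\ $\mu'$) is deterministic, at each state at most $A$ (resp.\ $B$) actions contribute, and $d_h^{\mu,\nu'}\le C(\mu,\nu)\,d_h$ once more, so summing over $s_h$ bounds the per-$h$ mass by $SAC(\mu,\nu)\iota/n$ — this is Lemma~\ref{lemma:bound on sump policy} with $t=1$ — and symmetrically by $SBC(\mu,\nu)\iota/n$. Collecting these over $h$ gives a lower-order term of order $HS(A+B)C(\pi)\iota/n$; combining with the previous paragraph and taking $\min_{\pi\in\Pi}$ proves the first inequality after a final union bound over the events invoked. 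For the second inequality I would additionally use Lemma~\ref{lemma:warmup}: once $n\ge 8\log(SABH/\delta)/p_{\mathrm{min}}$, with high probability $n_h(s,a,b)\ge nd_h(s,a,b)/2>0$ whenever $d_h(s,a,b)>0$, so $\K_h(s)$ covers the full support of $d_h$; since the only nontrivial case is $C(\pi)<\infty$, which forces every unilateral occupancy $d_h^{\mu,\nu'}$ to be absolutely continuous with respect to $d_h$, the uncovered-mass term vanishes identically, and absorbing the $\sqrt{\iota}/n$ residuals of the bonus bound into the constant leaves $8H^2\sqrt{S\log(\N(\Pi))C(\pi)\iota^2/n}$.

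The main obstacle is the bonus-aggregation step: the whole point of the strategy-wise bonus is that $\E_{\mu,\nu'}b_h$ collapses to an $S$-dependence rather than an $(AB)$-dependence, and making this quantitative needs the convexity / Cauchy--Schwarz structure already packaged in Lemmas~\ref{lemma:concentration all} and~\ref{lemma:bound on sumb policy} together with the careful conversion $n_h\mapsto nd_h/\iota$ that is responsible for the $\iota^2$ (rather than $\iota$) under the root; the rest is bookkeeping of high-probability events and constants.
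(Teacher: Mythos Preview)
Your proposal is correct and follows essentially the same approach as the paper: the first inequality is derived exactly as you outline, by combining the gap decomposition (Lemma~\ref{lemma:gap decomp}) with Lemma~\ref{lemma:bound on sumb policy} for the bonus term and Lemma~\ref{lemma:bound on sump policy} at $t=1$ for the uncovered-mass term. For the second inequality the paper takes a slightly shorter route than you do: rather than re-arguing that the uncovered mass vanishes under the warm-up event and then re-applying Lemma~\ref{lemma:bound on sumb policy}, it simply cites the already-proven dataset-dependent Theorem~\ref{thm:dataset} (with $\widehat{C}(\pi)$) and substitutes $\widehat{C}(\pi)\le 2C(\pi)$ from Lemma~\ref{lemma:warmup}; your unpacking of this step is equivalent and arrives at the same bound.
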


\begin{proof}
The first argument can be derived by Lemma \ref{lemma:bound on sumb policy} and Lemma \ref{lemma:bound on sump policy} with $t=1$. The second argument can be derived by Theorem \ref{thm:dataset} and Lemma \ref{lemma:warmup}. 
\end{proof}

\begin{corollary}\label{crl:full apx}
If $\Pi=\Pi^\mathrm{full}$, then with probability $1-\delta$ we have
$$\mathrm{Gap}(\pi^{\mathrm{output}})=\widetilde{O}(\sqrt{H^4S(A+B)C(\pi^*)/n}). $$
In addition, for turn-based two-player zero-sum Markov games, we can set $\Pi=\Pi^{\mathrm{det}}$ and we have
$$\mathrm{Gap}(\pi^{\mathrm{output}})=\widetilde{O}(\sqrt{H^4SC(\pi^*)/n}).$$
\end{corollary}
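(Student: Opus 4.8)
Both bounds will be obtained by specializing the assumption-free bound (Proposition~\ref{prop:tpzs asp free}), together with the expected-uncertainty estimates of Lemma~\ref{lemma:bound on sumb policy} and Lemma~\ref{lemma:bound on sump policy}, to the two strategy classes, and by choosing the strategy $\pi$ inside the outer minimum to be a Nash equilibrium. The key observation is that $\min_{\pi\in\Pi}\mathrm{Gap}(\pi)=0$ in both cases: for $\Pi=\Pi^{\mathrm{full}}$ a Nash equilibrium $\pi^*=(\mu^*,\nu^*)$ always exists and lies in $\Pi^{\mathrm{full}}$, so $\mathrm{Gap}(\pi^*)=0$; and a turn-based zero-sum game always admits a \emph{deterministic} Nash equilibrium, so $\pi^*\in\Pi^{\mathrm{det}}$ and again $\mathrm{Gap}(\pi^*)=0$. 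Thus in each case the bound reduces to the expected-uncertainty terms evaluated at $\pi=\pi^*$, and what remains is (i) to estimate $\log\N(\Pi)$ and (ii) to verify that the support-mismatch contribution inside $\widehat b_h$ does not leave a residual $1/n$-scale term.

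For $\Pi=\Pi^{\mathrm{full}}$, Lemma~\ref{lemma:covering number of Pi} with $m=2$ and action counts $A,B$ yields $\log\N(\Pi^{\mathrm{full}})=\widetilde{O}(A+B)$, so the bonus contributions bounded by Lemma~\ref{lemma:bound on sumb policy} are, after summing over $h\in[H]$, of order $\widetilde{O}(H^2\sqrt{S(A+B)C(\pi^*)/n})$. For the support-mismatch part $H\sum_{\a\notin\K_h(s_h)}(\cdot)$ inside $\widehat b_h$, I would invoke Lemma~\ref{lemma:bound on sump policy} with the free exponent chosen as $t=\tfrac{1}{2}$: this gives $H^2\sqrt{SAC(\pi^*)\iota/n}$ for the deviation direction $(\mu^*,\nu')$ and $H^2\sqrt{SBC(\pi^*)\iota/n}$ for $(\mu',\nu^*)$, and since $\sqrt{SA}+\sqrt{SB}\le\sqrt{2S(A+B)}$ the two together are again $\widetilde{O}(H^2\sqrt{S(A+B)C(\pi^*)/n})$. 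Plugging everything into Proposition~\ref{prop:tpzs asp free} and using $\mathrm{Gap}(\pi^*)=0$ gives the first claim. (Alternatively one may quote Theorem~\ref{thm:policy} directly and note that its additive term $2HC(\pi^*)S(A+B)\iota/n$ is dominated by the square-root term whenever the latter does not already exceed the trivial bound $\mathrm{Gap}\le H$, so it may be absorbed into $\widetilde{O}(\cdot)$.)

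For the turn-based case with $\Pi=\Pi^{\mathrm{det}}$, I would bound the covering number directly from Definition~\ref{def:N}: the marginal $\Pi_{h,j}(s)$ is exactly $D(\A_j)$, a finite set of $A_j$ simplex vertices, so $\abs{\C(\Pi_{h,j}(s),\epsilon_\mathrm{cover})}\le A_j$ and therefore $\N(\Pi^{\mathrm{det}})\le SH\cdot AB$, i.e. $\log\N(\Pi^{\mathrm{det}})=\widetilde{O}(1)$. Because $\mu^*$ and $\nu^*$ are now deterministic, the sharper second half of Lemma~\ref{lemma:bound on sump policy} applies, giving support-mismatch terms of order $H^2\sqrt{SC(\pi^*)\iota/n}$ with no $A$ or $B$ factor; combined with the bonus terms, which are now $\widetilde{O}(H^2\sqrt{SC(\pi^*)/n})$ since $\log\N(\Pi^{\mathrm{det}})=\widetilde{O}(1)$, this yields $\mathrm{Gap}(\pi^{\mathrm{output}})=\widetilde{O}(\sqrt{H^4SC(\pi^*)/n})$.

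The one subtle point I anticipate is step (ii): ensuring that the support-mismatch contribution does \emph{not} produce a separate lower-order term scaling like $S(A+B)C(\pi^*)/n$. This is precisely why one uses the flexible exponent $t=\tfrac{1}{2}$ in Lemma~\ref{lemma:bound on sump policy} (and, in the turn-based case, its deterministic refinement); the crude choice $t=1$ used in the general Theorem~\ref{thm:policy} is what introduces that term, which is only genuinely lower-order under an additional sample-size condition. The rest is routine: specializing $m=2$, identifying $\sum_{j}A_j=A+B$, and noting that $\log\N(\Pi^{\mathrm{full}})=\widetilde{\Theta}(A+B)$ so that $(A+B)$ and $\log\N(\Pi^{\mathrm{full}})$ are interchangeable inside $\widetilde{O}(\cdot)$.
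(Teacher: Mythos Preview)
Your proposal is correct and matches the paper's own proof: the paper likewise derives the first bound from Lemma~\ref{lemma:covering number of Pi} together with the proof of Theorem~\ref{thm:policy} redone with $t=1/2$ in Lemma~\ref{lemma:bound on sump policy}, and the second bound from Lemma~\ref{lemma:finite pi} (your direct $\N(\Pi^{\mathrm{det}})\le SH\cdot AB$ calculation is exactly the content of that lemma's proof), Lemma~\ref{lemma:bound on sumb policy}, and the deterministic refinement in Lemma~\ref{lemma:bound on sump policy} with $t=1/2$. Your identification of $t=1/2$ as the key choice that prevents the lower-order $S(A+B)C(\pi^*)/n$ term is precisely the point, and your explanation is in fact more explicit than the paper's terse one-line proof.
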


\begin{proof}
The first argument can be derived by Lemma \ref{lemma:covering number of Pi} and Theorem \ref{thm:policy} with $t=1/2$. The second argument can be derived by Lemma \ref{lemma:finite pi}, Lemma \ref{lemma:bound on sumb policy} and Lemma \ref{lemma:bound on sump policy} with $t=1/2$.
\end{proof}

\section{Proofs in Section \ref{sec:mpgs}}\label{apx:mpgs}

\begin{lemma}\label{lemma:best response}
For any strategy $\pi\in\Pi$, $h\in[H]$ and $s_h\in\S$, we have
$$\ov_{h,j}^{*,\pi_{-j}}(s_h)=\max_{\pi_j}\ov_{h,j}^{\pi}(s_h). $$
\end{lemma}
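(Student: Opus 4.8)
The plan is to prove the identity by backward induction on $h$, decomposing it into the two inequalities $\ov_{h,j}^{*,\pi_{-j}}(s_h)\le\max_{\pi_j}\ov_{h,j}^{\pi}(s_h)$ and $\ov_{h,j}^{*,\pi_{-j}}(s_h)\ge\max_{\pi_j}\ov_{h,j}^{\pi}(s_h)$, where the maximum ranges over Markov strategies $\pi_j$ of player $j$ and $\ov_{h,j}^{*,\pi_{-j}}$, $\ov_{h,j}^{\pi}$ are produced by Algorithm~\ref{alg:best response estimation} and Algorithm~\ref{alg:value estimation} respectively. The base case $h=H+1$ is immediate, since both quantities are initialized to $0$ and the maximum of the constant $0$ is $0$. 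Throughout the induction I will use two monotonicity facts: (i) for fixed $s,\a,h$, the map $V\mapsto\widehat{r}_{h,j}(s,\a)+\inner{\widehat{P}_h(s,\a),V}+H\1\{\a\notin\K_h(s)\}$ is non-decreasing coordinatewise, because $\widehat{P}_h(s,\a)$ has non-negative entries; and (ii) $\proj_{[0,H-h+1]}$ is non-decreasing.

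For the direction ``$\ge$'' (that $\ov_{h,j}^{*,\pi_{-j}}(s)$ upper bounds every $\ov_{h,j}^{\pi_j,\pi_{-j}}(s)$), fix an arbitrary Markov strategy $\pi_j$. By the inductive hypothesis $\ov_{h+1,j}^{\pi_j,\pi_{-j}}(s')\le\ov_{h+1,j}^{*,\pi_{-j}}(s')$ for every $s'$, so facts (i)--(ii) give $\ov_{h,j}^{\pi_j,\pi_{-j}}(s)\le\proj_{[0,H-h+1]}\Bp{g(\pi_{h,j}(\cdot\mid s))}$, where
\[ g(\mu):=\E_{a_j\sim\mu,\,\a_{-j}\sim\pi_{h,-j}(\cdot\mid s)}\Mp{\widehat{r}_{h,j}(s,\a)+\inner{\widehat{P}_h(s,\a),\ov_{h+1,j}^{*,\pi_{-j}}}+H\1\{\a\notin\K_h(s)\}}+b_h(s,\mu,\pi_{h,-j}^s). \]
The first summand is linear in $\mu$ and, by Lemma~\ref{lemma:mpgs convex}, the bonus $b_h(s,\mu,\pi_{h,-j}^s)$ is convex in $\mu$; hence $g$ is convex on $\Delta(\A_j)$ and attains its maximum over this polytope at a vertex, so $g(\mu)\le\max_{a_j\in\A_j}g(\delta_{a_j})=\max_{a_j\in\A_j}\ov_{h,j}(s,a_j)$ in the notation of Algorithm~\ref{alg:best response estimation}. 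Applying the monotone projection once more yields $\ov_{h,j}^{\pi_j,\pi_{-j}}(s)\le\proj_{[0,H-h+1]}\Bp{\max_{a_j}\ov_{h,j}(s,a_j)}=\ov_{h,j}^{*,\pi_{-j}}(s)$, and taking the maximum over $\pi_j$ proves this direction.

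For the direction ``$\le$'' I would exhibit a strategy attaining the value. Let $\widehat{\pi}_j$ be the greedy deterministic Markov strategy with $\widehat{\pi}_{t,j}(\cdot\mid s)=\delta_{a_t^\star(s)}$, $a_t^\star(s)\in\argmax_{a_j\in\A_j}\ov_{t,j}(s,a_j)$ for all $t,s$. A short backward induction, evaluating $\widehat{\pi}_j$ through Algorithm~\ref{alg:value estimation} and using the inductive hypothesis in the form $\ov_{h+1,j}^{\widehat{\pi}_j,\pi_{-j}}=\ov_{h+1,j}^{*,\pi_{-j}}$, shows that $\ov_{h,j}^{\widehat{\pi}_j,\pi_{-j}}(s)=\proj_{[0,H-h+1]}\Bp{\ov_{h,j}(s,a_h^\star(s))}=\proj_{[0,H-h+1]}\Bp{\max_{a_j}\ov_{h,j}(s,a_j)}=\ov_{h,j}^{*,\pi_{-j}}(s)$, hence $\max_{\pi_j}\ov_{h,j}^{\pi_j,\pi_{-j}}(s)\ge\ov_{h,j}^{*,\pi_{-j}}(s)$. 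Combining the two directions closes the induction.

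The only step beyond routine monotonicity bookkeeping and the standard greedy-strategy construction is the convexity argument in the ``$\ge$'' direction: even though player $j$ may in principle randomize at state $s$, the strategy-wise bonus $b_h$ is \emph{convex} in $\pi_{h,j}^s$ (Lemma~\ref{lemma:mpgs convex}), so after freezing the continuation value to the best-response estimate $\ov_{h+1,j}^{*,\pi_{-j}}$ the per-state objective $g$ is convex and is therefore maximized by a deterministic action --- which is precisely $\max_{a_j}\ov_{h,j}(s,a_j)$, the quantity computed in Algorithm~\ref{alg:best response estimation}. This is where the convexity of the strategy-wise bonus is essential; without it, restricting the best response to deterministic strategies would be lossy.
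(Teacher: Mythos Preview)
Your proof is correct and follows essentially the same approach as the paper: backward induction on $h$, with the crucial step being the convexity of the strategy-wise bonus $b_h(s,\cdot,\pi_{h,-j}^s)$ (Lemma~\ref{lemma:mpgs convex}) to conclude that the maximum over $\Delta(\A_j)$ is attained at a deterministic action. Your presentation is slightly more explicit---you split the equality into two inequalities, handle the monotone projection carefully, and exhibit the greedy attaining strategy---whereas the paper computes the unprojected quantities directly and then appeals to the algorithm definitions, but the argument is the same.
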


\begin{proof}
We prove this argument by induction. It holds trivially for $H+1$ as $\ov_{H+1,j}^{*,\pi_{-j}}(s)=\max_{\pi_j}\ov_{H+1,j}^{\pi}(s)=0$ for any $s\in\S$. Suppose the argument holds for $h+1$ and now we consider $h$. 

Consider function
\begin{align*}
    f(\pi'^s_{h,j})=&\E_{a_j\sim\pi'^s_{h,j},\a_{-j}\sim\pi^s_{h,-j}}\widehat{r}_{h,j}(s,a_j,\a_{-j})+\E_{a_j\sim\pi'^s_{h,j},\a_{-j}\sim\pi^s_{h,-j}}\widehat{P}_h(s,a_j,\a_{-j})\cdot \ov_{h+1,j}^{*,\pi_{-j}}\\
    &+b_h(s,\pi'^s_{h,j},\pi_{h,-j}^s)+H\sum_{\a_{-j}:(a_j,\a_{-j})\notin\K(s)}\pi_{h,-j}^s(\a_{-j}). 
\end{align*}
Lemma \ref{lemma:mpgs convex} shows that $b_h(s,\pi'^s_{h,j},\pi_{h,-j}^s)$ is convex with respect to $\pi'^s_{h,j}$, while all the other terms are linear with respect to $\pi'^s_{h,j}$. As a result, $f(\pi'^s_{h,j})$ is a convex function and thus we have
$$\max_{\pi'^s_{h,j}\in\Delta(\A_j)}f(\pi'^s_{h,j})=\max_{\pi'^s_{h,j}\in D(\A_j)}f(\pi'^s_{h,j}). $$
Then we have
\begin{align*}
    &\max_{a_j\in\A_j}\ov_{h,j}(s,a_j)\\
    =&\max_{\pi'^s_{h,j}\in D(\A_j)}f(\pi'^s_{h,j})\\
    =&\max_{\pi'^s_{h,j}\in\Delta(\A_j)}f(\pi'^s_{h,j})\\
    =&\max_{\pi'^s_{h,j}\in\Delta(\A_j)}\E_{a_j\sim\pi'^s_{h,j},\a_{-j}\sim\pi^s_{h,-j}}\widehat{r}_{h,j}(s,a_j,\a_{-j})+\E_{a_j\sim\pi'^s_{h,j},\a_{-j}\sim\pi^s_{h,-j}}\widehat{P}_h(s,a_j,\a_{-j})\cdot \ov_{h+1,j}^{*,\pi_{-j}}\\
    &+b_h(s,\pi'^s_{h,j},\pi_{h,-j}^s)+H\sum_{\a_{-j}:(a_j,\a_{-j})\notin\K(s)}\pi_{h,-j}^s(\a_{-j})\\
    =&\max_{\pi'^s_{h,j}\in\Delta(\A_j)}\E_{a_j\sim\pi'^s_{h,j},\a_{-j}\sim\pi^s_{h,-j}}\widehat{r}_{h,j}(s,a_j,\a_{-j})+\max_{\pi_j}\E_{a_j\sim\pi'^s_{h,j},\a_{-j}\sim\pi^s_{h,-j}}\widehat{P}_h(s,a_j,\a_{-j})\cdot \ov_{h+1,j}^{\pi}\\
    &+b_h(s,\pi'^s_{h,j},\pi_{h,-j}^s)+H\sum_{\a_{-j}:(a_j,\a_{-j})\notin\K(s)}\pi_{h,-j}^s(\a_{-j})\tag{Induction hypothesis}\\
    =&\max_{\pi_j}\E_{a_j\sim\pi'^s_{h,j},\a_{-j}\sim\pi^s_{h,-j}}\widehat{r}_{h,j}(s,a_j,\a_{-j})+\E_{a_j\sim\pi'^s_{h,j},\a_{-j}\sim\pi^s_{h,-j}}\widehat{P}_h(s,a_j,\a_{-j})\cdot \ov_{h+1,j}^{\pi}\\
    &+b_h(s,\pi'^s_{h,j},\pi_{h,-j}^s)+H\sum_{\a_{-j}:(a_j,\a_{-j})\notin\K(s)}\pi_{h,-j}^s(\a_{-j}). \\
\end{align*}
So we have $\ov_{h,j}^{*,\pi_{-j}}(s_h)=\max_{\pi_j}\ov_{h,j}^{\pi}(s_h)$. (See Algorithm \ref{alg:value estimation} and Algorithm \ref{alg:best response estimation} for the definition of both quantities)
\end{proof}

\begin{lemma}\label{lemma:marl concentration single}
Fix $\pi'\in\Pi, j\in[m], h\in[H]$ and $s\in\S$, with probability $1-\delta$ we have
\begin{align*}
    \abs{\sum_{\a\in\K_h(s)}\pi_{h}'(\a|s)\Sp{r_{h,j}(s,\a)+\inner{P_h(s,\a),\uv^{\pi'}_{h+1,j}}-\widehat{r}_{h,j}(s,\a)-\inner{\widehat{P}_{h,j}(s,\a),\uv^{\pi'}_{h+1,j}}}}\\
    \leq H\sqrt{2\sum_{\a\in\K_h(s)}\frac{\pi'_h(\a|s)^2}{n_h(s,\a)}\log(4/\delta)},
\end{align*}
and
\begin{align*}
    \abs{\sum_{\a\in\K_h(s)}\pi_{h}'(\a|s)\Sp{r_{h,j}(s,\a)+\inner{P_h(s,\a),\ov^{\pi'}_{h+1,j}}-\widehat{r}_{h,j}(s,\a)-\inner{\widehat{P}_h(s,\a),\ov^{\pi'}_{h+1,j}}}}\\
    \leq H\sqrt{2\sum_{\a\in\K_h(s)}\frac{\pi'_h(\a|s)^2}{n_h(s,\a)}\log(4/\delta)}.
\end{align*}

\end{lemma}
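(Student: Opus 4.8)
The plan is to follow the proof of Lemma~\ref{lemma:concentration single}: rewrite the left-hand side as a sum of independent, bounded, mean-zero increments and invoke Hoeffding's inequality. The one delicate point is the conditioning that produces the independence. Since $\pi'\in\Pi$ is fixed and, by Algorithm~\ref{alg:value estimation}, $\uv^{\pi'}_{h+1,j}$ and $\ov^{\pi'}_{h+1,j}$ depend only on the tuples at timesteps $h+1,\dots,H$, Assumption~\ref{asp:dataset} (mutual independence of all tuples) makes $\uv^{\pi'}_{h+1,j},\ov^{\pi'}_{h+1,j}$ independent of the timestep-$h$ tuples $\{(s_h^k,\a_h^k,\r_h^k,s_{h+1}^k)\}_{k\in[n]}$. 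I would therefore condition on all data outside timestep $h$ together with the list $\{(s_h^k,\a_h^k)\}_{k\in[n]}$; this freezes $\uv^{\pi'}_{h+1,j},\ov^{\pi'}_{h+1,j}$, the counts $n_h(s,\a)$, and the random support set $\K_h(s)$, while the compliance part of Assumption~\ref{asp:dataset} guarantees that $\{r_{h,j}^k\}_k$ and $\{s_{h+1}^k\}_k$ remain conditionally independent across $k$ with $\E[r_{h,j}^k]=r_{h,j}(s_h^k,\a_h^k)$ and $s_{h+1}^k\sim P_h(\cdot\mid s_h^k,\a_h^k)$.

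Next I would expand the empirical averages sample-by-sample. Letting $k_h^i(s,\a)$ denote the index of the $i$-th visit of $(s,\a)$ at timestep $h$, the quantity inside the absolute value equals $\sum_{\a\in\K_h(s)}\sum_{i=1}^{n_h(s,\a)}\frac{\pi'_h(\a|s)}{n_h(s,\a)}\,Z_{\a,i}$, where each increment $Z_{\a,i}$ is the reward deviation $r_{h,j}(s,\a)-r_{h,j}^{k}$ plus the value deviation $\inner{P_h(s,\a),\uv^{\pi'}_{h+1,j}}-\uv^{\pi'}_{h+1,j}(s_{h+1}^{k})$ with $k=k_h^i(s,\a)$. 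Conditionally, the $Z_{\a,i}$ are independent and mean zero; since $r_{h,j}^k\in[0,1]$ and $\uv^{\pi'}_{h+1,j}\in[0,H-h]\subseteq[0,H]$, the contribution $\frac{\pi'_h(\a|s)}{n_h(s,\a)}Z_{\a,i}$ lies in an interval of length at most $(1+H)\frac{\pi'_h(\a|s)}{n_h(s,\a)}\le 2H\frac{\pi'_h(\a|s)}{n_h(s,\a)}$, so the sum of squared lengths is at most $4H^2\sum_{\a\in\K_h(s)}\frac{\pi'_h(\a|s)^2}{n_h(s,\a)}$ because $\sum_{i=1}^{n_h(s,\a)}n_h(s,\a)^{-2}=n_h(s,\a)^{-1}$. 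Hoeffding's inequality with failure probability $\delta/2$ then gives, conditionally and hence unconditionally (the right-hand side being measurable with respect to the conditioning), that the left-hand side is at most
\[
\sqrt{2H^2\sum_{\a\in\K_h(s)}\frac{\pi'_h(\a|s)^2}{n_h(s,\a)}\log(4/\delta)}\;=\;H\sqrt{2\sum_{\a\in\K_h(s)}\frac{\pi'_h(\a|s)^2}{n_h(s,\a)}\log(4/\delta)}.
\]
The $\ov^{\pi'}_{h+1,j}$ inequality is identical ($\ov^{\pi'}_{h+1,j}\in[0,H]$ as well), and the $\log(4/\delta)$ already pays for a union bound over the two displayed statements.

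The main — and essentially only — obstacle is making the conditioning rigorous: one must justify that after conditioning on the timestep-$h$ state--action list and on all other-timestep data, the reward samples and next states remain conditionally independent with the correct conditional means, so that Hoeffding applies with a \emph{deterministic} variance proxy and over a \emph{deterministic} index set $\K_h(s)$. This is exactly where compliance and tuple independence (Assumption~\ref{asp:dataset}) are used, and the fact that $\pi'$ is data-independent is what lets us avoid any covering/union-bound argument at this stage; the covering number $\N(\Pi)$ enters only later, when this single-strategy bound is lifted uniformly over $\Pi$.
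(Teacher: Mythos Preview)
Your proposal is correct and follows essentially the same route as the paper: expand the left-hand side sample-by-sample, use that $\uv^{\pi'}_{h+1,j}$ (resp.\ $\ov^{\pi'}_{h+1,j}$) is independent of the timestep-$h$ data, and apply Hoeffding's inequality. The only cosmetic difference is that the paper applies Hoeffding separately to the reward and value deviations and combines via the triangle inequality, whereas you merge them into a single increment of range $\le (1+H)\le 2H$ before one Hoeffding application; both yield the same bound.
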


\begin{proof}
We use $k_h^i(s,a,b)$ to denote the index of $(s,a,b)$ appears in the dataset at timestep $h$ for $i$th time. With probability $1-\delta$, we have
\begin{align*}
    &\abs{\sum_{(\a)\in\K_h(s)}\pi_{h}'(\a|s)\Sp{r_{h,j}(s,\a)+\inner{P_h(s,\a),\uv^{\pi'}_{h+1,j}}-\widehat{r}_{h,j}(s,\a)-\inner{\widehat{P}_h(s,\a),\uv^{\pi'}_{h+1,j}}}}\\
    =&\left|\sum_{\a\in\K_h(s)}\sum_{i=1}^{n_h(s,\a)}\frac{\pi_{h}'(\a|s)}{n_h(s,\a)}\Sp{r_{h,j}^{k_h^i(s,\a)}-r_{h,j}(s,\a)}\right.\\&+\left.\sum_{(\a)\in\K_h(s)}\sum_{i=1}^{n_h(s,\a)}\frac{\pi_{h}'(\a|s)}{n_h(s,\a)}\Sp{\uv^{\pi'}_{h+1,j}(s_{h+1}^{k_h^i(s,\a)})-\inner{P_h(s,\a),\uv^{\pi'}_{h+1,j}}}\right|\\
    \leq& \sqrt{\frac{1}{2}\sum_{\a\in\K_h(s)}\frac{\pi_{h}'(\a|s)^2}{n_h(s,\a)}\log(2/\delta)}+H\sqrt{\frac{1}{2}\sum_{\a\in\K_h(s)}\frac{\pi_{h}'(\a|s)^2}{n_h(s,\a)}\log(2/\delta)}\\
    \leq& H\sqrt{2\sum_{\a\in\K_h(s)}\frac{\pi_{h}'(\a|s)^2}{n_h(s,\a)}\log(2/\delta)},
\end{align*}
where the first inequality is from Hoeffding's inequality and the fact that $\uv_{h+1,j}$ has no dependence on the dataset at timestep $h$. The second argument holds similarly. Rescaling $\delta$ to $\delta/2$ and with an union bound we can prove the lemma. 
\end{proof}

\begin{comment}
\begin{lemma}
For any $\pi$ and $\pi'$ such that $\|\pi_{h,j}(\cdot|s)-\pi'_{h,j}(\cdot|s)\|\leq\epsilon$ for all $h\in[H]$, $j\in[m]$ and $s\in\S$, we have
$$\abs{\uv_{h,j}^\pi(s_h)-\uv_{h,j}^{\pi'}(s_h)}\leq\epsilon,\abs{\ov_{h,j}^\pi(s_h)-\ov_{h,j}^{\pi'}(s_h)}\leq\epsilon,$$
for all $h\in[H]$, $j\in[m]$ and $s\in\S$. 
\end{lemma}

\begin{proof}
We will prove the argument for $\uv$ and it holds similarly for $\ov$. Recall that
$$\uv_{h,j}^\pi(s)=\E_{\pi_h}\widehat{r}_{h,j}(s,a)+\E_{\pi_h}\widehat{P}_h(s,a)\cdot \uv_{h+1,j}^\pi-b_h(s,\pi).$$
By Lemma \ref{lemma:net error}, we have
$$\abs{\E_{\pi_h}\Mp{\widehat{r}_{h,j}(s,a)+\widehat{P}_h(s,a)\cdot \uv_{h+1,j}^\pi}-\E_{\pi'_h}\Mp{\widehat{r}_{h,j}(s,a)+\widehat{P}_h(s,a)\cdot \uv_{h+1,j}^\pi}}\leq m\epsilon H.$$
By Lemma \ref{lemma:net error 2}, we have
$$\abs{b_h(s,\pi)-b_h(s,\pi')}\leq H\sqrt{2m\epsilon (\sum_{j\in[m]}A_j)\iota}$$
\end{proof}
\end{comment}

\begin{lemma}\label{lemma:marl concentration all}
With probability $1-\delta$, for all $\pi\in\Pi,j\in[m],h\in[H],s\in\S$, we have
$$\abs{\sum_{\a\in\K_h(s)}\pi_{h}(\a|s)\Sp{r_{h,j}(s,\a)+\inner{P_h(s,\a),\uv^\pi_{h+1,j}}-\widehat{r}_{h,j}(s,\a)-\inner{\widehat{P}_h(s,\a),\uv^\pi_{h+1,j}}}}\leq b_h(s,\pi_h^s),$$
$$\abs{\sum_{\a\in\K_h(s)}\pi_{h}(\a|s)\Sp{r_{h,j}(s,\a)+\inner{P_h(s,\a),\ov^\pi_{h+1,j}}-\widehat{r}_{h,j}(s,\a)-\inner{\widehat{P}_h(s,\a),\ov^\pi_{h+1,j}}}}\leq b_h(s,\pi_h^s).$$
Denote this event as $\G_{\mathrm{marl}}$.
\end{lemma}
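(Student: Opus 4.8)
The plan is to reduce this uniform statement to the pointwise bound of Lemma~\ref{lemma:marl concentration single} by a union bound over a finite cover of \emph{both} the strategy class $\Pi$ \emph{and} the set of value vectors that can appear at timestep $h+1$, and then to absorb the discretization error into the additive $\sqrt{\iota}/n$ term of $b_h(s,\pi_h^s)$. The one genuinely new ingredient compared with the zero-sum analysis of Lemma~\ref{lemma:concentration all}, where a single fixed $\uv_{h+1}$ sufficed, is that $\uv^\pi_{h+1,j}$ and $\ov^\pi_{h+1,j}$ depend on $\pi$; since every such vector lies in $[0,H]^S$ and is computed only from the data at timesteps $\ge h+1$ (hence, by Assumption~\ref{asp:dataset}, independent of the timestep-$h$ tuples), I would net $[0,H]^S$ in $\|\cdot\|_\infty$ by a grid $\mathcal V$ of size $|\mathcal V|\le(3H/\epsilon_{\mathrm{net}})^S$ chosen independently of the data. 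This $S$-dimensional net is exactly what produces the extra $S$ factor inside the square root of $b_h$.

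First I would record the fixed-vector version of Lemma~\ref{lemma:marl concentration single}: its proof is just Hoeffding's inequality, so for fixed $j\in[m]$, fixed $(h,s)$, a fixed product distribution $q=\prod_{i\in[m]}q_i\in\Delta(\A)$, and a fixed $V\in[0,H]^S$ independent of the timestep-$h$ data, with probability $1-\delta'$,
\[
\abs{\sum_{\a\in\K_h(s)}q(\a)\Sp{r_{h,j}(s,\a)+\inner{P_h(s,\a),V}-\widehat r_{h,j}(s,\a)-\inner{\widehat P_h(s,\a),V}}}\le H\sqrt{2\sum_{\a\in\K_h(s)}\frac{q(\a)^2}{n_h(s,\a)}\log(2/\delta')}.
\]
Then I would use, for each $(h,s)$, the covers $\C(\Pi_{h,j}(s),\epsilon_{\mathrm{cover}})$ of Definition~\ref{def:N}, whose product over $j$ covers $\{\prod_j\pi_{h,j}^s:\pi\in\Pi\}$ marginal-wise and has total cardinality $\N(\Pi)$ when summed over $(s,h)$, and take a union bound of the display above over $j$, $h$, $s$, over each product-cover point, over each $V\in\mathcal V$, and over the two ($\uv$- and $\ov$-)inequalities; this spends failure probability $2mH\,\N(\Pi)\,(3H/\epsilon_{\mathrm{net}})^S\,\delta'$, which I set to $\delta$.

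To pass from the cover to arbitrary $\pi\in\Pi$, fix $\pi,j,h,s$, put $q=\pi_h^s$ and $V=\uv^\pi_{h+1,j}$, and choose the nearest product-cover point $q'$ (each marginal within $\epsilon_{\mathrm{cover}}$ in $\|\cdot\|_1$) and $V'\in\mathcal V$ with $\|V-V'\|_\infty\le\epsilon_{\mathrm{net}}$. As in the proof of Lemma~\ref{lemma:concentration all}, $\|q-q'\|_1\le m\epsilon_{\mathrm{cover}}$, so replacing $q$ by $q'$ perturbs the linear functional by at most $2Hm\epsilon_{\mathrm{cover}}$ (each bracketed term lies in $[-H,H]$); replacing $V$ by $V'$ perturbs $\inner{P_h(s,\a),\cdot}$ and $\inner{\widehat P_h(s,\a),\cdot}$ by at most $\epsilon_{\mathrm{net}}$ each; and $|\sqrt{\sum_\a q(\a)^2/n_h(s,\a)}-\sqrt{\sum_\a q'(\a)^2/n_h(s,\a)}|\le\sqrt{2m\epsilon_{\mathrm{cover}}}$ by the same net-error estimate used there. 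With $\epsilon_{\mathrm{cover}}$ as in the main text and, say, $\epsilon_{\mathrm{net}}=1/n$, one checks $\log(2/\delta')=\log\N(\Pi)+S\log(3Hn)+\log(4mH/\delta)\le S\log(\N(\Pi))\iota$, while the residual terms $H\sqrt{4m\epsilon_{\mathrm{cover}}\log(2/\delta')}+2Hm\epsilon_{\mathrm{cover}}+2\epsilon_{\mathrm{net}}$ are each $O(\sqrt{\iota}/n)$; collecting constants yields the bound $b_h(s,\pi_h^s)$, and the $\ov$ inequality is proved identically. Together these constitute $\G_{\mathrm{marl}}$.

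The hard part is really just the extra $\R^S$ covering: one has to check that the net can be taken data-independent (true because all relevant value vectors live in $[0,H]^S$), that the $(3H/\epsilon_{\mathrm{net}})^S$ blow-up is absorbed as the multiplicative $S$ inside the square root of $b_h$ rather than degrading the $\N(\Pi)$ dependence, and that all three discretization residuals stay of lower order $O(\sqrt{\iota}/n)$. The perturbation inequalities themselves are routine once the net-error lemmas used for the zero-sum case are in hand.
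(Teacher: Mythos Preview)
Your proposal is correct and follows essentially the same route as the paper: cover the product strategies via $\C(\Pi_{h,j}(s))$, cover the value vectors by an $\ell_\infty$-net of $[0,H]^S$ (the key step that yields the extra $S$ in $b_h$), union-bound Lemma~\ref{lemma:marl concentration single} over both covers, and absorb the discretization residuals into the $\sqrt{\iota}/n$ slack using the same net-error lemmas. The only cosmetic differences are that the paper uses a single $\epsilon_{\mathrm{cover}}$ for both nets (with $|\mathcal V|\le(1+HS/\epsilon_{\mathrm{cover}})^S$) rather than a separate $\epsilon_{\mathrm{net}}=1/n$, and your union-bound count $2mH\,\N(\Pi)\cdots$ has a spurious $H$ since $\N(\Pi)$ already sums over $h$; neither affects the argument.
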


\begin{proof}

We prove the argument for $\uv_{h+1,j}^\pi$ and the argument for $\ov_{h+1,j}^\pi$ holds similarly. Suppose $\mathcal{V}$ is a $\epsilon_\mathrm{cover}$-covering of $[0,H]^S$ with respect to L-$\infty$ norm and $|\mathcal{V}|\leq(1+HS/\epsilon_\mathrm{cover})^S$.  First, using a union bound for all $j\in[m],h\in[H],s\in\S,\pi'^s_{h,j}\in\C(\Pi^{\mathrm{prior}}_{h,j}(s)),V_{h+1}\in \mathcal{V}$ on Lemma \ref{lemma:marl concentration single}, with probability $1-\delta$ we have
\begin{align*}
    &\abs{\sum_{\a\in\K_h(s)}\pi'_h(\a|s)\Sp{r_{h,j}(s,\a)+\inner{P_h(s,\a),V_{h+1}}-\widehat{r}_{h,j}(s,\a)-\inner{\widehat{P}_h(s,\a),V_{h+1}}}}\\
    \leq& H\sqrt{4\sum_{\a\in\K_h(s)}\frac{\pi'_h(\a|s)^2}{n_h(s,\a)}\log(4m\sum_{s\in\S,h\in[H]}\prod_{j\in[m]}|\C(\Pi_{h,j}(s))|(1+HS/\epsilon_\mathrm{cover})^S/\delta)}\\
    \leq& H\sqrt{8\sum_{\a\in\K_h(s)}\frac{\pi'_h(\a|s)^2}{n_h(s,\a)}S\log(8m\N(\Pi)SH/\epsilon_\mathrm{cover}\delta)}.
\end{align*}

Note that $r_{h,j}(s,\a)+\inner{P_h(s,\a),\uv^\pi_{h+1,j}}-\widehat{r}_{h,j}(s,\a)-\inner{\widehat{P}_h(s,\a),\uv^\pi_{h+1,j}}$ is bounded in $[-H,H]$ as $r_{h,j}(s,\a)\in[0,1]$ and $\uv^\pi_{h+1,j}\in[0,H-h]$. There exists $V_{h+1}\in\mathcal{V}$ such that $\|\uv^\pi_{h+1,j}-V_{h+1}\|_\infty\leq\epsilon_\mathrm{cover}$, which implies
\begin{align*}
    &\abs{\sum_{\a\in\K_h(s)}\pi'_h(\a|s)\Sp{r_h(s,\a)+\inner{P_h(s,\a),V_{h+1}}-\widehat{r}_h(s,\a)-\inner{\widehat{P}_h(s,\a),V_{h+1}}}}\\
    -&\abs{\sum_{\a\in\K_h(s)}\pi'_h(\a|s)\Sp{r_h(s,\a)+\inner{P_h(s,\a),\uv^\pi_{h+1,j}}-\widehat{r}_h(s,\a)-\inner{\widehat{P}_h(s,\a),\uv^\pi_{h+1,j}}}}\\
    \leq&2\epsilon_\mathrm{cover}. 
\end{align*}

For any $\pi^s_{h,j}\in\Pi_{h,j}(s)$, there exists $\pi'^s_{h,j}\in\C(\Pi_{h,j}(s))$ such that $\|\pi_{h,j}(\cdot|s)-\pi'_{h,j}(\cdot|s)\|_1\leq\epsilon_\mathrm{cover}$ for all $j\in[m]$ and $s\in\S$. So with Lemma \ref{lemma:net error}, we have
\begin{align*}
    &\left|\sum_{\a\in\K_h(s)}\pi'_h(\a|s)\Sp{r_{h,j}(s,\a)+\inner{P_h(s,\a),\uv^\pi_{h+1,j}}-\widehat{r}_{h,j}(s,\a)-\inner{\widehat{P}_h(s,\a),\uv^\pi_{h+1,j}}}\right.\\
    &-\left.\sum_{\a\in\K_h(s)}\pi_h(\a|s)\Sp{r_h(s,\a)+\inner{P_h(s,\a),\uv^\pi_{h+1,j}}-\widehat{r}_{h,j}(s,\a)-\inner{\widehat{P}_h(s,\a),\uv^\pi_{h+1,j}}}\right|\\
    \leq& m\epsilon_\mathrm{cover} H.
\end{align*}

By Lemma \ref{lemma:net error 2}, we have
\begin{align*}
    \abs{\sqrt{\sum_{\a\in\K_h(s)}\frac{\pi'_h(\a|s)^2}{n_h(s,\a)}}-\sqrt{\sum_{\a\in\K_h(s)}\frac{\pi_h(\a|s)^2}{n_h(s,\a)}}}\leq \sqrt{2m\epsilon_\mathrm{cover}}.
\end{align*}

Combining all these parts together and then with probability $1-\delta$, we have
\begin{align*}
    &\abs{\sum_{\a\in\K_h(s)}\pi_h(\a|s)\Sp{r_{h,j}(s,\a)+\inner{P_h(s,\a),\uv^\pi_{h+1,j}}-\widehat{r}_{h,j}(s,\a)-\inner{\widehat{P}_h(s,\a),\uv^\pi_{h+1,j}}}}\\
    \leq& H\sqrt{8\sum_{\a\in\K_h(s)}\frac{\pi_h(\a|s)^2}{n_h(s,\a)}S\log(8m\N(\Pi,\epsilon_{\mathrm{cover}})SH\delta)}+2\epsilon_\mathrm{cover}+m\epsilon_\mathrm{cover} H\\
    &+H\sqrt{8m\epsilon_\mathrm{cover} \log(8m\N(\Pi,\epsilon_\mathrm{cover})SH/\delta)}.
\end{align*}
By Lemma \ref{L-1 covering number}, we have
\begin{align*}
    \N(\Pi,\epsilon_{\mathrm{cover}})=&\frac{1}{SH}\sum_{s\in\S,h\in[H]}\prod_{j\in[m]}|\C (\Pi_{h,j}(s),\epsilon_{\mathrm{cover}})|\\
    \leq&\prod_{j\in[m]}(3A_j/\epsilon_{\mathrm{cover}})^{A_j}\\
    \leq&(3(\sum_{j\in[m]}A_j)/\epsilon_{\mathrm{cover}})^{\sum_{j\in[m]}A_j}. 
\end{align*}
Set $\epsilon_\mathrm{cover}=\frac{1}{\sum_{j\in[m]}A_j mH^2n^2}$ and with some calculations we can get
\begin{align*}
    &\abs{\sum_{\a\in\K_h(s)}\pi_h(\a|s)\Sp{r_h(s,\a)+\inner{P_h(s,\a),\uv^\pi_{h+1,j}}-\widehat{r}_{h,j}(s,\a)-\inner{\widehat{P}_h(s,\a),\uv^\pi_{h+1,j}}}}\\
    \leq& H\sqrt{8\sum_{\a\in\K_h(s)}\frac{\pi_h(\a|s)^2}{n_h(s,\a)}S\log(8m\N(\Pi)SHn/\delta)}+\sqrt{32\log(16\prod_{j\in[m]}A_jmSHn/\delta)}/n\\
    \leq& H\sqrt{\sum_{\a\in\K_h(s)}\frac{\pi_h(a|s)^2}{n_h(s,\a)}S\log(\N(\Pi))\iota}+\sqrt{\iota}/n. 
\end{align*}
\end{proof}

\begin{lemma}\label{lemma:marl pessimism}
Under event $\G_{\mathrm{marl}}$, for all $j\in[m]$, $h\in[H]$, $\pi\in\Pi$ and $s\in\S$, we have
$$\uv_{h,j}^\pi(s)\leq V_{h,j}^\pi(s)\leq\ov_{h,j}^\pi(s).$$
\end{lemma}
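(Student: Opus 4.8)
The plan is to prove both inequalities simultaneously by backward induction on $h$, running from $h=H+1$ down to $h=1$. The base case $h=H+1$ is immediate: by initialization $\uv^\pi_{H+1,j}(s)=\ov^\pi_{H+1,j}(s)=0=V^\pi_{H+1,j}(s)$ for all $s,j$. For the inductive step, fix $j$ and $s$ and assume $\uv^\pi_{h+1,j}(s')\le V^\pi_{h+1,j}(s')\le\ov^\pi_{h+1,j}(s')$ for all $s'$. Throughout, I would use the elementary fact that $V^\pi_{h,j}(s)\in[0,H-h+1]$, so that $\proj_{[0,H-h+1]}$ fixes $V^\pi_{h,j}(s)$ and, being monotone, it suffices to prove the inequalities for the pre-projection quantities $\E_{\a\sim\pi^s_h}\uq^\pi_{h,j}(s,\a)-b_h(s,\pi^s_h)$ and $\E_{\a\sim\pi^s_h}\oq^\pi_{h,j}(s,\a)+b_h(s,\pi^s_h)$. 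This mirrors the two-player argument in Lemma~\ref{lemma:pessimism policy} and Lemma~\ref{lemma:pessimism state}.

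For the pessimistic bound: since $\widehat r_{h,j}$ and $\widehat P_h$ vanish off $\K_h(s)$, we have $\E_{\a\sim\pi^s_h}\uq^\pi_{h,j}(s,\a)=\sum_{\a\in\K_h(s)}\pi_h(\a|s)\big(\widehat r_{h,j}(s,\a)+\inner{\widehat P_h(s,\a),\uv^\pi_{h+1,j}}\big)$. On the event $\G_{\mathrm{marl}}$, Lemma~\ref{lemma:marl concentration all} lets me replace the empirical reward and transition by the true ones at an additive cost of $b_h(s,\pi^s_h)$, which cancels the subtracted bonus. Then the induction hypothesis $0\le\uv^\pi_{h+1,j}\le V^\pi_{h+1,j}$ gives $\sum_{\a\in\K_h(s)}\pi_h(\a|s)\big(r_{h,j}(s,\a)+\inner{P_h(s,\a),\uv^\pi_{h+1,j}}\big)\le\sum_{\a\in\K_h(s)}\pi_h(\a|s)\big(r_{h,j}(s,\a)+\inner{P_h(s,\a),V^\pi_{h+1,j}}\big)$, and since all summands are nonnegative I can extend the sum to all $\a\in\A$, obtaining exactly $V^\pi_{h,j}(s)$. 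Applying the projection yields $\uv^\pi_{h,j}(s)\le V^\pi_{h,j}(s)$.

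For the optimistic bound: here $\E_{\a\sim\pi^s_h}\oq^\pi_{h,j}(s,\a)=\sum_{\a\in\K_h(s)}\pi_h(\a|s)\big(\widehat r_{h,j}(s,\a)+\inner{\widehat P_h(s,\a),\ov^\pi_{h+1,j}}\big)+H\sum_{\a\notin\K_h(s)}\pi_h(\a|s)$. Again invoking Lemma~\ref{lemma:marl concentration all} on $\G_{\mathrm{marl}}$, the empirical quantities can be replaced by true ones at cost $-b_h(s,\pi^s_h)$, absorbed by the added bonus. On the terms with $\a\in\K_h(s)$ I use the induction hypothesis $\ov^\pi_{h+1,j}\ge V^\pi_{h+1,j}$; on the terms with $\a\notin\K_h(s)$ I use the crude bound $H\ge 1+(H-h)\ge r_{h,j}(s,\a)+\inner{P_h(s,\a),V^\pi_{h+1,j}}$ (valid since $r_{h,j}\le 1$ and $V^\pi_{h+1,j}\le H-h$). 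Summing, $\E_{\a\sim\pi^s_h}\oq^\pi_{h,j}(s,\a)+b_h(s,\pi^s_h)\ge\sum_{\a\in\A}\pi_h(\a|s)\big(r_{h,j}(s,\a)+\inner{P_h(s,\a),V^\pi_{h+1,j}}\big)=V^\pi_{h,j}(s)$, and the projection gives $V^\pi_{h,j}(s)\le\ov^\pi_{h,j}(s)$, closing the induction.

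The argument is essentially routine once Lemma~\ref{lemma:marl concentration all} and the induction hypothesis are available; the only point that needs care — and the main obstacle — is the bookkeeping around actions outside $\K_h(s)$: one must check that the truncation built into the definitions of $\widehat r_{h,j},\widehat P_h$ only discards mass in the favorable direction for the pessimistic bound, that the extra $H\mathbf{1}\{\a\notin\K_h(s)\}$ term in $\oq^\pi_{h,j}$ exactly compensates the missing mass for the optimistic bound, and that clipping to $[0,H-h+1]$ never destroys either inequality because $V^\pi_{h,j}(s)$ always lies in that interval.
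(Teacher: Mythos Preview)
Your proposal is correct and follows essentially the same backward-induction argument as the paper's own proof, with the same use of Lemma~\ref{lemma:marl concentration all}, the same handling of actions outside $\K_h(s)$, and the same appeal to the projection. The only cosmetic difference is that in the optimistic direction you apply the induction hypothesis on the $\K_h(s)$ terms before merging in the off-$\K_h(s)$ mass, whereas the paper first merges (using $\ov^\pi_{h+1,j}\le H-h$) and then applies the induction hypothesis; both orderings are valid.
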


\begin{proof}
We prove this argument by induction. It holds for $h=H+1$ as $\uv_{H+1,j}^\pi(s)=V_{H+1,j}^\pi(s)=\ov_{H+1,j}^\pi(s)$. Suppose the argument holds for $h+1$ and we consider $h$. 
\begin{align*}
    \uv_{h,j}^\pi(s)=&\proj_{[0,H-h+1]}\Bp{\E_{\a\sim\pi_h(\cdot|s)}\widehat{r}_{h,j}(s,\a)+\E_{\a\sim\pi_h(\cdot|s)}\widehat{P}_h(s,\a)\cdot \uv_{h+1,j}^\pi-b_h(s,\pi^s_h)}\\
    =&\proj_{[0,H-h+1]}\Bp{\sum_{\a\in\K_h(s)}\pi_h(\a|s)\Sp{\widehat{r}_{h,j}(s,\a)+\inner{\widehat{P}_h(s,\a),\uv^\pi_{h+1,j}}}-b_h(s,\pi^s_h)}\\
    \leq&\proj_{[0,H-h+1]}\Bp{\sum_{\a\in\K_h(s)}\pi_h(\a|s)\Sp{r_{h,j}(s,\a)+\inner{P_h(s,\a),\uv^\pi_{h+1,j}}}}\tag{Lemma \ref{lemma:marl concentration all}}\\
    \leq&\proj_{[0,H-h+1]}\Bp{\sum_{\a\in\K_h(s)}\pi_h(\a|s)\Sp{r_{h,j}(s,\a)+\inner{P_h(s,\a),V^\pi_{h+1,j}}}}\tag{Induction hypothesis}\\
    \leq&\proj_{[0,H-h+1]}\Bp{\sum_{\a\in \A}\pi_h(\a|s)\Sp{r_{h,j}(s,\a)+\inner{P_h(s,\a),V^\pi_{h+1,j}}}}\\
    \leq& \proj_{[0,H-h+1]}\Bp{V_{h,j}^\pi(s)}\\
    =& V_{h,j}^\pi(s). 
\end{align*}
\begin{align*}
    &\ov_{h,j}^\pi(s)\\
    =&\proj_{[0,H-h+1]}\Bp{\E_{\a\sim\pi_h(\cdot|s)}\widehat{r}_{h,j}(s,\a)+\E_{\a\sim\pi_h(\cdot|s)}\widehat{P}_h(s,\a)\cdot \ov_{h+1,j}^\pi+b_h(s,\pi^s_h)+H\sum_{a\notin\K(s)}\pi_h(\a|s)}\\
    =&\proj_{[0,H-h+1]}\Bp{\sum_{\a\in\K_h(s)}\pi_h(\a|s)\Sp{\widehat{r}_{h,j}(s,\a)+\inner{\widehat{P}_h(s,\a),\ov^\pi_{h+1,j}}}+b_h(s,\pi^s_h)+H\sum_{a\notin\K(s)}\pi_h(\a|s)}\\
    \geq& \proj_{[0,H-h+1]}\Bp{\sum_{\a\in\K_h(s)}\pi_h(\a|s)\Sp{r_{h,j}(s,\a)+\inner{P_h(s,\a),\ov^\pi_{h+1,j}}}+H\sum_{a\notin\K(s)}\pi_h(\a|s)}\tag{Lemma \ref{lemma:marl concentration all}}\\
    \geq& \proj_{[0,H-h+1]}\Bp{\sum_{\a\in\A}\pi_h(\a|s)\Sp{r_{h,j}(s,\a)+\inner{P_h(s,\a),\ov^\pi_{h+1,j}}}}\tag{$\ov_{h+1,j}^\pi(s)\leq H-h$ for all $s\in\S$}\\
    \geq& \proj_{[0,H-h+1]}\Bp{\sum_{\a\in\A}\pi_h(\a|s)\Sp{r_{h,j}(s,\a)+\inner{P_h(s,\a),V^\pi_{h+1,j}}}}\tag{Induction hypothesis}\\
    =& \proj_{[0,H-h+1]}\Bp{V_{h,j}^\pi(s)}\\
    =& V_{h,j}^\pi(s). 
\end{align*}
\end{proof}

\begin{lemma}\label{lemma:marl gap bound}
Under event $\G_{\mathrm{marl}}$, for any policy $\pi\in\Pi$, we have
$$\mathrm{Gap}(\pi)\leq\sum_{j\in[m]}\ov_{1,j}^{*,\pi_{-j}}(s)-\uv_{1,j}^\pi(s). $$
In addition, we have
$$\mathrm{Gap}(\pi^{\mathrm{output}})\leq \min_{\pi\in\Pi}\sum_{j\in[m]}\Mp{\ov_{1,j}^{*,\pi_{-j}}(s)-\uv_{1,j}^{\pi}(s)}.$$
\end{lemma}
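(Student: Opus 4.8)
The plan is to prove the first inequality termwise in $j$, sandwiching each summand $V_{1,j}^{*,\pi_{-j}}(s_1)-V_{1,j}^\pi(s_1)$ of $\mathrm{Gap}(\pi)$ between the algorithm's pessimistic and optimistic estimates, and then to read off the second inequality from the fact that $\pi^{\mathrm{output}}$ is, by construction, the minimizer of the surrogate $\sum_{j\in[m]}\big(\ov_{1,j}^{*,\pi_{-j}}(s_1)-\uv_{1,j}^\pi(s_1)\big)$ over $\Pi$.

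For the first inequality, fix $\pi\in\Pi$ and $j\in[m]$ and work on the event $\G_{\mathrm{marl}}$. The pessimistic half is immediate from Lemma~\ref{lemma:marl pessimism}: $\uv_{1,j}^\pi(s_1)\le V_{1,j}^\pi(s_1)$, hence $-V_{1,j}^\pi(s_1)\le -\uv_{1,j}^\pi(s_1)$. For the optimistic half I would establish $V_{1,j}^{*,\pi_{-j}}(s_1)\le \ov_{1,j}^{*,\pi_{-j}}(s_1)$ by combining two facts: Lemma~\ref{lemma:best response} gives $\ov_{h,j}^{*,\pi_{-j}}(s)=\max_{\pi_j}\ov_{h,j}^{\pi_j,\pi_{-j}}(s)$ at every $(h,s)$, while the $\ov$-side of (the induction behind) Lemma~\ref{lemma:marl pessimism}, applied with a deviating strategy $\pi_j$ in place of player $j$'s part of $\pi$, gives $V_{h,j}^{\pi_j,\pi_{-j}}(s)\le\ov_{h,j}^{\pi_j,\pi_{-j}}(s)$; taking the maximum over $\pi_j$ and using the definition $V_{h,j}^{*,\pi_{-j}}(s)=\max_{\pi_j}V_{h,j}^{\pi_j,\pi_{-j}}(s)$ yields the claim, in particular at $(1,s_1)$. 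Summing the two halves over $j$,
\[\mathrm{Gap}(\pi)=\sum_{j\in[m]}\big(V_{1,j}^{*,\pi_{-j}}(s_1)-V_{1,j}^\pi(s_1)\big)\le\sum_{j\in[m]}\big(\ov_{1,j}^{*,\pi_{-j}}(s_1)-\uv_{1,j}^\pi(s_1)\big).\]
The second inequality is then one line: apply the first with $\pi=\pi^{\mathrm{output}}\in\Pi$, and recall that Algorithm~\ref{algo:generalsum markov game} defines $\pi^{\mathrm{output}}=\argmin_{\pi\in\Pi}\sum_{j\in[m]}\big(\ov_{1,j}^{*,\pi_{-j}}(s_1)-\uv_{1,j}^\pi(s_1)\big)$, so the bound just obtained equals $\min_{\pi\in\Pi}\sum_{j\in[m]}\big(\ov_{1,j}^{*,\pi_{-j}}(s_1)-\uv_{1,j}^\pi(s_1)\big)$.

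The step that needs care — and the main obstacle — is the optimism inequality $V_{h,j}^{\pi_j,\pi_{-j}}(s)\le\ov_{h,j}^{\pi_j,\pi_{-j}}(s)$ for a deviating $\pi_j$: as literally stated, Lemma~\ref{lemma:marl pessimism} and $\G_{\mathrm{marl}}$ only concern strategies in $\Pi$, whereas $(\pi_j,\pi_{-j})$ need not lie in $\Pi$. Since Algorithm~\ref{alg:best response estimation} only ever maximizes over \emph{deterministic} deviations $a_j\in D(\A_j)$, I would handle this by observing that the concentration bound of Lemma~\ref{lemma:marl concentration all} merely has to be union-bounded over the extra finite family $\{(a_j,\pi_{-j}):a_j\in D(\A_j)\}$ as well; this enlarges the log-cardinality by at most $\log(m\max_j A_j)$, which is already absorbed into $\iota=32\log(16\prod_{j\in[m]}A_jmSHn/\delta)$. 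Granting this routine strengthening of $\G_{\mathrm{marl}}$, a backward induction on $h$ identical in form to the $\ov$-part of Lemma~\ref{lemma:marl pessimism} — with the $H\1\{\a\notin\K_h(s)\}$ term of \eqref{eq:marl oq} covering unobserved actions and the clipping into $[0,H-h+1]$ harmless because $V_{h,j}^{*,\pi_{-j}}(s)$ already lies in that interval — in fact yields $V_{h,j}^{*,\pi_{-j}}(s)\le\ov_{h,j}^{*,\pi_{-j}}(s)$ directly, which is exactly what the argument above uses. Everything else is bookkeeping.
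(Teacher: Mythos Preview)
Your proposal is correct and follows essentially the same route as the paper: apply Lemma~\ref{lemma:marl pessimism} to bound $V_{1,j}^{\pi'_j,\pi_{-j}}(s_1)\le\ov_{1,j}^{\pi'_j,\pi_{-j}}(s_1)$ and $\uv_{1,j}^\pi(s_1)\le V_{1,j}^\pi(s_1)$, invoke Lemma~\ref{lemma:best response} to replace the max over deviations by $\ov_{1,j}^{*,\pi_{-j}}$, and then use the definition of $\pi^{\mathrm{output}}$ as the surrogate minimizer. The paper's own proof is a two-line version of exactly this; the one point you flag that it glosses over --- that Lemma~\ref{lemma:marl pessimism} is only stated for $\pi\in\Pi$ while the deviation $(\pi'_j,\pi_{-j})$ need not lie in $\Pi$ --- is real, and your proposed remedy (union-bounding over the finitely many deterministic deviations, whose extra log-cardinality is already absorbed in $\iota$) is the right fix.
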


\begin{proof}
By Lemma \ref{lemma:marl pessimism}, we have
$$\mathrm{Gap}(\pi)=\max_{\pi'}\sum_{j\in[m]}V_{1,j}^{\pi'_j,\pi_{-j}}(s)-V_{1,j}^\pi(s)\leq\max_{\pi'}\sum_{j\in[m]}\ov_{1,j}^{\pi'_j,\pi_{-j}}(s)-\uv_{1,j}^\pi(s).$$
Combined with Lemma \ref{lemma:best response} we can prove the first argument. For the second argument,
note that $\pi_\mathrm{output}$ is the minimizer of the RHS, so we have
$$\mathrm{Gap}(\pi^{\mathrm{output}})\leq\min_{\pi\in\Pi}\sum_{j\in[m]}\ov_{1,j}^{*,\pi_{-j}}(s)-\uv_{1,j}^{\pi}(s).$$
\end{proof}

\begin{lemma}\label{lemma:strategy error}
Under event $\G_{\mathrm{marl}}$, for any strategy $\pi\in\Pi$, we have
$$\uv_{1,j}^\pi(s_1)\geq V_{1,j}^{\pi}(s_1)-\E_\pi\sum_{h\in[H]}\widehat{b}_h(s_h,\pi^{s_h}_h),\ \ov_{1,j}^\pi(s_1)\leq V_{1,j}^{\pi}(s_1)+\E_\pi\sum_{h\in[H]}\widehat{b}_h(s_h,\pi^{s_h}_h). $$
\end{lemma}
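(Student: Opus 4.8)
The plan is to prove, by backward induction on $h$ from $H+1$ down to $1$, the pointwise strengthening
$$\uv_{h,j}^\pi(s)\ \geq\ V_{h,j}^\pi(s)-\E_\pi\Mp{\sum_{t=h}^H\widehat{b}_t(s_t,\pi^{s_t}_t)\;\middle|\;s_h=s},\qquad \ov_{h,j}^\pi(s)\ \leq\ V_{h,j}^\pi(s)+\E_\pi\Mp{\sum_{t=h}^H\widehat{b}_t(s_t,\pi^{s_t}_t)\;\middle|\;s_h=s}$$
for every $s\in\S$. Evaluating at $h=1$, $s=s_1$ then gives the lemma, since $s_1$ is fixed so the conditional expectation collapses to $\E_\pi\sum_{h=1}^H\widehat{b}_h(s_h,\pi^{s_h}_h)$. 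The base case $h=H+1$ is trivial because all three quantities vanish.

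For the inductive step on the $\uv$ bound I would unfold the recursion in Algorithm~\ref{alg:value estimation}: using that $\widehat r_{h,j}(s,\a)$ and $\widehat P_h(s,\a)$ vanish for $\a\notin\K_h(s)$, the pre-projection argument of $\uv^\pi_{h,j}(s)$ equals $\sum_{\a\in\K_h(s)}\pi_h^s(\a)\big(\widehat r_{h,j}(s,\a)+\inner{\widehat P_h(s,\a),\uv^\pi_{h+1,j}}\big)-b_h(s,\pi_h^s)$. On the event $\G_{\mathrm{marl}}$, Lemma~\ref{lemma:marl concentration all} lets me replace the empirical reward/transition by the true ones at a cost of $b_h(s,\pi_h^s)$; then I add back the missing actions, noting that $0\le r_{h,j}(s,\a)+\inner{P_h(s,\a),\uv^\pi_{h+1,j}}\le H$ because $\uv^\pi_{h+1,j}$ is clipped into $[0,H-h]$, so extending the sum over all $\a$ costs at most $H\sum_{\a\notin\K_h(s)}\pi_h^s(\a)$. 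The concentration term plus these two slacks sum to exactly $\widehat{b}_h(s,\pi^s_h)=2b_h(s,\pi^s_h)+H\sum_{\a\notin\K_h(s)}\pi^s_h(\a)$. Applying the induction hypothesis to $\uv^\pi_{h+1,j}$, the tower property $\E_{\a\sim\pi_h^s}\inner{P_h(s,\a),\E_\pi[\,\cdot\mid s_{h+1}=\cdot\,]}=\E_\pi[\,\cdot\mid s_h=s\,]$, and the Bellman identity $V_{h,j}^\pi(s)=\E_{\a\sim\pi_h^s}\big(r_{h,j}(s,\a)+\inner{P_h(s,\a),V^\pi_{h+1,j}}\big)$ shows the pre-projection argument is at least $V_{h,j}^\pi(s)-\E_\pi[\sum_{t=h}^H\widehat b_t\mid s_h=s]$. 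Since $V_{h,j}^\pi(s)\in[0,H-h+1]$ and $\uv^\pi_{h,j}(s)\ge 0$, a short case analysis on the sign of $V_{h,j}^\pi(s)-\E_\pi[\cdots]$ together with monotonicity of $\proj_{[0,H-h+1]}$ transfers the bound through the clipping. The $\ov$ bound is symmetric: there the recursion already carries the $+H\1\{\a\notin\K_h(s)\}$ correction in $\oq^\pi_{h,j}$, so dropping the nonnegative off-$\K_h(s)$ terms of the true Bellman backup only helps the upper bound, Lemma~\ref{lemma:marl concentration all} is applied in the other direction, and the projection is dispatched the same way.

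The argument is a routine pessimism/optimism induction, so there is no deep obstacle; the only part needing care is the bookkeeping around $\K_h(s)$ — making the "$\widehat r,\widehat P$ vanish off $\K_h$" convention, the "$+H\1\{\a\notin\K_h\}$" correction in $\oq^\pi_{h,j}$, and the "$H\sum_{\a\notin\K_h}\pi^s_h(\a)$" slack inside $\widehat b_h$ line up so that exactly one copy of $\widehat b_h$ is spent per level — and the projection step, where one must use that $V^\pi_{h,j}(s)\in[0,H-h+1]$ and the monotonicity of the clipping operator rather than simply discarding the projection.
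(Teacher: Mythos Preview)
Your proposal is correct and follows essentially the same approach as the paper: both arguments are backward recursions that peel off one $\widehat b_h$ per level by invoking Lemma~\ref{lemma:marl concentration all} on $\K_h(s)$, absorbing the off-$\K_h(s)$ mass via the $H\sum_{\a\notin\K_h(s)}\pi_h^s(\a)$ term in $\widehat b_h$, and then telescoping. The only cosmetic difference is that the paper writes the telescoping directly for the difference $V_{1,j}^\pi(s_1)-\uv_{1,j}^\pi(s_1)$ at the fixed initial state, whereas you strengthen to a pointwise induction hypothesis at every $s$; your treatment of the projection is also more careful than the paper's (which silently uses that the pre-projection argument of $\uv_{h,j}^\pi$ never exceeds $H-h+1$), but the underlying mechanics are identical.
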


\begin{proof}
We prove the first argument and the second argument holds similarly. 
\begin{align*}
    &V_{1,j}^\pi(s_1)-\uv_{1,j}^\pi(s_1)\\
    =&\E_{\a\sim\pi_1(\cdot|s_1)}\Mp{r_{1,j}(s_1,\a)+P_1(s_1,\a)\cdot V_{2,j}^\pi}-\E_{\a\sim\pi_1(\cdot|s_1)}\Mp{\widehat{r}_{1,j}(s_1,\a)+\widehat{P}_1(s_1,\a)\cdot \uv_{2,j}^\pi}+b_1(s_1,\pi^{s_1}_1)\\
    =&\E_{\pi_1}\Mp{V_{2,j}^\pi(s_2)-\uv_{2,j}^\pi(s_2)}+\E_{\pi_1}\Mp{r_{1,j}(s_1,\a)+P_1(s_1,\a)\cdot \uv_{2,j}^\pi-\widehat{r}_{1,j}(s_1,\a)-\widehat{P}_1(s_1,\a)\cdot \uv_{2,j}^\pi}+b_1(s_1,\pi^{s_1}_1)\\
    \leq&\E_{\pi_1}\Mp{V_{2,j}^\pi(s_2)-\uv_{2,j}^\pi(s_2)}+\sum_{\a\in\K_h(s_1)}\pi_1(\a|s_1)\Sp{r_{1,j}(s_1,\a)+P_1(s_1,\a)\cdot V_{2,j}^\pi-\widehat{r}_{1,j}(s_1,\a)-\widehat{P}_1(s_1,\a)\cdot V_{2,j}^\pi}\\&+\sum_{\a\notin\K_h(s_1)}\pi(\a|s_1)H+b_1(s_1,\pi^{s_1}_1)\\
    \leq&\E_{\pi_1}\Mp{V_{2,j}^\pi(s_2)-\uv_{2,j}^\pi(s_2)}+\sum_{\a\notin\K_h(s_1)}\pi_1(\a|s_1)H+2b_1(s_1,\pi^{s_1}_1)\\
    =&\E_{\pi_1}\Mp{V_{2,j}^\pi(s_2)-\uv_{2,j}^\pi(s_2)}+\widehat{b}_1(s_1,\pi^{s_1}_1). 
\end{align*}
By telescoping we can prove the first argument. 
\end{proof}

\begin{lemma}\label{lemma:marl gap decomp}
Under good event $\G_{\mathrm{marl}}$, for any strategy $\pi\in\Pi$, we have
\begin{align*}
    \sum_{j\in[m]}\ov_{1,j}^{*,\pi_{-j}}(s_1)-\uv_{1,j}^{\pi}(s_1)\leq&\mathrm{Gap}(\pi)+ \max_{\pi'\in\Pi^{\mathrm{det}}}\sum_{j\in[m]}\E_{\pi'_j,\pi_{-j}}\Mp{\sum_{h=1}^H \widehat{b}_h(s_h,\pi'^{s_h}_{h,j},\pi^{s_h}_{h,-j})}+m\E_{\pi}\sum_{h=1}^H \Mp{\widehat{b}_h(s_h,\pi^{s_h}_h)}. 
\end{align*}
\end{lemma}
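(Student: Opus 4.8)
The plan is to telescope the surrogate objective into three terms and bound the two ``overshoot'' terms by expected uncertainty sums, using Lemma~\ref{lemma:best response} and Lemma~\ref{lemma:strategy error}. Recalling $\mathrm{Gap}(\pi)=\sum_{j\in[m]}\Mp{V_{1,j}^{*,\pi_{-j}}(s_1)-V_{1,j}^\pi(s_1)}$, I would first write the exact identity
\begin{align*}
\sum_{j\in[m]}\Mp{\ov_{1,j}^{*,\pi_{-j}}(s_1)-\uv_{1,j}^\pi(s_1)}=\sum_{j\in[m]}\Mp{\ov_{1,j}^{*,\pi_{-j}}(s_1)-V_{1,j}^{*,\pi_{-j}}(s_1)}+\mathrm{Gap}(\pi)+\sum_{j\in[m]}\Mp{V_{1,j}^\pi(s_1)-\uv_{1,j}^\pi(s_1)},
\end{align*}
and call the first sum $(\mathrm{I})$ and the last sum $(\mathrm{II})$.

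Term $(\mathrm{II})$ is immediate: the pessimism half of Lemma~\ref{lemma:strategy error} gives $V_{1,j}^\pi(s_1)-\uv_{1,j}^\pi(s_1)\le\E_\pi\sum_{h=1}^H\widehat{b}_h(s_h,\pi_h^{s_h})$ for every $j\in[m]$, and since the bound is the same for all $j$, summing yields $(\mathrm{II})\le m\,\E_\pi\sum_{h=1}^H\widehat{b}_h(s_h,\pi_h^{s_h})$, which is exactly the last term of the claim.

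For term $(\mathrm{I})$, fix $j$. By Lemma~\ref{lemma:best response} and the definition of Algorithm~\ref{alg:best response estimation} (which, thanks to the convexity of the bonus in Lemma~\ref{lemma:mpgs convex}, may restrict to deterministic actions of player $j$ at every state and stage), $\ov_{1,j}^{*,\pi_{-j}}(s_1)=\ov_{1,j}^{\widehat\pi_j,\pi_{-j}}(s_1)$ for the greedy deterministic strategy $\widehat\pi_j$ computed there. Applying the optimism half of Lemma~\ref{lemma:strategy error} to $(\widehat\pi_j,\pi_{-j})$,
\begin{align*}
\ov_{1,j}^{*,\pi_{-j}}(s_1)=\ov_{1,j}^{\widehat\pi_j,\pi_{-j}}(s_1)&\le V_{1,j}^{\widehat\pi_j,\pi_{-j}}(s_1)+\E_{\widehat\pi_j,\pi_{-j}}\sum_{h=1}^H\widehat{b}_h(s_h,\widehat\pi^{s_h}_{h,j},\pi^{s_h}_{h,-j})\\
&\le V_{1,j}^{*,\pi_{-j}}(s_1)+\E_{\widehat\pi_j,\pi_{-j}}\sum_{h=1}^H\widehat{b}_h(s_h,\widehat\pi^{s_h}_{h,j},\pi^{s_h}_{h,-j}),
\end{align*}
where the last step uses $V_{1,j}^{\widehat\pi_j,\pi_{-j}}(s_1)\le\max_{\pi_j}V_{1,j}^{\pi_j,\pi_{-j}}(s_1)=V_{1,j}^{*,\pi_{-j}}(s_1)$, so that $\ov_{1,j}^{*,\pi_{-j}}(s_1)-V_{1,j}^{*,\pi_{-j}}(s_1)\le\E_{\widehat\pi_j,\pi_{-j}}\sum_{h=1}^H\widehat{b}_h(s_h,\widehat\pi^{s_h}_{h,j},\pi^{s_h}_{h,-j})$. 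Summing over $j$ and packing the per-player deterministic maximizers $\widehat\pi_1,\dots,\widehat\pi_m$ into a single joint deterministic strategy $\pi'\in\Pi^{\mathrm{det}}$ with $\pi'_j=\widehat\pi_j$ (possible because a joint deterministic strategy is just an independent deterministic choice for each player),
\begin{align*}
(\mathrm{I})\le\sum_{j\in[m]}\E_{\pi'_j,\pi_{-j}}\sum_{h=1}^H\widehat{b}_h(s_h,\pi'^{s_h}_{h,j},\pi^{s_h}_{h,-j})\le\max_{\pi'\in\Pi^{\mathrm{det}}}\sum_{j\in[m]}\E_{\pi'_j,\pi_{-j}}\sum_{h=1}^H\widehat{b}_h(s_h,\pi'^{s_h}_{h,j},\pi^{s_h}_{h,-j}).
\end{align*}
Adding the bounds for $(\mathrm{I})$, $\mathrm{Gap}(\pi)$, and $(\mathrm{II})$ gives the lemma.

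The main obstacle is precisely this last step: we need $\max_{\pi'\in\Pi^{\mathrm{det}}}$ to sit \emph{outside} the sum over players — which is strictly stronger than $\sum_j\max_{\pi'}$ — and this is valid only because $\Pi^{\mathrm{det}}$ factorizes over players, so the $m$ separate deterministic best responses can be realized simultaneously by one $\pi'$. A secondary point to check is that Lemma~\ref{lemma:strategy error}, stated for $\pi\in\Pi$, is here applied at the deterministic strategies $(\widehat\pi_j,\pi_{-j})$; its proof only invokes the Bellman-error concentration of the event $\G_{\mathrm{marl}}$, and this concentration extends to these strategies since the net in Lemma~\ref{lemma:marl concentration all} already covers all value functions in an $\epsilon$-net of $[0,H]^S$ and including the finitely many deterministic actions of each player costs only a logarithmic factor already absorbed in $\log\N(\Pi)$.
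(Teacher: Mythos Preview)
Your proof is correct and follows essentially the same route as the paper: both arguments pick, for each player $j$, the deterministic empirical best response (via Lemma~\ref{lemma:best response}), apply the two halves of Lemma~\ref{lemma:strategy error} to bound the optimistic and pessimistic overshoots, and then absorb the per-player deterministic maximizers into a single $\pi'\in\Pi^{\mathrm{det}}$. Your explicit three-term telescoping identity is just a rearrangement of the paper's chain of inequalities, and your flagged concern about applying Lemma~\ref{lemma:strategy error} to $(\widehat\pi_j,\pi_{-j})\notin\Pi$ is a real technical point that the paper's proof also glosses over (it is handled, as you note, because the union bound in Lemma~\ref{lemma:marl concentration all} ranges over an $\epsilon$-net of all value functions in $[0,H]^S$ and the extra $\prod_jA_j$ from deterministic actions is already inside $\iota$).
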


\begin{proof}
Set $\widetilde{\pi}=\argmax_{\pi'\in\Pi^{\mathrm{full}}}\sum_{j\in[m]}\ov_{1,j}^{\pi'_j,\pi_{-j}}(s_1)-\uv_{1,j}^{\pi}(s_1)$. Lemma \ref{lemma:best response} shows that there always exists a deterministic strategy $\widetilde{\pi}\in\Pi^{\mathrm{det}}$, which is used by Algorithm \ref{alg:best response estimation}. 

\begin{align*}
    &\max_{\pi'\in\Pi^{\mathrm{full}}}\sum_{j\in[m]}\ov_{1,j}^{\pi'_j,\pi_{-j}}(s_1)-\uv_{1,j}^{\pi}(s_1)\\
    =&\sum_{j\in[m]}\ov_{1,j}^{\widetilde{\pi}_j,\pi_{-j}}(s_1)-\uv_{1,j}^{\pi}(s_1)\\
    \leq& \sum_{j\in[m]}\Mp{V_{1,j}^{\widetilde{\pi}_j,\pi_{-j}}(s_1)-V_{1,j}^{\pi}(s_1)+\E_{\widetilde{\pi}_j,\pi_{-j}}\sum_{h\in[H]}\widehat{b}_h(s_h,\widetilde{\pi}^{s_h}_{h,j},\pi^{s_h}_{h,-j})+\E_\pi\sum_{h\in[H]}\widehat{b}_h(s_h,\pi^{s_h}_h)}\tag{Lemma \ref{lemma:strategy error}}\\
    \leq& \max_{\pi'\in\Pi^{\mathrm{det}}}\sum_{j\in[m]}\Mp{V_{1,j}^{\pi'_j,\pi_{-j}}(s_1)-V_{1,j}^{\pi}(s_1)}+\sum_{j\in[m]}\E_{\widetilde{\pi}_j,\pi_{-j}}\Mp{\sum_{h=1}^H \widehat{b}_h(s_h,\widetilde{\pi}^{s_h}_{h,j},\pi^{s_h}_{h,-j})}+m\E_{\pi}\sum_{h=1}^H \Mp{\widehat{b}_h(s_h,\pi^{s_h}_h)}\\
    \leq&\mathrm{Gap}(\pi)+ \max_{\pi'\in\Pi^{\mathrm{det}}}\sum_{j\in[m]}\E_{\pi'_j,\pi_{-j}}\Mp{\sum_{h=1}^H \widehat{b}_h(s_h,\pi'^{s_h}_{h,j},\pi^{s_h}_{h,-j})}+m\E_{\pi}\sum_{h=1}^H \Mp{\widehat{b}_h(s_h,\pi^{s_h}_h)}. 
\end{align*}

\end{proof}

\subsection{Dataset-dependent Bound}

\begin{lemma}\label{lemma:marl bound on sumb dataset}
Suppose $\widehat{C}(\pi)$ is finite. For any strategy $\pi'\in\Pi$, $h\in[H]$ and $j\in[m]$, we have
$$\E_{\pi'_j,\pi_{-j}}b_h(s_h,\pi'^{s_h}_{h,j},\pi^{s_h}_{h,-j})\leq 2HS\sqrt{\widehat{C}(\pi)\log(\N(\Pi))\iota/n}.$$
\end{lemma}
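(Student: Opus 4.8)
The plan is to mirror the argument of Lemma~\ref{lemma:bound on sumb dataset} from the two-player zero-sum case, now carrying through the extra $\sqrt{S}$ that appears inside the general-sum bonus \eqref{eqn:marl bonus}. Write $\pi^\dagger=(\pi'_j,\pi_{-j})$ for the unilateral deviation of $\pi$ in question; if $\widehat{C}(\pi)=\infty$ the claim is vacuous, so assume it finite. First I would expand the bonus at a fixed state: by \eqref{eqn:marl bonus},
\[
b_h(s,\pi'^{s}_{h,j},\pi^{s}_{h,-j}) = H\sqrt{S\log(\N(\Pi))\iota}\,\sqrt{\sum_{\a\in\K_h(s)}\frac{\pi'^{s}_{h,j}(a_j)^2\prod_{i\neq j}\pi^{s}_{h,i}(a_i)^2}{n_h(s,\a)}}+\frac{\sqrt{\iota}}{n}.
\]
Then I would take the expectation over $s_h\sim d_h^{\pi^\dagger}$, push the factor $d_h^{\pi^\dagger}(s_h)$ inside the square root, and use the occupancy identity $d_h^{\pi^\dagger}(s_h)\,\pi'^{s_h}_{h,j}(a_j)\prod_{i\neq j}\pi^{s_h}_{h,i}(a_i)=d_h^{\pi^\dagger}(s_h,\a)$ to rewrite the leading contribution as $H\sqrt{S\log(\N(\Pi))\iota}\sum_{s_h}\sqrt{\sum_{\a\in\K_h(s_h)}d_h^{\pi^\dagger}(s_h,\a)^2/n_h(s_h,\a)}$.

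Next I would substitute $n_h(s_h,\a)=n\,\widehat{d}_h(s_h,\a)$ and invoke the fact that $\pi^\dagger$ is a unilateral strategy of $\pi$, so by Definition~\ref{def:marl dataset} $d_h^{\pi^\dagger}(s_h,\a)\le\widehat{C}(\pi)\,\widehat{d}_h(s_h,\a)$ for every $(s_h,\a)$ with $\a\in\K_h(s_h)$; hence $d_h^{\pi^\dagger}(s_h,\a)^2/n_h(s_h,\a)\le \widehat{C}(\pi)\,d_h^{\pi^\dagger}(s_h,\a)/n$. This bounds the inner sum by $(\widehat{C}(\pi)/n)\sum_{\a}d_h^{\pi^\dagger}(s_h,\a)$, and a Cauchy–Schwarz step over the $S$ states gives $\sum_{s_h}\sqrt{(\widehat{C}(\pi)/n)\sum_\a d_h^{\pi^\dagger}(s_h,\a)}\le \sqrt{S}\sqrt{(\widehat{C}(\pi)/n)\sum_{s_h,\a}d_h^{\pi^\dagger}(s_h,\a)}=\sqrt{S\widehat{C}(\pi)/n}$. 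Multiplying by the prefactor $H\sqrt{S\log(\N(\Pi))\iota}$ produces a leading term of $HS\sqrt{\widehat{C}(\pi)\log(\N(\Pi))\iota/n}$, and the residual $\sqrt{\iota}/n$ is of lower order (using $n,H,S,\widehat{C}(\pi)\ge 1$) and absorbed into the leading term, yielding the claimed factor of $2$.

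I do not expect a real obstacle here — the only points needing care are the bookkeeping that converts the product of per-player strategy probabilities times $d_h^{\pi^\dagger}(s_h)$ into the joint occupancy $d_h^{\pi^\dagger}(s_h,\a)$, and noticing that the $\sqrt{S}$ already present in the general-sum bonus combines with the $\sqrt{S}$ from the Cauchy–Schwarz over states to give the linear $S$ in the final bound — in contrast to Lemma~\ref{lemma:bound on sumb dataset}, whose bonus has no $S$ and whose conclusion is only $\sqrt{S}$. This $S$ is exactly the price of the additional $\epsilon$-covering over $\R^S$ used to derive \eqref{eqn:marl bonus}.
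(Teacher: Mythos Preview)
Your proposal is correct and follows essentially the same argument as the paper: expand the bonus, pull $d_h^{\pi^\dagger}(s_h)$ inside the square root to form $d_h^{\pi^\dagger}(s_h,\a)^2/n_h(s_h,\a)$, bound the ratio via $\widehat{C}(\pi)$ and $n_h=n\widehat{d}_h$, apply Cauchy--Schwarz over states, and absorb $\sqrt{\iota}/n$ into the leading term. Your remark about how the in-bonus $\sqrt{S}$ combines with the Cauchy--Schwarz $\sqrt{S}$ to give the linear $S$ (versus $\sqrt{S}$ in Lemma~\ref{lemma:bound on sumb dataset}) is exactly the point.
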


\begin{proof}
\begin{align*}
    &\E_{\pi'_j,\pi_{-j}}b_h(s_h,\pi'^{s_h}_{h,j},\pi^{s_h}_{h,-j})\\
    =&\E_{\pi'_j,\pi_{-j}}H\sqrt{\sum_{\a\in\K_h(s_h)}\frac{(\pi'_{h,j},\pi_{h,-j})(\a|s_h)^2}{n_h(s,\a)}S\log(\N(\Pi))\iota}+\sqrt{\iota}/n\\
    =&\sum_{s_h\in\S}H\sqrt{\sum_{\a\in\K_h(s_h)}\frac{d_h^{\pi'_j,\pi_{-j}}(s_h)(\pi'_{h,j},\pi_{h,-j})(\a|s_h)^2}{n_h(s_h,\a)}S\log(\N(\Pi))\iota}+\sqrt{\iota}/n\\
    =&\sum_{s_h\in\S}H\sqrt{\sum_{\a\in\K_h(s_h)}\frac{d_h^{\pi'_j,\pi_{-j}}(s_h,\a)^2}{n\widehat{d}_h(s_h,\a)}S\log(\N(\Pi))\iota}+\sqrt{\iota}/n\\
    \leq& \sum_{s_h\in\S}H\sqrt{\sum_{\a\in\K_h(s_h)}\widehat{C}(\pi)d_h^{\pi'_j,\pi_{-j}}(s_h,\a)S\log(\N(\Pi))\iota/n}+\sqrt{\iota}/n\\
    \leq& H\sqrt{S^2\widehat{C}(\pi)\log(\N(\Pi))\iota/n}+\sqrt{\iota}/n \tag{Cauchy-Schwarz inequality}\\
    \leq& 2HS\sqrt{\widehat{C}(\pi)\log(\N(\Pi))\iota/n}. 
\end{align*}
\end{proof}

\begin{lemma}\label{lemma:marl bound on sump dataset}
Suppose $\widehat{C}(\pi)$ is finite. For any strategy $\pi'\in\Pi$, $h\in[H]$ and $j\in[m]$, we have
$$\E_{\pi'_j,\pi_{-j}}\sum_{\a_h\notin\K_h(s_h)}(\pi'_{h,j},\pi_{h,-j})(\a_h|s_h)=0. $$
\end{lemma}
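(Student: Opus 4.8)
The plan is to mirror the proof of Lemma~\ref{lemma:bound on sump dataset} from the two-player zero-sum case, replacing the action pair $(a,b)$ by the joint action $\a$ and the strategy pair $(\mu,\nu')$ by the unilateral strategy $(\pi'_j,\pi_{-j})$. First I would rewrite the expectation as an occupancy-measure sum: the inner quantity $\sum_{\a_h\notin\K_h(s_h)}(\pi'_{h,j},\pi_{h,-j})(\a_h\mid s_h)$ depends only on $s_h$ — it is just the probability, under $(\pi'_j,\pi_{-j})$, that the joint action at step $h$ falls outside $\K_h(s_h)$ — so unrolling the expectation over the trajectory distribution induced by $(\pi'_j,\pi_{-j})$ gives
$$\E_{\pi'_j,\pi_{-j}}\sum_{\a_h\notin\K_h(s_h)}(\pi'_{h,j},\pi_{h,-j})(\a_h\mid s_h)=\sum_{s_h\in\S}\sum_{\a_h\notin\K_h(s_h)}d_h^{\pi'_j,\pi_{-j}}(s_h,\a_h).$$

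Next I would invoke the definition of $\K_h(s)$: by construction $\a_h\notin\K_h(s_h)$ is equivalent to $n_h(s_h,\a_h)=0$, i.e.\ $\widehat d_h(s_h,\a_h)=0$. The standing hypothesis that $\widehat C(\pi)$ is finite, together with Definition~\ref{def:marl dataset}, forces $d_h^{\pi'_j,\pi_{-j}}(s_h,\a_h)\le \widehat C(\pi)\,\widehat d_h(s_h,\a_h)$ for every $h,j,\pi',s_h,\a_h$; hence every summand with $\widehat d_h(s_h,\a_h)=0$ must itself vanish, and the double sum is zero. (If $\widehat C(\pi)$ were infinite the statement would be vacuous, but since the lemma is only used under the finiteness assumption this case need not be treated separately.)

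There is essentially no obstacle here: the only two points requiring a line of care are (i) checking that the inner sum is genuinely $s_h$-measurable so that the expectation collapses cleanly to the occupancy measure $d_h^{\pi'_j,\pi_{-j}}$, and (ii) observing that the unilateral deviation $(\pi'_j,\pi_{-j})$ appearing in this lemma is one of the deviations quantified over in the definition of $\widehat C(\pi)$ — which it is, since that definition ranges over all $j\in[m]$ and all $\pi'$. The resulting identity is then combined with Lemma~\ref{lemma:marl bound on sumb dataset}, exactly as in the zero-sum case, to bound $\E_{\pi'_j,\pi_{-j}}\sum_h\widehat b_h$.
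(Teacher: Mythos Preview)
Your proposal is correct and follows essentially the same route as the paper's own proof: rewrite $\a_h\notin\K_h(s_h)$ as $\widehat d_h(s_h,\a_h)=0$, collapse the expectation to the occupancy measure $d_h^{\pi'_j,\pi_{-j}}(s_h,\a_h)$, and use finiteness of $\widehat C(\pi)$ to conclude each such term vanishes. If anything, your version is slightly cleaner in making the sum over $s_h$ explicit.
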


\begin{proof}
Similar to Lemma \ref{lemma:bound on sump dataset}, we have
\begin{align*}
    &\E_{\pi'_j,\pi_{-j}}\sum_{\a_h\notin\K_h(s_h)}(\pi'_{h,j},\pi_{h,-j})(\a_h|s_h)\\
    =&\E_{\pi'_j,\pi_{-j}}\sum_{\a_h:\widehat{d}_h(s_h,\a_h)=0}(\pi'_{h,j},\pi_{h,-j})(\a_h|s_h)\\
    =&\sum_{\a:\widehat{d}_h(s_h,\a_h)=0}d_h^{\pi'_j,\pi_{-j}}(s_h,\a_h)\\
    \leq& \widehat{C}(\pi)\sum_{\a:\widehat{d}_h(s_h,\a_h)=0}\widehat{d}_h(s_h,\a_h)\\
    =&0. 
\end{align*}
\end{proof}

\begin{lemma}\label{lemma:marl bhat bound}
For any strategy $\pi\in\Pi$ and $j\in[m]$, we have
$$\max_{\pi'}\E_{\pi'_j,\pi_{-j}}\Mp{\sum_{h=1}^H\widehat{b}_h(s_h,\pi'^{s_h}_{h,j},\pi^{s_h}_{h,-j})}\leq2H^2S\sqrt{\widehat{C}(\pi)\log(\N(\Pi))\iota/n}. $$
\end{lemma}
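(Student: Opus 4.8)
The plan is a direct termwise reduction to Lemma~\ref{lemma:marl bound on sumb dataset} and Lemma~\ref{lemma:marl bound on sump dataset} after unfolding the definition of $\widehat b_h$. First, if $\widehat C(\pi)=\infty$ the claimed inequality is vacuous, so I would assume $\widehat C(\pi)<\infty$; this is precisely the hypothesis under which the two cited lemmas apply. Recalling $\widehat b_h(s,\pi^s_h)=2b_h(s,\pi^s_h)+H\sum_{\a\notin\K_h(s)}\pi^s_h(\a)$, linearity of expectation over the trajectory law induced by the unilateral strategy $(\pi'_j,\pi_{-j})$ gives
\[
\E_{\pi'_j,\pi_{-j}}\sum_{h=1}^H\widehat b_h(s_h,\pi'^{s_h}_{h,j},\pi^{s_h}_{h,-j})=2\sum_{h=1}^H\E_{\pi'_j,\pi_{-j}}b_h(s_h,\pi'^{s_h}_{h,j},\pi^{s_h}_{h,-j})+H\sum_{h=1}^H\E_{\pi'_j,\pi_{-j}}\sum_{\a\notin\K_h(s_h)}(\pi'_{h,j},\pi_{h,-j})(\a|s_h).
\]

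Second, the last sum vanishes termwise: by Lemma~\ref{lemma:marl bound on sump dataset} every joint action outside $\K_h(s_h)$ has zero empirical visitation, hence zero visitation under the deviation $(\pi'_j,\pi_{-j})$ since $\widehat C(\pi)<\infty$. For the first sum, I would apply Lemma~\ref{lemma:marl bound on sumb dataset} to each of the $H$ summands; because that bound is uniform in the choice of $\pi'$, the same estimate survives taking $\max_{\pi'}$. Collecting the two pieces yields
\[
\max_{\pi'}\E_{\pi'_j,\pi_{-j}}\sum_{h=1}^H\widehat b_h(s_h,\pi'^{s_h}_{h,j},\pi^{s_h}_{h,-j})\le 2\sum_{h=1}^H\Big(HS\sqrt{\widehat C(\pi)\log(\N(\Pi))\iota/n}+\sqrt{\iota}/n\Big),
\]
using the pre-rescaling estimate from the proof of Lemma~\ref{lemma:marl bound on sumb dataset} rather than its stated (factor-$2$) form; absorbing the $\sqrt{\iota}/n$ terms into the leading term (via $H S\sqrt{\widehat C(\pi)\log(\N(\Pi))}\ge 1$) gives the stated $2H^2S\sqrt{\widehat C(\pi)\log(\N(\Pi))\iota/n}$.

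There is no genuine obstacle here — the statement is essentially an aggregation of the two preceding lemmas — so the only points needing care are exactly the three bookkeeping steps above: the trivial dispatch of the case $\widehat C(\pi)=\infty$, the observation that the out-of-support part of $\widehat b_h$ contributes nothing in expectation under the unilateral deviation, and tracking the constant together with the $\sqrt{\iota}/n$ lower-order terms so that the final bound matches the constant $2$ claimed in the lemma. If one is content with a looser constant, the last step can be skipped by simply invoking Lemma~\ref{lemma:marl bound on sumb dataset} as stated, at the cost of a factor $4H^2S$ instead of $2H^2S$.
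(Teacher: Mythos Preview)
Your proposal is correct and follows exactly the same approach as the paper: handle $\widehat C(\pi)=\infty$ trivially, then reduce to Lemma~\ref{lemma:marl bound on sumb dataset} and Lemma~\ref{lemma:marl bound on sump dataset}. You are in fact more careful than the paper about the constant; the paper's one-line proof, read literally, invokes Lemma~\ref{lemma:marl bound on sumb dataset} as stated and thus produces $4H^2S$ rather than $2H^2S$, so your discussion of the pre-rescaling estimate and the lower-order $\sqrt{\iota}/n$ term is a genuine refinement (though note that even your absorption step leaves a residual $2H\sqrt{\iota}/n$, so the stated constant $2$ appears to be slightly loose in the paper either way).
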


\begin{proof}
If $\widehat{C}(\pi)$ is infinite, the argument holds directly. Otherwise it can be derived from Lemma \ref{lemma:marl bound on sumb dataset} and Lemma \ref{lemma:marl bound on sump dataset}. 
\end{proof}

\begin{theorem}\label{thm:marl dataset}
With probability $1-\delta$, we have
$$\mathrm{Gap}(\pi^{\mathrm{output}})\leq\min_{\pi\in\Pi}\Mp{\mathrm{Gap}(\pi)+4mH^2S\sqrt{\widehat{C}(\pi)\log(\N(\Pi))\iota/n}}.$$
\end{theorem}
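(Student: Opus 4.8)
The plan is to assemble three results already proved in the appendix: Lemma~\ref{lemma:marl gap bound}, which bounds $\mathrm{Gap}(\pi^{\mathrm{output}})$ by the value of the surrogate $\sum_{j\in[m]}[\ov_{1,j}^{*,\pi_{-j}}(s_1)-\uv_{1,j}^{\pi}(s_1)]$ at the best $\pi\in\Pi$; Lemma~\ref{lemma:marl gap decomp}, which unfolds that surrogate into $\mathrm{Gap}(\pi)$ plus two expected-uncertainty terms; and Lemma~\ref{lemma:marl bhat bound}, which controls those uncertainty terms by the empirical unilateral coefficient. The whole argument is then a bookkeeping exercise with no new probabilistic content.

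First I would condition on the good event $\G_{\mathrm{marl}}$, which by Lemma~\ref{lemma:marl concentration all} holds with probability at least $1-\delta$; every inequality below is deterministic on this event, so the $1-\delta$ is invoked exactly once. Applying Lemma~\ref{lemma:marl gap bound} gives $\mathrm{Gap}(\pi^{\mathrm{output}})\leq\min_{\pi\in\Pi}\sum_{j\in[m]}[\ov_{1,j}^{*,\pi_{-j}}(s_1)-\uv_{1,j}^{\pi}(s_1)]$, so it suffices to bound the summand for an arbitrary fixed $\pi\in\Pi$. If $\widehat{C}(\pi)=\infty$ the target inequality is vacuous for that $\pi$, so assume $\widehat{C}(\pi)<\infty$. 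By Lemma~\ref{lemma:marl gap decomp},
$$\sum_{j\in[m]}\Mp{\ov_{1,j}^{*,\pi_{-j}}(s_1)-\uv_{1,j}^{\pi}(s_1)}\leq\mathrm{Gap}(\pi)+\max_{\pi'\in\Pi^{\mathrm{det}}}\sum_{j\in[m]}\E_{\pi'_j,\pi_{-j}}\sum_{h=1}^H\widehat{b}_h(s_h,\pi'^{s_h}_{h,j},\pi^{s_h}_{h,-j})+m\,\E_{\pi}\sum_{h=1}^H\widehat{b}_h(s_h,\pi^{s_h}_h).$$

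Next I would bound the two uncertainty terms. For the first, I push the maximum inside the sum over players, $\max_{\pi'\in\Pi^{\mathrm{det}}}\sum_{j\in[m]}\E_{\pi'_j,\pi_{-j}}\sum_h\widehat{b}_h\leq\sum_{j\in[m]}\max_{\pi'}\E_{\pi'_j,\pi_{-j}}\sum_h\widehat{b}_h$, and apply Lemma~\ref{lemma:marl bhat bound} to each of the $m$ summands, obtaining at most $2mH^2S\sqrt{\widehat{C}(\pi)\log(\N(\Pi))\iota/n}$. For the second, I observe that $\pi=(\pi_j,\pi_{-j})$ is itself a unilateral strategy of $\pi$, so taking $\pi'_j=\pi_j$ makes $d_h^{\pi'_j,\pi_{-j}}=d_h^{\pi}$ and the same estimate (Lemma~\ref{lemma:marl bhat bound}, or directly Lemmas~\ref{lemma:marl bound on sumb dataset} and~\ref{lemma:marl bound on sump dataset}) gives $\E_{\pi}\sum_h\widehat{b}_h(s_h,\pi^{s_h}_h)\leq2H^2S\sqrt{\widehat{C}(\pi)\log(\N(\Pi))\iota/n}$, so this term contributes at most another $2mH^2S\sqrt{\widehat{C}(\pi)\log(\N(\Pi))\iota/n}$. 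Summing the two contributions yields the coefficient $4mH^2S$, and taking the minimum over $\pi\in\Pi$ finishes the proof.

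Since all the substantive work lives in the cited lemmas, I do not expect a genuine obstacle here; the only points requiring care are distributing $\max_{\pi'\in\Pi^{\mathrm{det}}}$ across the sum over $j$ before invoking the per-player bound, noticing that the ``self'' term $m\,\E_\pi\sum_h\widehat{b}_h(s_h,\pi^{s_h}_h)$ is covered by the identical coverage estimate, and making sure the constants combine to exactly $4mH^2S$.
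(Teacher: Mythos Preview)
Your proposal is correct and follows exactly the paper's approach: the paper's own proof simply cites Lemma~\ref{lemma:marl bhat bound}, Lemma~\ref{lemma:marl gap bound}, and Lemma~\ref{lemma:marl gap decomp}, and you have spelled out precisely how to chain them together, including the bookkeeping of pushing $\max_{\pi'}$ inside the sum over $j$ and recognizing the self term as the $\pi'_j=\pi_j$ instance of the unilateral bound. Nothing is missing.
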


\begin{proof}
This can be derived from Lemma \ref{lemma:marl bhat bound}, Lemma \ref{lemma:marl gap bound} and Lemma \ref{lemma:marl gap decomp}. 
\end{proof}

\subsection{Dataset-independent Bound}

\begin{lemma}\label{lemma:marl warmup}
Suppose $p_\mathrm{min}=\min_{s,\a,h}\{d_h^\rho(s,\a):d_h^\rho(s,\a)>0\}$. With probability $1-\delta$, for all $h,s,\a$, we have
$$n_h(s,\a)\geq \Sp{1-\sqrt{\frac{2\log(S\Pi_{j\in[m]}A_jH/\delta)}{np_\mathrm{min}}}}nd_h(s,\a). $$
As a result, if $n\geq \frac{8\log(S\Pi_{j\in[m]}A_jH/\delta)}{p_\mathrm{min}}$, for all strategy $\pi$, we have
$$2C(\pi)\geq \widehat{C}(\pi). $$
\end{lemma}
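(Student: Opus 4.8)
The plan is to follow the proof of Lemma~\ref{lemma:warmup} essentially verbatim, with the action pair $(a,b)$ replaced by the joint action $\a$ and the cardinality $AB$ replaced by $\prod_{j\in[m]}A_j$. The only ingredients are a pointwise multiplicative Chernoff bound and a union bound.

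First I would fix a triple $(h,s,\a)$. If $d_h(s,\a)=0$ the claimed inequality is trivial, since then $n_h(s,\a)=0$ almost surely by Assumption~\ref{asp:dataset} and both sides vanish; so assume $d_h(s,\a)>0$. Since $n_h(s,\a)$ is a $\mathrm{Bin}(n,d_h(s,\a))$ random variable, the multiplicative Chernoff bound gives, for any $\epsilon\in(0,1)$,
$$\P\Sp{n_h(s,\a)<(1-\epsilon)\,nd_h(s,\a)}\leq\exp\Sp{-\tfrac{1}{2}\epsilon^2 nd_h(s,\a)}\leq\exp\Sp{-\tfrac{1}{2}\epsilon^2 np_\mathrm{min}}.$$
Taking a union bound over the at most $SH\prod_{j\in[m]}A_j$ relevant triples and setting $\epsilon=\sqrt{2\log(S\prod_{j\in[m]}A_jH/\delta)/(np_\mathrm{min})}$ drives the total failure probability to at most $\delta$; on the complementary event the first displayed inequality of the lemma holds.

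For the second conclusion, observe that when $n\geq 8\log(S\prod_{j\in[m]}A_jH/\delta)/p_\mathrm{min}$ this choice of $\epsilon$ satisfies $\epsilon\leq\tfrac12$, hence on the good event $\widehat{d}_h(s,\a)=n_h(s,\a)/n\geq d_h(s,\a)/2$ for every $(h,s,\a)$ with $d_h(s,\a)>0$. If $C(\pi)=\infty$ the bound $2C(\pi)\geq\widehat{C}(\pi)$ is vacuous, so assume $C(\pi)<\infty$; then for any $h,j,\pi',s,\a$ with $d_h(s,\a)>0$ we get $d_h^{\pi'_j,\pi_{-j}}(s,\a)/\widehat{d}_h(s,\a)\leq 2\,d_h^{\pi'_j,\pi_{-j}}(s,\a)/d_h(s,\a)\leq 2C(\pi)$, whereas if $d_h(s,\a)=0$ then finiteness of $C(\pi)$ forces $d_h^{\pi'_j,\pi_{-j}}(s,\a)=0$ and the corresponding ratio is $0$ under the $0/0=0$ convention. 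Taking the maximum over all tuples as in Definitions~\ref{def:policy} and~\ref{def:marl dataset} yields $\widehat{C}(\pi)\leq 2C(\pi)$.

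There is no substantive obstacle here; the argument is a direct transcription of Lemma~\ref{lemma:warmup}. The only points requiring a moment of care are (i) matching the union-bound cardinality $SH\prod_{j\in[m]}A_j$ to the size of the joint action space (this is where the dependence on $\prod_{j\in[m]}A_j$ enters the warm-up cost, though only logarithmically), and (ii) handling the zero-support triples, where one invokes the $0/0=0$ convention together with the finiteness of $C(\pi)$ to conclude that the relevant unilateral occupancy measure also vanishes.
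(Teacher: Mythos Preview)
Your proposal is correct and follows essentially the same argument as the paper: a pointwise multiplicative Chernoff bound followed by a union bound over $SH\prod_{j\in[m]}A_j$ triples, then the observation that $\epsilon\le\tfrac12$ under the stated sample-size condition yields $\widehat d_h\ge d_h/2$ and hence $\widehat C(\pi)\le 2C(\pi)$. If anything, you are slightly more careful than the paper in explicitly disposing of the $d_h(s,\a)=0$ and $C(\pi)=\infty$ edge cases.
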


\begin{proof}
For a fixed $s,\a,h$, for any $\epsilon>0$ we have
$$\P(n_h(s,\a)<(1-\epsilon) nd_h(s,\a))\leq\exp\Sp{-\frac{\epsilon^2nd_h(s,\a)}{2}}\leq\exp\Sp{-\frac{\epsilon^2np_\mathrm{min}}{2}}.$$
With a union bound, we have
$$\P(\exists h,s,a,b: \P(n_h(s,a,b)<(1-\epsilon) nd_h(s,a,b)))\leq S\Pi_{j\in[m]}A_jH\exp\Sp{-\frac{\epsilon^2np_\mathrm{min}}{2}}.$$
The RHS is smaller than $\delta$ if we set
$$\epsilon=\sqrt{\frac{2\log(S\Pi_{j\in[m]}A_jH/\delta)}{np_\mathrm{min}}}. $$
If $n\geq \frac{8\log(S\Pi_{j\in[m]}A_jH/\delta)}{p_\mathrm{min}}$, we have
$\widehat{d}_h(s,\a)=\frac{n_h(s,\a)}{n}\geq\frac{d_h(s,\a)}{2}. $
By Definition \ref{def:marl dataset} and Definition \ref{def:policy}, we have
$$2C(\pi)\geq \widehat{C}(\pi).$$
\end{proof}

\begin{theorem}\label{thm:marl policy}
If $n\geq \frac{8\log(S\Pi_{j\in[m]}A_jH/\delta)}{p_\mathrm{min}}$, with probability $1-\delta$, we have
$$\mathrm{Gap}(\pi^{\mathrm{output}})\leq\min_{\pi\in\Pi}\Mp{\mathrm{Gap}(\pi)+4mH^2S\sqrt{2C(\pi)\log(\mathcal{N}(\Pi)\iota/n}}.$$
\end{theorem}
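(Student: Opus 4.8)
The plan is to reduce this statement to two results already established: the dataset-dependent bound of Theorem~\ref{thm:marl dataset}, which controls $\mathrm{Gap}(\pi^{\mathrm{output}})$ in terms of the \emph{empirical} unilateral coefficient $\widehat{C}(\pi)$, and the warm-up Lemma~\ref{lemma:marl warmup}, which relates $\widehat{C}(\pi)$ to the \emph{population} coefficient $C(\pi)$ once the sample size is large enough. Concretely, I would first invoke Theorem~\ref{thm:marl dataset} on a good event $\mathcal{E}_1$ of probability at least $1-\delta/2$, giving
\[
\mathrm{Gap}(\pi^{\mathrm{output}})\leq\min_{\pi\in\Pi}\Mp{\mathrm{Gap}(\pi)+4mH^2S\sqrt{\widehat{C}(\pi)\log(\N(\Pi))\iota/n}}.
\]
Then, under the hypothesis $n\geq \frac{8\log(S\Pi_{j\in[m]}A_jH/\delta)}{p_\mathrm{min}}$, Lemma~\ref{lemma:marl warmup} provides a second good event $\mathcal{E}_2$ of probability at least $1-\delta/2$ on which $\widehat{C}(\pi)\leq 2C(\pi)$ simultaneously for \emph{all} strategies $\pi$. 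Since Lemma~\ref{lemma:marl warmup} already delivers a uniform-in-$\pi$ comparison, no further union bound over $\Pi$ is needed here.

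The remaining step is purely mechanical: on $\mathcal{E}_1\cap\mathcal{E}_2$, substitute $\widehat{C}(\pi)\leq 2C(\pi)$ inside the square root of the displayed bound and pull the constant $\sqrt 2$ out, yielding
\[
\mathrm{Gap}(\pi^{\mathrm{output}})\leq\min_{\pi\in\Pi}\Mp{\mathrm{Gap}(\pi)+4mH^2S\sqrt{2C(\pi)\log(\N(\Pi))\iota/n}}.
\]
A union bound gives $\P(\mathcal{E}_1\cap\mathcal{E}_2)\geq 1-\delta$ (rescaling the two $\delta/2$'s, which only changes $\iota$ and the warm-up threshold by absolute constants, absorbed into $\widetilde{O}$/the stated constants). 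This is exactly the asserted inequality, matching the proof sketch already given in the excerpt for Theorem~\ref{thm:marl policy}.

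There is essentially no hard part: the substantive work — the assumption-free decomposition (Proposition~\ref{prop:marl asp free}, via Lemmas~\ref{lemma:marl gap bound} and~\ref{lemma:marl gap decomp}), the strategy-wise concentration (Lemma~\ref{lemma:marl concentration all}), and the bound on the expected uncertainty by $\widehat{C}(\pi)$ (Lemmas~\ref{lemma:marl bound on sumb dataset}–\ref{lemma:marl bhat bound}) — is all folded into Theorem~\ref{thm:marl dataset}. The only point requiring mild care is bookkeeping of the failure probabilities and confirming that the warm-up condition is stated with the same $\log(S\Pi_{j\in[m]}A_jH/\delta)$ factor that appears in Lemma~\ref{lemma:marl warmup}, so that the two events can be intersected cleanly; I would note this explicitly but otherwise the derivation is a two-line corollary.
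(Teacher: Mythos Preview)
Your proposal is correct and matches the paper's own proof essentially line for line: the paper simply states that the result ``can be derived by Lemma~\ref{lemma:marl warmup} and Theorem~\ref{thm:marl dataset},'' which is exactly the two-step reduction you describe. Your added remark about splitting the failure probability is the only extra detail, and it is the natural way to make the paper's one-line argument rigorous.
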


\begin{proof}
This can be derived by Lemma \ref{lemma:marl warmup} and Theorem \ref{thm:marl dataset}. 
\end{proof}

\section{Technical Lemmas}

\begin{lemma}\label{L-1 covering number}
(L-1 covering number of probability simplex) For probability simplex $\Delta(\A)$ and $A=|\A|$, there exists a subset $\Delta'(\A)\subset \Delta(\A)$ such that for any $p\in\Delta(\A)$, there exists $p'\in(\A)$ such that $\Norm{p-p'}_1\leq\epsilon$. In addition,
$$|\Delta'(\A)|\leq\Sp{\frac{3A}{\epsilon}}^{A}.$$
\end{lemma}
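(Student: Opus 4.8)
The plan is to exhibit the net explicitly as a rational grid on the simplex. Fix a positive integer $N$, to be chosen at the end, and let $\Delta'(\A)$ consist of all $p'\in\Delta(\A)$ whose coordinates are integer multiples of $1/N$, that is, $\Delta'(\A)=\{(k_1/N,\dots,k_A/N):k_i\in\mathbb{Z}_{\ge 0},\ \sum_{i=1}^A k_i=N\}$. Every such $p'$ is determined by a tuple $(k_1,\dots,k_A)$ with each $k_i\in\{0,1,\dots,N\}$, so the crude bound $\abs{\Delta'(\A)}\le (N+1)^A$ holds, which is all that is needed for the cardinality claim. It then remains only to show that $\Delta'(\A)$ is an $\epsilon$-net for a suitable $N$.

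The approximation step is a rounding argument that stays inside the simplex. Given $p\in\Delta(\A)$, set $q_i=\floor{Np_i}/N$, so that $q$ has coordinates in $\frac{1}{N}\mathbb{Z}_{\ge 0}$, $0\le q_i\le p_i$, and the deficit $m:=N(1-\sum_i q_i)=\sum_{i=1}^A\big(Np_i-\floor{Np_i}\big)$ is a nonnegative integer with $m\le A-1$ (each summand lies in $[0,1)$). Adding $1/N$ to any $m$ of the coordinates of $q$ produces a point $p'\in\Delta'(\A)$, and by the triangle inequality $\Norm{p-p'}_1\le\Norm{p-q}_1+\Norm{q-p'}_1=(1-\sum_i q_i)+\frac{m}{N}=\frac{2m}{N}\le\frac{2(A-1)}{N}$.

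Finally I would take $N=\ceil{2(A-1)/\epsilon}$, the case $A=1$ being trivial since $\Delta(\A)$ is then a single point. This gives $\Norm{p-p'}_1\le 2(A-1)/N\le\epsilon$, hence the covering property. For the size, $N+1\le 2(A-1)/\epsilon+2\le 3A/\epsilon$ whenever $\epsilon\le 1$ (the inequality is equivalent to $2\epsilon\le A+2$), and we may assume $\epsilon\le1$ as the lemma is only invoked for small $\epsilon$; therefore $\abs{\Delta'(\A)}\le (N+1)^A\le (3A/\epsilon)^A$. The argument is entirely elementary; the only point needing care is the rounding step, where the leftover mass $m/N$ must be redistributed so that the approximant remains a genuine probability vector while keeping the $L_1$ error within a factor two of the grid resolution, after which matching the stated constant $3A$ is just bookkeeping in the choice of $N$.
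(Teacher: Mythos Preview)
Your proof is correct and takes essentially the same approach as the paper: both construct the rational grid $\{(k_1/N,\dots,k_A/N):\sum_i k_i=N\}$, round an arbitrary $p$ down coordinatewise, and redistribute the integer deficit to land back on the simplex. The only cosmetic differences are that the paper bounds each coordinate error by $1/N$ directly (giving total error $A/N$) rather than going through the triangle inequality, and it rescales $\epsilon$ at the end instead of choosing $N$ upfront.
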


\begin{proof}
We construct $\epsilon'$-net for $\epsilon/2<\epsilon'\leq \epsilon$ such that $1/\epsilon'$ is integer. Then this $\epsilon'$-net is directly a $\epsilon$-net as $\epsilon'\leq\epsilon$. Define $D(\A)=\{(n_1\epsilon',n_2\epsilon',\cdots,n_A\epsilon'),\sum_{i=1}^A=\frac{1}{\epsilon'},n_i\in[0,1/\epsilon']\}\subset\Delta(\A)$. For $p=(p_1,p_2,\cdots,p_A)\in\Delta(\A)$, suppose 
$$k_i\epsilon'\leq p_i<(k_i+1)\epsilon',$$
for some non-negative integers $\{k_i\}$. Set $k=\sum_{i=1}^A k_i$ Then we have $1/\epsilon'-A<k\leq 1/\epsilon'$. Now we construct $p'=(n_1\epsilon',n_2\epsilon',\cdots,n_A\epsilon')\in D(\A)$ such that
$$\begin{cases}
n_i=k_i+1,&i\in[1/\epsilon'-k]\\
n_i=k_i,&\mathrm{otherwise.}
\end{cases}$$
Then we have $|p_i-p'_i|\leq \epsilon'$ for all $i\in[A]$, which implies
$$\|p-p'\|\leq A\epsilon'.$$
So $|D(\A)|\leq\Sp{\frac{1+\epsilon'}{\epsilon'}}^A\leq\Sp{\frac{3}{\epsilon}}^A$ is an $A\epsilon$-net of $\Delta(\A)$. We can prove the lemma by rescaling $\epsilon$.
\end{proof}

\begin{lemma}\label{lemma:net error}
Suppose $\pi_j,\pi'_j\in\Delta(\A_j)$  such that $\Norm{\pi_j-\pi'_j}_1\leq\epsilon$ for all $j\in[m]$. For any function $f(\a)\in[-H,H]$, we have
$$\abs{\E_{\a\sim\pi}f(\a)-\E_{\a\sim\pi'}f(\a)}\leq m\epsilon H.$$
\end{lemma}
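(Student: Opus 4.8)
The plan is to reduce the multi-player statement to a one-player-at-a-time hybrid argument. Write $\pi = (\pi_1,\dots,\pi_m)$ and $\pi' = (\pi'_1,\dots,\pi'_m)$ as product distributions over $\A = \prod_{j\in[m]}\A_j$. Define the hybrid distributions $\sigma^{(k)} = (\pi'_1,\dots,\pi'_k,\pi_{k+1},\dots,\pi_m)$ for $k=0,1,\dots,m$, so that $\sigma^{(0)} = \pi$ and $\sigma^{(m)} = \pi'$. Then telescope:
\[
\abs{\E_{\a\sim\pi}f(\a) - \E_{\a\sim\pi'}f(\a)} \;\le\; \sum_{k=1}^m \abs{\E_{\a\sim\sigma^{(k-1)}}f(\a) - \E_{\a\sim\sigma^{(k)}}f(\a)}.
\]
It therefore suffices to bound each consecutive difference, where $\sigma^{(k-1)}$ and $\sigma^{(k)}$ differ only in the $k$-th coordinate (using $\pi_k$ versus $\pi'_k$, respectively), with all other coordinates shared.

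For a single such term, I would condition on the shared coordinates $\a_{-k}$: writing $g_k(a_k) := \E_{\a_{-k}\sim(\pi'_{<k},\pi_{>k})} f(a_k,\a_{-k})$, we have $g_k(a_k)\in[-H,H]$ since $f$ is, and
\[
\abs{\E_{\a\sim\sigma^{(k-1)}}f(\a) - \E_{\a\sim\sigma^{(k)}}f(\a)} = \abs{\sum_{a_k\in\A_k} \big(\pi_k(a_k) - \pi'_k(a_k)\big) g_k(a_k)} \le H\sum_{a_k\in\A_k}\abs{\pi_k(a_k)-\pi'_k(a_k)} = H\Norm{\pi_k-\pi'_k}_1 \le \epsilon H,
\]
where the middle inequality is Hölder's inequality with the $\ell_1$–$\ell_\infty$ pairing. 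Summing over $k=1,\dots,m$ gives the claimed bound $m\epsilon H$.

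There is no real obstacle here; the only thing to be slightly careful about is the bookkeeping in the hybrid argument — making sure each consecutive pair $\sigma^{(k-1)},\sigma^{(k)}$ genuinely differs in exactly one coordinate and that the conditional function $g_k$ inherits the $[-H,H]$ bound (which is immediate since it is an average of values of $f$). Everything else is a one-line Hölder estimate per coordinate.
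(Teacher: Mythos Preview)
Your proof is correct and follows essentially the same approach as the paper: both use a telescoping (hybrid) decomposition across the $m$ coordinates, writing $\prod_j \pi_j - \prod_j \pi'_j$ as a sum of $m$ terms each differing in a single coordinate, and then bounding each term by $\epsilon H$ via the $\ell_1$--$\ell_\infty$ pairing. The only cosmetic difference is that the paper writes the telescoping identity directly on the product rather than introducing the intermediate hybrids $\sigma^{(k)}$ explicitly.
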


\begin{proof}
\begin{align*}
    &\abs{\E_{\a\sim\pi}f(\a)-\E_{\a\sim\pi'}f(\a)}\\
    =&\abs{\sum_{\a}\Pi_{j=1}^m\pi_j(a_j)f(\a)-\sum_{\a}\Pi_{j=1}^m\pi'_j(a_j)f(\a)}\\
    =&\abs{\sum_{j=1}^m\sum_{\a_{-j}\in\Pi_{i\neq j}\A_i}\Pi_{i=1}^{j-1}\pi_i(a_i)\Pi_{i=j+1}^m\pi'_i(a_i)\sum_{a_j\in\A_j}\Sp{\pi_j(a_j)-\pi'_j(a_j)}f(\a)}\\
   \leq& \abs{\sum_{j=1}^m\sum_{\a_{-j}\in\Pi_{i\neq j}\A_i}\Pi_{i=1}^{j-1}\pi_i(a_i)\Pi_{i=j+1}^m\pi'_i(a_i)\epsilon H}\\
    =&m\epsilon H.
\end{align*}
\end{proof}

\begin{lemma}\label{lemma:net error 2}
Suppose $\pi_j,\pi'_j\in\Delta(\A_j)$  such that $\Norm{\pi_j-\pi'_j}_1\leq\epsilon$ for all $j\in[m]$. For any set $\K\subset\Pi_{j\in[m]}\A_j$ and function $n(\a)\geq1$ we have
$$\abs{\sqrt{\sum_{\a\in\K}\frac{\pi(\a)^2}{n(\a)}}-\sqrt{\sum_{\a\in\K}\frac{\pi'(\a)^2}{n(\a)}}}\leq \sqrt{2m\epsilon}.$$
\end{lemma}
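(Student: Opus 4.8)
The plan is to reduce the statement to the elementary inequality $|\sqrt{a}-\sqrt{b}|\le\sqrt{|a-b|}$ valid for all $a,b\ge 0$, and then to control the $L_1$ distance between the two product distributions $\pi=\prod_{j\in[m]}\pi_j$ and $\pi'=\prod_{j\in[m]}\pi'_j$. Concretely, set $a:=\sum_{\a\in\K}\pi(\a)^2/n(\a)$ and $b:=\sum_{\a\in\K}\pi'(\a)^2/n(\a)$, which are both nonnegative. By $|\sqrt{a}-\sqrt{b}|\le\sqrt{|a-b|}$ it suffices to show $|a-b|\le 2m\epsilon$.

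For that bound, first apply the triangle inequality and use $n(\a)\ge 1$ to get $|a-b|\le\sum_{\a\in\K}|\pi(\a)^2-\pi'(\a)^2|/n(\a)\le\sum_{\a\in\K}|\pi(\a)^2-\pi'(\a)^2|$. Then factor $\pi(\a)^2-\pi'(\a)^2=(\pi(\a)-\pi'(\a))(\pi(\a)+\pi'(\a))$ and use $\pi(\a)+\pi'(\a)\le 2$ (each is a probability), and finally enlarge the index set from $\K$ to the full product $\prod_{j\in[m]}\A_j$, which gives
$$|a-b|\le 2\sum_{\a\in\prod_{j\in[m]}\A_j}|\pi(\a)-\pi'(\a)|=2\,\Norm{\pi-\pi'}_1.$$

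The only step that requires a short argument is bounding $\Norm{\pi-\pi'}_1\le\sum_{j=1}^m\Norm{\pi_j-\pi'_j}_1\le m\epsilon$. This is the standard hybrid/telescoping decomposition for product distributions: replace one coordinate factor at a time, writing $\prod_{j}\pi_j-\prod_{j}\pi'_j$ as a sum over $j$ of terms $\big(\prod_{i<j}\pi_i\big)\big(\pi_j-\pi'_j\big)\big(\prod_{i>j}\pi'_i\big)$, take $L_1$ norms, and use that the prefix/suffix products are probability distributions. This is exactly the manipulation already carried out in the proof of Lemma~\ref{lemma:net error}. Combining everything yields $|a-b|\le 2m\epsilon$ and hence $|\sqrt{a}-\sqrt{b}|\le\sqrt{2m\epsilon}$, as claimed.

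I do not expect any real obstacle here: the proof is a two-line chain of elementary inequalities once the product-distribution $L_1$ bound is in hand, and the latter is a routine telescoping identical in structure to Lemma~\ref{lemma:net error}. The only point worth stating carefully is the use of $n(\a)\ge 1$ to discard the denominators and the bound $\pi(\a)+\pi'(\a)\le 2$ when linearizing the square.
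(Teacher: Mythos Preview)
Your proof is correct and follows essentially the same strategy as the paper: both use $|\sqrt{a}-\sqrt{b}|\le\sqrt{|a-b|}$, a telescoping/hybrid decomposition of the product measure, and the factorization $x^2-y^2=(x-y)(x+y)$. The only cosmetic difference is the order of operations: you factor $\pi(\a)^2-\pi'(\a)^2$ at the joint level and then telescope $\prod_j\pi_j$ (reducing directly to Lemma~\ref{lemma:net error}), whereas the paper telescopes the squared product $\prod_j\pi_j^2$ first and factors $\pi_j^2-\pi_j'^2$ at the marginal level, using $\sum_{a_j}\pi_j^2(a_j)\le 1$; both routes land on $|a-b|\le 2m\epsilon$.
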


\begin{proof}
\begin{align*}
    &\abs{\sqrt{\sum_{\a\in\K}\frac{\pi(\a)^2}{n(\a)}}-\sqrt{\sum_{\a\in\K}\frac{\pi'(\a)^2}{n(\a)}}}\\
    \leq&\sqrt{\abs{\sum_{\a\in\K}\frac{\pi(\a)^2-\pi'(\a)^2}{n(\a)}}}\\
    =&\sqrt{\abs{\sum_{j=1}^m\sum_{\a_{-j}\in\prod_{i\neq j}\A_i}\prod_{i=1}^{j-1}\pi^2_i(a_i)\prod_{i=j+1}^m\pi'^2_i(a_i)\sum_{a_j\in\A_j}\Sp{\pi^2_j(a_j)-\pi'^2_j(a_j)}\1(\a\in\K)/n(\a)}}\\
    \leq&\sqrt{\abs{\sum_{j=1}^m\sum_{\a_{-j}\in\prod_{i\neq j}\A_i}\prod_{i=1}^{j-1}\pi^2_i(a_i)\prod_{i=j+1}^m\pi'^2_i(a_i)2\epsilon}}\\
    \leq&\sqrt{2m\epsilon}.
\end{align*}
\end{proof}

\end{document}